%% file: main.tex
\documentclass{article}

    \PassOptionsToPackage{numbers, compress}{natbib}

\usepackage[preprint]{neurips_2026}
\usepackage{amsmath}
\usepackage{graphicx}
\usepackage{booktabs}
\usepackage[table]{xcolor}
\usepackage{float}
\usepackage{amsthm}
\usepackage[ruled,vlined]{algorithm2e}
\SetKw{Return}{return}

\input{math_commands}

\usepackage{subcaption}
\usepackage{hyperref}
\usepackage{url}
\usepackage{siunitx}
\usepackage{etoc}

\usepackage{lastpage}
\definecolor{highlight}{RGB}{220,235,247} 
\definecolor{highlightDark}{RGB}{187,215,244} 
\usepackage{bbm}

\usepackage[textsize=tiny]{todonotes}
\newif\iffinal
\iffinal
    \newcommand{\yc}[1]{}
\else
    \newcommand{\yc}[1]{\todo[fancyline,color=blue!40]{YC: #1}\xspace}
\fi

\input{macros}
\definecolor{darkgreen}{HTML}{2E8B57}    

\usepackage[utf8]{inputenc} 
\usepackage[T1]{fontenc}    
\usepackage{hyperref}       
\usepackage{url}            
\usepackage{booktabs}       
\usepackage{amsfonts}       
\usepackage{nicefrac}       
\usepackage{microtype}      
\usepackage{pdflscape}      

\newif\iffinal
\iffinal
    \newcommand{\YC}[1]{}
\else
    \newcommand{\YC}[1]{\todo[fancyline,color=purple!40]{YC: #1}\xspace}
\fi

\title{Learning to Trigger:  Reinforcement Learning at the Large Hadron Collider}

\author{%
Zixin Ding$^{1}$~~~ Shaghayegh Emami$^{2}$~~ Giovanna Salvi$^{2}$~~
\textbf{Cecilia Tosciri}$^{1}$~~ \textbf{Abhijith~Gandrakota}$^{3}$~~ \\
\textbf{Jennifer~Ngadiuba}$^{3}$~~ \textbf{Nhan~Tran}$^{3}$~~ \textbf{Christian Herwig}$^{2}$~~\textbf{ David~W.~Miller}$^{1}$~~ \textbf{Yuxin Chen$^{1}$\thanks{Corresponding to \texttt{zixin@uchicago.edu}, \texttt{chenyuxin@uchicago.edu}}} \\ 
  \\
  University of Chicago$^{1}$\\
  University of Michigan, Ann Arbor$^{2}$ \\
  Fermi National Accelerator Laboratory$^{3}$
}

\begin{document}

\makeatletter
\let\oldaddcontentsline\addcontentsline
\renewcommand{\addcontentsline}[3]{}     
\makeatother

\maketitle

\begin{abstract}
High-throughput scientific facilities such as the Large Hadron Collider depend on real-time event filtering (\textit{triggering}) under tight constraints on bandwidth, latency, and storage. In practice, trigger menus are largely static and hand-tuned and can become suboptimal as detector conditions, pileup, and background composition drift over time. We cast online threshold tuning as a sequential decision-making problem: a reinforcement learning agent ingests streaming summaries of recent rates and signal-sensitive features and updates trigger thresholds to maximize signal efficiency while tracking a target background rate within a tolerance band. We adapt Group-Filtered Policy Optimization (GFPO) to streaming control and introduce two variants (GFPO-F, GFPO-FR) that enforce background rate feasibility during training. On a benchmark that emulates realistic collider operation, we study two representative triggers: a total transverse energy ($H_{T}$) trigger sensitive to pileup variation, and an anomaly-detection (AD) 
trigger based on reconstruction loss for rare or non-standard 
signatures. On Monte Carlo streams, our agent increases the fraction of in-tolerance time intervals by 48\% ($H_T$) and 28\% (AD), with a cumulative gain of up to 2\% in signal efficiency on those in-tolerance intervals. Transferring from simulation to \emph{real} collision data (CMS Run 283408), the same agent, without fine-tuning, achieves a 56\% ($H_T$) and 28\% (AD) in-tolerance improvement over baselines, with further signal-efficiency gain on both triggers. To our knowledge, this is the \emph{first} demonstration of RL-based trigger control on real Large Hadron Collider collision data. Code is available at 
\url{https://github.com/Zixind/GFPO\_LHC}.
\end{abstract}

\input{introduction}

\input{related_work}

\input{method}
\input{limitations}

\section{Acknowledgements}
This work is supported by the University of Chicago Joint Task Force Initiative (JTFI) – Partnerships for Emerging Technologies Seed Funding. A.G, J.N, and N.T are supported by FermiForward Discovery Group, LLC under Contract No. 89243024CSC000002 with the U.S. Department of Energy, Office of Science, Office of High Energy Physics. Z.D and Y.C are supported by the U.S. National Science Foundation (NSF) under grants 2037026, 2313131, 2332475 and 2543755.

\newpage
\bibliographystyle{unsrtnat}
\bibliography{neurips_2026}

\newpage



\appendix
\makeatletter
\let\addcontentsline\oldaddcontentsline  
\makeatother
\renewcommand{\contentsname}{Appendices}
{\small\tableofcontents}
\vspace{0.6em}





\newpage
\input{./Appendix/related_work_extended}

\input{./Appendix/pseudocode}
\input{./Appendix/Experimental_setup}
\input{./Appendix/Implementation}

\input{./Appendix/Appendix}
\newpage
\input{./Appendix/Autoencoder_ablation_study}

\input{./Appendix/anomaly_benchmarks}




\end{document}

%% file: math_commands.tex
\usepackage{amsmath,amsfonts,bm}

\def\eqref#1{equation~\ref{#1}}

\def\1{\bm{1}}

\DeclareMathAlphabet{\mathsfit}{\encodingdefault}{\sfdefault}{m}{sl}
\SetMathAlphabet{\mathsfit}{bold}{\encodingdefault}{\sfdefault}{bx}{n}



%% file: macros.tex
\usepackage{amsthm}
\usepackage{amssymb}
\usepackage{mathtools}

\usepackage[normalem]{ulem} 
\usepackage{mathtools}

\usepackage[utf8]{inputenc} 
\usepackage[T1]{fontenc}    
\usepackage{url}            
\usepackage{booktabs}       
\usepackage{amsfonts}       
\usepackage{nicefrac}       
\usepackage{microtype}      
\usepackage{proof-at-the-end}  
\usepackage{multirow}
\usepackage{lscape}         

\usepackage{graphicx,wrapfig}
\usepackage{amsfonts}
\usepackage{url}
\usepackage{enumitem}
\usepackage{amsthm}
\usepackage{thmtools}
\usepackage{thm-restate}

\usepackage{xcolor}

\newcommand{\bc}{\begin{center}}
\newcommand{\ec}{\end{center}}

\newcommand{\bdm}{\begin{displaymath}}
\newcommand{\edm}{\end{displaymath}}

\newcommand{\beq}{\begin{equation}}
\newcommand{\eeq}{\end{equation}}

\newcommand{\bfl}{\begin{flushleft}}
\newcommand{\efl}{\end{flushleft}}

\newcommand{\bt}{\begin{tabbing}}
\newcommand{\et}{\end{tabbing}}

\newcommand{\beqn}{\begin{align}}
\newcommand{\eeqn}{\end{align}}

\newcommand{\beqs}{\begin{align*}} 
\newcommand{\eeqs}{\end{align*}}  

\newcommand{\actionspace}{\mathcal{A}}

\usepackage[most]{tcolorbox}
\newtcolorbox{boxK}[2][]{
    sharpish corners, 
    boxrule = 0pt,
    toprule = 4.5pt, 
    enhanced,
    fuzzy shadow = {0pt}{-2pt}{-0.5pt}{0.5pt}{black!35}, 
    fontupper = \ttfamily\small, 
    boxsep = 5pt, 
    left = 5pt, 
    right = 5pt, 
    top = 5pt, 
    bottom = 5pt, 
    #1                       
}

\newtcolorbox{takeawaybox}[2][]{
    enhanced,
    boxsep = 2pt, 
    left = 2pt, 
    right = 2pt, 
    top = 2pt, 
    bottom = 2pt, 
    #1                       
}

\newcommand{\SEpaper}{\cite{emami2026selfdrivingtriggerlhcadaptive}}

\newcommand{\HT}{\ensuremath{H_\mathrm{T}}}
\newcommand{\ttbar}{$t\bar{t}$}
\newcommand{\ttbarraw}{t\bar{t}}
\newcommand{\haaFourB}
{\ensuremath{h\to 4b}}
\newcommand{\threshold}{c}

\newtheorem{lemma}{Lemma}

%% file: introduction.tex
\section{Introduction}
\noindent High-throughput discovery science relies on real-time 
event selection under strict bandwidth and latency constraints 
\citep{albrecht2025summary, mahesh2021towards, cms2006cms, 
aad2022performance}. At the Large Hadron Collider (LHC), trigger menus are largely static and hand-tuned, requiring detailed simulation and repeated expert re-optimization as pileup and background composition drift~\citep{aad2022performance, emami2026selfdrivingtriggerlhcadaptive}. This workflow is costly and can quickly become miscalibrated under distribution shift, reducing efficiency for detecting rare phenomena. Only recently have efforts in dynamic bandwidth allocation ~\citep{evans2025automated} and low-latency menu adaptation ~\citep{mahesh2021towards} begun to explore learned policies that retune thresholds online. Yet whether such continuous adaptation is even \emph{feasible} in the more common regime, where a fixed menu of thresholds must be retuned online under shifting conditions, remains largely unexplored. We study the design of a \textit{self-driving} trigger: an autonomous, hardware-aware filtering system that adapts its selection thresholds online to maintain stable background rate with improved signal efficiency, in response to changing physical conditions.

Existing work~\citep{emami2026selfdrivingtriggerlhcadaptive} introduced an autonomous trigger-control framework that extends a PID baseline to a cost-function formulation jointly optimizing rate stability, signal efficiency, and computational cost.
This exhaustive search over discrete threshold candidates is effective for small menus but becomes computationally prohibitive as the action space grows, and neither strategy conditions on streaming context such as pileup~\citep{cms2014description}. Reinforcement
learning (RL) ~\citep{sutton1998reinforcement, hausknecht2015deep,
Hafner2020Dream} offers a principled alternative: a learned policy
over feature-rich states can anticipate distribution shifts, scale to
large action spaces, and navigate the multi-objective
trade-offs that the cost-function framework formalizes but cannot
efficiently solve.

We thus formulate adaptive thresholding as a sequential decision-making problem~\citep{matchev2021thickbrick, oh2019sequential, yang2024adt} and train an RL policy \citep{sutton1998reinforcement} that observes compact streaming summaries (e.g., recent rates and distributional statistics) and updates thresholds to maximize long-horizon physics utility while satisfying hard background-rate constraints \citep{albrecht2025summary, cerri2019variational,cella2024atlas}. By learning directly from time-dependent data, the policy can capture nonlinear and delayed effects and allocate rate budget more effectively, reducing manual retuning and improving performance in non-stationary environments.
 
We begin by studying two representative trigger classes trained on simulated samples with deployment on \textbf{real} collision data recorded by the CMS experiment~\citep{CMSOpenData}. Specifically, we ask: \emph{Can an RL-based trigger controller improve signal efficiency while maintaining comparable (or better) background-rate stability than control-based baselines on real collision data?} If so, \emph{is it preferable to train on physicist-generated simulation and deploy on real collision data, or to rely on test-time training?}

\paragraph{Contributions.}
Concretely, we make four contributions. 

(1) We introduce a unified evaluation framework with metrics that quantify both feasibility and performance. Feasibility is measured through the InBand fraction, defined as the proportion of events for which the trigger rate remains within the target bandwidth tolerance, while performance is assessed through signal efficiency. We formulate online trigger threshold optimization as a reinforcement learning problem with streaming observations and demonstrate that even a simple Deep Q-Network (DQN) controller can effectively adapt trigger thresholds in real time (Section~\ref{sec:single_trigger}).

(2) We introduce a sequence-based observation model representing the state as a
length-$K$ event sequence augmented with physics-informed features (See psuedocode  in Appendix~\ref{app:GFPO_Pseudocode}). Unlike PID
control~\citep{emami2026selfdrivingtriggerlhcadaptive}, which reacts to a scalar
error conflating distinct failure modes (a global luminosity shift versus a local
change in the score distribution near threshold), our representation preserves
distributional context, enabling the policy to diagnose why the rate has
drifted~\citep{Hafner2020Dream}. A recurrent encoder compresses the $K$-event
window into a fixed-size state vector (Appendix~\ref{sec:sequential_network_architectures}).


(3) We adapt Group Filtered Policy Optimization (GFPO)~\citep{shrivastava2025sample},
a variant of Group Relative Policy Optimization (GRPO)~\citep{shao2024deepseekmath},
to the streaming trigger setting and propose two new variants,
GFPO-F and GFPO-FR, that filter rollouts (sampled candidate actions and their
evaluated outcomes) by feasibility (accepted background rate) and signal efficiency.
Both variants improve stability without degrading efficiency (and vice versa) on
Monte Carlo (MC) simulated events and on \textbf{real} CMS collision data
(Run 283408), transfer from simulation to CMS data with negligible performance
loss (Appendix~\ref{appendix:real_data_adaptation}), eliminate the need for gradient-based fine-tuning at deployment, and remain
robust to large variations in anomaly-detection score scale
(Appendix~\ref{appendix:ablation_anomaly_score}).

(4) Our framework extends beyond the LHC. We apply the framework to online anomaly-detection benchmarks and show the combination of sequence-based neural state representations with GFPO-F and GFPO-FR yields consistent improvements under streaming data with distribution shifts, suggesting that our methods generalize well beyond the particle-physics domain (Section~\ref{sec:anomaly_detection_benchmarks}).

Building on the self-driving trigger framework of
\citet{emami2026selfdrivingtriggerlhcadaptive}, we demonstrate the
\emph{first} end-to-end RL-trained policy using real CMS collision data
(Runs 283408 and 283876): one that learns its threshold update rule
from streaming experience rather than minimizing a global cost
function.

%% file: related_work.tex
\section{Related Work}
\paragraph{Reinforcement learning for scientific decision making.} RL has been applied to self-driving labs \citep{volk2023alphaflow}, accelerator tuning \citep{kaiser2024reinforcement}, and tokamak regulation \citep{degrave2022magnetic}. LHC trigger control differs in being explicitly \emph{rate-constrained}: the controller must hold background acceptance within a strict tolerance while maximizing signal efficiency, under non-stationary pileup and detector drift.
\paragraph{Anomaly detection under drift.} Active anomaly detection typically learns \emph{querying} policies under a label budget \citep{zha2020meta}. We instead adapt the \emph{operating threshold} of an autoencoder-based score to satisfy rate constraints under drift. The closest formulation is ADT~\citep{yang2024adt}, which fixes a pretrained autoencoder and selects binary decision thresholds; we treat trigger thresholds as control variables in a real-time feedback loop. DSPOT~\citep{siffer2017anomaly} sets adaptive thresholds via Extreme Value Theory, with its false-positive level $q$ mapping to our target rate $r_{B}^{*}$; we include it as a baseline, though it has no mechanism for optimizing signal efficiency (Appendix~\ref{subsection:DSPOT}).
\paragraph{Selective updates.} Rejection-style filtering stabilizes RL training, particularly for LLMs~\citep{yu2025dapo,yue2026does, shrivastava2025sample}. Most directly, GFPO~\citep{shrivastava2025sample} samples larger candidate groups and filters them based on response length and token efficiency to control verbosity. We adapt this framework to streaming trigger control, where the filter must enforce rate feasibility under distribution shift, and develop two variants: \textbf{GFPO-F} retains the top-$K$ candidates with the smallest rate deviation; \textbf{GFPO-FR} first selects feasible candidates within an expanded tolerance band, then ranks them by signal utility.  Appendix~\ref{app:related_work_extended} provides an extended discussion.

%% file: method.tex
\section{RL formulation in Adaptive Thresholding}
\label{sec:rl_formulation_adaptive_thresholding}
We control a single scalar threshold $\threshold_{t}$
separately for two trigger paths: an $H_{T}$ trigger that selects events with high total hadronic activity, $H_{T} = \sum_{\text{jets}} p_{T}^{jet}$, and an anomaly-detection (AD) trigger that scores events by autoencoder reconstruction loss~\citep{govorkova2022autoencoders}. At each step, our agent chooses $\threshold_{t+1}$ to keep the background acceptance rate\footnote{We use \emph{background rate}, \emph{rate} and \emph{background acceptance rate} interchangeably throughout this paper.} near a target $r_{B}^{*}$ while maximizing signal efficiency on a held-out signal sample. The same formulation applies to both triggers. We model adaptive thresholding as a Markov Decision Process (MDP) $(S, \actionspace, T, R, \gamma)$~\citep{jayaraman2024primer}, where the transition
$T$ is the environment dynamics (the action updates the threshold and the next $K$
events arrive from the data stream, whose distribution drifts
with beam conditions, so the agent does not assume access to $T$) and
$\gamma \in [0,1)$ is the discount factor. We define the state, action, and reward
below. 
\begin{itemize}
    \item \emph{State.} $s_{t} = (h_{t}, c_{t})$, where $h_{t} \in \mathbb{R}^{d}$ is a fixed-size summary
    \footnote{$d=19$ in our LHC setting; it is a physics-feature summary of recent $K$ events.} of recent $K$ events 
    (See State representation design in Appendix~\ref{sec:sequential_network_architectures}), i.e., trigger scores and $c_{t} = (\threshold_{t}, \hat{r}_{t}, \Delta \threshold_{t-1})$ holds the current threshold, the observed rate, and the last threshold change.
\item \emph{Action. } $a_{t} \in \mathcal{A}$ is a discrete threshold increment. The threshold updates as $\threshold_{t+1} = \threshold_{t} + a_{t}$ and is applied to next batch of events.
\item \emph{Reward. } $R_{t}$ combines three terms designed to (a) maintain the background rate close to a target rate within tolerance; (b) maximizing signal efficiency for two representative signals, \ttbar\ and \haaFourB\ 
~\citep{emami2026selfdrivingtriggerlhcadaptive}
; (c) discouraging action update jitter for stability
~\citep{le2016smooth,aad2022performance} 
(Eq~\ref{equation:reward_design}).
\end{itemize}
We instantiate this MDP with three policies of increasing structure: DQN (Sec.~\ref{sec:single_trigger}), GRPO adapted to streaming control (Sec.~\ref{sec:from_dqn_to_grpo}), and feasibility-filtered GFPO (Section~\ref{sec:gfpo}).

\begin{figure*}[t]
    \centering
    \begin{subfigure}{0.49\textwidth}
        \includegraphics[width=\linewidth]{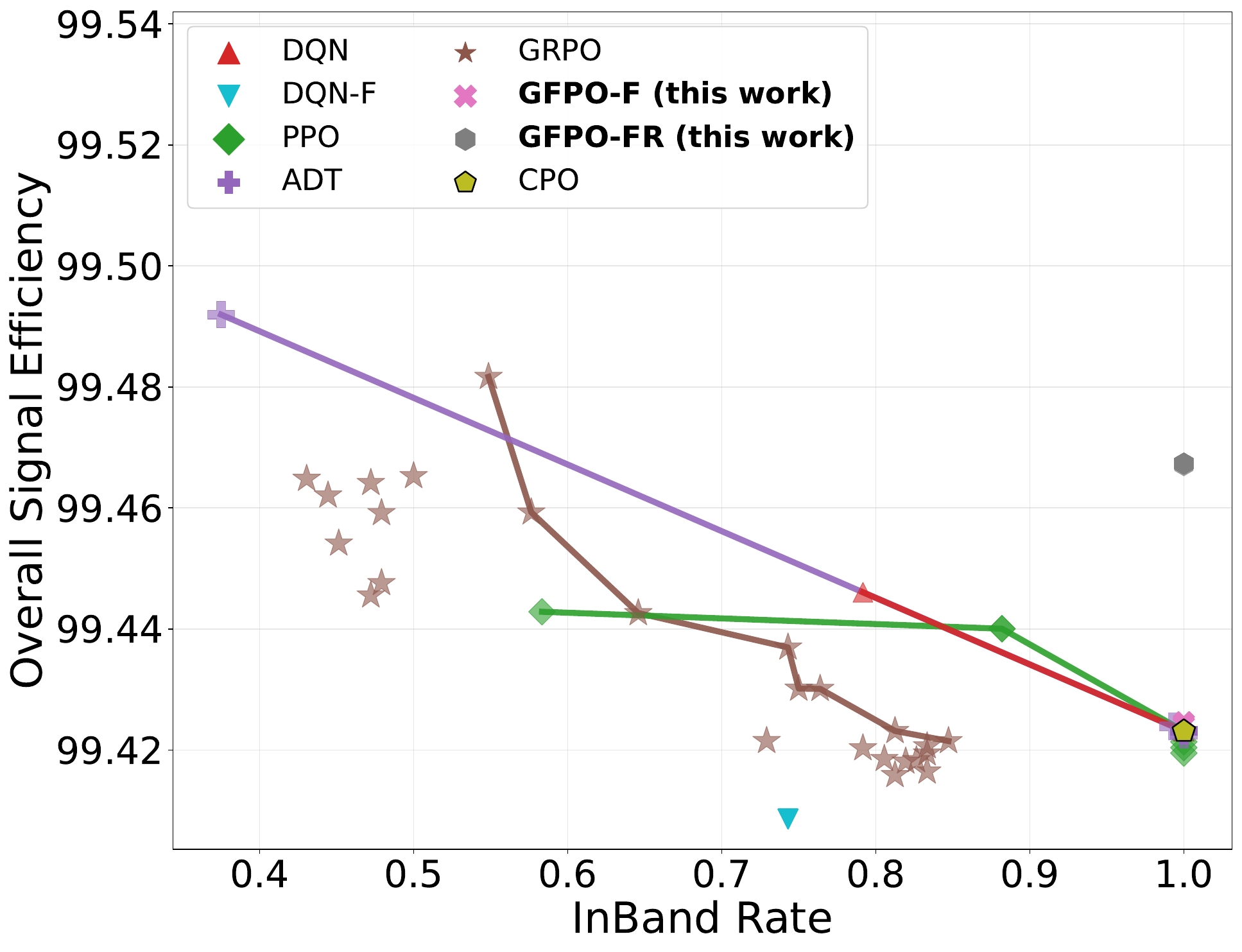}
        \caption{$H_{T}$ trigger (\ttbar)}
        \label{fig:pareto_ht_ttbar_appendix}
    \end{subfigure}
    \hfill
    \begin{subfigure}{0.49\textwidth}
        \includegraphics[width=\linewidth]{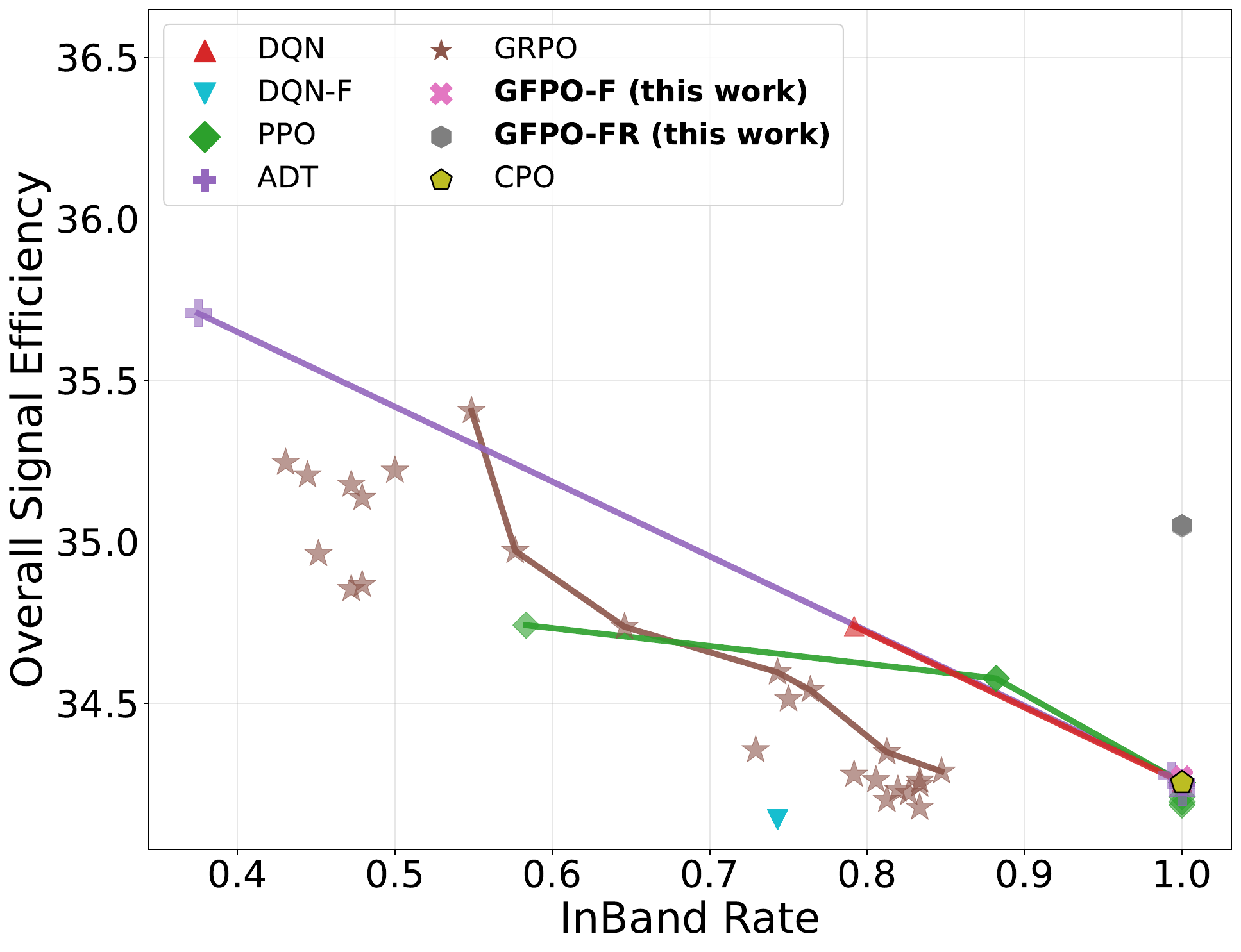}
        \caption{$H_{T}$ trigger (\haaFourB)}
        \label{fig:pareto_ht_haaForB}
    \end{subfigure}
    \caption{\textbf{Sensitivity Analysis of Reward Components for $H_{T}$ trigger (20\% MC).} Each point represents a ($\lambda_1$, $\lambda_2$) configuration from
  Equation~\ref{equation:reward_design}, with concave hulls connecting the upper envelope per method. The $x$-axis measures the fraction of chunks
  whose background rate falls within the tolerance band, and the $y$-axis measures overall signal efficiency. 
 Our methods (GFPO-F and GFPO-FR) collapse to
   a \emph{tighter} cluster in the upper-right corner, simultaneously achieving near-perfect inband rates and the highest signal efficiency, demonstrating
  strong robustness to the choice of $\lambda_{1,2}$.
  }
    \label{fig:pareto_plot_grid_ht_ttbar}
\end{figure*}



\subsection{Warmup: DQN for single trigger}
\label{sec:single_trigger}

\begin{table}[ht]
\centering
\scriptsize
\setlength{\tabcolsep}{3pt}
\renewcommand{\arraystretch}{1.25}
\sisetup{
  detect-weight=true,
  detect-family=true,
  table-number-alignment=center,
  round-mode=places,
  round-precision=3,
  separate-uncertainty=true
}
\caption{\textbf{Single-trigger control on MC.} Each RL policy is trained on the first 80\% of the MC chunks, then frozen and evaluated on the held-out remaining 20\%, on which all reported results are obtained. Background rates in percent units with target $r_{B}^*$ and tolerance $\tau$. Bold and \underline{underline} mark the best and second-best per column with standard deviation shown in parentheses. We repeat each method for 3 times and 0 standard deviation denotes $\approx$ 0 standard deviation. Our methods place top-two for all metrics on both triggers.}
\resizebox{\textwidth}{!}{%
\begin{tabular}{llccccccc
               }
\toprule 
Trigger & Method &
{MAE$\downarrow$} & {P95$|e|$ $\downarrow$} &
{InBand $\uparrow$} & {$\epsilon_{\text{ov}}^{\ttbarraw} \uparrow$} & {$\epsilon_{\text{ov}}^{\haaFourB} \uparrow$} &
{$\epsilon_{\text{in}}^{\ttbarraw} \uparrow$} &
{$\epsilon_{\text{in}}^{\haaFourB} \uparrow$}  \\
\midrule
\multicolumn{9}{l}{$H_{T}$ \textbf{trigger}}\\
& Constant & 0.083(0)  & 0.191(0) & 0.250(0) & 98.989(0) & 28.062(0)  & 99.129(0) & 29.418(0) \\
& PID \SEpaper & 0.029(0) & 0.070(0) & 0.521(0) & 99.388(0) & 33.289(0) & 99.294(0) & 31.444(0) \\
& DSPOT \citep{siffer2017anomaly} & 0.031(0) & 0.088(0) & 0.604(0) & 99.355(0) & 32.612(0) & 99.300(0) & 31.718(0) \\
& DQN \citep{mnih2015human}  &   0.010(0) & 0.023(0) & \textbf{1.000(0)} & 99.423(0) & 34.255(0) & 99.423(0) &34.255(0)  \\
& DQN-F & 0.018(0.007) & 0.038(0.013) & 0.743(0.223) & 99.409(0.069) & 34.141(1.085) & 99.417(0.011) & 34.184(0.186) \\
& PPO \citep{schulman2017proximal} & 0.013(0.005) & 0.029(0.010) & \underline{0.889(0.192)} & \underline{99.439(0.028)} & \underline{34.544(0.5)} & \underline{99.434(0.019)} & \underline{34.492(0.411)} \\
& ADT \citep{yang2024adt}     &
0.010(0) & 0.023(0) & \textbf{1.000(0)} & 99.423(0) & 34.255(0) & 99.423(0) &    
34.255(0)
\\
& GRPO \citep{shao2024deepseekmath}    &
0.024(0.001) & 0.052(0.006) & 0.632(0.067) & 99.422(0.21) & 34.396(0.318) & 99.401(0.054) & 34.106(1.218) \\
& L-GRPO & \underline{0.006(0.001)} & \underline{0.016(0.003)} & \textbf{1.000(0)} & 99.425(0.006) & 34.293(0.1) & 99.425(0.006) & 34.293(0.100)
\\
& CPO \citep{achiam2017constrained} & 0.010(0) & 0.023(0) & \textbf{1.000(0)} & 99.423(0) & 34.255(0) & 99.423(0) & 34.255(0) \\
\rowcolor{highlight}
& GFPO-F (Ours)    &
\bfseries 0.004(0) & \textbf{0.010(0)} & \textbf{1.000(0)} & 99.425(0.001) & 34.273(0.008) & 99.425(0.001) & 34.273(0.008) \\
\rowcolor{highlightDark}
& GFPO-FR (Ours)    &
0.016(0) & 0.024(0) & \textbf{1.000(0)} & \textbf{99.468(0)} & \textbf{35.057(0)} & \textbf{99.468(0)} & \textbf{35.057(0)}  \\
\midrule
\multicolumn{9}{l}{\textbf{AD trigger}} \\
& Constant & 0.042(0) & 0.131(0) & 0.479(0) & 94.279(0) & 25.404(0) & 94.758(0) & 25.285(0) \\
& PID \citep{emami2026selfdrivingtriggerlhcadaptive} & 0.020(0) & 0.051(0) & 0.729(0) & 95.233(0) & 27.298(0) & 94.990(0) & 26.287(0) \\
& DSPOT \citep{siffer2017anomaly} & 0.022(0) & 0.061(0) & 0.667(0) & 94.992(0) & 26.767(0) & 94.867(0) & 26.050(0) \\
& DQN \citep{mnih2015human} & 0.011(0.001) & 0.023(0.001) & \underline{0.993(0.012)} & 95.352(0.017) & 27.717(0.033) & 95.357(0.020) & 27.739(0.055) \\
& DQN-F & 0.016(0.009) & 0.030(0.015) & 0.847(0.265) & \underline{95.543(0.329)} & \underline{28.185(0.796)} & \underline{95.474(0.210)} & \underline{27.948(0.386)} \\
& PPO \citep{schulman2017proximal} & 0.011(0) & \underline{0.022(0)} & \textbf{1.000(0)} & 95.353(0) & 27.726(0) & 95.353(0) & 27.726(0) \\
& ADT \citep{yang2024adt}     &
0.011(0) & \underline{0.022(0)} & \textbf{1.000(0)} & 95.353(0) & 27.726(0) & 95.353(0) & 27.726(0) \\
& GRPO \citep{shao2024deepseekmath} &
0.013(0.001) & 0.025(0.001) & 0.958(0.055) & 95.347(0.027) & 27.733(0.071) & 95.356(0.011) & 27.740(0.041) \\
& L-GRPO & \underline{0.010(0)} & \underline{0.022(0)} & \textbf{1.000(0)} & 95.369(0.014) & 27.764(0.031) & 95.369(0.014) & 27.764(0.031) \\
& CPO \citep{achiam2017constrained} & 
0.011(0) & \underline{0.022(0)} & \textbf{1.000(0)} & 95.353(0) & 27.726(0) & 95.353(0) & 27.726(0) \\
\rowcolor{highlight}
& GFPO-F (Ours) &
\textbf{0.003(0)} & \textbf{0.007(0)} & \textbf{1.000(0)} & 95.442(0) & 27.926(0) & 95.442(0) &    
27.926(0) \\
\rowcolor{highlightDark}
& GFPO-FR (Ours) &
0.019(0) & 0.024(0) & 0.979(0) & \textbf{95.842(0)} & \textbf{28.875(0)} & \textbf{95.842(0)} & \textbf{28.868(0)} \\
\bottomrule
\end{tabular}
\label{tab:single_trigger_summary_compact}
}
\end{table}

\begin{table}[ht]
\centering
\scriptsize
\setlength{\tabcolsep}{3pt}
\sisetup{
  detect-weight=true,
  detect-family=true,
  table-number-alignment=center,
  round-mode=places,
  round-precision=3
}
\caption{
\textbf{Single-trigger control on CMS data (zero-shot transfer).} MC-trained policies from Table~\ref{tab:single_trigger_summary_compact} are frozen and deployed on real collision data without fine-tuning. GFPO-FR (ours) attains the highest signal efficiency on both triggers and the highest InBand on AD; GFPO-F (ours) attains the lowest MAE on both. See Appendix~\ref{appendix:real_data} for results on online training for CMS data.
}
\resizebox{\textwidth}{!}{%
\begin{tabular}{llccccccc
               }
\toprule
Trigger & Method &
{MAE$\downarrow$} & {P95$|e|$ $\downarrow$} &
{InBand $\uparrow$} & {$\epsilon_{\text{ov}}^{\ttbarraw} \uparrow$} & {$\epsilon_{\text{ov}}^{\haaFourB} \uparrow$} &
{$\epsilon_{\text{in}}^{\ttbarraw} \uparrow$} &
{$\epsilon_{\text{in}}^{\haaFourB} \uparrow$}  \\
\midrule
\multicolumn{9}{l}{\textbf{$H_{T}$ trigger}}\\
 & Constant & 0.118(0)  & 0.175(0) & 0.041(0) & 91.325(0) & 22.365(0) & 95.842(0) & 15.198(0) \\
 & PID \citep{emami2026selfdrivingtriggerlhcadaptive}      & 0.033(0) & 0.085(0) & 0.432(0) & 97.381(0) & 33.347(0)  & 97.497(0) & \textbf{35.242}(0)\\
 & DSPOT \citep{siffer2017anomaly}  & 0.163(0) & 0.072(0) & 0.432(0) & 97.211(0) & \underline{33.436(0)}  & 96.910(0) & 33.278(0) \\
 & DQN \citep{mnih2015human}   & 0.0173(0) & 0.037(0) & 0.865(0) & 97.510(0) & 33.213(0)  & 97.447(0) & 33.644(0) \\
& DQN-F  & 0.023(0) & 0.061(0) & 0.568(0) & 97.487(0) & 33.216(0)  & 97.360(0) & 32.672(0) 
\\
& PPO \citep{schulman2017proximal} & 0.016(0.003) & \underline{0.034(0.013)} & 0.838(0.11) & 97.491(0.15) & 33.222(0.20)  & 97.358(0.09) & 32.673(0.03) \\
& ADT \citep{yang2024adt}    &
0.019(0) & 0.045(0) & 0.743(0) & 97.412(0) & 33.286(0)  & 97.337(0) & 32.954(0) \\
& GRPO \citep{shao2024deepseekmath}     &
0.016(0.001) & 0.035(0.005) & 0.838(0.02) & 97.491(0.12) & 33.397(0.11) & 97.452(0.29) & \underline{33.698(0.90)} \\
& L-GRPO & 0.016(0.001) & 0.032(0.003) & 0.865(0.02) & 97.486(0.04) & 33.320(0.07) & 97.406(0.07) & 32.940(0.25) \\
& CPO \citep{achiam2017constrained} & \underline{0.014(0)} & \textbf{0.025(0)} &  0.905(0) & 97.498(0) & 33.139(0) & 97.392(0) & 32.901(0) \\
\rowcolor{highlight}
& GFPO-F (Ours)  
& \textbf{0.012(0)} & 0.042(0) & \textbf{0.99(0)} & \underline{97.587(0)} & 33.347(0) & \underline{97.508(0)} & 33.262(0) \\
\rowcolor{highlightDark}
& GFPO-FR (Ours) 
 & 0.015(0) & \underline{0.034(0)} & \underline{0.95(0)} & \textbf{97.734(0)} & \textbf{33.479(0)} & \textbf{97.821(0)} & 33.451(0) \\
\midrule
\multicolumn{9}{l}{\textbf{AD trigger}} \\
& Constant & 0.142(0) & 0.195(0) & 0.000(0) & 62.548(0) & 14.399(0) & NA & NA \\
& PID \citep{emami2026selfdrivingtriggerlhcadaptive}       & 0.035(0) & 0.080(0) & 0.405(0) & 75.053(0) & 39.191(0) & 76.210(0) & \textbf{44.573}(0) \\
& DSPOT \citep{siffer2017anomaly} & 0.106(0) & 0.087(0) & 0.419(0) & 74.622(0) & 38.297(0) & 75.387(0) & 41.843(0) \\
& DQN \citep{mnih2015human} &
0.027(0) & \underline{0.060(0.001)} & 0.527(0.02) & 75.093(0.06) & 39.930(0.13) & 76.013(0.51) & 44.178(1.51) \\
& DQN-F &
0.029(0) & 0.064(0) & 0(0) & 62.55(0) & 14.40(0) & NA & NA \\
& PPO \citep{schulman2017proximal} & 0.027(0) & 0.061(0) & 0.541(0) & 75.059(0) & 39.700(0)  & 75.601(0) & 43.153(0) \\
& ADT \citep{yang2024adt}     & 0.028(0) & 0.067(0) & 0.541(0) & 75.011(0) & 39.510(0)  & 75.614(0) & 42.930(0) \\
& GRPO \citep{shao2024deepseekmath} & 0.027(0) & 0.061(0.002) & 0.541(0.01) & 75.059(0.04) & 39.700(0.02) & 75.601(0.12) & 43.153(0.18) \\
& L-GRPO & 0.027(0) & 0.063(0.002) & 0.541(0.01) & 75.059(0.06) & 39.703(0.17) & 75.597(0.28) & 43.154(1.08) \\
& CPO \citep{achiam2017constrained} & 0.027(0) & 0.061(0) & 0.541(0) & 75.059(0) & 39.703(0) & 75.597(0) & 43.155(0) \\
\rowcolor{highlight}
& GFPO-F (Ours) 
& \textbf{0.022(0.001)} & \textbf{0.057(0.001)} & \textbf{0.689(0.04)} & \underline{75.562(0.26)} & \underline{40.083(0.25)} & \underline{76.239(0.25)} & 42.533(1.05) \\
\rowcolor{highlightDark}
& GFPO-FR (Ours) 
& \underline{0.025(0)} & \underline{0.060(0)} & \underline{0.608(0)} & \textbf{75.582(0)} & \textbf{40.309(0)} & \textbf{76.515(0)} & \underline{44.477(0)} \\
\bottomrule
\end{tabular}
}
\label{tab:single_trigger_summary_compact_realdata}
\end{table}

\textbf{Setting} Following \SEpaper, we study a single-trigger that accepts an event if a scalar score exceeds a threshold $\threshold_{t}$. The controller updates $\threshold_{t}$ once per chunk to keep the background acceptance rate at $r_{B}^{*} = 0.25\%$ - equivalent to 100 kHz under $r_{kHz} = 400 \cdot r$ - within tolerance $|r_t-r^{\star}_{B}|\le\tau$ with $\tau=0.025\%$ (i.e., $r_{t} \in [r^{-}, r^{+}] =  [90,110]\,\mathrm{kHz}$), while maximizing signal efficiency on \ttbar\ and \haaFourB~\citep{cella2024atlas}. We compare five controllers: (i) a constant menu threshold fixed from initial calibration, (ii) a PID controller updated per chunk on the background rate error (iii) DSPOT~\citep{siffer2017anomaly}, which fits a Generalized Pareto Distribution to the background score tail and sets thresholds at false-positive rate $q = r_{B}^{*}$ (Appendix~\ref{app:related_work_extended}), (iv) a DQN \citep{mnih2015human} controller, (v) DQN-F, an ablation in which the DQN is trained on a single calibration chunk and then frozen. We use $\gamma = 0.95$ for our valued-based DQN. Dataset description is in Appendix~\ref{appendix:dataset_summary}.

\noindent \textbf{RL formulation} The state has three parts (i) a length-$K$ window of recent background event scores together with the pileup context number of primary vertices $N_{pv}$; (ii) controller context ($\threshold_{t}, \hat{r}_{t}, \hat{r}_{t-1} , \Delta \threshold_{t-1}$) (iii) "near-threshold" occupancy features computed in narrow bands around the current cut, which estimate the local sensitivity $\frac{\partial r}{\partial \threshold}$.
The occupancy features carry information about distribution shape near the operating point that is unavailable to PID.

We present feature construction for state representations, sequential network architecture comparison (cf. Table~\ref{tab:rnn_trigger_comparison}) and the sequential network learning algorithm (Algorithm~\ref{alg:seq_state_and_controller}) in Appendix~\ref{sec:sequential_network_architectures}.  

\paragraph{Action and reward.} 
The agent picks a discrete increment $a_t=\Delta \threshold _t \in \actionspace$ and updates
\begin{equation}
\threshold_{t+1}=\mathrm{clip}\!\left(\threshold_t+\Delta\threshold_t,\,\threshold_{\min},\threshold_{\max} \right),
\qquad
r_{t+1}=r_B(\threshold_{t+1}),\quad
\epsilon_{t+1}^{i}=r_i(\threshold_{t+1}) \ 
\end{equation}

where $r_B(\cdot)$ is the background acceptance and $r_{i}$ the acceptance of signal $i \in \{ \ttbarraw\ , H\!\to\!AA\!\to\!4b \}$. 
We compute one reward per trigger $k\in\{H_T,\mathrm{AD}\}$. The trigger index $k$ and the signal index $i$ are distinct: $k$ denotes the trigger whose threshold the agent controls, while $i$ ranges over the physics signals in the efficiency term. All threshold-dependent quantities ($\threshold_t$, $r_{t+1}$, $\epsilon_{t+1}^{i}$) are evaluated for trigger $k$; the two triggers share the functional form below, so we state it once here and read $R_t^{k}$ off the trigger-specific quantities.

{\scriptsize
\begin{equation}
R_t^{k}
\;=\;
\lambda_1
\underbrace{
  \begin{cases}
    1 - \displaystyle\left(\frac{|r_{t+1} - r^\star_{B}|}{\tau}\right)^{\!2}
      & \text{if } |r_{t+1} - r^\star_{B}| \leq \tau \\[6pt]
    -\displaystyle\left(\frac{|r_{t+1} - r^\star_{B}|}{\tau} - 1\right)
      & \text{if } |r_{t+1} - r^\star_{B}| > \tau
  \end{cases}
}_{\text{rate tracking}}
\;+\;
(1 - \lambda_1)
\underbrace{
  \Big(\alpha\, \epsilon_{t+1}^{\ttbarraw}
  + (1-\alpha)\, \epsilon_{t+1}^{\haaFourB}\Big)
}_{\text{signal efficiency}}
\;-\;
\lambda_2
\underbrace{
  \left(\frac{|\Delta \threshold_{t-1}|}
             {\Delta \threshold_{\max}}\right)
}_{\text{move penalty}}.
\label{equation:reward_design}
\end{equation}
}
The capital $R_t^{k}$ is the reward, not the background rate $r$. Here $\Delta\threshold_{\max}$ is the maximum per-step update (the safety shield), $\tau$ is the tolerance width, and $\lambda_{1,2}>0$ trade off physics utility against smooth, rate-stable control. 
In real LHC operations, trigger thresholds cannot be adjusted arbitrarily. They must satisfy strict operational constraints arising from detector readout bandwidth~\citep{aaboud2017performance}, computing resources~\citep{atlas2020operation}, and detector safety limits~\citep{2137105}. For instance, the ATLAS Level-1 trigger rate must remain below approximately 100 kHz due to readout limitations, and trigger menus are designed such that thresholds and prescale factors maintain rates within these limits~\citep{aaboud2017performance}. Therefore, we introduce a safety shield $\Delta \threshold_{max}$ that restricts the controller to threshold configurations that satisfy these operational constraints~\citep{alshiekh2018safe, dawood2025dynamic}.

In practice, we select them via grid search over 
$\{0.0, 0.25, 0.5, 0.75, 1.0\}^{2}$. Figure~\ref{fig:pareto_plot_grid_ht_ttbar} reports the sensitivity analysis of $\lambda_{1,2}$ for $H_{T}$ trigger evaluated on 20\% MC.  Ablation study and hyperparameter 
sensitivity on both triggers are reported in Appendix~\ref{appendix:hyperparameter_ablation}. Sweeping $(\lambda_1, \lambda_2)$
over $\{0.0, 0.25, 0.5, 0.75, 1.0\}^2$ moves the convex weight between 3 coefficient terms for Equation~\ref{equation:reward_design} jointly, and DQN tracks this
trade-off rather than escaping it: its upper hull spans in-band rates from
$\approx 0.8$ to $1.0$ in Figure~\ref{fig:pareto_plot_grid_ht_ttbar}, so the operating point
is dictated by the weighting.

We will return to this figure in
Section~\ref{sec:gfpo}, where our proposed methods collapse the sweep to a single
high-feasibility, high-efficiency cluster. Here this figure establishes the baseline
difficulty that motivates the rest of the paper. 

\paragraph{Metrics.}
Let $\hat r_t$ be the observed background rate at chunk/step $t\in\{1,\dots,T\}$ and $e_t \;=\; \hat r_t - r_B^{*}.$
We report the mean absolute tracking error (MAE) $=T^{-1}\sum_{t} |e_t|,$
and the 95th percentile of the absolute error P$95|e|$, 
such that $95\%$ of control steps satisfy $|e_t|\le \mathrm{P95}|e|$, capturing rare large deviations. We report a tolerance band round the target $r^- \le \hat r_t \le r^+$, where typically $r^- = r_B^{*}-\tau$ and $r^+ = r_B^{*}+\tau$ for tolerance $\tau$.
Define the in-band fraction
$\mathrm{InBand} \;= T^{-1} \sum_{t} \mathbf{1}\!\left[r^- \le \hat r_t \le r^+\right]$
For each signal process $i\in\{\ttbarraw,\haaFourB\}$, let $N^{i}_{\text{pass}}(t)$ be the number of signal events accepted at step $t$ under the applied threshold $\threshold_t$, and $N^{i}_{\text{tot}}(t)$ be the total number of signal events available at step $t$.
We report two signal efficiencies, restricting to in-band steps or to all (i.e., overall) steps:
$
\epsilon_{\text{in}}^{i} = \frac{\sum_{t} \mathbf{1}[r^- \le \hat r_t \le r^+]\, N^{i}_{\text{pass}}(t)}{\sum_{t} \mathbf{1}[r^- \le \hat r_t \le r^+]\, N^{i}_{\text{tot}}(t)}$ and $
\epsilon_{\text{ov}}^{i} = \frac{\sum_{t} N^{i}_{\text{pass}}(t)}{\sum_{t} N^{i}_{\text{tot}}(t)}
$.

\noindent \textbf{Results}
Tables~\ref{tab:single_trigger_summary_compact} and~\ref{tab:single_trigger_summary_compact_realdata} report all controllers on MC and on CMS data with no fine-tuning; we discuss the non-learning baselines (constant menu and PID) and DQN here, and defer more variants to Section~\ref{sec:from_dqn_to_grpo} and Section~\ref{sec:gfpo}. For holdout MC samples, DQN improves InBand from 0.52 to 1.00 ($H_{T}$) and 0.73 to 0.99 (AD) over PID, with comparable or higher signal efficiency on both \ttbar\ and \haaFourB\ 
(Table~\ref{tab:single_trigger_summary_compact}). The frozen ablation DQN-F degrades sharply on $H_T$ as InBand only reaches 0.74 and AD InBand falls below the vanilla DQN controller (0.85 vs. 0.99). DQN-F shares the same network and inputs as DQN, with only the online updates removed. The gain thus comes from continued adaptation, not from a richer policy class. A basic DQN, however, does not suffice. Figures~\ref{fig:sensitivity_probe_over_time} and~\ref{fig:near_cut} show that the local slope $\frac{\partial r}{\partial \threshold}$ is non-stationary across the run and exhibits heteroskedastic, occasionally high-sensitivity regimes where small threshold changes induce large rate excursions. Two consequences follow. First, non-stationarity favors on-policy learning over off-policy value estimation with stale targets. Second, DQN requires storing thousands of $(s, a , r, s')$ transitions. For hardware implementations in LHC triggers, the memory is scarce and eliminating the replay buffer is a practical advantage. We present qualitative improvements on signal efficiency in Appendix~\ref{Appendix:dqn_additional}.

\subsection{Critic Free Policy Optimization}
\label{sec:from_dqn_to_grpo}
DQN's gain over PID isolates the value of online adaptation, but two properties of the trigger environment make value-based control fragile in deployment. First, background score distributions shift continuously with pileup and detector conditions (Figure~\ref{fig:score_summary} in Appendix~\ref{appendix:dataset_summary}), so replay data quickly becomes off-distribution. Second, as stated above, the local sensitivity $\frac{\partial r}{\partial \threshold}$ is non-stationary and occasionally large, forcing a single Q-function to extrapolate across rapidly changing local dynamics, a known failure mode for TD bootstrapping~\citep{andrychowicz2020matters}. PPO~\citep{schulman2017proximal,li2026turn} avoids replay by training a
separate value network, the \emph{critic}, estimating expected return
to compute advantages~\citep{konda1999actor, schulman2016gae}. Under drift
the critic itself becomes stale~\citep{pardo2018time}. We formalize this argument in Appendix~\ref{appendix:theory}. Under per-chunk drift, PPO's gradient bias scales with the
realized drift and compounds across chunks through critic bootstrapping.

We adapt Group Relative Policy Optimization (GRPO) \citep{shao2024deepseekmath, guo2025deepseek} to the streaming control setting, a memory-efficient PPO variant that removes the learned critic and instead uses \emph{within-group reward normalization}. By comparing a small group of candidate threshold updates under the current operating context, GRPO provides a relative learning signal that is less sensitive to reward-scale shifts and regime-dependent sensitivity, yielding more reliable updates under run drift. At each step $t$, GRPO samples $G$ candidate actions $a_{t}^{(g)} \sim \pi_{\theta_{\text{old}}} (\cdot | s_{t} )$ (we also write $\threshold_{t}^{(g)} = \threshold_{t} + a_{t}^{(g)}$ for the $g$-th candidate  when emphasizing its
role as a group member), evaluates each on a window $W_{t}$ of recent events, and forms advantages by within-group normalization:
\begin{equation}
A_t^{(g)} \triangleq  \frac{R_t^{(g)} - \mu_t}{\sigma_t + \varepsilon}, \qquad \mu_t \triangleq  \tfrac{1}{G}\sum_g R_t^{(g)},\; \sigma_t^2 \triangleq \tfrac{1}{G}\sum_g (R_t^{(g)} - \mu_t)^2.
\end{equation}
The reward $R_{t}^{(g)}$ reuses Equation~\ref{equation:reward_design} with window-level aggregates only, so each candidate evaluation costs a single pass over $W_{t}$, with no learned critic, no inference-time generation, and no auxiliary verifier model. The reward form is identical for the two triggers, so we run GRPO independently for each trigger $k\in\{H_T,\mathrm{AD}\}$ and suppress $k$ for the remainder of the method.
The policy is updated by maximizing \begin{equation}
\mathcal{L}_{\text{GRPO}}(\theta) = \mathbb{E}_t\!\left[\tfrac{1}{G}\sum_g A_t^{(g)} \log \pi_\theta(a_t^{(g)} \mid s_t)\right] - \beta\, \mathbb{E}_t\!\left[\mathrm{KL}\!\left(\pi_\theta \,\|\, \pi_{\theta_{\text{old}}}\right)\right].
\label{eq:loss_grpo}
\end{equation}
As advantages depend only on the \emph{relative} ranking of candidates within each group, GRPO is robust to absolute reward-scale shifts that destabilize critic-based methods.

\paragraph{Discussion on \boldmath$\gamma = 0$.} The default in RL is to
discount future reward, as our previous valued-based RL methods DQN and PPO use conventional non-zero discount $\gamma  = 0.95$. Here the discount is a
free choice rather than a property of the problem: three structural features make
this control problem myopic~\citep{jiang2015dependence}. First, reward is immediate: each action's effect lands on the
next batch, with no delayed rewards. Second, the only
action-dependent state is the threshold, which the policy resets at every step, so
look-ahead buys nothing that immediate correction does not. Third, the uncontrolled
part of the state drifts \emph{exogenously}, driven by beam conditions (changing pileup) rather than
by the policy. Together these make the greedy policy optimal, so every $\gamma$
induces the same optimum and $\gamma = 0$ is exact rather than approximate.
We verify this empircally: DQN at $\gamma = 0$ matches its $\gamma = 0.95$ counterpart,
which is why the conventional $\gamma = 0.95$ above is harmless. We set $\gamma = 0$
for GRPO because here it is not merely harmless but enabling: the return collapses
to the immediate reward, so the group-relative advantage estimates the per-step
advantage \emph{exactly}, with no value function. The one genuinely temporal concern is cross-step smoothness: a myopic policy could
chase the per-step optimum with large, oscillating threshold jumps. We suppress
this directly through the move-penalty term $\lambda_2\,(|\Delta c_{t-1}| /
\Delta c_{\max})$ in $R_t$ (Equation~\ref{equation:reward_design}), which charges each
step for the magnitude of its threshold change. Because this penalty enters the
immediate reward, smoothness is enforced at the step where the jump occurs rather
than through a discounted horizon, so $\gamma = 0$ for GRPO sacrifices nothing.
\paragraph{The constrained-RL response is insufficient.} The standard reaction to a constraint-violation problem is to make the constraint penalty adaptive. Indeed,
constrained RL~\citep{altman2021constrained, achiam2017constrained} replaces a fixed
penalty with a dual variable that adapts to observed violations. A natural
instantiation here replaces the fixed $\lambda_1$ (the background-rate penalty) in
Eq.~\ref{equation:reward_design} with a dual variable $\lambda_t$ updated by
projected dual ascent~\citep{tessler2018reward, stooke2020responsive}:
\begin{equation}
\lambda_{t+1} = \big[\lambda_t + \alpha_{\text{step}}(|r_t - r_B^\star| - \tau)\big]^+
\end{equation} 
This yields \emph{Lagrangian GRPO} (L-GRPO). On MC, L-GRPO roughly matches GRPO for AD ($0.958$ vs.\ $1.000$). Under distribution shift to CMS data, however, L-GRPO collapses to InBand $0.541$ for AD trigger (Table~\ref{tab:single_trigger_summary_compact_realdata}), and enabling test-time training provides no improvement in AD trigger ($0.541$ in Table~\ref{tab:single_trigger_summary_compact_realdata_online}). The failure mode is structural, not a hyperparameter issue (see Appendix~\ref{appendix:L_GRPO} for details). In $\sim\!30.8\%$ ($H_T$) and $\sim\!32\%$ (AD) of MC steps, the sampled group of $G$ candidates contains \emph{zero} rate-feasible actions (Figure~\ref{fig:grpo_failure_mode_ht} and Appendix~\ref{appendix:GRPO_Failure_Mode}). When this happens, every candidate is rate-infeasible (out-of-band), but group-relative normalization still assigns positive advantage to the above-mean candidates, all of which violate the rate budget, and the policy update \emph{reinforces} a rate violation. 

This limitation is not specific to the adaptive penalty. The natural response is
to reach for a more principled constrained method, and Constrained Policy
Optimization (CPO)~\citep{achiam2017constrained} is the canonical one: rather
than fold the rate constraint into the reward, it enforces the constraint
directly, solving a linearized program at each update subject to a trust-region
constraint on the Kullback--Leibler (KL) divergence between consecutive
policies~\citep{schulman2015trust}, which approximately satisfies the
expected-cost constraint with a bounded per-iteration violation
(Appendix~\ref{appendix:cpo}).  Does moving the
constraint out of the reward and into the update fix the failure?

It turns out that it does not. To isolate why, we give CPO the same critic-free,
group-relative advantage estimation used by GRPO, so that the
value-estimation step (unreliable in the rare-signal, high-imbalance regime)
cannot confound the comparison. CPO and GRPO then differ in
exactly one place: where the background rate constraint is enforced (See 
Appendix~\ref{appendix:cpo}). 
CPO's guarantee is
genuinely stronger than L-GRPO's. But the guarantee concerns the \emph{expected}
cost under a step in parameter space, not the feasibility of the candidates the
policy actually proposes, and here in LHC settings it is the latter that matters.

Consider what happens when the current policy concentrates on the
(background) rate-infeasible region, the same condition that produced the all-infeasible
groups above. CPO's linearized constraint becomes infeasible, and the algorithm
falls back to a recovery step~\citep[Eq.~14]{achiam2017constrained} that
descends purely along the cost direction and discards the reward gradient.
Under distribution shift the feasible region moves, and this fallback activates
on roughly half of the CMS chunks (AD trigger in Table~\ref{tab:single_trigger_summary_compact_realdata}). Intuitively, CPO spends half its updates
merely climbing back towards feasibility, never improving inband rate.

In our trigger the consequence is sharper still. Because the executed action is
screened from the policy (the shield out of band, and the dominant zero-move
action in band), these updates cannot change behavior at all. Indeed, test-time training reproduces the frozen-policy rollout to
floating-point precision (the CPO rows of
Tables~\ref{tab:single_trigger_summary_compact_realdata}
and~\ref{tab:single_trigger_summary_compact_realdata_online} are nearly identical;
see Appendix~\ref{app:cpo-degeneracy}). Like the adaptive penalty, CPO
acts on the objective and the parameter update, but \emph{never} on the proposal
distribution: the agent is asked to choose among bad options, and the choice it
makes is irrelevant to whether good options exist. The fix is to enforce feasibility \emph{before} advantage normalization, not via the reward. We develop this in Section~\ref{sec:gfpo}.
\vspace{-1mm}

\input{GFPO}
\input{Anomaly_detection_benchmarks}
\label{sec:RL_multitrigger}

%% file: GFPO.tex
\section{Group-Filtered Policy Optimization}
\label{sec:gfpo}
The previous section showed that GRPO and L-GRPO fail under distribution shift because the sampled candidate group frequently contains zero feasible actions. We address this by enforcing feasibility \emph{before} advantage normalization. This is the same algorithmic skeleton as Group Filtered Policy Optimization (GFPO)~\citep{shrivastava2025sample}, originally proposed for LLM reasoning, where candidates are filtered by response length and token efficiency. We adapt this skeleton to streaming trigger control, where the natural filter is rate feasibility, and develop two variants (GFPO-F, GFPO-FR) tailored to the rate--signal trade-off. The resulting method has two design choices: (i) which candidates to keep, and (ii) how to handle steps where too few feasible candidates exist.\footnote{Rollouts are sampled at temperature $T{=}1.0$, i.e., directly
from the policy's softmax distribution
$\pi_\theta(a \mid s_t) \propto \exp(z_a / T)$, where $z_a$ is the
unnormalized score (logit) the policy network assigns to action $a$,
without sharpening ($T{<}1$) or flattening ($T{>}1$).} 
\begin{figure}
    \centering 
    \begin{subfigure}{1.0\textwidth}
\includegraphics[width=\linewidth, trim={0.1cm 0.2cm 0 0.2cm}, clip]{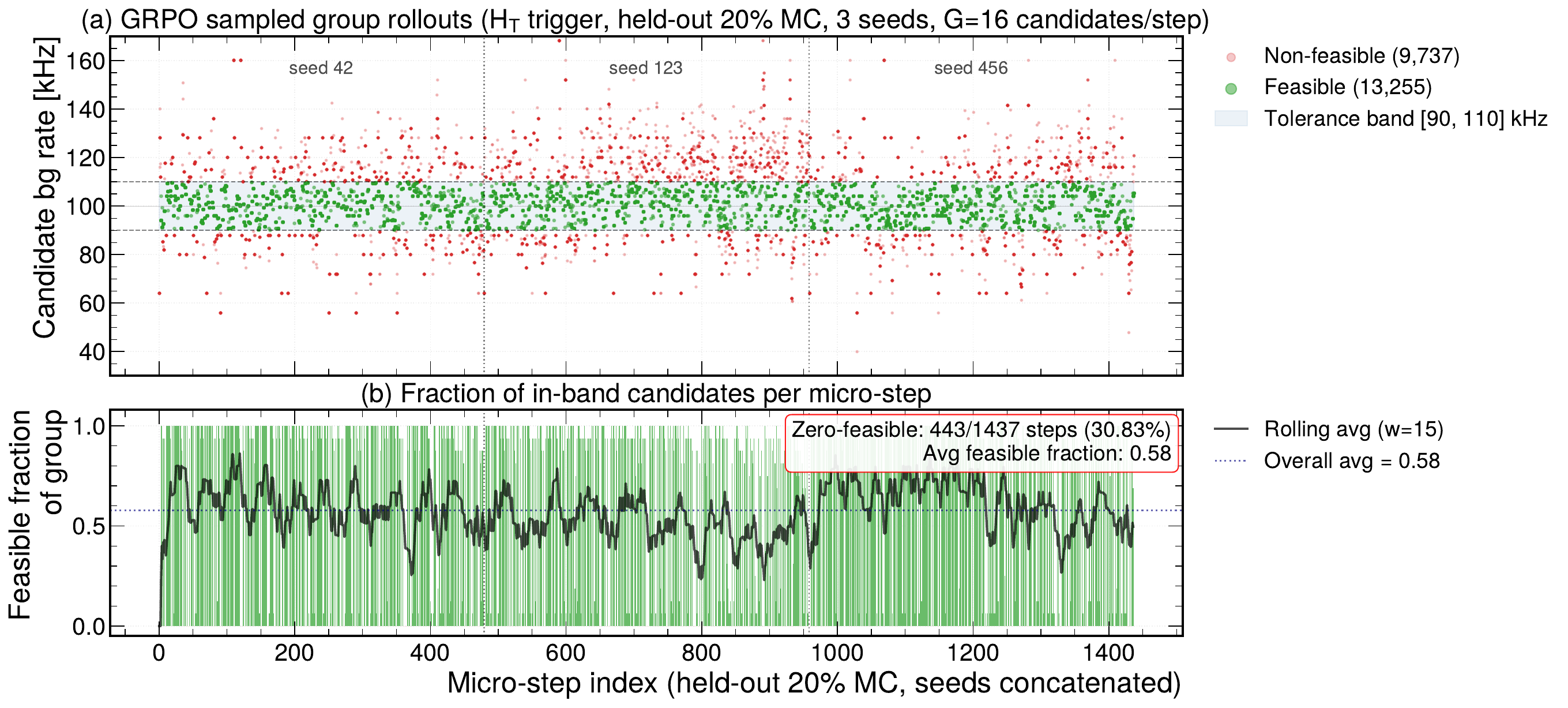}
    \end{subfigure}
    \caption{\textbf{GRPO's group-feasibility failure on $H_{T}$ (latter 20\% of MC, $G=16$) averaged across 3 seeds.} (a) Candidate background rates; \textcolor{darkgreen}{green}/\textcolor{red}{red} denote feasible/infeasible w.r.t. the tolerance band. (b) Per-step feasible fraction $f_t = n_{\text{feas}}/G$ (run mean $\langle f_t\rangle = 0.58$). $f_t{=}0$ on $30.8\%$ of steps, so the group has no in-band sample and GRPO reinforces the least-infeasible (still out-of-band) action. AD trigger is in Figure~\ref{fig:grpo_failure_mode_ad}. 
  }
  \label{fig:grpo_failure_mode_ht}
\end{figure}
\vspace{-1mm}
\paragraph{Feasibility set and kept set.} At step $t$, the feasibility set is
\begin{equation}
\mathcal{F}_t(\tau) \triangleq \{ g \in [G] : |\hat{e}_t(\threshold_t^{(g)})| \le \tau \},
\end{equation}
where $\hat{e}_t=r_{t,W_t} -r_{B}^{*}$ is the window-aggregated rate error. GFPO selects a kept set $\mathcal{K}_t \subseteq [G]$ with $|\mathcal{K}_t| = K$ and performs GRPO-style normalization over $\mathcal{K}_{t}$ with the same reward as Eq.~\ref{equation:reward_design}: 
\vspace{-1mm}
\begin{align}
A_t^{(g)} &\triangleq \frac{R_t^{(g)}-\mu_t}{\sigma_t+\varepsilon}, \mu_t \triangleq \frac{1}{|\textcolor{red}{\mathcal{K}_t}|}\sum_{g\in\textcolor{red}{\mathcal{K}_t}} R_t^{(g)},
\\
&\sigma_t^2 \triangleq \frac{1}{|\textcolor{red}{\mathcal{K}_t}|}\sum_{g\in\textcolor{red}{\mathcal{K}_t}}\left(R_t^{(g)}-\mu_t\right)^2.
\label{eq:gfpo_advantage}
\end{align}
and updates the policy with the GRPO objective restricted to $\mathcal{K}_t$  (Eq.~\ref{eq:loss_grpo} with
$g \in \mathcal{K}_{t}$). When $\mathcal{F}_{t} = \emptyset$ for either variant, the update is \emph{shielded}: skipped entirely to prevent reinforcing infeasible actions.
\paragraph{GFPO-F: rate-error ranking.} GFPO-F keeps the $K$ candidates with smallest absolute rate error (i.e. TopK), regardless of feasibility, biasing learning toward the band boundary and stabilizing training when feasible candidates are scarce:
\vspace{-0.5em}
\begin{equation}
\textcolor{red}{\mathcal{K}_t} = \mathrm{TopK}\big(\{-|\hat{e}_t(\threshold_t^{(g)})|\}_{g \in [G]}\big).
\label{eq:gfpo_f}
\end{equation}
\paragraph{GFPO-FR: feasible first then ranked by signal efficiency.}
GFPO-FR strictly enforces feasibility when possible, and then ranks by signal efficiency $\hat{\epsilon}_{t}^{\text{mix}} (\threshold_{t}^{g}) = \alpha \hat{\epsilon}_{t}^{\ttbarraw} + (1-\alpha) \hat{\epsilon}_{t}^{\haaFourB}$:
{\small
\begin{equation}
\textcolor{red}{
\mathcal{K}_t}
\triangleq
\begin{cases}
\mathrm{TopK}\Big(\ \hat\epsilon_t^{\mathrm{mix}}(\threshold_t^{(g)})\ \Big) \subseteq \mathcal{F}_t(\tau),
& \text{if } |\mathcal{F}_t(\tau)| \geq K,\\[4pt]
  \mathcal{F}_t(\tau) \cup \mathrm{TopK}\Big(\ -|\hat{e}_t(\threshold_t^{(g)})|\ \Big) \subseteq [G]\setminus\mathcal{F}_t(\tau),
& \text{if } 0 < |\mathcal{F}_t(\tau)| < K,\\[4pt]
  \mathrm{TopK}\Big(\ -|\hat{e}_t(\threshold_t^{(g)})|\ \Big) \subseteq [G],
& \text{if } \mathcal{F}_t(\tau) = \emptyset.
\end{cases}
\label{eq:gfpo_fr}
\end{equation}
}

When $0 < |\mathcal{F}_t| < K
$, the kept set is \emph{padded} with the closest-to-feasible out-of-band candidates. GFPO-FR seeks signal efficiency within the safe region; GFPO-F trades signal for tighter rate control.



\begin{wrapfigure}{l}{0.52\textwidth}
    \centering
    \includegraphics[width=\linewidth]{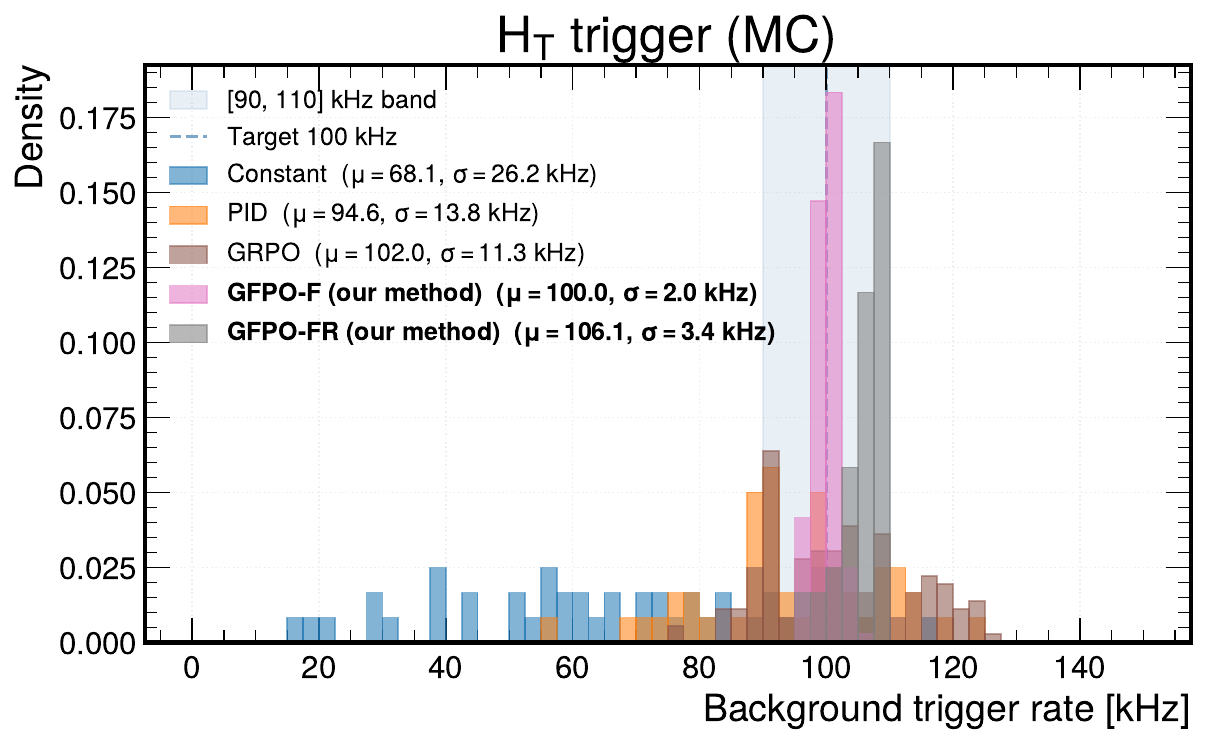}
    \caption{Background rates for $H_{T}$ (evaluated on 20\% MC). Our methods (GFPO-F and GFPO-FR) concentrate the background rate inside the $[90, 110]$~kHz
tolerance band around the $100$~kHz target, while the baselines spread well \emph{outside}
it. GFPO-F sits on target
($\mu = 100.0$~kHz) with the tighter distribution ($\sigma = 2.0$~kHz). GFPO-FR
runs at a slightly higher mean ($\mu = 106.1$~kHz) with a marginally wider spread
($\sigma = 3.4$~kHz).}
    \label{fig:bkg_rate_5_methods_histogram_mc}
\end{wrapfigure}
\paragraph{Diagnostics.} 
Figure~\ref{fig:bkg_rate_5_methods_histogram_mc} 
shows our methods 
concentrate background rates \emph{well} inside the $[90, 110]$~kHz band. Figure~\ref{fig:grpo_failure_mode_ht} directly quantifies this failure on the $H_T$
trigger over the held-out 20\% of MC: the average feasible fraction per
group $G$ is only $0.58$, and on $30.8\%$ of steps the sampled group contains
\emph{zero} rate-feasible candidates. On these zero-feasible steps,
group-relative normalization would still assign positive advantage to
above-mean candidates, such that vanilla GRPO reinforces the least-infeasible
(but still out-of-band) action.
\begin{wrapfigure}[22]{r}{0.48\textwidth}
    \centering
    \includegraphics[width=\linewidth, trim=0 0 0 5pt, clip]{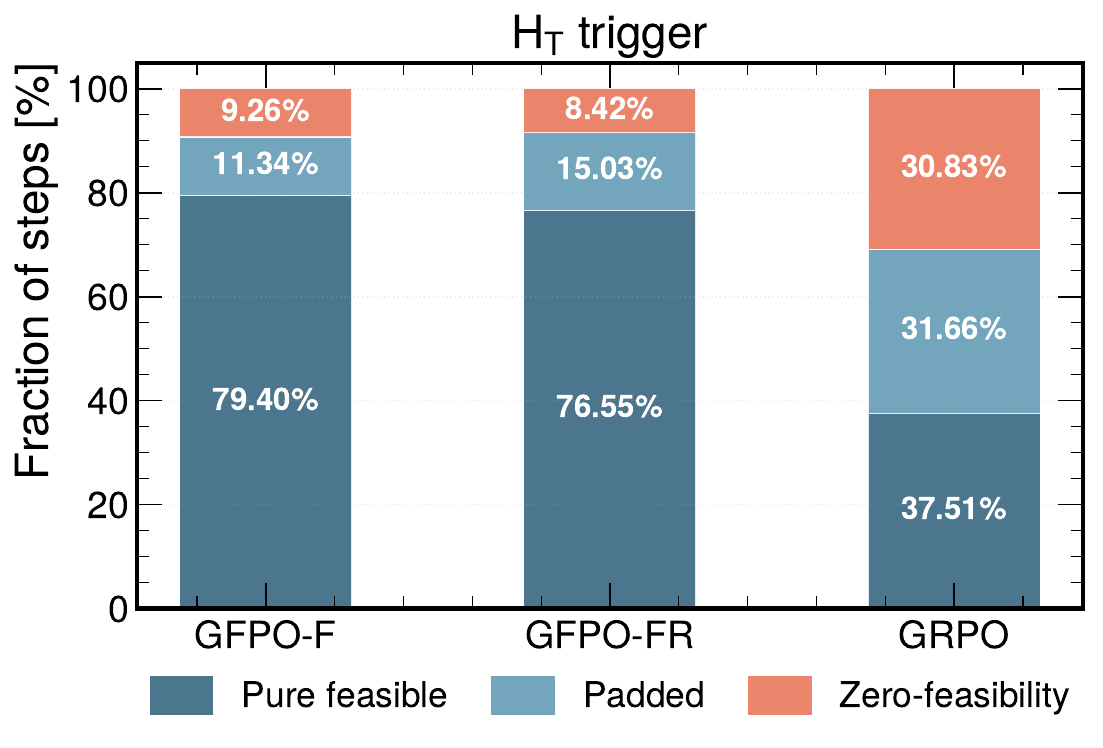}
    \caption{Average step composition for $H_T$ ($K{=}16$, $G{=}64$ for GFPO,
    $G{=}K{=}16$ for GRPO). Each step is classified by $|F_t|$ relative to $K$:
    Pure feasible ($|F_t|\geq K$, kept set $\mathcal{K}_t \subseteq \mathcal{F}_t$), Padded ($0<|F_t|<K$, kept set mixes feasible and out-of-band candidates), Zero feasibility ($\mathcal{F}_t = \emptyset$). 
    AD trigger is in Figure~\ref{fig:gfpo_intermediate_ad}.
    }
    \label{fig:gfpo_intermediate_ht}
\end{wrapfigure}
Figure~\ref{fig:gfpo_intermediate_ht} compares both filters with vanilla GRPO on $H_{T}$ (20\% MC).\footnote{Vanilla GRPO does not pad; We report its \textit{Padded} rate as the fraction of steps with any out-of-band candidate.} The two filters share the same architecture and differ only in their keep rule, which
is what induces the \emph{accuracy--coverage} trade-off we observe. GFPO-F keeps
the candidates closest to the target rate, collapsing the policy onto a tight,
target-centered action distribution; this yields the highest pure-feasible update
rate (79.40\%), at the cost of a slightly higher zero-feasibility rate (9.26\%),
since a narrow spread occasionally misses the band when the background events drift.
GFPO-FR instead ranks by signal, biasing the policy toward looser cuts that favor signal efficiency. Note that its executed rate deviation is about $4\times$ larger (MAE $0.016$ vs.\
$0.004$ on $H_T$ in Table~\ref{tab:single_trigger_summary_compact}), reflecting
operation nearer the edge of the band. Crucially, zero-feasibility groups, those with no
in-band candidate and hence no learning signal, remain rare for both
(8.42\% for GFPO-FR vs.\ 9.26\% for GFPO-F): GFPO-FR trades a few perfectly-feasible
groups for marginally fewer infeasible ones, buying signal efficiency at a small in-band cost (97.9\% vs.\ 100\% on AD in Table~\ref{tab:single_trigger_summary_compact}).

%% file: Anomaly_detection_benchmarks.tex
\section{Beyond HEP: Anomaly Detection Benchmarks}
\label{sec:anomaly_detection_benchmarks}
The LHC trigger problem is an instance of a more general one: a streaming controller must hold an aggregate false-positive rate inside a hard tolerance band while retaining rare signal events under drifting score distributions. \emph{Is the GRPO failure mode of Section~\ref{sec:gfpo} specific to particle physics domains, or a more generic property of group-sampled policy optimization under hard rate constraints?} 

We argue the latter. The failure structure is rate-constrained threshold control under drift, which appears wherever (i) decisions reduce to thresholding a scalar score, (ii) the background distribution drifts, and (iii) a hard, non-tunable rate budget governs allowable positives. The CMS trigger chain instantiates (iii) sharply: a collision rate $40\,\mathrm{MHz}$
 to $\sim 2-3 \mathrm{kHz}$ for permanent storage, bounded by the write speed and buffer capacity of the tape~\citep{arsuaga2021lhc}. 
Bandwidth is zero-sum, collisions are irreversible, and every false positive permanently displaces a signal event. Thus violations induce detector deadtime. None of (i), (ii), (iii) is physics-specific. The zero-feasibility failure mode arises in any setting satisfying them once the policy drifts off the feasible manifold.

We test this on two benchmarks. \textbf{UNSW-NB15}~\citep{moustafa2015unsw} preserves the full rate-constrained protocol and tests whether the failure mode and GFPO's resolution transfer to a non-physics domain. \textbf{NAB}~\citep{lavin2015evaluating} drops the rate constraint and reports standard anomaly-detection metrics, isolating whether our components (sequence-based state, adaptive thresholding) are competitive on more standard problems.

\subsection{Experimental Setup}
\label{sec:benchmark_setup}
\paragraph{Common protocol.} We fix every component except the dataset and context variable: an autoencoder trained on normal-class data provides the scalar score $s(x_i)$; thresholds are updated once per temporal chunk; a target false-positive rate $r^\star$ and tolerance $\tau$ with $\tau/r^\star \approx 10\%$ are pre-registered to match LHC stringency. We report InBand fraction, MAE, $\mathrm{P95}|e|$, and signal true-positive rate (TPR) using metrics defined in Section~\ref{sec:single_trigger}. Physics-specific features (pileup $N_{\mathrm{PV}}$, distance-to-cut in GeV) are replaced by domain-specific context variables. 
\vspace{-1mm}
\paragraph{UNSW-NB15.} UNSW-NB15 is a network
intrusion detection benchmark: each record is a network connection
(a flow of packets between two hosts) described by flow-level features
(duration, byte and packet counts, protocol), labeled as benign traffic or
as one of nine attack categories. UNSW-NB15 preserves the full rate-constrained protocol, providing a sim-to-real transfer test analogous to MC$\,\to\,$CMS: we train on Period~1 and deploy the frozen policy on Period~2. Benign network traffic is the background.  \emph{Exploits} serve as the easy signal and \emph{Backdoors} as the hard signal (analogous to $t\bar{t}$ and $h\to4b$). The context variable is the per-window connection rate, which tracks operating-regime shifts analogously to pileup. In the LHC setting we call this the background rate; in UNSW-NB15 the corresponding quantity is the \emph{false alert rate} (FAR), the per-chunk fraction of benign records flagged as anomalies. Preprocessing details and training setup are in Appendix~\ref{appendix:unsw_nb15}. 

\paragraph{NAB.} The Numenta Anomaly Benchmark is a standard benchmark for streaming anomaly detection, comprising real-world univariate time series
  that are sampled at regular intervals, hand-labeled with anomaly windows, and processed strictly online with no lookahead. We evaluate on its
  \texttt{realKnownCause} and \texttt{realAWSCloudwatch} categories, 24 streams of server and machine telemetry such as AWS CloudWatch CPU/network metrics,
  machine and ambient temperatures, and server CPU under known faults, since these best match the trigger setting of flagging anomalies on a single live
  metric under a controlled alert budget. Each stream is converted to a sliding-window z-score, split temporally into 70\% calibration and 30\% evaluation,
  and consumed in chunks of 100 timesteps, with detections scored against the labeled windows. NAB drops the rate constraint and reports standard detection metrics (precision, recall, F1). The agent optimizes a pure detection-quality reward (Appendix~\ref{app:nab}). The purpose is to isolate our sequence-based state and adaptive thresholding from feasibility-filtered policy optimization: if our gains came solely from the latter, NAB results would be flat. We define \emph{Constant-opt} as the oracle static threshold maximizing F1 in hindsight and methods underperforming it do not meaningfully leverage adaptivity. Constant-opt is well-defined for NAB but not for UNSW-NB15, where rate regulation and signal TPR create a tension no static threshold shall resolve.
\subsection{Experimental Results}
\label{sec:anomaly_detection_experiment_results}
\vspace{-1mm}
\noindent \paragraph{UNSW-NB15 results.} 


\begin{figure}[t]
    \centering
    \begin{subfigure}{0.496\textwidth}
        \centering
        \includegraphics[width=\linewidth]{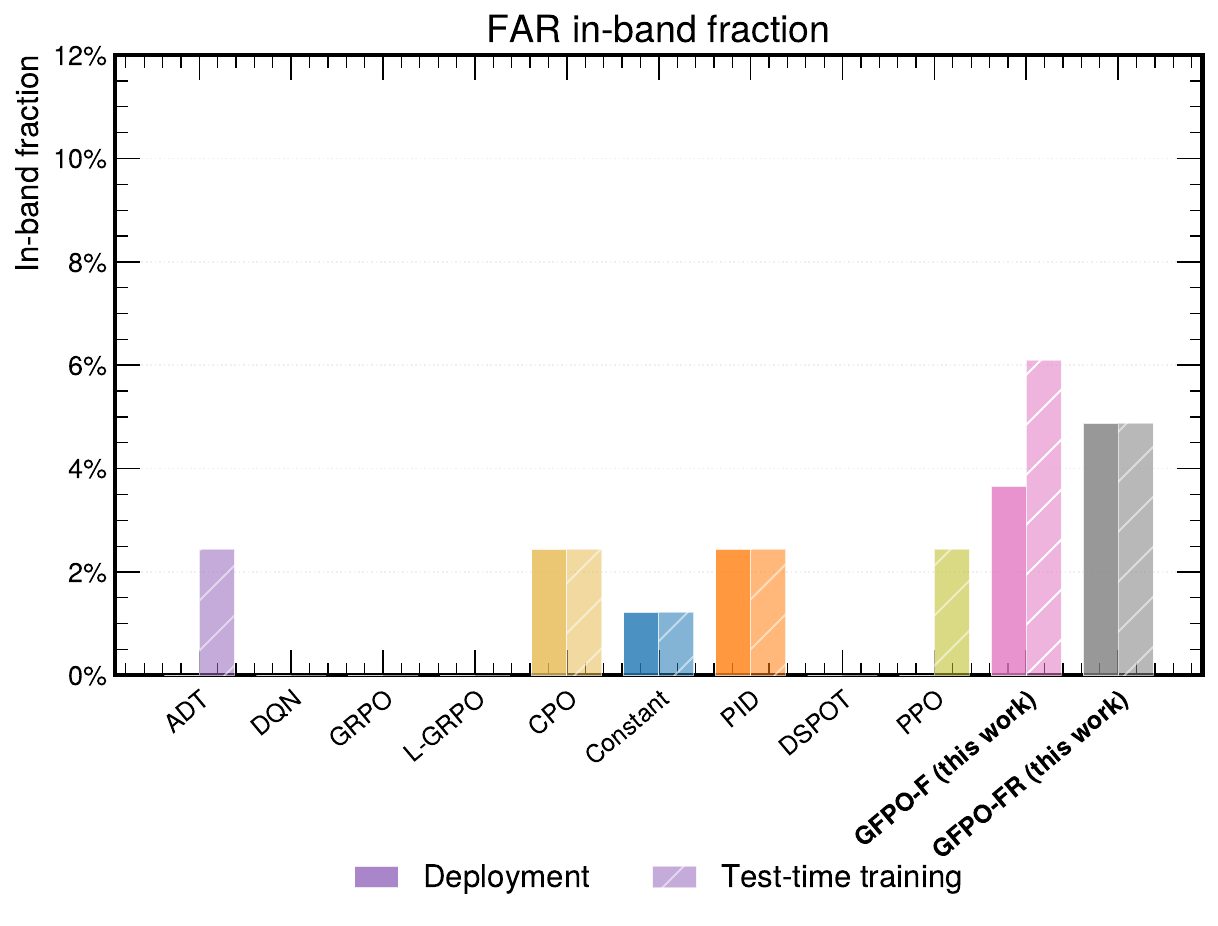}
        \caption{\textbf{GFPO dominates rate control on UNSW-NB15.} Period~1-trained policies deployed frozen on Period~2 (solid) or with online training (hatched). Fraction of chunks in the 
        \emph{FAR} (False Alert Rate) target band.}
        \label{fig:anomaly_usnw_nb15}
    \end{subfigure}
    \hfill
    \begin{subfigure}{0.482\textwidth}
        \centering
        \includegraphics[width=\linewidth]{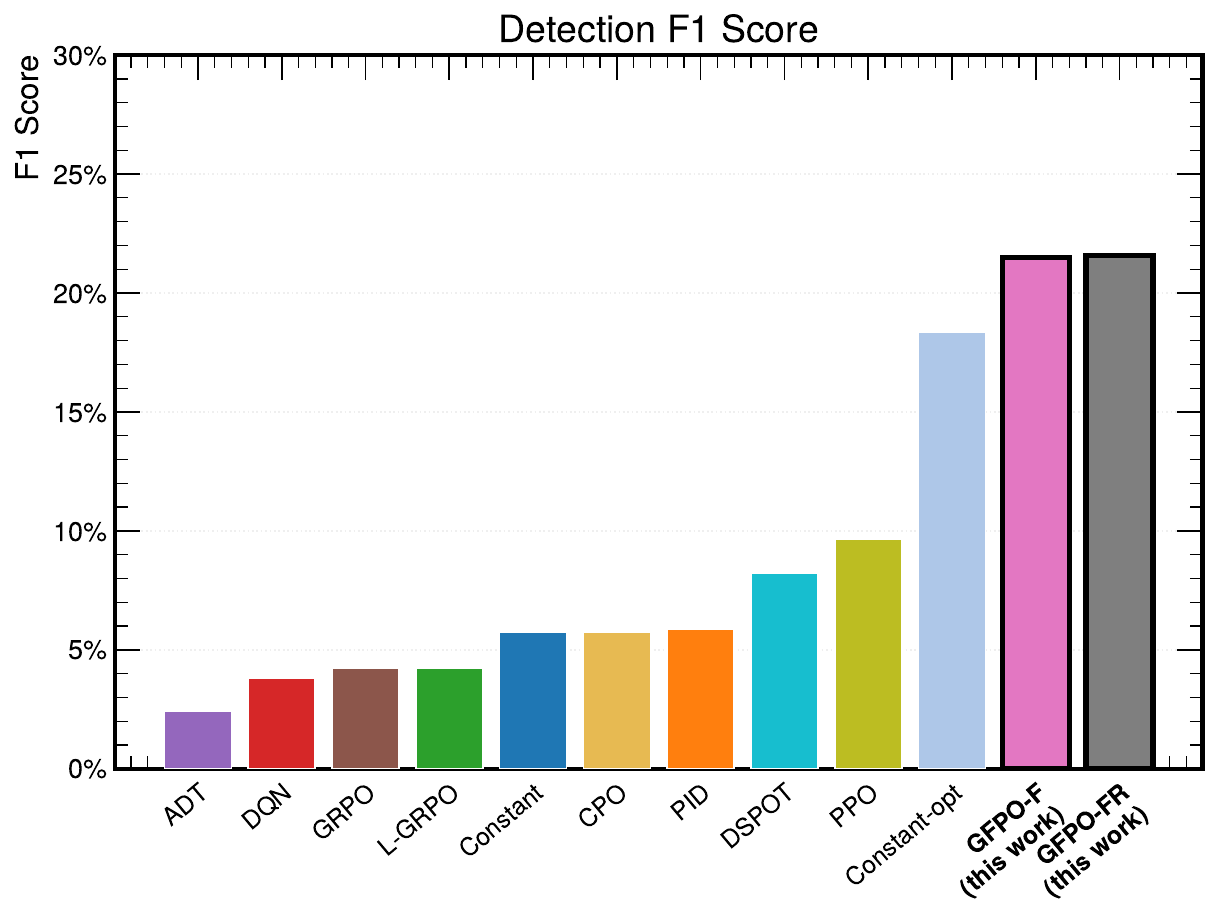}
        \caption{\textbf{GFPO improves detection quality on NAB even without a rate constraint.} F1 across methods. GFPO-F and GFPO-FR reach the highest F1 (0.215, 0.216), a 3\% gain over the oracle Constant-opt (F1=0.184), driven by precision.}
        \label{fig:anomaly_nab_f1}
    \end{subfigure}
    \caption{Anomaly detection results on UNSW-NB15 (left) and NAB (right).}
    \label{fig:anomaly_combined}
\end{figure}

Both predictions hold (Figure~\ref{fig:anomaly_usnw_nb15}). The zero-feasibility failure mode transfers: GRPO, L-GRPO, DQN, and DSPOT collapse to 0\% in-band FAR (False Alert Rate), with no recovery under online training.  GFPO's resolution transfers: GFPO-F and GFPO-FR reach nearly 4\% and 4.9\% in-band frozen, ranging between $1.5$ to $2\times$ the best classical baseline (2.5\% PID). Signal TPR on Backdoors, the hard class, is dominated by GFPO-F/FR (Appendix~\ref{appendix:unsw_nb15}).

\paragraph{NAB without rate regulation.} Does GFPO's advantage depend on an active rate constraint? NAB removes the constraint. GFPO-F and GFPO-FR attain the highest F1 ($0.215$, $0.216$; Figure~\ref{fig:anomaly_nab_f1}), approximately $3\%$ gain over the oracle Constant-opt (F1$=0.184$); DQN, GRPO, and L-GRPO underperform Constant-opt despite training on the same reward. The contribution is structural: group-based policy optimization with feasibility filtering, not RL itself, drives the gain. Precision and recall are dominated by GFPO-F/FR (Appendix~\ref{app:nab}).
\vspace{-1mm}

%% file: limitations.tex
\section{Conclusion}
\label{sec:summary_and_outlook}
\vspace{-1mm}

We cast online trigger threshold tuning as a streaming RL problem and evaluate it on Monte Carlo simulation and CMS data for $H_{T}$ and AD triggers. A simple DQN controller improves over constant menus and PID, but trigger non-stationarity (drifting backgrounds, sharp local rate sensitivity) undermines both critic-based DQN and Lagrangian-GRPO. Group-Filtered Policy Optimization 
enforces feasibility \emph{before} advantage normalization and yields consistent gains in rate stability and signal efficiency, with sim-to-real transfer to CMS data. The same failure mode and resolution carry over to UNSW-NB15 and NAB, suggesting feasibility-first filtering is the operative mechanism rather than a physics-specific artifact. Our framework has \emph{limitations} we discuss explicitly here. Our framework regulates a single scalar threshold per trigger, so signal efficiencies are coupled: \ttbar\ exceeds 95\% in-band efficiency while \haaFourB\ is limited to 29-36\% under the same cut (Table~\ref{tab:single_trigger_summary_compact}); the reward mixing parameter $\alpha$ offers only partial control. Our HEP evaluation is restricted to CMS Open Data (Run 283408 for 
deployment; Run 283876 for autoencoder training); generalization to 
other LHC experiments such as ATLAS~\citep{aad2024atlas} and 
LHCb~\citep{aaij2022comparison} is not tested. 

Overall, single-trigger results in this paper indicate why RL, and feasibility-first filtering in
particular, shall scale to larger menus better than PID. PID regulates each trigger
as an independent scalar loop, so an $N$-item menu becomes $N$ hand-tuned
controllers blind to the shared rate budget that couples triggers, the regime our
introduction flags as where per-loop tuning breaks down. The natural extension is
multi-trigger coordination with the feasibility filter over a joint threshold vector
under a shared budget~\citep{boggia2025review}. There the feasibility region only
tightens, so the L-GRPO failure intensifies and feasibility-first filtering
matters more, rather than less. Therefore, the single-trigger case is a conservative estimate of the
gap at trigger-menu scale.


%% file: Appendix/related_work_extended.tex
\section{Extended Related Work}
\label{app:related_work_extended}

\paragraph{RL for scientific decision making.}
RL is increasingly used across scientific domains, including experiment
planning in self-driving laboratories~\citep{volk2023alphaflow}, online
control of particle accelerators~\citep{kaiser2024reinforcement}, and
tokamak plasma regulation~\citep{degrave2022magnetic}. These settings
share the structural challenges of streaming data and non-stationary
dynamics, but they target a desired physical state. Our objective is different: \emph{rate-constrained operation},
maintaining a fixed background acceptance rate (e.g., $0.25\%$) within
tolerance while maximizing physics utility (signal efficiency). Search-heavy approaches
such as MCTS-based design~\citep{lutz2023top} are also a poor fit at
our deployment latency, which requires a single forward pass per
threshold update; training can nonetheless reuse logged or streaming
experience via off-policy learning.

The closest GRPO-adjacent work is~\citet{khanda2025extending}, which
extends GRPO to continuous control via trajectory-based clustering and
state-aware advantage estimation. Our setting differs in two ways: the
action is one-dimensional (a threshold update) rather than a
robotics-style continuous trajectory, and the deployment is compute-
and latency-constrained, so trajectory clustering ($k$-clustering in
particular) is incompatible with the per-chunk update budget. The rate
constraint itself motivates a different family of methods: constrained
policy optimization via Lagrangian relaxation~\citep{stooke2020responsive,tessler2018reward}
replaces fixed penalty weights with adaptive dual variables. We include
Lagrangian-GRPO (L-GRPO) as a direct baseline
(Section~\ref{sec:from_dqn_to_grpo},
Appendix~\ref{appendix:L_GRPO}) and show that dual adaptation alone
cannot resolve the structural sampling failure under distribution
shift, motivating our feasibility-filtering approach (GFPO-F and GFPO-FR).

\paragraph{Anomaly detection under drift and threshold control.}
Two families of work are most directly comparable. Active anomaly
detection~\citep{zha2020meta} learns acquisition policies under a
label-query budget; we instead adapt the operating threshold of an
autoencoder-based detector online, so the agent's interface is a
continuous threshold update rather than a discrete query decision. The
closest sequential-decision formulation is ADT~\citep{yang2024adt},
but its deployment assumptions are too permissive for trigger
operation: ADT assumes anomaly scores in $[0,1]$ with known global
extrema and pre-normalizes scores before inference. We train the
autoencoder on \emph{background-only} data, cannot bound score scale
\emph{a priori} (Figure~\ref{fig:score_summary}), and must adapt under
non-stationary shifts where any fixed threshold fails. We include ADT
as a baseline with the modifications described in
Appendix~\ref{appendix:baseline_description}.

We deliberately exclude the Anomaly Transformer~\citep{xu2022anomaly}.
Its core assumption is that anomalies in a time series can be detected
because their self-attention concentrates on nearby time steps, while
normal points attend broadly across the whole series, i.e., the Association
Discrepancy criterion. This only works when neighboring points are
informative about each other, i.e. when the series is regularly
sampled and autocorrelated. LHC bunch crossings are not such a series:
each collision is an independent event~\citep{evans2008lhc}, so the previous event tells us
nothing about the current one and ``neighbors'' in the input window
have no special status. The Association Discrepancy criterion therefore
has no signal to exploit on this data. Beyond this mismatch, our task
is rate regulation over streaming windows rather than point-wise
classification of individual events, which makes threshold-based RL
methods the appropriate comparison class.

A complementary non-learning baseline is Extreme Value Theory:
DSPOT~\citep{siffer2017anomaly} sets streaming thresholds by fitting a
Generalized Pareto distribution to the tail of the score distribution,
with false-positive level $q$ as the sole parameter. Identifying $q$
with the target background rate $r_B^*$ maps DSPOT directly onto our
setting, and its online tail re-estimation gives a principled
statistical response to score drift without any learned policy.
However, DSPOT is \emph{rate-agnostic}: it tracks observed score
quantiles rather than closing the loop on the measured background
acceptance rate, and offers no mechanism for signal-efficiency
optimization. We include DSPOT by using its adaptive threshold $z_q$
directly as the per-chunk trigger cut
(Appendix~\ref{appendix:baseline_description}); it is competitive with
PID in some regimes but falls short of RL-based controllers in both
rate stability and signal efficiency under pileup drift.

\paragraph{Data filtering and selective updates in RL.}
A line of recent RL methods for LLMs uses rejection sampling and
selective updates to stabilize training.
GFPO~\citep{shrivastava2025sample} oversamples candidates per prompt
and updates only on a filtered subset; DAPO~\citep{yu2025dapo}
discards prompts whose rollouts carry trivial learning signal;
analyses by~\citet{yue2026does} confirm that training on a selected
subset of trajectories can outperform updates on the full sample. In
all of these cases, filtering is task-agnostic: the criterion is
signal quality (informative reward, non-trivial rollouts), not
problem-side feasibility.

We adapt GFPO to the streaming trigger setting and replace its
task-agnostic filter with constraint-aware selection rules. GFPO-F
samples $G$ candidate threshold updates per control step and retains
the top-$K$ by smallest rate deviation $|r - r_B^*|$. GFPO-FR applies
a feasibility-first rule: it selects candidates inside an expanded
tolerance band, ranks the feasible set by signal utility (a weighted
mixture of signal efficiencies), and pads with closest-to-feasible
candidates when the feasible set is too small. The shared move across
both variants is that the filter becomes a soft realization of the
rate constraint rather than a generic data-quality heuristic for LLM training.

\paragraph{Physics-informed sequential state representations.}
A related line of work improves robustness in non-stationary physical systems by embedding domain structure into sequence models \citep{hausknecht2015deep,jia2019physics,daw2022physics,tang2022physics}. For example, \citet{jia2019physics} propose physics-guided recurrent architectures that inject physical priors into hidden-state evolution for more data-efficient and physically consistent time-series prediction. Similarly, \citet{tang2022physics} develop a physics-guided, physics-explainable recurrent model for optical resonance dynamics using multi-fidelity training with synthetic physics-model data and a smaller set of real measurements. PGNN-style approaches also integrate mechanistic knowledge via simulator-augmented inputs and physics-consistency regularization in supervised forecasting \citep{daw2022physics}. In contrast, we study \emph{online threshold control} under explicit rate constraints. Rather than forecasting system states under a known (or approximate) dynamical law, our policy operates a real-time trigger where the action is an operating threshold that must satisfy a hard background-rate tolerance under pileup-driven drift. We therefore construct a physics- and control-informed sliding-window observation: a length-$K$ sequence of recent event summaries augmented with threshold-relative features (distance-to-cut, pass/near-cut occupancy) and chunk-level control context (rate error, drift, last action, feasibility). This representation exposes the local ``mass near the cut'' that governs rate sensitivity (Figure~\ref{fig:near_cut}), enabling a sequence policy to anticipate distribution shifts and retune thresholds online without assuming a closed-form governing equation for the evolving score distributions.

\paragraph{Choice of policy optimizer.}
PPO with a learned critic and GAE-bootstrapped advantages~\citep{schulman2017proximal, schulman2016gae} is the standard choice for continuous-control policy optimization under stationary state-action distributions. Our streaming LHC trigger setting violates this assumption: the background score distribution and the local rate--threshold sensitivity drift across chunks, so value targets fit on past chunks are miscalibrated on the current chunk, and multi-epoch reuse compounds the error. GRPO~\citep{shao2024deepseekmath} removes the source of this compounding by replacing the critic with a within-chunk group baseline over $G$ on-policy rollouts, which is recomputed at every update. Because our control loop already evaluates multiple candidate threshold updates per chunk, GRPO's algorithmic structure aligns naturally with the deployment loop. Appendix~\ref{appendix:theory} formalizes the comparison: the critic carries stale value estimates across chunks, giving a PPO gradient bias of $\Theta(\delta_k/((1-\gamma)(1-\gamma\lambda)))$ in the realized drift, while the per-chunk group baseline yields a GRPO bias of $\mathcal{O}(G^{-1})$ that is invariant under per-chunk affine rescaling of returns. The same robustness extends to the rate-constrained Lagrangian formulation.

\input{Appendix/theory}

%% file: Appendix/theory.tex
\newpage
\section{Theoretical Justification}\label{appendix:theory}

\theoremstyle{remark}
\newtheorem{remark}{Remark}
\newtheorem{corollary}{Corollary}

\paragraph{Overview.}
Here, we formalize the claim that GRPO-style updates remain
calibrated under the streaming, non-stationary distribution shifts
characteristic of LHC trigger control, while PPO with a learned critic
acquires a drift-dependent bias that compounds across chunks.
Lemma~\ref{lem:critic-bias} bounds the PPO advantage error in terms of
critic mismatch; Lemma~\ref{lem:grpo-unbiased} establishes within-chunk
unbiasedness of the GRPO gradient at rate $\mathcal{O}(G^{-1})$;
Lemma~\ref{lem:affine} shows that the GRPO learning signal is invariant
under per-chunk affine rescaling of returns; Corollary~\ref{cor:bounds} combines these into per-chunk gradient bias
bounds for the two estimators.

\paragraph{Streaming non-stationary setting.}
We model the LHC trigger as a sequence of chunks $k=1,2,\dots$, each
inducing an MDP $\mathcal{M}_k=(\mathcal{S},\mathcal{A},P_k,r_k,\gamma)$
with chunk-specific value $V_k^\pi$, advantage $A_k^\pi$, and policy
objective $J_k(\theta) = \mathbb{E}_{\pi_\theta,\mathcal{M}_k}[\sum_{t\ge 0}\gamma^t r_t]$;
we assume $r_k\in[0,R_{\max}]$. Successive chunks are not identical:
both the reward and the transition kernel shift as luminosity, pileup,
and detector conditions evolve (Figs.~\ref{fig:score_summary} and~\ref{fig:score_summary_realdata}). We summarize the gap between
$\mathcal{M}_{k-1}$ and $\mathcal{M}_k$ by the single scalar
\begin{equation}\label{eq:delta-k}
\begin{split}
\delta_k \;\triangleq\;\; & \underbrace{\sup_{s,a}\big|r_k(s,a)-r_{k-1}(s,a)\big|}_{\text{reward drift}} \\
& {}+ \underbrace{\tfrac{\gamma R_{\max}}{1-\gamma}\sup_{s,a}\big\|P_k(\cdot|s,a)-P_{k-1}(\cdot|s,a)\big\|_{\mathrm{TV}}}_{\text{transition drift}},
\end{split}
\end{equation}
chosen so that one Bellman-difference step controls the induced
value-function gap. The prefactor $\gamma R_{\max}/(1-\gamma)$ equals an
upper bound on the span of any value function with $r\in[0,R_{\max}]$,
so a TV-norm perturbation in the kernel translates into a value
perturbation of matching units; the two terms of Equation~\ref{eq:delta-k}
then sum cleanly into the simulation-lemma bound
(\citealt{kearns2002near}, Lemma~4; \citealt{strehl2008analysis},
Lemma~1): for any policy $\pi$,
\begin{equation}\label{eq:value-perturb}
\big\|V_{k-1}^\pi - V_k^\pi\big\|_\infty \;\le\; \frac{\delta_k}{1-\gamma}.
\end{equation}
We sketch the argument for self-containedness. Let $T_k^\pi$ denote the
chunk-$k$ policy-evaluation Bellman operator,
$(T_k^\pi V)(s)=\mathbb{E}_{a\sim\pi(\cdot|s)}[r_k(s,a)+\gamma\,\mathbb{E}_{s'\sim P_k(\cdot|s,a)}V(s')]$,
of which $V_k^\pi$ is the unique fixed point. Adding and subtracting
$T_{k-1}^\pi V_k^\pi$ and applying the triangle inequality,
\begin{equation*}
\begin{aligned}
\|V_{k-1}^\pi-V_k^\pi\|_\infty
&= \|T_{k-1}^\pi V_{k-1}^\pi-T_k^\pi V_k^\pi\|_\infty \\
&\le \underbrace{\|T_{k-1}^\pi V_{k-1}^\pi-T_{k-1}^\pi V_k^\pi\|_\infty}_{\le\,\gamma\,\|V_{k-1}^\pi-V_k^\pi\|_\infty\ \text{(}\gamma\text{-contraction)}} \\
&\quad +\underbrace{\|T_{k-1}^\pi V_k^\pi-T_k^\pi V_k^\pi\|_\infty}_{\le\,\delta_k}.
\end{aligned}
\end{equation*}
The bound on the second term uses
$\sup_{s,a}|r_{k-1}-r_k|$ on the reward channel and the standard
inequality $|\mathbb{E}_P V_k^\pi-\mathbb{E}_Q V_k^\pi|\le\mathrm{span}(V_k^\pi)\,\|P-Q\|_{\mathrm{TV}}$
together with $\mathrm{span}(V_k^\pi)\le R_{\max}/(1-\gamma)$ on the
transition channel, recovering exactly $\delta_k$. Solving
$\|V_{k-1}^\pi-V_k^\pi\|_\infty\le\gamma\|V_{k-1}^\pi-V_k^\pi\|_\infty+\delta_k$
for the left-hand side yields Equation~\ref{eq:value-perturb}.

This is not a worst-case abstraction.
Figs.~\ref{fig:score_summary} and~\ref{fig:score_summary_realdata}
show that the background $H_T$ and AD-score distributions drift
monotonically across a run in both Monte Carlo and CMS Open Data,
exhibiting the affine location-and-scale shifts invoked in
Lemma~\ref{lem:affine}.

\paragraph{Critic drift biases the PPO advantage.}
Let $V_{\phi}$ be the critic fit on chunks $\{1,\dots,k-1\}$, and let
$\hat A_t^{\mathrm{GAE}}=\sum_{l\ge 0}(\gamma\lambda)^l\delta_{t+l}$,
where $\lambda\in[0,1]$ is the GAE parameter, with
$\delta_t = r_t + \gamma V_{\phi}(s_{t+1}) - V_{\phi}(s_t)$.
The following bound follows directly from the GAE estimator of
\citet{schulman2016gae}.
\begin{lemma}[Critic-induced advantage bias
]\label{lem:critic-bias}
For any $\pi$ and any chunk $k$,
\[
\Big|\mathbb{E}\!\left[\hat A_t^{\mathrm{GAE}} \,\big|\, s_t,a_t\right]
- A_k^{\pi}(s_t,a_t)\Big|
\;\le\; \frac{1+\gamma}{1-\gamma\lambda}\,\|V_{\phi}-V_k^{\pi}\|_{\infty}.
\]
\end{lemma}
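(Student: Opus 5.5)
The plan is to split each TD residual formed with the stale critic into the residual formed with the true chunk-$k$ value function, plus a correction that depends only on the critic mismatch. Write $e \triangleq V_{\phi}-V_k^{\pi}$ and $\delta_t^{\star}\triangleq r_t+\gamma V_k^{\pi}(s_{t+1})-V_k^{\pi}(s_t)$. Then
\[
\delta_t \;=\; \delta_t^{\star} + \gamma\, e(s_{t+1}) - e(s_t).
\]
Substituting this into $\hat A_t^{\mathrm{GAE}}=\sum_{l\ge 0}(\gamma\lambda)^l\delta_{t+l}$ gives two series. The first is $\sum_{l}(\gamma\lambda)^l\delta^{\star}_{t+l}$, the GAE estimator built with the exact value function. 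The second is the error series $\sum_{l}(\gamma\lambda)^l\big(\gamma e(s_{t+l+1})-e(s_{t+l})\big)$. Throughout, the trajectory after $(s_t,a_t)$ is generated by $\pi$ in $\mathcal{M}_k$, and all expectations are conditional on $(s_t,a_t)$.

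The first step shows that the exact-value series is conditionally unbiased for $A_k^{\pi}(s_t,a_t)$.
\begin{itemize}
\item For $l=0$, we have $\mathbb{E}[\delta_t^{\star}\mid s_t,a_t]=Q_k^{\pi}(s_t,a_t)-V_k^{\pi}(s_t)=A_k^{\pi}(s_t,a_t)$, by the Bellman equation for $Q_k^{\pi}$.
\item For $l\ge 1$, apply the tower property, conditioning on $s_{t+l}$. Since $a_{t+l}\sim\pi(\cdot\mid s_{t+l})$, the Bellman fixed-point identity $T_k^{\pi}V_k^{\pi}=V_k^{\pi}$ (the operator introduced before Equation~\ref{eq:value-perturb}) gives $\mathbb{E}[\delta^{\star}_{t+l}\mid s_{t+l}]=0$.
\end{itemize}
Hence the conditional expectation of the first series is exactly $A_k^{\pi}(s_t,a_t)$.

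The second step bounds the error series. Each term satisfies
\[
\big|\gamma e(s_{t+l+1})-e(s_{t+l})\big|\le(1+\gamma)\|e\|_\infty .
\]
Summing the geometric weights gives a bound of $(1+\gamma)\|e\|_\infty\sum_{l\ge0}(\gamma\lambda)^l=\frac{1+\gamma}{1-\gamma\lambda}\|e\|_\infty$, pathwise and hence in conditional expectation. Combining this with the first step yields the lemma. A sharper telescoping bound is possible, but the crude termwise bound is what the stated constant needs.

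The step I expect to need the most care is technical rather than conceptual: justifying the interchange of the infinite sum with the conditional expectation.
\begin{itemize}
\item Since $r\in[0,R_{\max}]$ and $\gamma<1$, $V_k^{\pi}$ is bounded. If $\|e\|_\infty$ is finite, every $\delta_t$ and $\delta_t^{\star}$ is uniformly bounded. As $\gamma\lambda<1$, dominated convergence then applies, and the tower-property argument may be applied termwise.
\item If $\|e\|_\infty=\infty$, the bound is vacuous.
\item The case $\lambda=1$ is still covered, since $\gamma<1$.
\end{itemize}
Note also that the bound holds for any chunk $k$ regardless of how $V_{\phi}$ was fit. Combined with Equation~\ref{eq:value-perturb}, a critic accurate for $\mathcal{M}_{k-1}$ therefore carries a bias of order $\frac{(1+\gamma)\delta_k}{(1-\gamma)(1-\gamma\lambda)}$ into chunk $k$.
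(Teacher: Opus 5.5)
Your proof is correct and follows the paper's argument. The paper likewise sets $e=V_\phi-V_k^\pi$, uses $\mathbb{E}[\delta_t\mid s_t,a_t]=A_k^\pi(s_t,a_t)+\gamma\,\mathbb{E}[e(s_{t+1})\mid s_t,a_t]-e(s_t)$ together with $\mathbb{E}[A_k^\pi(s_{t+l},a_{t+l})\mid s_t,a_t]=0$ for $l\ge 1$, and bounds each error term by $(1+\gamma)\|e\|_\infty$ before summing the geometric series; your dominated-convergence justification for exchanging the infinite sum with the conditional expectation is a detail the paper leaves implicit.
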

\begin{proof}
Let $e(s)=V_\phi(s)-V_k^\pi(s)$. For the chunk-$k$ MDP,
\[
\mathbb{E}[\delta_t\mid s_t,a_t]
=
A_k^\pi(s_t,a_t)
+
\gamma\mathbb{E}[e(s_{t+1})\mid s_t,a_t]
-
e(s_t).
\]
For $l\ge 1$, since $a_{t+l}\sim\pi(\cdot\mid s_{t+l})$,
\[
\mathbb{E}[A_k^\pi(s_{t+l},a_{t+l})\mid s_t,a_t]=0.
\]
Therefore,
\begin{equation*}
\begin{aligned}
& \mathbb{E}[\hat A_t^{\mathrm{GAE}}\mid s_t,a_t]-A_k^\pi(s_t,a_t) = \\
&\qquad \sum_{l\ge 0}(\gamma\lambda)^l \mathbb{E}\!\left[\gamma e(s_{t+l+1})-e(s_{t+l}) \mid s_t,a_t\right].
\end{aligned}
\end{equation*}
Taking absolute values and using $\|e\|_\infty=\|V_\phi-V_k^\pi\|_\infty$ gives
\begin{equation*}
\begin{aligned}
\Big|\mathbb{E}[\hat A_t^{\mathrm{GAE}}\mid s_t,a_t]-A_k^\pi(s_t,a_t)\Big|
&\le \sum_{l\ge 0}(\gamma\lambda)^l(1+\gamma)\|e\|_\infty \\
&= \frac{1+\gamma}{1-\gamma\lambda}\|V_\phi-V_k^\pi\|_\infty.
\end{aligned}
\end{equation*}
\end{proof}
A simulation-lemma argument~\citep{kearns2002near} bounds
$\|V_{k-1}^{\pi}-V_k^{\pi}\|_{\infty}\le \delta_k/(1-\gamma)$, so even an
oracle critic on past chunks satisfies
$\|V_{\phi}-V_k^{\pi}\|_{\infty}=\mathcal{O}(\delta_k/(1-\gamma))$,
with a matching lower bound under monotone drift
(Remark~\ref{rmk:lower-bound}). Through
bootstrapping in $\delta_t$ and multi-epoch reuse of trajectories, this
bias propagates and \emph{compounds} across chunks.

\paragraph{Group baseline as a within-chunk control variate.}
On chunk $k$, draw $a_t^{(g)}\sim \pi_\theta(\cdot|s_t)$ and execute under
$\mathcal{M}_k$ to obtain returns $R_t^{(g)}$, $g=1,\dots,G$.

Define the within-chunk preconditioned policy gradient
\begin{equation}\label{eq:precond-grad}
\begin{split}
\widetilde\nabla J_k(\theta)
&\triangleq \mathbb{E}_{s\sim\pi_\theta,\mathcal{M}_k}\biggl[
\frac{1}{\sqrt{\Sigma_k(s)}+\varepsilon} \\
&\hspace{2em} \cdot \mathbb{E}_{a\sim\pi_\theta(\cdot\mid s)}\!\left[A_k^\pi(s,a)\nabla_\theta\log\pi_\theta(a\mid s)\right]
\biggr],
\end{split}
\end{equation}
where $\Sigma_k(s)\triangleq\mathrm{Var}_k(R\mid s)$. The state-dependent
factor $1/(\sqrt{\Sigma_k(s)}+\varepsilon)>0$ rescales each state's
contribution but preserves the per-state ascent direction; in the
sense of \citet{mei2022role} the within-chunk z-score acts as an
adaptive step-size.

\begin{lemma}[Group-normalized advantage consistency of GRPO]\label{lem:grpo-unbiased}
Let $\mu_t=\tfrac{1}{G}\sum_g R_t^{(g)}$,
$\sigma_t^{2}=\tfrac{1}{G}\sum_g (R_t^{(g)}-\mu_t)^2$, and
$\hat A_t^{(g)}=(R_t^{(g)}-\mu_t)/(\sigma_t+\varepsilon)$ (with $\varepsilon>0$ a small numerical-stability constant). Then
$\mathbb{E}[\mu_t\mid s_t]=V_k^{\pi}(s_t)$, and the group-normalized
advantage is asymptotically unbiased in conditional expectation,
\[
\mathbb{E}\!\left[\hat A_t^{(g)} \,\big|\, s_t, a_t^{(g)}\right]
\;\xrightarrow[G\to\infty]{}\;
\frac{A_k^{\pi}(s_t,a_t^{(g)})}{\sqrt{\mathrm{Var}_k(R\mid s_t)} + \varepsilon}.
\]

Consequently, the empirical per-state GRPO estimator
\[
\widehat g_{k,G}^{\mathrm{GRPO}}(s_t)
\triangleq
\frac{1}{G}\sum_{g=1}^G
\hat A_t^{(g)}\,
\nabla_\theta\log\pi_\theta(a_t^{(g)}\mid s_t)
\]
is biased by $\mathcal{O}(G^{-1})$ relative to the population
group-normalized update
\[
g_{k,\mathrm{norm}}^\star(s_t)
\triangleq
\mathbb{E}_{a\sim\pi_\theta(\cdot\mid s_t)}\!
\left[
\frac{A_k^\pi(s_t,a)}{\sqrt{\Sigma_k(s_t)}+\varepsilon}\,
\nabla_\theta\log\pi_\theta(a\mid s_t)
\right],
\]
with conditional MSE of order $\mathcal{O}(G^{-1})$, depending only on
within-chunk statistics of $\mathcal{M}_k$. Averaging $g_{k,\mathrm{norm}}^\star(s_t)$
over $s_t\sim\pi_\theta,\mathcal{M}_k$ recovers the preconditioned
gradient $\widetilde\nabla J_k(\theta)$ in~\eqref{eq:precond-grad}.
\end{lemma}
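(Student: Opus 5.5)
The plan is to condition on $s_t$ and treat the $G$ rollouts as i.i.d. draws $(a^{(g)},R^{(g)})$ from the chunk-$k$ law: $a^{(g)}\sim\pi_\theta(\cdot\mid s_t)$, and $R^{(g)}$ is the return obtained by executing $a^{(g)}$ in $\mathcal{M}_k$. The first claim then follows from linearity and the tower property. Since $\mathbb{E}[R^{(g)}\mid s_t,a^{(g)}]=Q_k^\pi(s_t,a^{(g)})$ and $\mathbb{E}_{a\sim\pi}[Q_k^\pi(s_t,a)]=V_k^\pi(s_t)$, we get $\mathbb{E}[\mu_t\mid s_t]=V_k^\pi(s_t)$ exactly, with no dependence on $G$.

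For the asymptotic statement, I will fix the index $g$ and the pair $(s_t,a^{(g)})$, and write the centred reward in leave-one-out form:
\[
R^{(g)}-\mu_t=\bigl(1-\tfrac{1}{G}\bigr)\bigl(R^{(g)}-\bar R_{-g}\bigr),
\]
where $\bar R_{-g}$ is the mean of the other $G-1$ returns. These are independent of $a^{(g)}$, so $\mathbb{E}[\bar R_{-g}\mid s_t]=V_k^\pi(s_t)$. By the strong law of large numbers, $\bar R_{-g}\to V_k^\pi(s_t)$ and $\sigma_t^2\to\Sigma_k(s_t)$ almost surely, because the contribution of the single term $g$ to $\sigma_t^2$ is $O(R_{\max}^2/G)$.

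Next, use the bound $|\hat A_t^{(g)}|\le 2R_{\max}/\varepsilon$. This holds because returns lie in $[0,R_{\max}/(1-\gamma)]$; I will absorb the constant, and in the paper's $\gamma=0$ GRPO setting it is literally $R_{\max}$. With this uniform bound, dominated convergence gives
\[
\mathbb{E}[\hat A_t^{(g)}\mid s_t,a^{(g)}]\to\frac{Q_k^\pi(s_t,a^{(g)})-V_k^\pi(s_t)}{\sqrt{\Sigma_k(s_t)}+\varepsilon},
\]
and the numerator is exactly $A_k^\pi(s_t,a^{(g)})$. The $\varepsilon>0$ is essential here, since it keeps the map $x\mapsto 1/(\sqrt{x}+\varepsilon)$ Lipschitz on $[0,\infty)$ and bounded.

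For the $\mathcal{O}(G^{-1})$ bias, I will write
\[
f(\mu,\sigma^2)=\frac{R^{(g)}-\mu}{\sqrt{\sigma^2}+\varepsilon}
\]
and Taylor-expand $f$ to second order around the population point $(V_k^\pi(s_t),\Sigma_k(s_t))$ (the delta method). The first-order terms contribute two pieces. The first is the mean deviation $\mathbb{E}[\mu_t-V\mid s_t,a^{(g)}]=(Q-V)/G$. The second is the bias of the $1/G$-normalized variance, which is $-\Sigma/G+O(1/G)$ after including the fixed $g$ term. Both are $O(1/G)$. The second-order terms are controlled by $\mathrm{Var}(\mu_t)$, $\mathrm{Var}(\sigma_t^2)$ and their covariance, all $O(1/G)$ because rewards are bounded, which requires fourth moments that boundedness supplies. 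The remainder is then $O(G^{-3/2})$ or smaller.

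\textbf{Main obstacle.} The expansion in $\sigma$ is not smooth at $\Sigma_k(s_t)=0$. I will handle this by working with $x\mapsto(\sqrt{x}+\varepsilon)^{-1}$, whose derivatives are bounded on $[0,\infty)$ only in the weak sense. If the Taylor route fails there, the fallback is to split on the event $|\sigma_t^2-\Sigma_k|\le\eta$. Hoeffding's inequality makes the complement exponentially small, and on the event itself either the Lipschitz bound (when $\Sigma_k=0$, where we get $O(G^{-1/2})$, so I will state the lemma for $\Sigma_k(s_t)>0$) or the smooth expansion applies.

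Multiplying by $\nabla_\theta\log\pi_\theta$ and averaging over $g$ gives the estimator statements. Bias transfers directly, assuming the score $\nabla_\theta\log\pi_\theta$ is bounded, which holds for a finite action set and a softmax policy with bounded logits. So $\mathbb{E}\,\widehat g^{\mathrm{GRPO}}_{k,G}-g^\star_{k,\mathrm{norm}}=O(G^{-1})$. For the MSE, I will decompose it as bias$^2$ plus variance. The variance is that of an average of $G$ terms, each bounded, whose pairwise covariances are $O(1/G)$ through the shared $\mu_t,\sigma_t$; this makes the variance $O(1/G)$. All the constants involve only $R_{\max}$, $\varepsilon$, the score bound and $\Sigma_k(s_t)$, that is, within-chunk quantities of $\mathcal{M}_k$. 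Finally, taking the expectation of $g^\star_{k,\mathrm{norm}}(s_t)$ over $s_t\sim\pi_\theta,\mathcal{M}_k$ matches the definition in \eqref{eq:precond-grad} term by term.
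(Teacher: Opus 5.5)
Your proposal is correct and follows the same five-step skeleton as the paper. Those steps are linearity for $\mathbb{E}[\mu_t\mid s_t]$; the $(G-1)/G$ numerator factor, which your leave-one-out identity $R_t^{(g)}-\mu_t=(1-\tfrac1G)(R_t^{(g)}-\bar R_{-g})$ reproduces; the strong law plus the uniform bound $|\hat A_t^{(g)}|\le R_{\max}/((1-\gamma)\varepsilon)$ for the limit; a Taylor expansion for the $\mathcal{O}(G^{-1})$ bias; and bias plus variance for the MSE.

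The real difference is in the bias and MSE steps. The paper does the following:
\begin{itemize}
\item it expands only the denominator to first order;
\item it takes $\mathbb{E}[1/(\sigma_t+\varepsilon)\mid s_t]$ separately from the numerator;
\item it dismisses the numerator--denominator covariance by appeal to the score-function identity, which only acts after integrating over the action;
\item it cites an exchangeable-sample analysis for the MSE.
\end{itemize}
You instead do the following:
\begin{itemize}
\item you expand jointly in $(\mu_t,\sigma_t^2)$, conditional on the realized $(a_t^{(g)},R_t^{(g)})$;
\item you bound the cross and second-order terms by $\mathcal{O}(G^{-1})$ moments;
\item you localize away from $\sigma_t^2\approx 0$ with a concentration bound;
\item you derive the $\mathcal{O}(G^{-1})$ variance from $\mathcal{O}(G^{-1})$ pairwise covariances induced by the shared $(\mu_t,\sigma_t)$.
\end{itemize}
This gives you the displayed conditional expansion genuinely pointwise in $a_t^{(g)}$. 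The paper's covariance remark only controls that term at the level of the integrated estimator. Your route also makes visible that the $\mathcal{O}(G^{-1})$ constants are not uniform as $\Sigma_k(s_t)\downarrow 0$. The paper's version is shorter, but it leans on citations for exactly the steps that need care.

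A few small points.

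First, $x\mapsto(\sqrt{x}+\varepsilon)^{-1}$ is bounded and continuous on $[0,\infty)$ but not Lipschitz there. Its derivative behaves like $-1/(2\varepsilon^2\sqrt{x})$ near $0$. It is Lipschitz as a function of $\sigma=\sqrt{x}$, with constant $\varepsilon^{-2}$. Your dominated-convergence step needs only continuity and boundedness, so nothing breaks.

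Second, the case $\Sigma_k(s_t)=0$ need not be excluded. Every $R_t^{(g)}$ then equals $V_k^\pi(s_t)$ almost surely, so $\hat A_t^{(g)}\equiv 0$ and $A_k^\pi(s_t,\cdot)=0$ $\pi$-a.e., and both sides vanish.

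Third, in the first-order $\sigma_t^2$ term the coefficient $R_t^{(g)}-V_k^\pi(s_t)$ is correlated with $\sigma_t^2$, so you should condition on $R_t^{(g)}$. A direct computation gives
\[
\mathbb{E}\bigl[\sigma_t^2-\Sigma_k(s_t)\,\big|\,s_t,a_t^{(g)},R_t^{(g)}\bigr]
=-\tfrac{2G-1}{G^2}\,\Sigma_k(s_t)+\tfrac{G-1}{G^2}\bigl(R_t^{(g)}-V_k^\pi(s_t)\bigr)^2 ,
\]
which is $\mathcal{O}(G^{-1})$ uniformly. This is exactly what your phrase ``after including the fixed $g$ term'' needs.
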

\begin{proof}
Conditional on $s_t$, the returns $\{R_t^{(g)}\}_{g=1}^{G}$ are i.i.d.\ over
the sampling $a_t^{(g)}\sim\pi_\theta(\cdot\mid s_t)$ and the chunk-$k$
rollout, with mean $V_k^\pi(s_t)$ and variance
$\Sigma_k(s_t)\triangleq\mathrm{Var}_k(R\mid s_t)$. Returns are bounded,
$R_t^{(g)}\in[0,R_{\max}/(1-\gamma)]$.

\emph{(i) Unbiasedness of $\mu_t$.} By linearity,
\[
\mathbb{E}[\mu_t\mid s_t]
=\frac{1}{G}\sum_{g=1}^{G}\mathbb{E}[R_t^{(g)}\mid s_t]
=V_k^\pi(s_t).
\]

\emph{(ii) Conditional numerator.} Fix $g$ and condition on
$(s_t,a_t^{(g)})$. For the realized index,
$\mathbb{E}[R_t^{(g)}\mid s_t,a_t^{(g)}]=Q_k^\pi(s_t,a_t^{(g)})$.
For $g'\neq g$, $a_t^{(g')}\sim\pi_\theta(\cdot\mid s_t)$ is sampled
independently of $a_t^{(g)}$, so
$\mathbb{E}[R_t^{(g')}\mid s_t,a_t^{(g)}]=V_k^\pi(s_t)$. Hence
\begin{equation*}
\begin{aligned}
\mathbb{E}[\mu_t\mid s_t,a_t^{(g)}]
&= \tfrac{1}{G}\,Q_k^\pi(s_t,a_t^{(g)})+\tfrac{G-1}{G}\,V_k^\pi(s_t), \\
\mathbb{E}\!\left[R_t^{(g)}-\mu_t\,\big|\,s_t,a_t^{(g)}\right]
&= \tfrac{G-1}{G}\,A_k^\pi(s_t,a_t^{(g)}).
\end{aligned}
\end{equation*}
\emph{(iii) Asymptotic limit.} The strong law gives
$\mu_t\xrightarrow{a.s.}V_k^\pi(s_t)$ and
$\sigma_t^{2}\xrightarrow{a.s.}\Sigma_k(s_t)$ as $G\to\infty$, so
$\sigma_t\xrightarrow{a.s.}\sqrt{\Sigma_k(s_t)}$. Boundedness of returns
yields the deterministic bound
$|\hat A_t^{(g)}|\le R_{\max}/[(1-\gamma)\varepsilon]$, hence uniform
integrability. Slutsky's theorem applied to the continuous map
$(x,y)\mapsto(R_t^{(g)}-x)/(y+\varepsilon)$ then gives
\[
\mathbb{E}\!\left[\hat A_t^{(g)}\,\big|\,s_t,a_t^{(g)}\right]
\;\xrightarrow[G\to\infty]{}\;
\frac{A_k^\pi(s_t,a_t^{(g)})}{\sqrt{\Sigma_k(s_t)}+\varepsilon}.
\]

\emph{(iv) Bias rate.} The conditional numerator carries the
multiplicative factor $(G-1)/G=1-\mathcal{O}(G^{-1})$ from (ii). For the
denominator, the sample variance with divisor $G$ has bias
$\mathbb{E}[\sigma_t^{2}\mid s_t]=\Sigma_k(s_t)\,(G-1)/G$, and a
first-order Taylor expansion of $x\mapsto 1/(\sqrt{x}+\varepsilon)$
around $\Sigma_k(s_t)$, valid by smoothness on $(0,\infty)$ and bounded
moments of $\sigma_t^{2}$, gives
\[
\mathbb{E}\!\left[\frac{1}{\sigma_t+\varepsilon}\,\bigg|\,s_t\right]
=\frac{1}{\sqrt{\Sigma_k(s_t)}+\varepsilon}+\mathcal{O}(G^{-1}).
\]
The remaining covariance term $\mathrm{Cov}(R_t^{(g)}-\mu_t,\,1/(\sigma_t+\varepsilon)\mid s_t,a_t^{(g)})$ vanishes to leading order after integration against $\nabla_\theta\log\pi_\theta(a_t^{(g)}\mid s_t)$ by the score-function identity, contributing $\mathcal{O}(G^{-1})$ to $g_k^{\mathrm{GRPO}}$ \citep{zhou2025demystifying}.
Combining the numerator and denominator expansions,
\[
\mathbb{E}\!\left[\hat A_t^{(g)}\,\big|\,s_t,a_t^{(g)}\right]
=\frac{A_k^\pi(s_t,a_t^{(g)})}{\sqrt{\Sigma_k(s_t)}+\varepsilon}
+\mathcal{O}(G^{-1}).
\]

Multiplying by $\nabla_\theta\log\pi_\theta(a_t^{(g)}\mid s_t)$ and
integrating: since $\Sigma_k(s_t)$ does not depend on $a$, the
denominator factors out of the action expectation, leaving the
per-state preconditioning $1/(\sqrt{\Sigma_k(s_t)}+\varepsilon)$.
Averaging $g_{k,\mathrm{norm}}^\star(s_t)$ over
$s_t\sim\pi_\theta,\mathcal{M}_k$ recovers the preconditioned
policy gradient $\widetilde\nabla J_k(\theta)$ defined
in Equation~\ref{eq:precond-grad}, so
$\bigl\|\mathbb{E}[g_k^{\mathrm{GRPO}}]-\widetilde\nabla J_k(\theta)\bigr\|=\mathcal{O}(G^{-1})$.

\emph{(v) Conditional MSE.} Each summand $\hat A_t^{(g)}\nabla_\theta\log\pi_\theta(a_t^{(g)}\mid s_t)$ is bounded; the $\mathcal{O}(G^{-1})$ MSE on chunk $k$ then follows from the exchangeable-sample analysis of the GRPO estimator in \citet{zhou2025demystifying}. Every quantity above depends only on $\mathcal{M}_k$, so no information from chunks $\{1,\dots,k-1\}$ enters the gradient.
\end{proof}

\paragraph{Affine equivariance.}
\begin{lemma}[Per-chunk affine equivariance]\label{lem:affine}
If $R_t^{(g)\prime}=\alpha_k R_t^{(g)}+\beta_k$ with $\alpha_k>0$, then
\[
\hat A_t^{(g)\prime}
=\frac{\alpha_k\bigl(R_t^{(g)}-\mu_t\bigr)}{\alpha_k\sigma_t+\varepsilon},
\]
\[
\hat A_t^{(g)\prime}-\hat A_t^{(g)}
=(R_t^{(g)}-\mu_t)\,\frac{(\alpha_k-1)\,\varepsilon}{(\alpha_k\sigma_t+\varepsilon)(\sigma_t+\varepsilon)}.
\]
In particular, $\hat A_t^{(g)\prime}=\hat A_t^{(g)}$ exactly when
$\varepsilon=0$, and otherwise
$\bigl|\hat A_t^{(g)\prime}-\hat A_t^{(g)}\bigr|=\mathcal{O}\!\bigl(\varepsilon\,|\alpha_k-1|/\sigma_t^{2}\bigr)$.
\end{lemma}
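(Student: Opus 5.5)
The plan is a direct algebraic computation with no probabilistic input: the whole lemma follows from how the group mean and group standard deviation transform under an affine map of the rewards. First I will record the transformed statistics. Averaging $R_t^{(g)\prime}=\alpha_k R_t^{(g)}+\beta_k$ over $g$ gives $\mu_t'=\alpha_k\mu_t+\beta_k$. Subtracting, $R_t^{(g)\prime}-\mu_t'=\alpha_k(R_t^{(g)}-\mu_t)$, so the shift $\beta_k$ cancels. Squaring and averaging then gives $\sigma_t'^2=\alpha_k^2\sigma_t^2$. Since $\alpha_k>0$ and $\sigma_t\ge 0$, taking the nonnegative root yields $\sigma_t'=\alpha_k\sigma_t$; this positivity of $\alpha_k$ is the only place the hypothesis is used. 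Substituting into the definition of $\hat A_t^{(g)}$ gives the first displayed identity.

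For the difference, I will put both advantages over a common denominator:
\[
\frac{\alpha_k x}{\alpha_k\sigma_t+\varepsilon}-\frac{x}{\sigma_t+\varepsilon}
=x\,\frac{\alpha_k(\sigma_t+\varepsilon)-(\alpha_k\sigma_t+\varepsilon)}{(\alpha_k\sigma_t+\varepsilon)(\sigma_t+\varepsilon)},
\qquad x=R_t^{(g)}-\mu_t .
\]
The $\alpha_k\sigma_t$ terms cancel in the numerator, leaving $(\alpha_k-1)\varepsilon$, which is the second identity. When $\varepsilon=0$ the factor vanishes and the advantages agree exactly. Strictly, this case also requires $\sigma_t>0$ so that both advantages are defined, and I will state that caveat.

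For the asymptotic bound, I will drop $\varepsilon$ from each denominator factor to get $(\alpha_k\sigma_t+\varepsilon)(\sigma_t+\varepsilon)\ge\alpha_k\sigma_t^2$. This gives
\[
\bigl|\hat A_t^{(g)\prime}-\hat A_t^{(g)}\bigr|\le \frac{|R_t^{(g)}-\mu_t|\,|\alpha_k-1|\,\varepsilon}{\alpha_k\sigma_t^2}.
\]
Only this step needs care. The stated rate $\mathcal{O}(\varepsilon|\alpha_k-1|/\sigma_t^2)$ hides two factors.

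The first is $|R_t^{(g)}-\mu_t|$. I will absorb it into the constant using the bounded-return assumption $R\in[0,R_{\max}]$ from the streaming setting. Alternatively, and more sharply, I can use $|R_t^{(g)}-\mu_t|\le\sqrt{G}\,\sigma_t$, which improves the rate to $\mathcal{O}(\varepsilon|\alpha_k-1|/\sigma_t)$. This is no worse than the claim once $\sigma_t$ is bounded.

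The second is $1/\alpha_k$. I will treat $\alpha_k$ as bounded away from zero, for example $\alpha_k\in[\underline\alpha,\bar\alpha]$ across chunks, and I will state this as the implicit constant dependence.
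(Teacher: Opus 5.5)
Your proposal is correct and follows the paper's proof step for step: the same transformation $\mu_t'=\alpha_k\mu_t+\beta_k$ and $\sigma_t'=\alpha_k\sigma_t$, the same common-denominator computation, and the same bound $(\alpha_k\sigma_t+\varepsilon)(\sigma_t+\varepsilon)\ge\alpha_k\sigma_t^2$, with $|R_t^{(g)}-\mu_t|$ absorbed via bounded returns. Your caveats are genuine refinements that the paper leaves implicit, not gaps in your argument: the hidden constant carries a factor $1/\alpha_k$, so $\alpha_k$ must be bounded away from zero for the stated rate to hold uniformly, and the $\varepsilon=0$ case needs $\sigma_t>0$.
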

\begin{proof}
Under $R_t^{(g)\prime}=\alpha_k R_t^{(g)}+\beta_k$ with $\alpha_k>0$, the
group statistics transform as
\[
\mu_t'=\alpha_k\mu_t+\beta_k,\qquad
R_t^{(g)\prime}-\mu_t'=\alpha_k\bigl(R_t^{(g)}-\mu_t\bigr),
\]

\[
\sigma_t^{2\prime}=\alpha_k^{2}\sigma_t^{2},\qquad
\sigma_t'=\alpha_k\sigma_t,
\]
so
$\hat A_t^{(g)\prime}
=\alpha_k\bigl(R_t^{(g)}-\mu_t\bigr)\big/(\alpha_k\sigma_t+\varepsilon)$.
Subtracting $\hat A_t^{(g)}=(R_t^{(g)}-\mu_t)/(\sigma_t+\varepsilon)$ and
combining over the common denominator,
\begin{align*}
\hat A_t^{(g)\prime}-\hat A_t^{(g)}
&=(R_t^{(g)}-\mu_t)\!\left[\frac{\alpha_k}{\alpha_k\sigma_t+\varepsilon}
-\frac{1}{\sigma_t+\varepsilon}\right]\\
&=(R_t^{(g)}-\mu_t)\cdot
\frac{\alpha_k(\sigma_t+\varepsilon)-(\alpha_k\sigma_t+\varepsilon)}
{(\alpha_k\sigma_t+\varepsilon)(\sigma_t+\varepsilon)}\\
&=(R_t^{(g)}-\mu_t)\cdot
\frac{(\alpha_k-1)\,\varepsilon}{(\alpha_k\sigma_t+\varepsilon)(\sigma_t+\varepsilon)},
\end{align*}
which is the displayed identity. For $\varepsilon=0$ the difference
vanishes identically, giving exact equivariance.
For $\varepsilon>0$, boundedness of returns yields
$|R_t^{(g)}-\mu_t|=\mathcal{O}(R_{\max}/(1-\gamma))$, and the elementary
inequality $(\alpha_k\sigma_t+\varepsilon)(\sigma_t+\varepsilon)\ge\alpha_k\sigma_t^{2}$ gives
\[
\bigl|\hat A_t^{(g)\prime}-\hat A_t^{(g)}\bigr|
=\mathcal{O}\!\left(\frac{\varepsilon\,|\alpha_k-1|}{\sigma_t^{2}}\right).\qedhere
\]
\end{proof}
The PPO advantage in Lemma~\ref{lem:critic-bias} admits no analogous
equivariance because $V_\phi$ is not re-scaled at chunk $k$. From the
viewpoint of \citet{mei2022role}, the within-chunk z-score acts as an
\emph{adaptive step-size}.

\paragraph{Per-chunk bias bounds.}\begin{corollary}[Per-chunk bias of PPO and GRPO estimators]\label{cor:bounds}
Let $g_k^{\mathrm{PPO}}, g_k^{\mathrm{GRPO}}$ be the gradient estimators
on chunk $k$, $g_k^{\star}=\nabla J_k(\theta)$ the vanilla policy gradient
under $\mathcal{M}_k$, and $\widetilde\nabla J_k(\theta)$ the
preconditioned target~\eqref{eq:precond-grad}. Combining
Lemmas~\ref{lem:critic-bias} and~\ref{lem:grpo-unbiased}, for any fixed
group size $G$,
\[
\big\|\mathbb{E}[g_k^{\mathrm{PPO}}]-g_k^{\star}\big\|
=\mathcal{O}\!\Big(\tfrac{\delta_k}{(1-\gamma)(1-\gamma\lambda)}\Big),
\]
\[
\big\|\mathbb{E}[g_k^{\mathrm{GRPO}}]-\widetilde\nabla J_k(\theta)\big\|
=\mathcal{O}(G^{-1}).
\]
By Lemma~\ref{lem:affine}, the GRPO bias is additionally invariant
under per-chunk affine rescaling of $r_k$.
\end{corollary}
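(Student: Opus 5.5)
The plan is to prove the two displayed bounds separately. Each is a short assembly of the lemmas already established; the only new ingredient is passing from per-state advantage errors to gradient-level errors.

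\emph{PPO bound.} Write the PPO estimator in expectation as $\mathbb{E}[g_k^{\mathrm{PPO}}]=\mathbb{E}_{s,a\sim\pi_\theta,\mathcal{M}_k}\bigl[\mathbb{E}[\hat A_t^{\mathrm{GAE}}\mid s,a]\,\nabla_\theta\log\pi_\theta(a\mid s)\bigr]$. Here I treat the PPO clipped surrogate at the on-policy point $\theta=\theta_{\text{old}}$, so that its gradient reduces to this score-function form. Write the target as $g_k^\star=\mathbb{E}[A_k^\pi(s,a)\nabla_\theta\log\pi_\theta(a\mid s)]$, absorbing the $(1-\gamma)^{-1}$ occupancy normalization into the constant. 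Subtracting the two and using Lemma~\ref{lem:critic-bias} pointwise gives
\[
\|\mathbb{E}[g_k^{\mathrm{PPO}}]-g_k^\star\|\le \tfrac{1+\gamma}{1-\gamma\lambda}\,\|V_\phi-V_k^\pi\|_\infty\,\sup_{s,a}\|\nabla_\theta\log\pi_\theta(a\mid s)\|.
\]
The score norm is assumed bounded, a standard assumption for softmax policies with bounded features. Next I bound the critic mismatch. Take $V_\phi$ to be (at best) the oracle critic for chunk $k-1$, i.e.\ $V_\phi=V_{k-1}^\pi$. Equation~\ref{eq:value-perturb} then gives $\|V_\phi-V_k^\pi\|_\infty\le\delta_k/(1-\gamma)$. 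More generally, for any critic fit on the past, the triangle inequality gives $\|V_\phi-V_k^\pi\|_\infty\le \|V_\phi-V_{k-1}^\pi\|_\infty+\delta_k/(1-\gamma)$, and the first term is a fitting error that I will state as an assumption. Combining these yields $\mathcal{O}\bigl(\delta_k/((1-\gamma)(1-\gamma\lambda))\bigr)$, with $1+\gamma\le 2$ absorbed into the constant.

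\emph{GRPO bound.} This is essentially a restatement of Lemma~\ref{lem:grpo-unbiased}. Conditional on $s_t$, part (iv) of that lemma gives $\|\mathbb{E}[\widehat g^{\mathrm{GRPO}}_{k,G}(s_t)\mid s_t]-g^\star_{k,\mathrm{norm}}(s_t)\|=\mathcal{O}(G^{-1})$. The constant depends only on $R_{\max}/(1-\gamma)$, on $\varepsilon$ and on the score bound, so it is uniform in $s_t$. Averaging over $s_t\sim\pi_\theta,\mathcal{M}_k$ and using Jensen's inequality for the norm carries the $\mathcal{O}(G^{-1})$ bound through the outer expectation. By the last sentence of the lemma, the averaged target is exactly $\widetilde\nabla J_k(\theta)$. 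The final invariance claim follows directly from Lemma~\ref{lem:affine}. An affine map with $\alpha_k>0$ leaves $\hat A_t^{(g)}$ unchanged when $\varepsilon=0$, and up to an $\mathcal{O}(\varepsilon)$ term otherwise. It also rescales the target numerator $A_k^\pi$ and the denominator $\sqrt{\Sigma_k}$ by the same factor $\alpha_k$, so both the estimator and $\widetilde\nabla J_k$, and hence their difference, are invariant up to that same $\mathcal{O}(\varepsilon)$ correction.

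\emph{Main obstacle.} I expect the hardest step to be making the GRPO constant genuinely uniform. The Taylor step in Lemma~\ref{lem:grpo-unbiased} needs $\Sigma_k(s)$ bounded away from zero; otherwise the expansion of $1/(\sigma_t+\varepsilon)$ degenerates. My first attempt will exploit $\varepsilon>0$: the map $x\mapsto 1/(\sqrt{x}+\varepsilon)$ is Lipschitz in $\sigma$ with constant $\varepsilon^{-2}$, which yields a uniform bound at the price of an $\varepsilon^{-2}$ factor in the constant. A secondary caveat is that the PPO side compares against the vanilla gradient while the GRPO side compares against the preconditioned one. The corollary is stated with these two different targets, so I will keep them separate rather than attempt to reconcile them.
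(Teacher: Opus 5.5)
Your argument is the paper's own proof. For PPO, you combine Lemma~\ref{lem:critic-bias} with the oracle-critic bound of Equation~\ref{eq:value-perturb} and integrate against a bounded score. For GRPO, you integrate the per-state bias of Lemma~\ref{lem:grpo-unbiased} over $s_t\sim\pi_\theta,\mathcal{M}_k$, and the target matches Equation~\ref{eq:precond-grad} because $\Sigma_k(s)$ does not depend on $a$. Lemma~\ref{lem:affine} gives the invariance.

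The paper simply takes uniformity in $s_t$ for granted, so your main line stands on the same footing. However, the remedy you sketch for uniformity would not give the claimed rate. A Lipschitz bound only yields $\bigl|\mathbb{E}[1/(\sigma_t+\varepsilon)]-1/(\sqrt{\Sigma_k}+\varepsilon)\bigr|\le\varepsilon^{-2}\,\mathbb{E}|\sigma_t-\sqrt{\Sigma_k}|$, and that absolute deviation is of order $G^{-1/2}$, not $G^{-1}$.

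To keep $\mathcal{O}(G^{-1})$ you need a second-order expansion of $f(\sigma)=1/(\sigma+\varepsilon)$, using $|f''|\le 2\varepsilon^{-3}$ on $[0,\infty)$. Then the first-order term involves only the bias $\mathbb{E}[\sigma_t]-\sqrt{\Sigma_k}$. That bias can itself be of order $G^{-1/2}$ when $\Sigma_k(s)\sim G^{-1}$. So uniformity appears only at the gradient level, after multiplying by the advantage scale (note $\mathbb{E}_a|A_k^\pi(s,a)|\le\sqrt{\Sigma_k(s)}$). One way to finish is the identity $\sqrt{\Sigma_k}\,(\sqrt{\Sigma_k}-\mathbb{E}\sigma_t)=\tfrac12\mathbb{E}(\sigma_t-\sqrt{\Sigma_k})^2+\tfrac{\Sigma_k}{2G}$, together with $\mathbb{E}(\sigma_t-\sqrt{\Sigma_k})^2\le\mathbb{E}(\sigma_t^2-\Sigma_k)^2/\Sigma_k=\mathcal{O}(B^2/G)$ for returns bounded by $B$. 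The numerator--denominator covariance term still has to be handled as in Lemma~\ref{lem:grpo-unbiased}.

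Finally, do not absorb the $(1-\gamma)^{-1}$ occupancy factor into the constant, since the bound tracks $\gamma$ explicitly. Instead, state that both $g_k^{\mathrm{PPO}}$ and $g_k^\star$ are taken with respect to the normalized discounted occupancy. Against the unnormalized $\nabla J_k(\theta)$, the PPO bound acquires an extra factor $(1-\gamma)^{-1}$.
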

\begin{proof}
The PPO bound follows by combining Lemma~\ref{lem:critic-bias} with
$\|V_\phi - V_k^\pi\|_\infty = \mathcal{O}(\delta_k/(1-\gamma))$ from
Equation~\ref{eq:value-perturb}, then integrating against the bounded
score $\nabla_\theta\log\pi_\theta(a_t\mid s_t)$ over $\pi_\theta$. The
GRPO bound follows by integrating the per-state bias of
Lemma~\ref{lem:grpo-unbiased} over $s_t\sim\pi_\theta,\mathcal{M}_k$;
the integration step uses that $\Sigma_k(s_t)$ does not depend on $a$,
so the per-state denominator factors out of the action expectation and
the result matches the definition~\eqref{eq:precond-grad}. Affine
invariance is immediate from Lemma~\ref{lem:affine}.
\end{proof}
\begin{remark}[Tightness under monotone drift]\label{rmk:lower-bound}
Suppose the drift is monotone in the sense that there exists
$(s^\star,a^\star)$ with
$r_j(s^\star,a^\star)-r_{j-1}(s^\star,a^\star)\ge c\,\delta_j$ for all
$j\le k$ and some $c>0$. Then for any critic $V_\phi$ fit on chunks
$\{1,\dots,k-1\}$,
$\|V_\phi-V_k^\pi\|_\infty\ge c\,\delta_k/(2(1-\gamma))$,
since no convex combination of past values can match the chunk-$k$
value at $(s^\star,a^\star)$. The monotonic background-distribution
drift in Figs.~\ref{fig:score_summary},~\ref{fig:score_summary_realdata}
satisfies this condition.
\end{remark}

\begin{remark}[Rate-constrained objective]\label{rmk:rate-constrained}
Recall in Section~\ref{sec:rl_formulation_adaptive_thresholding}, we
model the trigger problem as a constrained MDP. We maximize
$J_k(\theta)$ subject to
$\mathbb{E}_{\pi_\theta,\mathcal{M}_k}[\rho_t] \le \rho_{\max}$, with
Lagrangian $\tilde R_t = r_t - \lambda_k(\rho_t - \rho_{\max})$.
Luminosity drift scales $r_t$ and $\rho_t$ jointly, so $\tilde R_t$
inherits the affine drift of Lemma~\ref{lem:affine}. The GRPO update
absorbs $\mathcal{L}_k$-rescaling (per-chunk luminosity) without
re-tuning $\lambda_k$; the PPO critic acquires the
Lemma~\ref{lem:critic-bias} bias on both reward and rate channels.
This matches the experiments: GRPO holds the rate--efficiency Pareto
front across luminosity sweeps, while PPO with GAE drifts off it.
\end{remark}
\paragraph{Discussion and limitations.}
Corollary~\ref{cor:bounds} characterizes the per-chunk bias of the two
estimators against their respective targets: PPO's bias against the
vanilla policy gradient $\nabla J_k(\theta)$ scales as
$\mathcal{O}(\delta_k)$ in the per-chunk drift, while GRPO's bias
against the preconditioned target $\widetilde\nabla J_k(\theta)$ scales
as $\mathcal{O}(G^{-1})$ in the group size, drift-independent. The
preconditioning factor $1/(\sqrt{\Sigma_k(s)}+\varepsilon)>0$ preserves
per-state ascent direction, so $\widetilde\nabla J_k$ shares ascent
direction with $\nabla J_k$ at every state. The two bounds measure
different things and we draw no direct comparative conclusion.

In summary, on chunk $k$ the PPO estimator has bias
$\mathcal{O}(\delta_k/((1-\gamma)(1-\gamma\lambda)))$ against
$\nabla J_k(\theta)$, while the GRPO estimator has bias
$\mathcal{O}(G^{-1})$ against the preconditioned target
$\widetilde\nabla J_k(\theta)$ — drift-independent and additionally
invariant under per-chunk affine rescaling of $r_k$. The cost is that
GRPO targets a state-preconditioned gradient rather than the vanilla
one, and requires $G$ on-policy samples per state.

%% file: Appendix/pseudocode.tex
\newpage
\section{GFPO for Streaming Trigger Control: Pseudocode}
\label{app:GFPO_Pseudocode}
\newcommand{\sctt}[1]{\textnormal{\textsc{#1}}}
\begin{algorithm}[ht]
\caption{GFPO for streaming trigger control at LHC: MC training and CMS deployment}
\label{alg:gfpo_pipeline}
\KwIn{Data Streams $\mathcal{D}_{\text{MC}}, \mathcal{D}_{\text{CMS}}$; variant $v \in \{\text{GFPO-F}, \text{GFPO-FR}\}$;
group size $G$, kept size $K$; target background rate $r_B^*$, tolerance $\tau$; MC passes $E$;
deployment mode $\in \{\sctt{Frozen/Deployment}, \sctt{TTT (Test-time Training)}\}$.}
\KwOut{Trained policy $\pi_\theta$; threshold trajectory $\{\threshold_t\}_{t \in \mathcal{D}_{\text{CMS}}}$.}
\BlankLine
\textbf{Procedure} \sctt{GFPO-Step}$(s_t, \threshold_t)$\;
\Indp
Sample actions $\{a_t^{(g)}\}_{g=1}^{G} \sim \pi_{\theta_{\text{old}}}(\cdot \mid s_t)$;
form candidates $\threshold_t^{(g)} \leftarrow \threshold_t + a_t^{(g)}$\;
Evaluate background rate errors for $\threshold_t^{(g)}$ and calculate rewards $\{R_t^{(g)}\}$
\tcp*{Eq.~\ref{equation:reward_design}}
$\mathcal{F}_t(\tau) \leftarrow \{g \in [G] : |\hat{e}_t(\threshold_t^{(g)})| \leq \tau\}$\;
$\mathcal{K}_t \leftarrow$ kept set per variant $v$
\tcp*{Eqs.~\ref{eq:gfpo_f}/\ref{eq:gfpo_fr}}
\If{$\mathcal{K}_t \neq \emptyset$}{
  $\theta \leftarrow \theta + \eta \nabla_\theta \mathcal{L}_{\text{GRPO}}(\theta; \mathcal{K}_t)$
  \tcp*{Eqs.~\ref{eq:gfpo_advantage},~\ref{eq:loss_grpo}}
}
\KwRet{$\Delta\threshold_t$ from $\mathcal{K}_t$ (or sampled from $\pi_\theta$ if $\mathcal{K}_t = \emptyset$)}\;
\Indm
\BlankLine
Initialize $\pi_\theta$\;
\textbf{Phase 1: MC training}\;
\For{$e \leftarrow 1$ \KwTo $E$, chunk $t \in \mathcal{D}_{\text{MC}}$}{
  $s_t \leftarrow \sctt{BuildSequentialState}$ \tcp*{Alg.~\ref{alg:seq_state_and_controller}}
  $\Delta\threshold_t \leftarrow \sctt{GFPO-Step}(s_t, \threshold_t)$\;
  $\threshold_{t+1} \leftarrow \threshold_t + \mathrm{clip}\!\bigl(\Delta\threshold_t,\,[-\Delta\threshold_{\max},\,\Delta\threshold_{\max}]\bigr)$\;
}
\BlankLine
\textbf{Phase 2: CMS deployment}\;
\For{chunk $t \in \mathcal{D}_{\text{CMS}}$}{
  $s_t \leftarrow \sctt{BuildSequentialState}$\;
  \eIf{mode $=$ \sctt{Frozen/Deployment}}{
    $\Delta\threshold_t \leftarrow \arg\max_a \pi_\theta(a \mid s_t)$ \tcp*{no gradient update}
  }{
    $\Delta\threshold_t \leftarrow \sctt{GFPO-Step}(s_t, \threshold_t)$ \tcp*{TTT: one update per chunk}
  }
  $\threshold_{t+1} \leftarrow \threshold_t + \mathrm{clip}(\Delta\threshold_t, [-\Delta \threshold_{\max}, \Delta\threshold_{\max}])$\;
}
\end{algorithm}

%% file: Appendix/Experimental_setup.tex
\newpage
\section{Experimental Setup for LHC}
\subsection{Dataset Summary}
\label{appendix:dataset_summary}

\paragraph{Datasets.} We use the datasets of \citet{emami2026selfdrivingtriggerlhcadaptive}. Sample sizes are listed in Table~\ref{tab:dataset_summary}.
\ttbar\ and \haaFourB\ are shared for both MC and real CMS collision data setting. 

\begin{table*}[!htbp]
\centering
\caption{Dataset summary}
\label{tab:dataset_summary}
\begin{tabular}{cc}
\toprule
Sample & Events \\
\midrule
MC Background & 9,794,099 \\
\ttbar\ Signal & 2,233,999 \\
\haaFourB\ Signal & 1,102,412 \\
CMS Run 283876 Background (AD Trigger Training) & 664,475 \\
CMS Run 283408 Background (Evaluation) & 1,987,943 \\
\bottomrule
\end{tabular}
\end{table*}

\paragraph{Score drift.} Figures~\ref{fig:score_summary}and~\ref{fig:score_summary_realdata} report the running mean, median, and central 5--95\% band of background $H_T$ and AD scores over MC and CMS Run 283408. Both streams drift downward over the run. MC drifts more strongly toward zero, while CMS Run 283408 declines mildly and retains a heavy upper tail. CMS Run 283408 is also globally higher-scoring than MC, a domain shift on top of the within-run drift.

\paragraph{Autoencoder protocol.} Following \citet{emami2026selfdrivingtriggerlhcadaptive}, we use latent dimension $d=2$. For real data, we train on CMS Run 283876~\citep{cms_opendata_67840} (the second-longest 2016 run~\citep{cernopendata_aboutcms}) and evaluate AD scores on CMS Run 283408 (the largest 2016 run; 1.99M background events). For MC, we train on one background sample and evaluate on an independent background sample.

\paragraph{Signal/background overlap (Figure~\ref{fig:signal_background_overlap}).} \ttbar\ events are kinematically well-separated from MinBias background under both $H_T$ and the AD score, lying predominantly to the right of the oracle cut. \haaFourB\ events overlap heavily with the background: each $b$-quark carries $\sim m_H/4 \approx 31$~GeV~\citep{de2017cern, aaboud2018search}, so $H_T$ and AD scores fall in the bulk of the background distribution and a large fraction of signal events sits below the oracle cut at any operating point. Notably, this ceiling is \emph{intrinsic} to the signals: for $H_{T}$ trigger, every method in 
Tables~\ref{tab:single_trigger_summary_compact} 
and~\ref{tab:single_trigger_summary_compact_realdata} reaches $> 97\%$ on 
\ttbar\ (except Constant in Table~\ref{tab:single_trigger_summary_compact_realdata}) but stays below $\sim\!35\%$ on \haaFourB\ for $H_T$ (and analogously 
for AD), tracking the oracle bounds in 
Table~\ref{tab:oracle_signal_eff_mc_realcms}. Note to keep it consistent with the main paper, we only report results for 20\% MC with Table~\ref{tab:oracle_signal_eff_mc_realcms} as it involves RL training.

Unlike the RL evaluations, which train on the first 80\% of MC chunks and report performance on the held-out final 20\% to prevent train-test leakage, Figure~\ref{fig:signal_background_overlap} uses the full MC sample because essentially \emph{no} learning is performed. That figure serves only as an oracle study of the underlying signal-background separation achievable with each trigger. Same setup applies to Figure~\ref{fig:signal_background_overlap_cms} for real CMS data. Therefore, the maximum achievable signal efficiency for both Figures~\ref{fig:signal_background_overlap} and ~\ref{fig:signal_background_overlap_cms} are different from oracle signal efficiencies for Tables~\ref{tab:oracle_signal_eff_mc} and ~\ref{tab:oracle_signal_eff_mc_realcms}.

\paragraph{Oracle cut.} The oracle cut $r^{+}$ is the threshold $\threshold$ for which the background rate equals $r^{+} = r_B^{*}+\tau = 0.275\%$, the tightest cut still inside the upper edge of the tolerance band. Its signal efficiency is the maximum achievable by any deterministic threshold rule. Table~\ref{tab:oracle_signal_eff_mc_realcms} reports oracle efficiencies at both $r_B^{*}$ and $r^{+}$ across triggers and signals. Same as \citet{emami2026selfdrivingtriggerlhcadaptive}, we exclude the first 10 chunks (initial threshold calibration) and report results for only events past the first 500K (MC) and 200K (CMS Run 283408).

\paragraph{Analyzing the signal efficiency gap between oracle and GFPO variants.} Tables~\ref{tab:oracle_signal_eff_mc} and~\ref{tab:oracle_signal_eff_realcms} report the gap between our methods and the oracle at $r^{+}$. For MC, GFPO-FR achieves the smallest gap with the oracle compared to GFPO-F. On CMS Run 283408 for \haaFourB\ signal, both GFPO variants (42.533 and 44.477) \emph{exceed} the oracle (41.028) in-band efficiency: in-band chunks are not uniformly distributed but concentrate in the high-efficiency tail of the run (Figure~\ref{fig:cms_inband_drift_h4b}).

\begin{table*}[!htbp]     
\centering                                       
\setlength{\tabcolsep}{4pt}                      
\renewcommand{\arraystretch}{0.95}    \caption{Overall oracle signal efficiencies for 20\% MC and CMS Run 283408 at $r_{B}^{*}=0.250\%$ and the      
loosest feasible rate $r^{+} = r_B^{*}+\tau=0.275\%$. To avoid training and evaluation overlapping, we train on 80\% MC and report below results on 20\% MC. For oracle performance, the default way to calculate signal efficiency is to assume every threshold is in-band (i.e., the background rate stays within the tolerance band), and dashed lines represent the same values for both overall and inband signal efficiency.} 
{\footnotesize
\begin{tabular}{l l cccc cccc}                   
\hline                                            
& & \multicolumn{4}{c}{20 \% MC} & \multicolumn{4}{c}{CMS Run 283408} \\                               
\cmidrule(lr){3-6}\cmidrule(lr){7-10}             
& & \multicolumn{2}{c}{$\epsilon_{\text{ov}}$} & \multicolumn{2}{c}{$\epsilon_{\text{in}}$}      
& \multicolumn{2}{c}{$\epsilon_{\text{ov}}$} & \multicolumn{2}{c}{$\epsilon_{\text{in}}$} \\     
\cmidrule(lr){3-4}\cmidrule(lr){5-6}\cmidrule(lr){7-8}\cmidrule(lr){9-10}                          
Trigger & Method                                  
& $\ttbarraw$ & $\haaFourB$                       
& $\ttbarraw$ & $\haaFourB$                       
& $\ttbarraw$ & $\haaFourB$                      
& $\ttbarraw$ & $\haaFourB$ \\                    
\hline                                            
$H_{T}$ & Oracle ($r^{+} = r_B^{*}+\tau$) & \textbf{99.560} & \textbf{36.844} & --- & --- &  \textbf{98.049} & \textbf{33.693} & --- & --- \\ 
& PID & 99.388 & 33.289 & 99.294 & 31.444 & 97.381 & 33.347 & 97.497 & \textbf{35.242} \\
\rowcolor{highlight}                             
& GFPO-F (Ours) & 99.425 & 34.273 & 99.425 & 34.273 & 97.587 & 33.347 & 97.508 & 33.262 \\             
\rowcolor{highlightDark}                          
& GFPO-FR (Ours) & \underline{99.468} & \underline{35.057} & \underline{99.468} & \underline{35.057} & \underline{97.734} & \underline{33.479} & \underline{97.821} & \underline{33.451} \\   \hline                                            
 AD & Oracle ($r^{+} = r_B^{*}+\tau$) & \textbf{96.092} & \textbf{29.910} & --- & --- & \textbf{76.664} & \textbf{41.028} & --- & --- \\ 
& PID & 95.233 & 27.298 & 94.990 & 26.287  & 75.053 & 39.191 & 76.210 & \textbf{44.573} \\
\rowcolor{highlight} & GFPO-F (Ours) & 95.442 & 27.926 & 95.442 & 27.926 & 75.562 & 40.083 & 76.209 & 42.533 \\   
\rowcolor{highlightDark}                     
& GFPO-FR (Ours) & \underline{95.842}  & \underline{28.875} & \underline{95.842} & \underline{28.868} & \underline{75.582} & \underline{40.309} &     
\underline{76.515} & \underline{44.477} \\              \hline                          \end{tabular}                   }   \label{tab:oracle_signal_eff_mc_realcms}          
\end{table*}

\begin{table*}[t]
\centering
\caption{Oracle signal efficiencies (20 \% MC) at $r_{B}^{*}=0.250\%$ and $r^{+} = r_B^{*}+\tau=0.275\%$. To avoid training and evaluation overlapping, we train on 80\% MC and report below results on 20\% MC. $\Delta$ columns show improvement over Oracle($r^{+}$). GFPO-FR has smaller signal efficiency gap with Oracle ($r^{+}$).}
\resizebox{\columnwidth}{!}{%
\setlength{\tabcolsep}{3pt}
\renewcommand{\arraystretch}{1}
\begin{tabular}{l l cccc cccc}
\hline
& & \multicolumn{2}{c}{$\epsilon_{\text{overall}}$} & \multicolumn{2}{c}{$\Delta_{\text{overall}}$ (Oracle ($r^{+}$) - Ours)} &
\multicolumn{2}{c}{$\epsilon_{\text{inband}}$} & \multicolumn{2}{c}{$\Delta_{\text{inband}}$ (Oracle ($r^{+}$) - Ours)} \\
\cmidrule(lr){3-4}\cmidrule(lr){5-6}\cmidrule(lr){7-8}\cmidrule(lr){9-10}
Trigger & Method
& $\ttbarraw$ & $\haaFourB$
& $\ttbarraw$ & $\haaFourB$
& $\ttbarraw$ & $\haaFourB$
& $\ttbarraw$ & $\haaFourB$ \\
\hline

$H_{T}$ & Oracle ($r^{+}$) & \textbf{99.560} & \textbf{36.844} &  &  & \textbf{99.560} & \textbf{36.844} &  &  \\
\rowcolor{highlight}
& GFPO-F & 99.425 & 34.273 & $+0.135$ & $+2.571$ & 99.425 & 34.273 & $+0.135$ & $+2.571$ \\
\rowcolor{highlightDark}
& GFPO-FR & \underline{99.468} & \underline{35.057} & $+0.093$ & $+1.787$ & \underline{99.468} &             
\underline{35.057} & $+0.092$ & $+1.787$ \\
\hline

AD & Oracle ($r^{+}$) & \textbf{96.092} & \textbf{29.910} & & & \textbf{96.092} & \textbf{29.910} & & \\
\rowcolor{highlight}
& GFPO-F & 95.442 & 27.926 & $+0.650$ & $+1.984$ & 95.442 & 27.926 & $+0.650$ & $+1.984$ \\
\rowcolor{highlightDark}
& GFPO-FR & \underline{95.842} & \underline{28.875} & $+0.250$ & $+1.035$ & \underline{95.842} & \underline{28.868} & $+0.251$ & $+1.042$ \\
\hline
\end{tabular}
}
\label{tab:oracle_signal_eff_mc}
\end{table*}

\begin{table*}[t]
\centering
\caption{Oracle signal efficiencies (CMS Run 283408) at $r_{B}^{*}=0.250\%$ and $r^{+} = r_B^{}+\tau=0.275\%$. $\Delta$ columns show improvement over Oracle ($r^{+}$). GFPO-FR has smaller signal efficiency gap with Oracle ($r^{+}$) except for \haaFourB\ inband. Figure~\ref{fig:cms_inband_drift_h4b} presents why inband \haaFourB\ signal efficiency is higher than oracle ($r^{+}$).}
\label{tab:oracle_signal_eff_realcms}
\resizebox{\columnwidth}{!}{%
\setlength{\tabcolsep}{3pt}
\renewcommand{\arraystretch}{0.85}
\begin{tabular}{l l cccc cccc}
\hline
& & \multicolumn{2}{c}{$\epsilon_{\text{overall}}$} & \multicolumn{2}{c}{$\Delta_{\text{overall}}$ (Oracle ($r^{+}$) - Ours)} &
\multicolumn{2}{c}{$\epsilon_{\text{inband}}$} & \multicolumn{2}{c}{$\Delta_{\text{inband}}$ (Oracle ($r^{+}$) - Ours)} \\
\cmidrule(lr){3-4}\cmidrule(lr){5-6}\cmidrule(lr){7-8}\cmidrule(lr){9-10}
Trigger & Method
& $\ttbarraw$ & $\haaFourB$
& $\ttbarraw$ & $\haaFourB$
& $\ttbarraw$ & $\haaFourB$
& $\ttbarraw$ & $\haaFourB$ \\
\hline
$H_{T}$ & Oracle ($r^{+}$) & \textbf{98.049} & \textbf{33.693} &  &  & \textbf{98.049} & \textbf{33.693} & & \\
\rowcolor{highlight}
& GFPO-F (Ours) & 97.587 & 33.347 & $+0.462$ & $+0.346$ & 97.508 & 33.262 & $+0.541$ & $+0.431$ \\
\rowcolor{highlightDark}
& GFPO-FR (Ours) & \underline{97.734} & \underline{33.479} & $+0.315$ & $+0.214$ & \underline{97.821} & \underline{33.451} & $+0.228$ & $+0.242$ \\
\hline
AD & Oracle ($r^{+}$) & \textbf{76.664} & \underline{41.028} &  & & \textbf{76.664} & 41.028 & & \\
\rowcolor{highlight}
& GFPO-F (Ours) & 75.562 & 40.083 & $+1.102$ & $+0.945$ & 76.209 & \underline{42.533} & $+0.455$ & $-1.505$ \\
\rowcolor{highlightDark}
& GFPO-FR (Ours) & \underline{75.582} & \underline{40.309} & $+1.082$ & $+0.719$ & \underline{76.515} & \textbf{44.477} & $+0.149$ & $-3.449$ \\
\hline
\end{tabular}
}
\end{table*}

\begin{figure}
    \centering
    \begin{subfigure}{0.45\textwidth}
    \includegraphics[width=\linewidth]{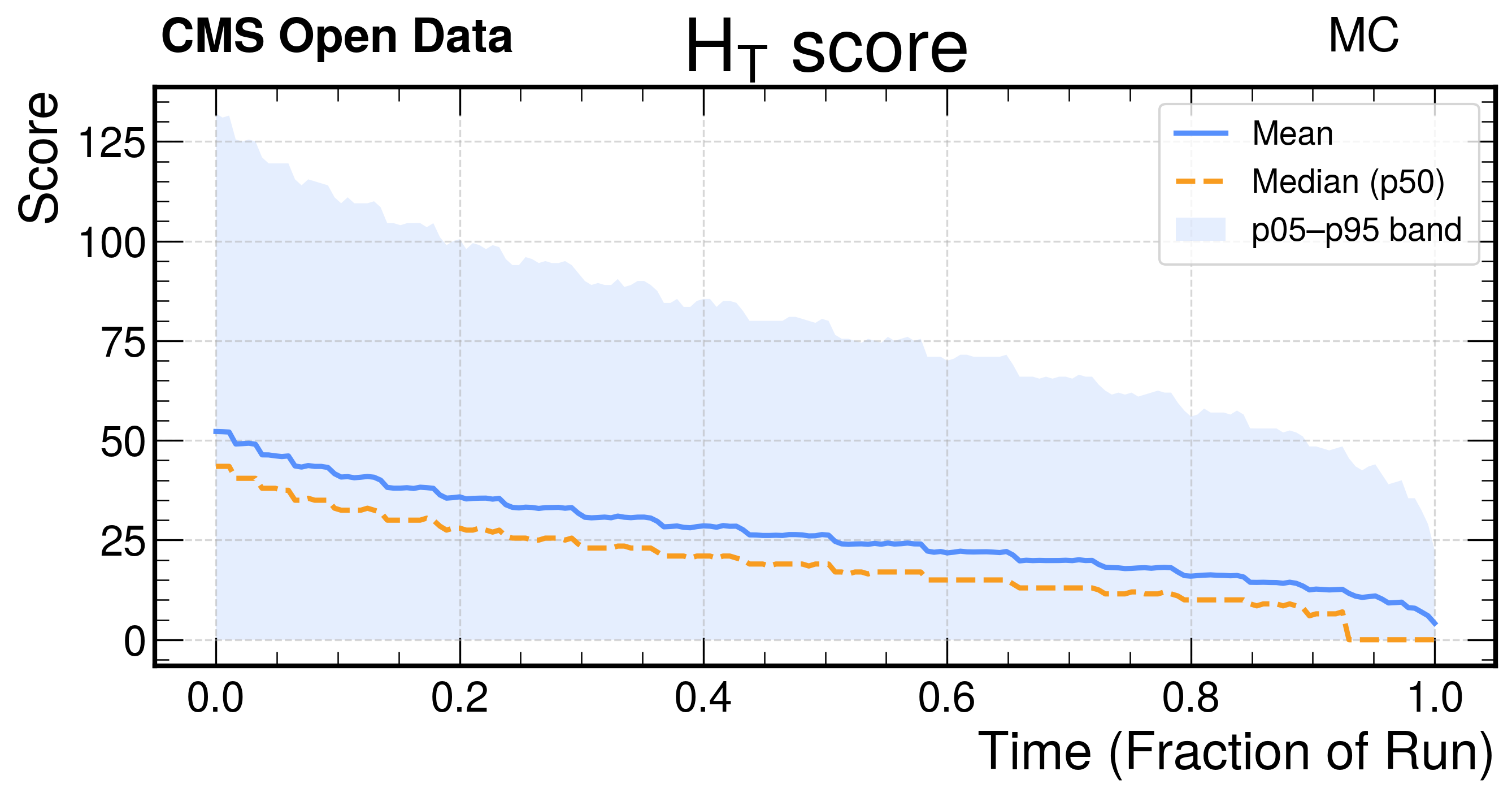}
    \caption{$H_{T}$ trigger}
    \end{subfigure}
    \qquad
    \begin{subfigure}{0.45\textwidth}
    \includegraphics[width=\linewidth]{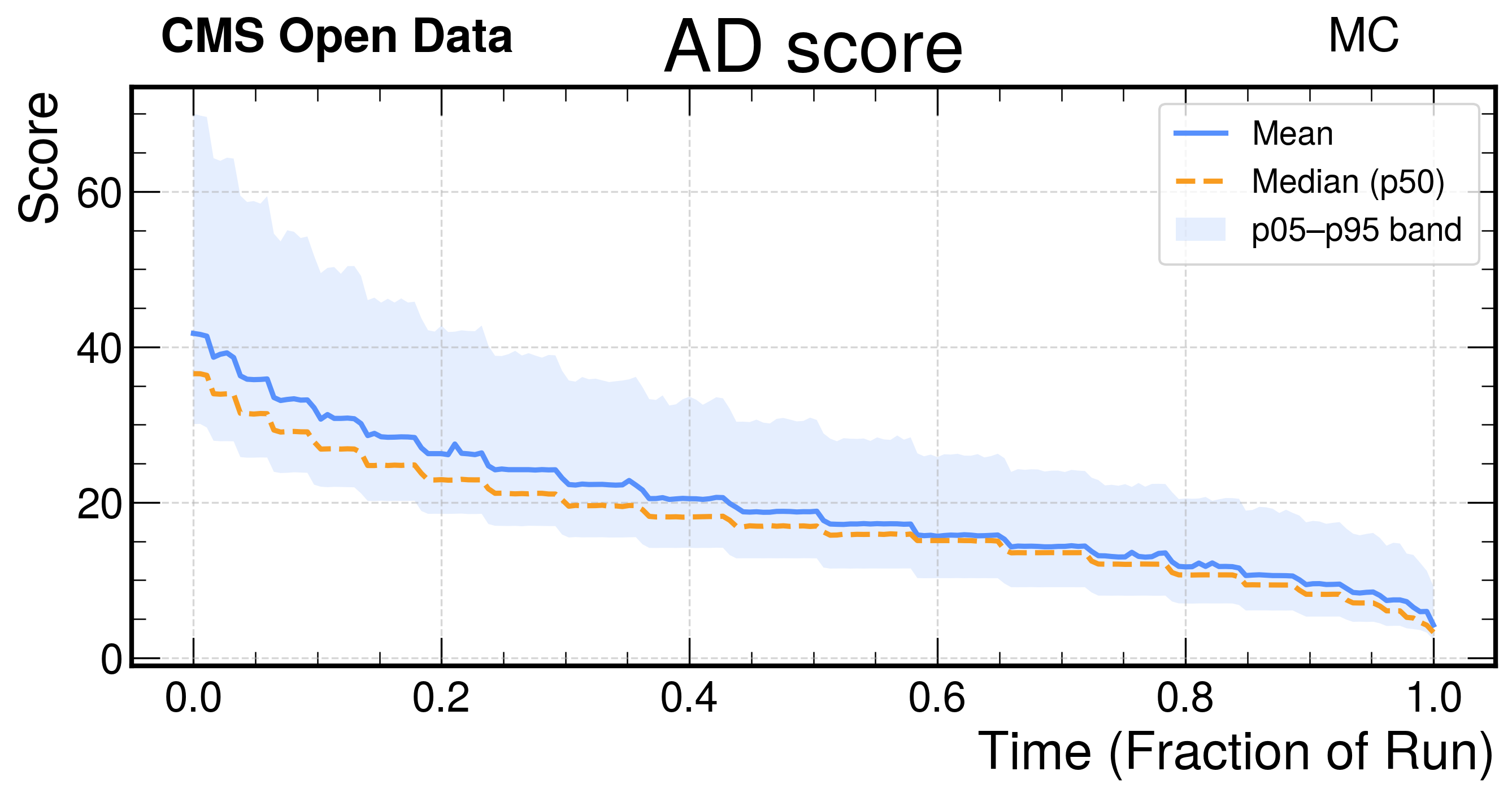}
    \caption{AD trigger}
    \end{subfigure}
    \caption{Background scores drift over time for both triggers for MC. Running mean (solid), median (dashed), and the central 5-95\% band (shaded) of the background $H_{T}$ score (left) and AD score (right) as a function of run time (fraction of run).}
    \label{fig:score_summary}
\end{figure}

\begin{figure}
    \centering
    \begin{subfigure}{0.45\textwidth}
    \includegraphics[width=\linewidth]{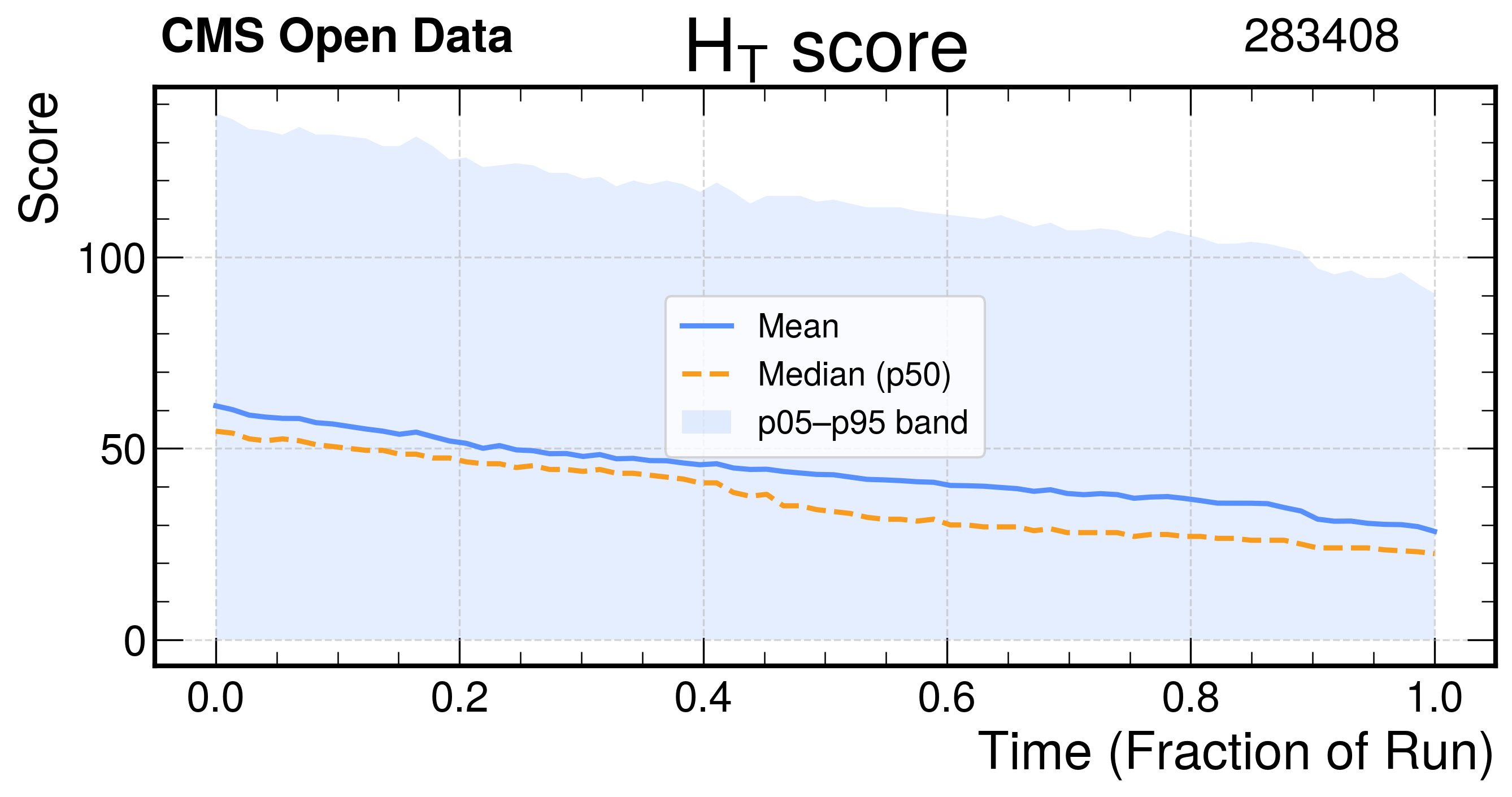}
    \caption{$H_{T}$ trigger}
    \end{subfigure}
    \qquad
    \begin{subfigure}{0.45\textwidth}
    \includegraphics[width=\linewidth]{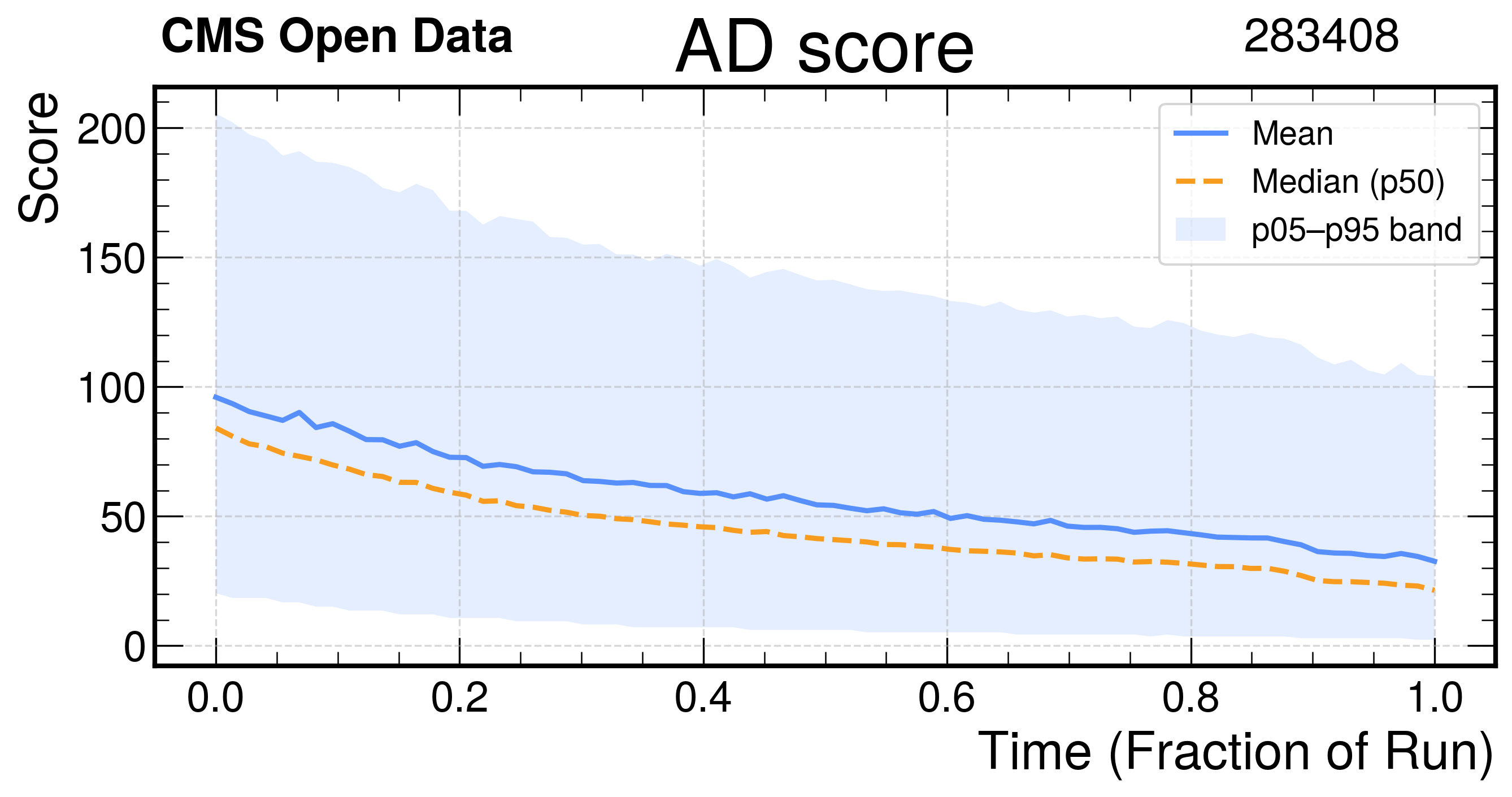}
    \caption{AD trigger}
    \end{subfigure}
    \caption{Background scores drift over time for both triggers for CMS Run 283408. Running mean (solid), median (dashed), and the central 5-95\% band (shaded) of the background scores as a function of run time (fraction of run).}
    \label{fig:score_summary_realdata}
\end{figure}

\begin{figure}
    \centering
    \includegraphics[width=\linewidth]{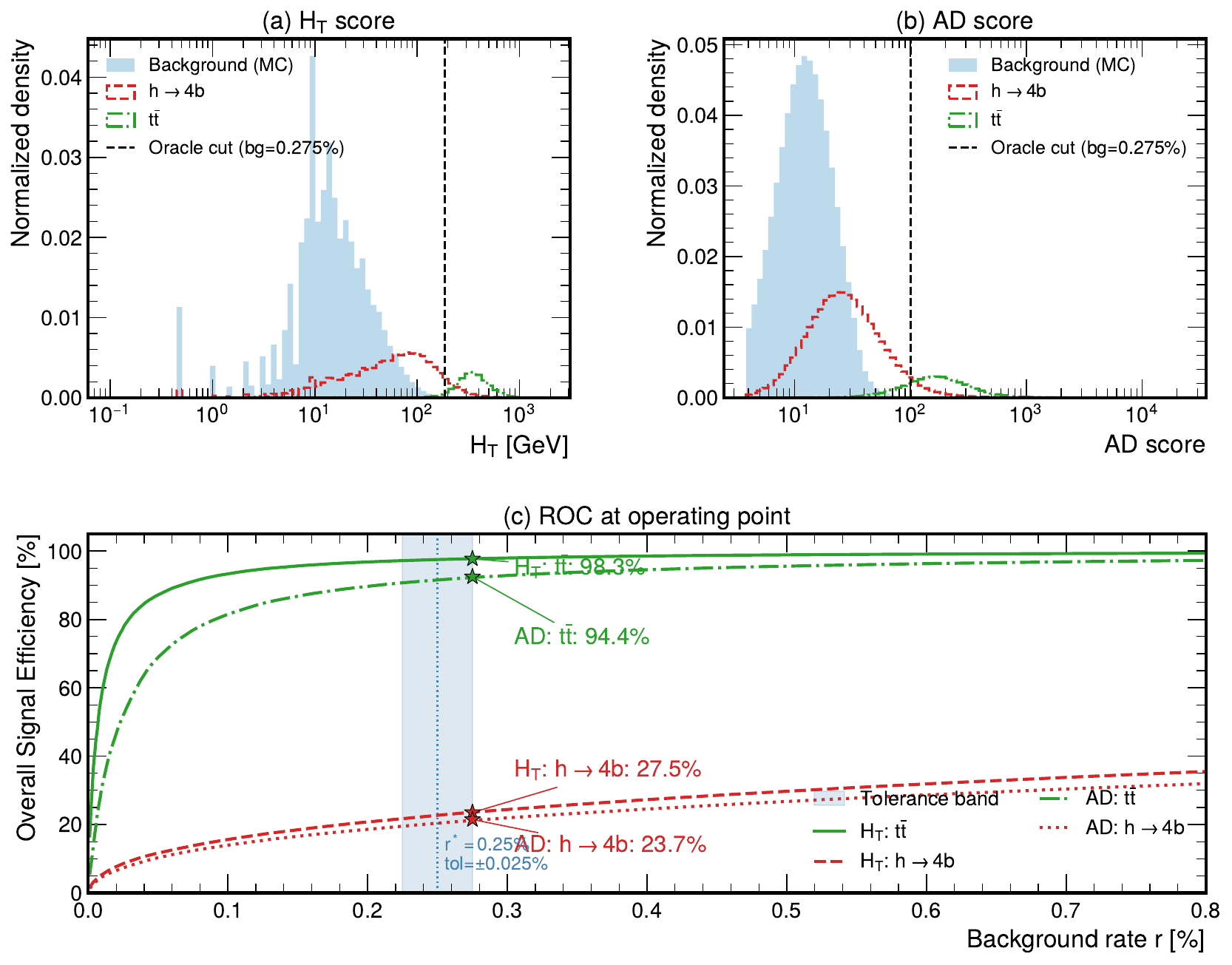}
    \caption{Signal-background discrimination for the $H_T$ and AD triggers (\textbf{Full MC}, no train/test split). (a) $H_T$ score and
   (b) AD score distributions for background (MinBias, shaded blue), \haaFourB\ signal (red 
  solid), and \ttbar\ signal (green dashed). The dashed black line indicates the oracle cut corresponding to the
   upper edge of the tolerance band ($r^{+} = 0.275\%$). (c) ROC curves showing signal efficiency versus    
  background rate $r$ for both triggers and both signals. The light blue shaded region denotes the tolerance band
  ($r^* \pm \tau = 0.25\% \pm 0.025\%$). Markers at the upper band edge show the maximum achievable signal efficiency: $H_T$
  trigger yields 98.3\% for $t\bar{t}$ and 27.5\% for $h \rightarrow 4b$, while the AD trigger yields 94.4\% for
  $t\bar{t}$ and 23.7\% for $h \rightarrow 4b$. The large gap between the two signals reflects the inherent overlap
  of the $h \rightarrow 4b$ distribution with the MinimumBias background in both feature spaces, setting a physics-limited
  upper bound on achievable signal efficiency at this operating point.}
    \label{fig:signal_background_overlap}
\end{figure}

\begin{figure}[!htbp]
    \centering
    \includegraphics[width=\linewidth]{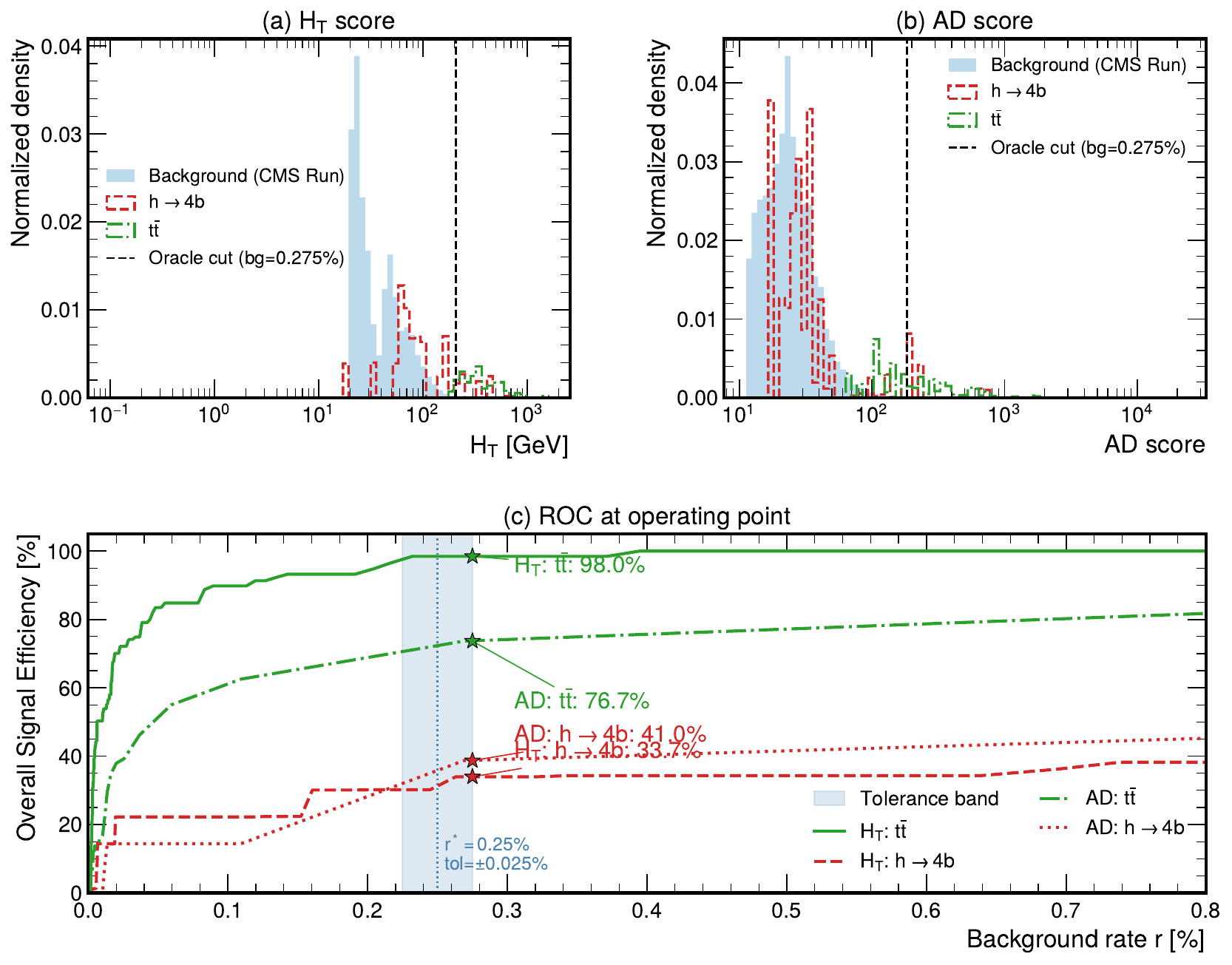}
    \caption{Signal-background discrimination for the $H_T$ and AD triggers (CMS Run 283408). (a) $H_T$ score and
   (b) AD score distributions for background (MinBias, shaded blue), \haaFourB\ signal (red 
  solid), and \ttbar\ signal (green dashed). The dashed black line indicates the oracle cut corresponding to the
   upper edge of the tolerance band ($r^{+} = 0.275\%$). (c) ROC curves showing signal efficiency versus    
  background rate $r$ for both triggers and both signals. The light blue shaded region denotes the tolerance band
  ($r^* \pm \tau = 0.25\% \pm 0.025\%$). Markers at the upper band edge show the maximum achievable signal efficiency: $H_T$
  trigger yields 98.0\% for $t\bar{t}$ and 33.7\% for \haaFourB, while the AD trigger yields 76.7\% for
  $t\bar{t}$ and 41.0\% for \haaFourB. }
    \label{fig:signal_background_overlap_cms}
\end{figure}

\begin{figure}
    \centering
    \includegraphics[width=\linewidth]{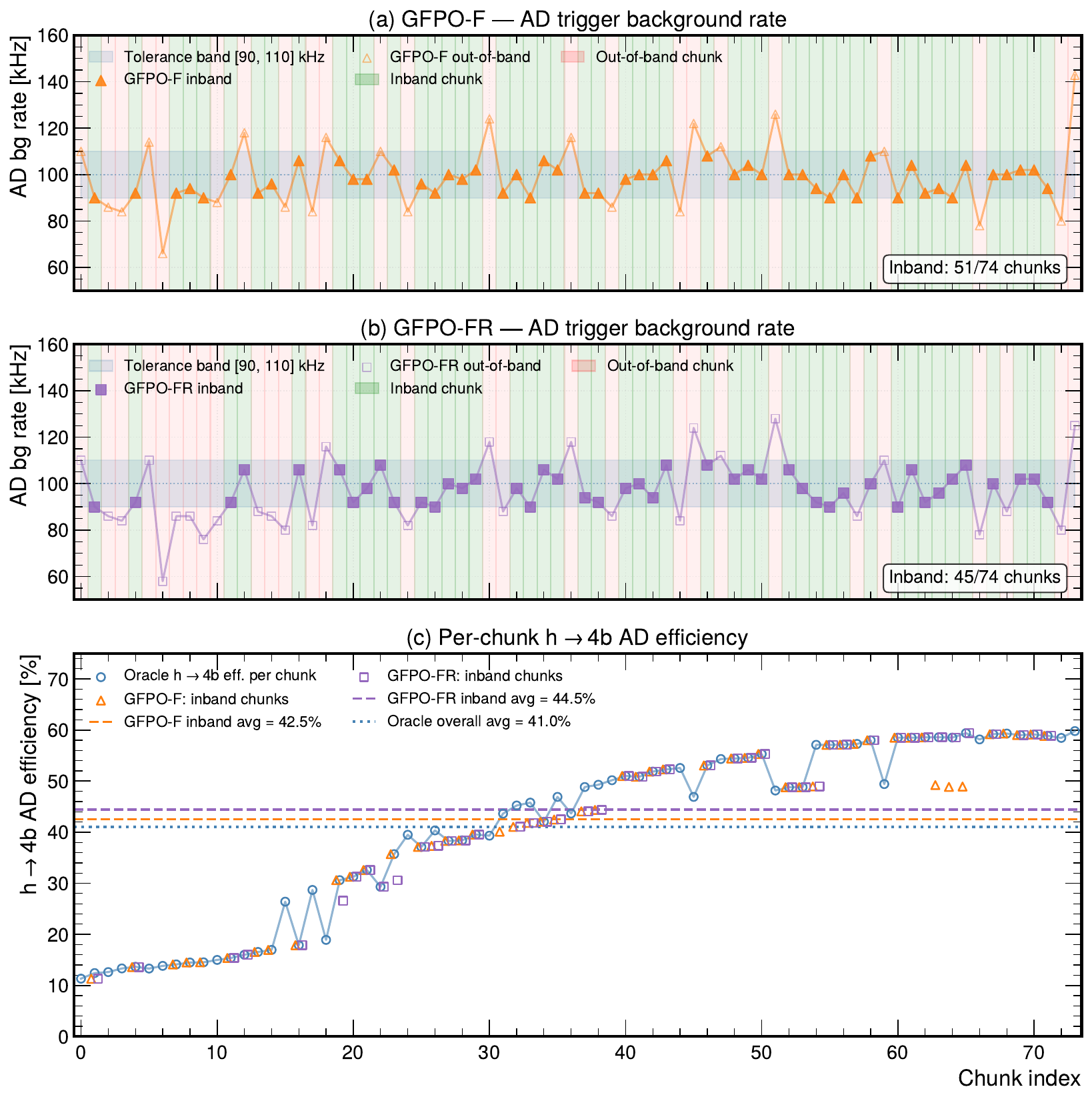}
    \caption{(a) GFPO-F for AD trigger (CMS Run 283408) (b) GFPO-FR for AD trigger (CMS Run 283408) (c) \haaFourB\ signal efficiency for CMS Run 283408 for GFPO-F (inband) and GFPO-FR (inband) and oracle ($r^{+}$) for \haaFourB\ (overall). The \haaFourB\ efficiency    
  exhibits strong temporal drift, rising from $\sim11\%$ in early high-pileup chunks to $\sim59\%$ in late low-pileup chunks. Because inband
   chunks are not \textit{uniformly} distributed but concentrate in the high-efficiency tail of the run as (a) and (b) shows inband chunks tend to concentrate at the tail of the run, the inband average for GFPO-F ($42.5\%$) and  
  GFPO-FR ($44.5\%$) both exceed the oracle overall average ($41.0\%$), which averages uniformly over all chunks excluding first 10 chunks.}
\label{fig:cms_inband_drift_h4b}
\end{figure}

\subsection{Training setup}    
\label{sec:training_setup_lhc}

\paragraph{Training pass.} Single chronological pass over the 148-chunk training partition (80\% of the 185-chunk MC stream after the 10-chunk calibration window). Each chunk: 50{,}000 events; micro-step: 5{,}000 events.

\paragraph{Optimizer.} All policies use Adam. Learning rates: GRPO and L-GRPO $2{\times}10^{-4}$; PPO $3{\times}10^{-4}$; DQN and DQN-F $1{\times}10^{-4}$ with mini-batch size 32.

\paragraph{Group sizes.} GRPO, L-GRPO: $G=16$. GFPO-F, GFPO-FR: $G=64$ candidates per micro-step, $K=16$ kept (cf. Figure~\ref{fig:gfpo_intermediate_ht}).

\paragraph{Reward.} $\lambda_1 = 0.25$, $\lambda_2 = 1.00$, $\alpha = 0.7$. Recall in Section~\ref{sec:rl_formulation_adaptive_thresholding}, we define target background rate $r_B^{*} = 0.25\%$, tolerance $\tau = \pm 0.025\%$ ($\approx \pm 10$~kHz).

\paragraph{Reporting.} Mean over 3 random seeds per method.

\paragraph{Compute.} Single Apple M4 CPU (10 cores, 32~GB unified memory; no GPU). Wall-clock $\approx 10$ minutes per method per seed for the full pass.

\subsection{Models and Hyperparameters}
\label{appendix:hyperparameter_ablation}

\paragraph{Hyperparameters in reward design.} We sweep $\lambda_1, \lambda_2 \in \{0.0, 0.25, 0.5, 0.75, 1.0\}$ in Equation~\ref{equation:reward_design} (25 configurations per trigger). Figure~\ref{fig:pareto_plot_grid} plots in-band rate against signal efficiency across all configurations and all four trigger$\times$signal pairs. GFPO-F and GFPO-FR collapse to a tight cluster in the upper-right (high in-band rate, high signal efficiency) regardless of $(\lambda_1, \lambda_2)$; baselines (DQN, PPO, GRPO, ADT) exhibit a pronounced trade-off, with hulls spanning a wide region of the objective space. The optimal one that has highest signal efficiency and inband rate is $\lambda_1=0.25$, $\lambda_2=1.00$.

\begin{figure*}[t]
    \centering
    \begin{subfigure}{0.485\textwidth}
        \includegraphics[width=\linewidth]{figures/Pareto_plot_lambda_backup/pareto_HT_ttbar.pdf}
        \caption{$H_{T}$ trigger (\ttbar)}
        \label{fig:pareto_ht_ttbar_appendix}
    \end{subfigure}
    \hfill
    \begin{subfigure}{0.485\textwidth}
        \includegraphics[width=\linewidth]{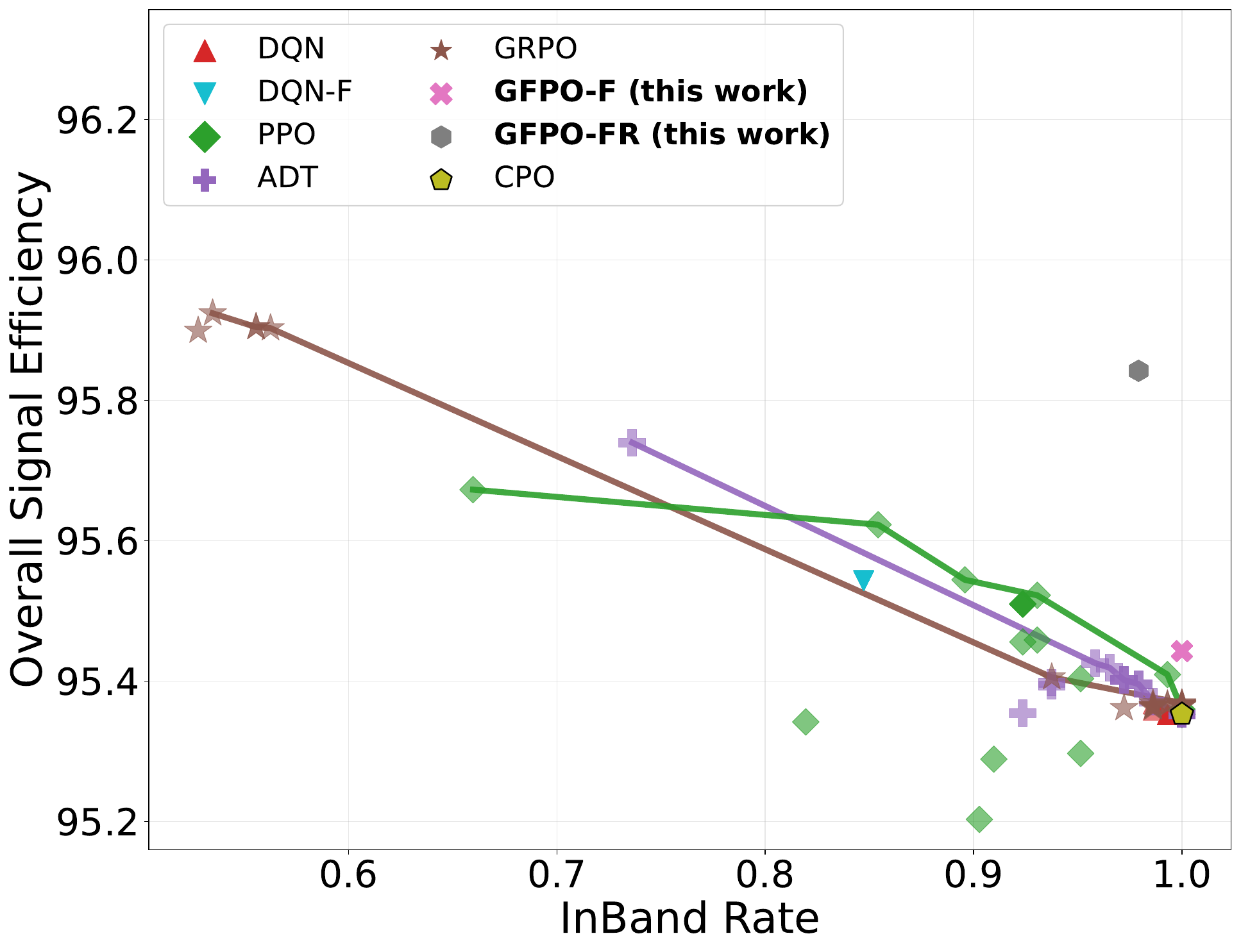}
        \caption{AD trigger (\ttbar)}
        \label{fig:pareto_ad_ttbar}
    \end{subfigure}

    \vspace{0.3cm} 

    \begin{subfigure}{0.485\textwidth}
        \includegraphics[width=\linewidth]{figures/Pareto_plot_lambda_backup/pareto_HT_h4b.pdf} 
        \caption{$H_{T}$ trigger (\haaFourB)}
        \label{fig:pareto_ht_haaForB}
    \end{subfigure}
    \hfill
    \begin{subfigure}{0.485\textwidth}
        \includegraphics[width=\linewidth]{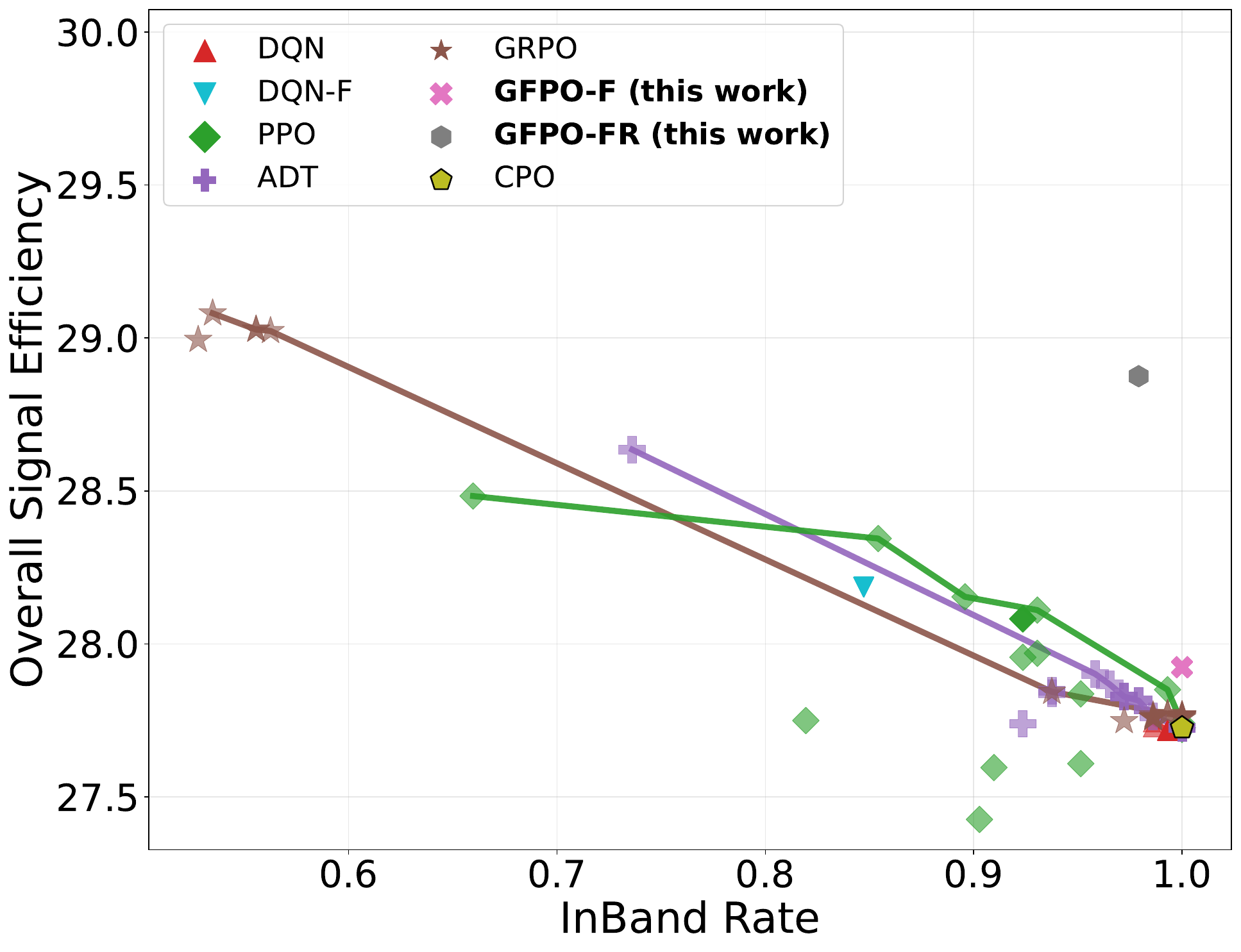} 
        \caption{AD trigger (\haaFourB)}
        \label{fig:pareto_ad_haaFourB}
    \end{subfigure}
    \caption{\textbf{Sensitivity Analysis of Reward Components (MC).} Each point represents a ($\lambda_1$, $\lambda_2$) configuration from
  Equation~\ref{equation:reward_design}, with concave hulls connecting the upper envelope per method. The $x$-axis measures the fraction of chunks
  whose background rate falls within the tolerance band, and the $y$-axis measures overall signal efficiency. Across all four trigger, signal
  combinations ($H_T$/$\mathrm{AD}$ $\times$ \ttbar/\haaFourB), baseline methods (DQN, PPO, GRPO, ADT) exhibit a pronounced trade-off between
  inband stability and signal efficiency, with their hulls spanning a wide region of the objective space. In contrast, our methods (GFPO-F and GFPO-FR) collapse to
   a tight cluster in the upper-right corner, simultaneously achieving near-perfect inband rates and the highest signal efficiency, demonstrating
  strong robustness to the choice of $\lambda_{1,2}$.}
    \label{fig:pareto_plot_grid}
\end{figure*}

\subsection{Standard anomaly detection metrics for LHC}
\label{sec:standard_anomaly_detection_metrics_lhc}

\paragraph{Our method achieves strong signal efficiency while maintaining the highest InBand rate.} In this subsection, we examine standard anomaly detection metrics commonly used in the literature, including precision, recall, and F1 score, within the context of the LHC trigger setting. We emphasize that these conventional anomaly detection metrics primarily evaluate signal-selection performance and do not account for operational constraints such as background-rate requirements, which are central to realistic LHC trigger deployment.
Tables~\ref{tab:trigger_summary}, \ref{tab:trigger_summary_realdata_test_time_training}, and \ref{tab:trigger_summary_realdata} summarize the anomaly detection results for the three experimental settings corresponding to Tables~\ref{tab:single_trigger_summary_compact}, \ref{tab:single_trigger_summary_compact_realdata_online}, and \ref{tab:single_trigger_summary_compact_realdata}, respectively, which report the LHC-specific operational metrics discussed in the main text.

\begin{landscape}
\begin{table}[p]
\centering
\caption{Anomaly detection benchmark for MC. All metrics are averaged over per chunk. Bold and \underline{underline} mark the best and second-best per column. Our method GFPO-FR places top-two for 9 out of 14 metrics on $H_T$ trigger and 7 out of 14 metrics on AD trigger, outperforming or performing on par with all other baseline methods. Our methods also achieve highest InBand rate among all methods (Table~\ref{tab:single_trigger_summary_compact}).}
\label{tab:trigger_summary}
\scriptsize
\setlength{\tabcolsep}{1pt}
\renewcommand{\arraystretch}{0.88}
\resizebox{\linewidth}{!}
{\begin{tabular}{lcccccccccrrrrrr}
\toprule
Trigger & Method & TPR(Combined)/Recall $\uparrow$ & TPR(\ttbar) $\uparrow$ & TPR(\haaFourB) $\uparrow$ & FPR$\downarrow$ & FPR (\ttbar) $\downarrow$ & FPR (\haaFourB) $\downarrow$ & TNR$\uparrow$ & FNR$\downarrow$ & Prec.(Combined) $\uparrow$ & Prec.(\ttbar) $\uparrow$ & Prec.(\haaFourB) $\uparrow$ & F1(Combined) $\uparrow$ & F1(\ttbar) $\uparrow$ & F1(\haaFourB) $\uparrow$ \\
\midrule
$H_{T}$ & Constant & 0.655 & 0.91 & 0.136 & \textbf{0.001} &\textbf{0.001} & \textbf{0.001} & \textbf{0.999} & 0.345 & \textbf{1.000} & \textbf{1.000} & \textbf{0.996} &0.791 & 0.952 & 0.240 \\
$H_{T}$  & PID & 0.740 & 0.98 & 0.252 & \underline{0.002} & \underline{0.002} & \underline{0.002}  & \underline{0.998} & 0.260 & \underline{0.999} & \underline{0.999} & \underline{0.992} & 0.850 & 0.989 & 0.402 \\
$H_{T}$ &DSPOT & \underline{0.748} & 0.971 & 0.245 & \underline{0.002} & \underline{0.002} & \underline{0.002} & \underline{0.998} & \textbf{0.252} & \underline{0.999} & \underline{0.999} & 0.991 & \textbf{0.855} & \textbf{0.992} & \textbf{0.427} \\
$H_{T}$  & ADT & 0.739 & 0.98 & 0.250 & \underline{0.002} & \underline{0.002} & \underline{0.002} & \underline{0.998} & 0.261 & \underline{0.999} & \underline{0.999} & \underline{0.992} & 0.850 & 0.989 & 0.399 \\
$H_{T}$  & DQN & 0.740 & 0.98 & 0.252 & \underline{0.002} & \underline{0.002} & \underline{0.002}  & \underline{0.998} & 0.260 & \underline{0.999} & \underline{0.999} & \underline{0.992} & 0.850 & 0.989 & 0.402 \\
$H_{T}$  & PPO & 0.740 & 0.98 & 0.252 & \underline{0.002} & \underline{0.002} & \underline{0.002} & \underline{0.998} & 0.260 & \underline{0.999} & \underline{0.999} & \underline{0.992} & 0.850 & 0.989 & 0.401 \\
$H_{T}$  & GRPO & 0.740 & 0.98 & 0.252 & \underline{0.002} & \underline{0.002} & \underline{0.002} & \underline{0.998} & 0.260 & \underline{0.999} & \underline{0.999} & \underline{0.992} & 0.850 & 0.989 & 0.402 \\
$H_{T}$ & L-GRPO & 0.741 & 0.981 & \underline{0.264} & \underline{0.002} & \underline{0.002} & \underline{0.002} & \underline{0.998} & 0.259 & \underline{0.999} & \underline{0.999} & 0.990 & 0.851 & 0.989 & \underline{0.413} \\
$H_{T}$ & CPO & 0.733 & 0.977 & 0.240  & 0.002 & 0.002 & 0.002 & 0.998 & 0.267 & 0.999 & 0.999 & 0.992 & 0.845 & 0.988 & 0.386 \\
$H_{T}$  & GFPO-F & 0.741 & 0.98 & 0.254 & 0.003 & 0.003 & 0.003 &0.997 & 0.259 & \underline{0.999} & \underline{0.999} & \underline{0.992} & 0.851 & \underline{0.990} & 0.404  \\
$H_{T}$  & GFPO-FR & 0.744 & \underline{0.982}  & 0.260 & 0.003 & 0.003 & 0.003 & 0.997 & \underline{0.256} & \underline{0.999} & \underline{0.999} & \underline{0.992} & \underline{0.853}
& \underline{0.990} & 0.412 \\
\midrule
AD & Constant & 0.576 & 0.795 & 0.131 &\textbf{0.001} & \textbf{0.001} & \textbf{0.001} &\textbf{0.999} & 0.424 & \textbf{1.000} & \textbf{1.000} & \textbf{0.995} & 0.731 & 0.886 & 0.231 \\
AD & PID & 0.700 & 0.936 & 0.219 & \underline{0.002} & \underline{0.002} & \underline{0.002} & \underline{0.998} & 0.300 & \underline{0.999} & \underline{0.999} & \underline{0.991} & 0.823 & 0.966 & 0.359 \\
AD & DSPOT & \underline{0.704} & 0.939 & \textbf{0.231} & \underline{0.002} & \underline{0.002} & \underline{0.002} & \underline{0.998} & \underline{0.296} & \underline{0.999} & \underline{0.999} & 0.990 & \underline{0.826} & \underline{0.968} & \textbf{0.374} \\
AD & ADT & 0.700 & 0.936 & 0.220 & 0.003 & 0.003 & 0.003 & 0.997 & 0.300 & \underline{0.999} & \underline{0.999} & \underline{0.991} & 0.824 & 0.967 & 0.360 \\
AD & DQN & 0.699 & 0.935 & 0.219 & \underline{0.002} & \underline{0.002} & \underline{0.002} & \underline{0.998} & 0.301 & \underline{0.999} & \underline{0.999} & \underline{0.991} & 0.823 & 0.966 & 0.358\\
AD & PPO & \underline{0.704} & \underline{0.940} & 0.225 &0.003 & 0.003 & 0.003 & 0.997 & \underline{0.296} & \underline{0.999} & \underline{0.999} & \underline{0.991} & \underline{0.826} & \underline{0.968} & 0.367 \\
AD & GRPO & 0.700 & 0.936 & 0.219 & \underline{0.002}
& \underline{0.002} & \underline{0.002} & \underline{0.998} & 0.300 & \underline{0.999} & \underline{0.999} & \underline{0.991} & 0.823 & 0.966 & 0.359 \\
AD & L-GRPO & 0.699 & 0.936 & 0.221 & \underline{0.002} & \underline{0.002} & \underline{0.002} & \underline{0.998} & 0.301 & \underline{0.999} & \underline{0.999} & 0.990 & 0.823 & 0.966 & 0.361 \\
AD & CPO & 0.694 & 0.932 & 0.211 & 0.002 & 0.002 & 0.002 & 0.998 & 0.306 & \underline{0.999} & \underline{0.999} & 0.991 & 0.819 & 0.964 & 0.348 \\
AD & GFPO-F & 0.701 & 0.937 & 0.221 & 0.003 & 0.003 & 0.003 & 0.997 & 0.299 & \underline{0.999} & \underline{0.999} & \underline{0.991} & 0.824 & 0.967 & 0.361 \\
AD & GFPO-FR & \textbf{0.706} & \textbf{0.941} & \underline{0.226} & 0.003 & 0.003 & 0.003 & 0.997 & \textbf{0.294} & \underline{0.999} & \underline{0.999} & \underline{0.991} & \textbf{0.827} & \textbf{0.969} & \underline{0.368} \\
\bottomrule
\end{tabular}
}
\end{table}
\clearpage

\begin{table}[p]
\centering
\caption{Anomaly detection benchmark for CMS Run 283408. All metrics are averaged over per chunk. Each RL policy is continually fine-tuned on streaming time chunks rather than frozen at deployment (test-time training). Bold and \underline{underline} mark the best and second-best per column. Our method GFPO-FR place top-two for 7 out of 14 metrics on $H_{T}$ trigger and 6 out of 14 metrics on AD triggers. While baselines like PID, DSPOT, and PPO achieve more top-two metrics for the $H_{T}$ trigger, these metrics purely reflect signal efficiency without background rate constraints. As detailed in Table~\ref{tab:single_trigger_summary_compact_realdata_online}, our methods GFPO-FR and GFPO-F maintain higher InBand rates (95\% and 98\%, respectively) compared to PPO (97.491 \%), DSPOT (17.6\%) and PID (43.2\%).}
\label{tab:trigger_summary_realdata_test_time_training}
\scriptsize
\setlength{\tabcolsep}{1pt}
\renewcommand{\arraystretch}{0.88}
\resizebox{\linewidth}{!}
{\begin{tabular}{lcccccccccrrrrrr}
\toprule
Trigger & Method & TPR(Combined)/Recall $\uparrow$ & TPR(\ttbar) $\uparrow$ & TPR(\haaFourB) $\uparrow$ & FPR$\downarrow$ & FPR (\ttbar) $\downarrow$ & FPR (\haaFourB) $\downarrow$ & TNR$\uparrow$ & FNR$\downarrow$ & Prec.(Combined) $\uparrow$ & Prec.(\ttbar) $\uparrow$ & Prec.(\haaFourB) $\uparrow$ & F1(Combined) $\uparrow$ & F1(\ttbar) $\uparrow$ & F1(\haaFourB) $\uparrow$ \\
\midrule
$H_{T}$  & Constant & 0.568 & 0.913 & 0.223 & \textbf{0.001} &\textbf{0.001} & \textbf{0.001} & \textbf{0.999} & 0.432 & \textbf{0.999} & \textbf{0.999} & \textbf{0.994} & 0.724 & 0.954 & 0.364 \\
$H_{T}$  & PID & 0.654 & 0.975 & 0.333 & \underline{0.002} & \underline{0.002} & \underline{0.002} & \underline{0.998} & 0.346 & \underline{0.998} & 0.997 & \underline{0.993} & 0.790 & 0.986 & 0.499 \\
$H_{T}$ & DSPOT & \textbf{0.667} & 0.940 & 0.282 & \underline{0.002} & \underline{0.002} & \underline{0.002} & \underline{0.998} & \textbf{0.333} & \underline{0.998} & \underline{0.998} & 0.992 & \textbf{0.800} & \textbf{0.989} & \textbf{0.510} \\
$H_{T}$  & ADT & 0.649 & \textbf{0.978} & 0.333 & 0.003 & 0.003 & 0.003 & 0.997 & 0.351 & \underline{0.998} & 0.997 & 0.992 & \underline{0.791} & \underline{0.988} & \underline{0.499} \\
$H_{T}$  & DQN & 0.653 & 0.974 & 0.332 & \underline{0.002} & \underline{0.002} & \underline{0.002}  & \underline{0.998} & 0.347 & \underline{0.998} & 0.997 & \underline{0.993} & 0.789 & 0.985 & 0.498 \\
$H_{T}$  & PPO & 0.653 & 0.977 & \underline{0.333} & \underline{0.002} & \underline{0.002} & \underline{0.002} & \underline{0.998} & 0.347 & \underline{0.998} & 0.997 & \underline{0.993} & \underline{0.791} & 0.987 & 0.498 \\
$H_{T}$  & GRPO & 0.653 & 0.975 & 0.331 & \underline{0.002} & \underline{0.002} & \underline{0.002} & \underline{0.998} & 0.347 & \underline{0.998} & \underline{0.998} & \underline{0.993} & 0.789 & 0.986 & 0.496 \\
$H_{T}$ & L-GRPO & 0.653 & \underline{0.976} & 0.332 & \underline{0.002}
& \underline{0.002} & \underline{0.002} & \underline{0.998} & 0.347 & \underline{0.998} & 0.997 & 0.992 & 0.789 & 0.986 & 0.487 \\
$H_{T}$ & CPO & 0.651 & 0.974 & 0.329 & \underline{0.002} & \underline{0.002} & \underline{0.002} & \underline{0.998} & 0.349 & \underline{0.998} & 0.997 & 0.992 & 0.787 & 0.985 & 0.487 \\
$H_{T}$  & GFPO-F & 0.654 & \underline{0.976} & 0.332 & \underline{0.002} & \underline{0.002} & \underline{0.002} & \underline{0.998} & 0.346 & 0.997 & 0.997 & \underline{0.993} & 0.790 & 0.986 & 0.498  \\
$H_{T}$  & GFPO-FR & \underline{0.656} & \textbf{0.978} & \textbf{0.334} & 0.003 & 0.003 & 0.003 & 0.997 & \underline{0.344} & \underline{0.998} & 0.997 & \underline{0.993} & \underline{0.791} & 0.987 & \underline{0.499} \\
\midrule
AD & Constant & 0.423 & 0.625 & 0.221 &\textbf{0.001} & \textbf{0.001} & \textbf{0.001} & \textbf{0.999} & 0.577 & \textbf{0.999} & \textbf{0.998} & \underline{0.995} & 0.594 & 0.769 & 0.362 \\
AD & PID & 0.573 & 0.753 & 0.392 & \underline{0.002} & \underline{0.002} & \underline{0.002} & \underline{0.998} & 0.427 & \underline{0.998} & \underline{0.997} & 0.994 & 0.728 & 0.858 & 0.563 \\
AD & DSPOT & \underline{0.688} & \textbf{0.810} & \textbf{0.567} & \underline{0.002} & \underline{0.002} & \underline{0.002} & \underline{0.998} & \underline{0.312} & \underline{0.998} & \underline{0.997} & \textbf{0.996} & \textbf{0.815} & \textbf{0.893} & \textbf{0.722} \\
AD & ADT & 0.574 & 0.754 & 0.394 & \underline{0.002} & \underline{0.002} & \underline{0.002} & \underline{0.998} & 0.426 & \underline{0.998} & \underline{0.997} & 0.994 & 0.729 & 0.858 & 0.565 \\
AD & DQN & 0.582 & \underline{0.759} & 0.404 & \underline{0.002} & \underline{0.002} & \underline{0.002} & \underline{0.998} & 0.418 & 0.997 & \underline{0.997} & 0.994 & 0.735 & \underline{0.862} & 0.575 \\
AD & PPO & 0.565 & 0.748 & 0.383 & \underline{0.002} & \underline{0.002} & \underline{0.002} & \underline{0.998} & 0.435 & \underline{0.998} & \underline{0.997} & 0.994 & 0.722 & 0.854 & 0.553 \\
AD & GRPO & 0.578 & 0.755 & 0.401 & \underline{0.002} & \underline{0.002} & \underline{0.002} & \underline{0.998} & 0.422 & \underline{0.998} & \underline{0.997} & 0.994 & 0.732 & 0.859 & 0.571 \\
AD & L-GRPO & \underline{0.596} & 0.757 & \underline{0.435} & \underline{0.002} & \underline{0.002} & \underline{0.002} & \underline{0.998} & \underline{0.404} & \underline{0.998} & \underline{0.997} & 0.993 & \underline{0.746} & 0.860 & \underline{0.591} \\
AD & CPO & 0.594 & 0.756 & 0.432 & \underline{0.002} & \underline{0.002} & \underline{0.002} & \underline{0.998} & 0.406 & \underline{0.998} & \underline{0.997} & 0.993 & 0.740 & 0.859 & 0.586 \\
AD & GFPO-F & 0.580 & 0.757 & 0.402 & \underline{0.002} & \underline{0.002} & \underline{0.002} & \underline{0.998} & 0.420 & \underline{0.998} & \underline{0.997} & 0.994 & 0.733 & 0.861 & 0.572 \\
AD & GFPO-FR & 0.578 & 0.755 & 0.401 & \underline{0.002} & \underline{0.002} & \underline{0.002} & \underline{0.998} & 0.422 & \underline{0.998} & \underline{0.997} & 0.994 & 0.732 & 0.859 & 0.571 \\
\bottomrule
\end{tabular}
}
\end{table}
\clearpage        

\begin{table}[p]
\centering
\caption{Anomaly detection benchmark on CMS Run 283408 (zero-shot transfer). All metrics are averaged over per chunk. MC-trained policies from Table~\ref{tab:single_trigger_summary_compact} are \emph{frozen} and \emph{deployed} on real collision data without fine-tuning. Bold and \underline{underline} mark the best and second-best per column. While baseline like DSPOT (9 out of 14 metrics) achieve more top-two metrics for the $H_{T}$ trigger than GFPO-F (5 out of 14 metrics), again, these metrics purely reflect signal efficiency without background rate constraints. As detailed in Table~\ref{tab:single_trigger_summary_compact_realdata}, our methods GFPO-FR and GFPO-F maintain \emph{substantially} higher InBand rates for $H_{T}$ trigger (95 \% and 99 \% , respectively) compared to PPO (83.8 \%), DSPOT (43.2\%) and PID (43.2\%).}
\label{tab:trigger_summary_realdata}
\scriptsize
\setlength{\tabcolsep}{1pt}
\renewcommand{\arraystretch}{0.88}
\resizebox{\linewidth}{!}
{\begin{tabular}{lcccccccccrrrrrr}
\toprule
Trigger & Method & TPR(Combined)/Recall $\uparrow$ & TPR(\ttbar) $\uparrow$ & TPR(\haaFourB) $\uparrow$ & FPR$\downarrow$ & FPR (\ttbar) $\downarrow$ & FPR (\haaFourB) $\downarrow$ & TNR$\uparrow$ & FNR$\downarrow$ & Prec.(Combined) $\uparrow$ & Prec.(\ttbar) $\uparrow$ & Prec.(\haaFourB) $\uparrow$ & F1(Combined) $\uparrow$ & F1(\ttbar) $\uparrow$ & F1(\haaFourB) $\uparrow$ \\
\midrule
$H_{T}$  & Constant & 0.568 & 0.913 & 0.223 & \textbf{0.001} &\textbf{0.001} & \textbf{0.001} & \textbf{0.999} & 0.432 & \textbf{0.999} & \textbf{0.999} & \textbf{0.994} & 0.724 & 0.954 & 0.364 \\
$H_{T}$  & PID & 0.654 & 0.975 & \underline{0.333} & \underline{0.002} & \underline{0.002} & \underline{0.002} & \underline{0.998} & 0.346 & \underline{0.998} & 0.997 & \underline{0.993} & 0.790 & 0.986 & \underline{0.499} \\
$H_{T}$  & DSPOT & \underline{0.667} & 0.940 & 0.282 & \underline{0.002} & \underline{0.002} & \underline{0.002} & \underline{0.998} & \textbf{0.333} & \underline{0.998} & \underline{0.998} & 0.992 & \textbf{0.800} & \textbf{0.989} & \textbf{0.510} \\
$H_{T}$  & ADT & 0.649 & 0.971 & 0.328 & \underline{0.002} & \underline{0.002} & \underline{0.002} & \underline{0.998} & 0.351 & \underline{0.998} & \underline{0.998} & \underline{0.993} & 0.787 & 0.984 & 0.493 \\
$H_{T}$ & DQN & 0.653 & 0.975 & 0.331 & \underline{0.002} & \underline{0.002} & \underline{0.002}  & \underline{0.998} & 0.347 & \underline{0.998} & 0.997 & \underline{0.993} & 0.790 & 0.986 & 0.496 \\
$H_{T}$ & PPO & 0.653 & 0.975 & 0.331 & \underline{0.002} & \underline{0.002} & \underline{0.002} & \underline{0.998} & 0.347 & \underline{0.998} & 0.997 & \underline{0.993} & 0.790 & 0.986 & 0.496 \\
$H_{T}$ & GRPO & 0.653 & 0.975 & 0.331 & \underline{0.002} & \underline{0.002} & \underline{0.002} & \underline{0.998} & 0.347 & \underline{0.998} & 0.997 & \underline{0.993} & 0.789 & 0.986 & 0.498 \\
$H_{T}$ & L-GRPO & 0.654 & \underline{0.977} & 0.332 & \underline{0.002} & \underline{0.002} & \underline{0.002} & \underline{0.998} & 0.346 & \underline{0.998} & 0.997 & 0.992 & 0.790 & \underline{0.987} & 0.490 \\
$H_{T}$ & CPO & 0.651 & 0.974 & 0.329 & 0.002 & 0.002 & 0.002 & 0.9975 & 0.349 & 0.998 & 0.997 & 0.992 & 0.787 & 0.985 & 0.487 \\
$H_{T}$  & GFPO-F & 0.654 & \underline{0.977} & 0.332 & \underline{0.002} & \underline{0.002} & \underline{0.002} & \underline{0.998} & 0.346 & \underline{0.998} & 0.997 & \underline{0.993} & \underline{0.791} & \underline{0.987} & \underline{0.499}  \\
$H_{T}$  & GFPO-FR & 0.656 & \textbf{0.978} & \textbf{0.334} & 0.003 & 0.003 & 0.003 & 0.997 & \underline{0.344} & \underline{0.998} & 0.997 & \underline{0.993} &
\underline{0.791} & \underline{0.987} & \underline{0.499} \\
\midrule
AD & Constant & 0.423 & 0.625 & 0.221 &\textbf{0.001} & \textbf{0.001} & \textbf{0.001} & \textbf{0.999} & 0.577 & \textbf{0.999} & \textbf{0.998} & \underline{0.995} & 0.594 & 0.769 & 0.362 \\
AD & PID & 0.573 & 0.753 & 0.392 & \underline{0.002} & \underline{0.002} & \underline{0.002} & \underline{0.998} & 0.427 & \underline{0.998} & \underline{0.997} & 0.994 & 0.728 & 0.858 & 0.563 \\
AD & DSPOT & \underline{0.688} & \underline{0.810} & \textbf{0.567} & \underline{0.002} & \underline{0.002} & \underline{0.002} & \underline{0.998} & \underline{0.312} & \underline{0.998} & \underline{0.997} & \textbf{0.996} & \textbf{0.815} & \textbf{0.893} & \textbf{0.722} \\
AD & ADT & 0.579 & 0.758 & 0.399 & 0.003 & 0.003 & 0.003 & 0.997 & 0.421 & \underline{0.998} & \underline{0.997} & 0.994 & 0.733 & 0.861 & 0.570 \\
AD & DQN & 0.570 & 0.751 & 0.390 & \underline{0.002} & \underline{0.002} & \underline{0.002} & \underline{0.998} & 0.430 & \underline{0.998} & \underline{0.997} &0.994 & 0.726 & 0.857 & 0.560 \\
AD & PPO & 0.584 & 0.762 & \underline{0.406} & 0.003 & 0.003 & 0.003 & 0.997 & 0.416 & \underline{0.998} & \underline{0.997} & 0.994 & 0.737 & 0.864 & 0.577 \\
AD & GRPO & 0.574 & 0.756 & 0.391 & \underline{0.002} & \underline{0.002} & \underline{0.002} & \underline{0.998} & 0.426 & \underline{0.998} & \underline{0.997} & 0.994 & 0.728 & 0.860 & 0.561 \\
AD & L-GRPO & 0.595 & 0.757 & 0.433 & \underline{0.002} & \underline{0.002} & \underline{0.002} & \underline{0.998} & 0.405 & \underline{0.998} & \underline{0.997} & 0.993 & \underline{0.745} & 0.859 & \underline{0.589} \\
AD & CPO & 0.594 & 0.756 & 0.432 & \underline{0.002} & \underline{0.002} & \underline{0.002} & \underline{0.998} & 0.406 & 0.998 & 0.997 & 0.993 & 0.740 & 0.859 & 0.586 \\
AD & GFPO-F & 0.580 & 0.758 & 0.402 & \underline{0.002} & \underline{0.002} & \underline{0.002} & \underline{0.998} & 0.420 & \underline{0.998} & \underline{0.997} & 0.994 & 0.734 & 0.861 & 0.573 \\
AD & GFPO-FR & 0.581 & 0.759 & 0.403 & \underline{0.002} & \underline{0.002} & \underline{0.002} & \underline{0.998} & 0.419 & \underline{0.998} & \underline{0.997} & 0.994 & 0.734 & \underline{0.862} & 0.573 \\
\bottomrule
\end{tabular}
}
\end{table}
\end{landscape}

\newpage

%% file: Appendix/Implementation.tex
\newpage
\section{Additional Implementation Details}
\subsection{Baseline Adaptation}
\label{appendix:baseline_description}

\paragraph{Stream chunking.} State features are computed over 5K-event windows; threshold updates $\Delta$ are applied per 50K-event chunk, following the PID setting of~\citet{emami2026selfdrivingtriggerlhcadaptive}. This yields 195 chunks for MC and 99 chunks for CMS Run 283408 (Table~\ref{tab:dataset_summary}).

\subsubsection{ADT} 

We adapt the ADT controller of~\citet{yang2024adt} to streaming trigger control; four changes are required.

\paragraph{Objective.} Vanilla ADT classifies fixed time-series windows as normal/abnormal (10 or 12 classes, dataset-dependent). We instead regulate the per-event accept rate to a target $r_B^{*}$ within tolerance $\tau$, with decisions at event granularity rather than window granularity.


\paragraph{State.} ADT's state vector includes the running $\{\rho^{TP}_t, \rho^{TN}_t, \rho^{FP}_t, \rho^{FN}_t\}$ over the last $k$ windows, quantities that require ground-truth labels at decision time. In LHC trigger control no such labels exist: signal/background identity is recovered only after the full event stream is processed. We replace ADT's classification-feedback state with the rate-regulation state of \sctt{BuildSequentialState} (Algorithm~\ref{alg:seq_state_and_controller}).

\paragraph{Episode/step mapping.} We identify each ``episode'' with one chunk and each RL step with one micro-step (a stride over events within the chunk). This preserves ADT's end-of-episode update rule while operating on a long event stream rather than fixed time-series windows.

\paragraph{Action space.} ADT uses binary $\delta \in \{0, 1\}$. Triggers are continuous-valued ($H_T$ in GeV, $AD_{\mathrm{cut}}$ in score units), so we discretize $\delta$ to match DQN and GRPO. We retain ADT's update cadence ($l=10$ micro-steps;~\citet{yang2024adt} default), reusing $a_{t-1}$ on intermediate steps. Other baselines (DQN, GRPO, GFPO-F, GFPO-FR) update every 5K-event window.

\paragraph{Safety shield.} For all RL methods we add a safety shield that overrides the policy when its action would require \emph{too much} threshold update. Vanilla ADT does not include such a shield; we add it because rate-budget violations trigger DAQ deadtime at the LHC~\citep{racz2000trigger, bertelsen2016operation}.



\subsubsection{DSPOT}
\label{subsection:DSPOT}

\paragraph{Algorithm.} DSPOT~\citep{siffer2017anomaly} maintains a sliding reference window of the most recent $n$ observations and models the excess distribution above an intermediate threshold $t$ as a Generalized Pareto Distribution (GPD). It updates the GPD parameters $(\hat{r}, \hat{\sigma})$ via maximum likelihood at each step and returns
\begin{equation}
    z_q = t + \frac{\hat{\sigma}}{\hat{\gamma}} \left[ \left(\frac{q \cdot n}{N_t}\right)^{-\hat{\gamma}} - 1 \right],
\end{equation}
where $N_t$ is the number of observations above $t$ in the reference window and $q$ is the target false-positive risk.

\paragraph{Adaptation.} At each chunk boundary we pass background scores from the most recent window to DSPOT and use the returned $z_q$ as the next-chunk trigger threshold $\threshold_t$, with $q = r_B^* = 0.25\%$. Reference window size matches our chunk size (50K events MC, 20K CMS Run 283408); the intermediate threshold $t$ is initialized at the 98th percentile of the calibration window~\citep{siffer2017anomaly}. No reward, policy network, or gradient update is involved.

\paragraph{No safety shield.} DSPOT is a one-sided rate-compliance algorithm: it controls the upper tail of the score distribution to bound the false-positive rate, but has no mechanism to keep the achieved rate within a two-sided tolerance band $[r^-, r^+]$, and no objective ranking among rate-compliant thresholds. We deliberately do not impose a safety shield on its output. Out-of-band chunks therefore appear directly in the InBand fraction (Table~\ref{tab:single_trigger_summary_compact}: 39.8\% $H_T$, 45.2\% AD). This is the honest evaluation: DSPOT was designed for unsupervised anomaly detection in unlabelled streams~\citep{siffer2017anomaly}, where signal efficiency is undefined. Our RL methods, by contrast, enforce the operational rate constraints arising from detector readout bandwidth~\citep{aaboud2017performance} and computing resources~\citep{atlas2020operation} via a safety shield, and then \emph{rank} feasible thresholds by signal efficiency.

\paragraph{$H_T$ trigger.} Pileup increases $H_T$ systematically across the full score distribution, since soft radiation from pileup vertices contributes to all jet $p_T$ measurements. DSPOT's tail estimator is therefore less well-matched to $H_T$ than to AD, where pileup manifests predominantly as a shift in the reconstruction-loss tail. We include DSPOT on both triggers for completeness; the asymmetry is itself informative about the qualitative difference in non-stationarity between the two trigger paths.

\paragraph{MC-only setting.} In Table~\ref{tab:single_trigger_summary_compact}, calibration and deployment share the same MC stream. DSPOT achieves InBand fractions of 45.2\% (AD) and 39.8\% ($H_T$), with InBand signal efficiencies of 93.7\% and 98.1\% on \ttbar\, close to the PID ceiling on both triggers. The InBand-fraction gap to PID (66.1\% AD, 59.1\% $H_T$) reflects DSPOT's 50-chunk GPD calibration burn-in, during which the threshold is held fixed; PID adapts from chunk one.

\paragraph{Sim-to-real transfer.} For MC-trained, CMS-deployed runs (Table~\ref{tab:single_trigger_summary_compact_realdata}), the GPD is fitted on MC and applied on CMS. The InBand fraction is largely preserved (41.9\% AD, 43.2\% $H_T$): DSPOT's 5-chunk sliding window re-fits the tail online and absorbs much of the sim-to-real shift. A single catastrophic chunk, however, pushes both MAE values past their P95 (AD: 0.106 vs.\ 0.087; $H_T$: 0.163 vs.\ 0.072), abrupt CMS rate excursions fall outside the support of the MC-calibrated GPD. PID's integral feedback degrades gracefully under the same shift; DSPOT's tail-fit estimator does not.

\paragraph{Test-time training.} For test-time training (Table~\ref{tab:single_trigger_summary_compact_realdata_online}) DSPOT follows the two-phase protocol of~\citet{siffer2017anomaly}: a calibration window accumulates background scores to fit the GPD; the sliding window then re-fits online each chunk (Figure~\ref{fig:exp3_dspot_evidence}). We use 50 CMS chunks at 20K events each~\citep{emami2026selfdrivingtriggerlhcadaptive}, which exhausts $\approx\!68\%$ of the available CMS data and leaves 24 chunks for deployment. During calibration DSPOT holds the MC-trained cut fixed, so it observes background rates of $0.113\%$ (AD) and $0.150\%$ ($H_T$), well below the $0.25\%$ target (Figure~\ref{fig:exp3_dspot_evidence}). The GPD is therefore fitted to the tail of this low-rate regime. When DSPOT activates at chunk 50, the CMS score distribution has shifted upward: deployment means are $0.233\%$ (AD) and $0.227\%$ ($H_T$). The abrupt threshold drop at chunk 50 corrects for this mismatch, but 24 chunks are too few to recover. Over all chunks DSPOT reaches 13.5\% and 17.6\% InBand vs.\ PID's 40.5\% and 43.2\% (Table~\ref{tab:single_trigger_summary_compact_realdata}), without calibration burn-in.

\paragraph{Why DSPOT lacks signal efficiency.} DSPOT fits a GPD to the background tail to enforce a target false-positive rate, with no signal knowledge at any stage. Among all rate-compliant thresholds, it has no basis for preferring the one that maximizes \ttbar\ or \haaFourB\ retention --- it returns whatever the GPD estimate dictates. This is structural, not a tuning choice: the algorithm has \emph{no} objective beyond rate control. Our two variants (GFPO-F and GFPO-FR) fill this gap by explicitly ranking feasible thresholds by signal efficiency at each step, recovering retention DSPOT forgoes while satisfying the same rate constraint.

\noindent \textbf{Note on $H_{T}$ trigger.} For the $H_{T}$ trigger, pileup increases $H_{T}$ systematically across the full score distribution rather than primarily in the tail, since additional soft radiation from pileup vertices contributes to all jet $p_{T}$ measurements. DSPOT's EVT-based tail estimator is therefore less well-matched to the $H_{T}$ setting than to the AD setting, where pileup drift manifests predominantly as a shift in the reconstruction loss distirbution. We include DSPOT on both triggers for completeness; the performance asymmetry between the two setttings is itself informative about the qualitatively different nature of non-stationarity in each trigger path.

\begin{figure*}
    \centering
\includegraphics[width=\linewidth]{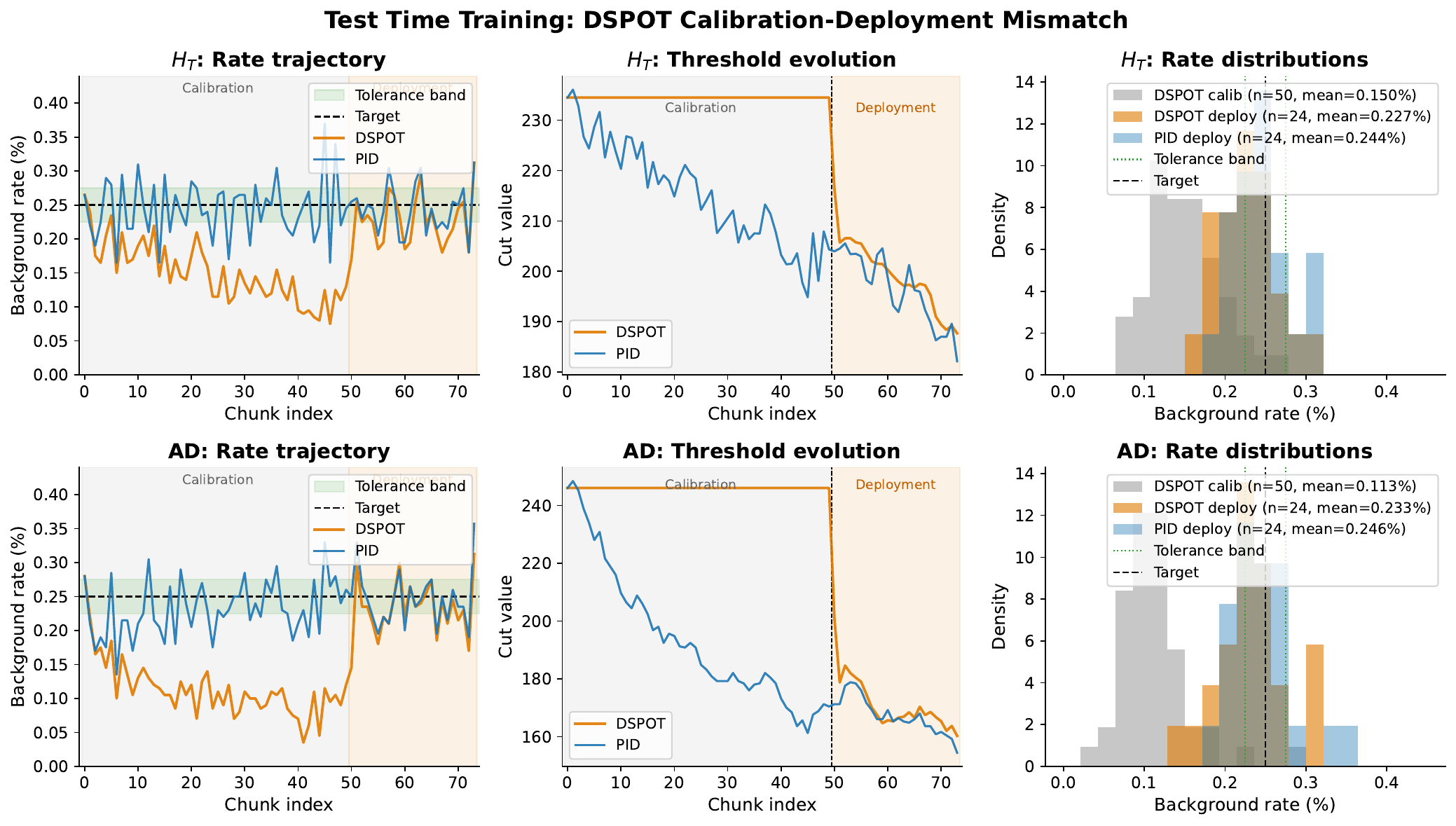}
    \caption{                                                    
    DSPOT calibration-deployment mismatch: the GPD is fitted on 50 calibration chunks where the background rate is $0.113\%$ (AD) and $0.150\%$ ($H_{T}$) under the fixed initial cut, well below the $0.25\%$ target, causing a distributional shift of $+0.12\%$ and $+0.08\%$ at deployment that limits the inband fraction to $13.5\%$ and $17.6\%$ over the reamining 24 chunks, while PID tracks the target continuously without burn-in phase.}            
    \label{fig:exp3_dspot_evidence}                   
\end{figure*}

\subsubsection{L-GRPO}
\label{appendix:L_GRPO}
We extend GRPO by replacing the fixed rate tracking penalty weight $\lambda_{1}$ with a dual variable $\lambda_{t}$ that auto-tunes online. At each chunk boundary, after observing the background rate $r_{t}$, the dual variable is updated via a projected gradient ascent step on the Lagrangian:
\begin{align}
    \lambda_{t+1} = [\lambda_{t} + \alpha_{\text{step}} (|r_{t} - r_{B}^{*}| - \tau)]^{+}
\end{align}
where $[\cdot]^{+}$ denotes projection onto $\mathbb{R}_{\geq 0}$ to maintain dual feasibility, $\alpha_{\text{step}} > 0$ is the dual step size, and $\tau$ is the tolerance width. Intuitively, $\lambda_{t}$ increases when the background rate violates the tolerance band and decreases when the constraint is satisfied, automatically tightening or relaxing the rate penalty over time. 

The reward in Eq~\ref{equation:reward_design} at step $t$ shall be written as:

\begin{equation}
\label{equation:reward_design_l_grpo}
\begin{split}
R_t^{\cdot}
&= \lambda_t
\underbrace{
  \begin{cases}
    1 - \displaystyle\left(\frac{|r_{t+1} - r^\star_{B}|}{\tau}\right)^{\!2}
      & \text{if } |r_{t+1} - r^\star_{B}| \leq \tau \\[6pt]
    -\displaystyle\left(\frac{|r_{t+1} - r^\star_{B}|}{\tau} - 1\right)
      & \text{if } |r_{t+1} - r^\star_{B}| > \tau
  \end{cases}
}_{\text{rate tracking}} \\
&\quad + (1 - \lambda_t)
\underbrace{
  \Big(\alpha\, \epsilon_{t+1}^{\ttbarraw,\cdot}
  + (1-\alpha)\, \epsilon_{t+1}^{\haaFourB,\cdot}\Big)
}_{\text{signal efficiency}}
\\ &\quad
- \lambda_2
\underbrace{
  \left(\frac{|\Delta \threshold_{t-1}^{\cdot}|}
             {\Delta \threshold_{\max}^{\cdot}}\right)
}_{\text{move penalty}}
\end{split}
\end{equation}

The policy update follows the standard GRPO objective in Eq.~\ref{eq:loss_grpo} with rewards rewards recomputed under the current $\lambda_{t}$ at each stepp.

\textbf{Implementation details.} We initialize $\lambda_{0} = \lambda_{1} = 0.25$, matching the fixed baseline for GRPO, GFPO-F and GFPO-FR, so that Lagrangian-GRPO starts from the same operating point and any subsequent improvement reflects the dual adaptation rather than initialization. We set the dual step size $\alpha_{\text{step}} = 0.01$ and clip $\lambda_{t} \in [0.05, 0.95]$ to prevent degeneracy: a near-zero $\lambda_{t}$ would eliminate the rate constraint entirely, while $\lambda_{t} \approx 1$ would suppress the signal efficiency objective. The dual variable is updated once per chunk, operating on the same timescale as the threshold update, and is not updated during shielded steps. All other hyperparameters $G, K, \lambda_{2}, \alpha$ in the reward mixing) follow the GRPO settings in Appendix~\ref{appendix:hyperparameter_ablation}.

\textbf{Relationship to PID-Lagrangian methods.} The update above corresponds to the integral term of a PID-Lagrangian controller~\citep{stooke2020responsive}, where the dual step size $\alpha_{\text{step}}$ plays the role of the integral gain. We omit the proportional and derivative terms for simplicity, as the GRPO policy already observes the instantaneous rate error and drift through the state representation (Appendix~\ref{sec:sequential_network_architectures}), making separate proportional and derivative correction in the dual redundant. 

\paragraph{Structural Failure Analysis of L-GRPO.}
  We provide empirical support for the structural argument in Section~\ref{sec:from_dqn_to_grpo} that L-GRPO's dual variable $\lambda_t$ cannot resolve the zero-feasibility failure mode. All diagnostic runs use the AD trigger on CMS real collison data (Table~\ref{tab:single_trigger_summary_compact_realdata}), which represents the harder case (54.1\% in-band). The $H_T$ trigger shows qualitatively identical behavior on par with GRPO baseline and is omitted for brevity.   

\paragraph{Zero-feasible fraction is structurally unchanged by $\lambda_t$ (Figure~\ref{fig:lgrpo_zero_feas}).}  Both GRPO and L-GRPO yield $\approx 20\%$ zero-feasible steps on MC and $\approx 61\%$ on CMS, with per-chunk trajectories that are nearly indistinguishable (Figure~\ref{fig:lgrpo_zero_feas}b).
  
  Since $\lambda_t$ modulates reward weights only, it has no effect on   the proposal distribution of $G$ and cannot reduce the fraction of steps with zero feasible candidates.                                                      
  \paragraph{$\lambda_t$ adapts during violations but the in-band rate
  does not recover (Figure~\ref{fig:lgrpo_lam_traj}).}
  On CMS, $\lambda_t$ remains active near $0.25$ and responds to                                                                  
  violation windows, yet the in-band rate shows no systematic recovery                                                            
  and oscillates around $54\%$ throughout.                                                                                        
  When the sampled group contains no feasible action, group-relative                                                              
  normalisation assigns positive advantage to the least-infeasible                                                                
  out-of-band candidate regardless of $\lambda_t$; increasing
  $\lambda_t$ amplifies this pathological update rather than correcting it.             
  \paragraph{L-GRPO's rate trajectory is systematically below the                                                                 
  tolerance band on CMS (Figure~\ref{fig:lgrpo_bg_traj}).}                                                                        
  Despite sharing the same MC-pretrained weights as GRPO at deployment,
  L-GRPO drifts persistently below the lower band edge while GRPO and                                                             
  both GFPO variants remain near the target rate.                                                                                 
  The gap isolates the reward formulation as the source of failure under                                                          
  distribution shift, and motivates the feasibility-first candidate                                                               
  selection of GFPO (Section~\ref{sec:gfpo}).

\begin{figure*}
    \centering
\includegraphics[width=\linewidth]{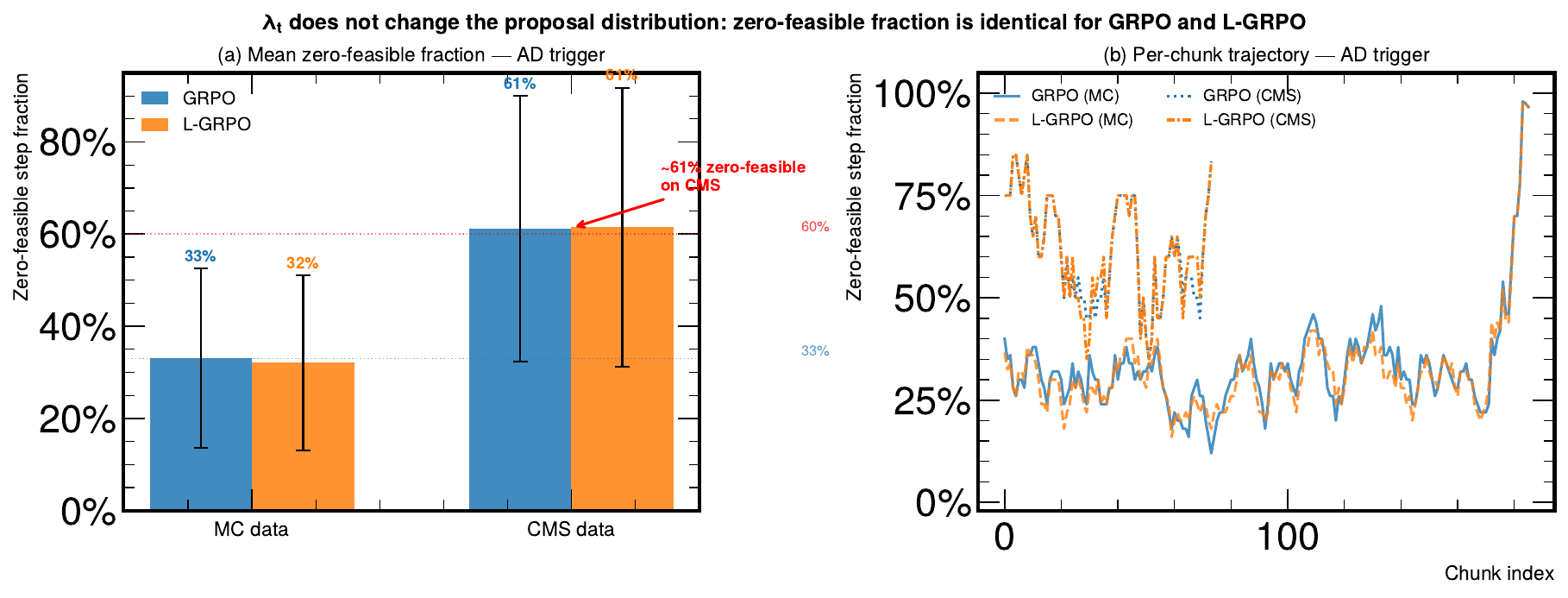}
    \caption{\textbf{Zero-feasible step fraction for GRPO and L-GRPO (AD trigger).} (a) Mean zero-feasible fraction across all chunks. On MC data, approximately 20\% of micro-steps yield no rate-feasible candidate from the sampled group
   of $G$ actions for both GRPO and L-GRPO. On CMS real data this rises to $\sim$61\% for both methods. (b) Per-chunk trajectory
  over time. The two curves are nearly indistinguishable in both datasets, confirming that the dual variable $\lambda_t$ modulates
   reward weights only and has no effect on the proposal distribution or the fraction of feasible candidates.
    }            
    \label{fig:lgrpo_zero_feas}                   
\end{figure*}

\begin{figure*}
    \centering
\includegraphics[width=\linewidth]{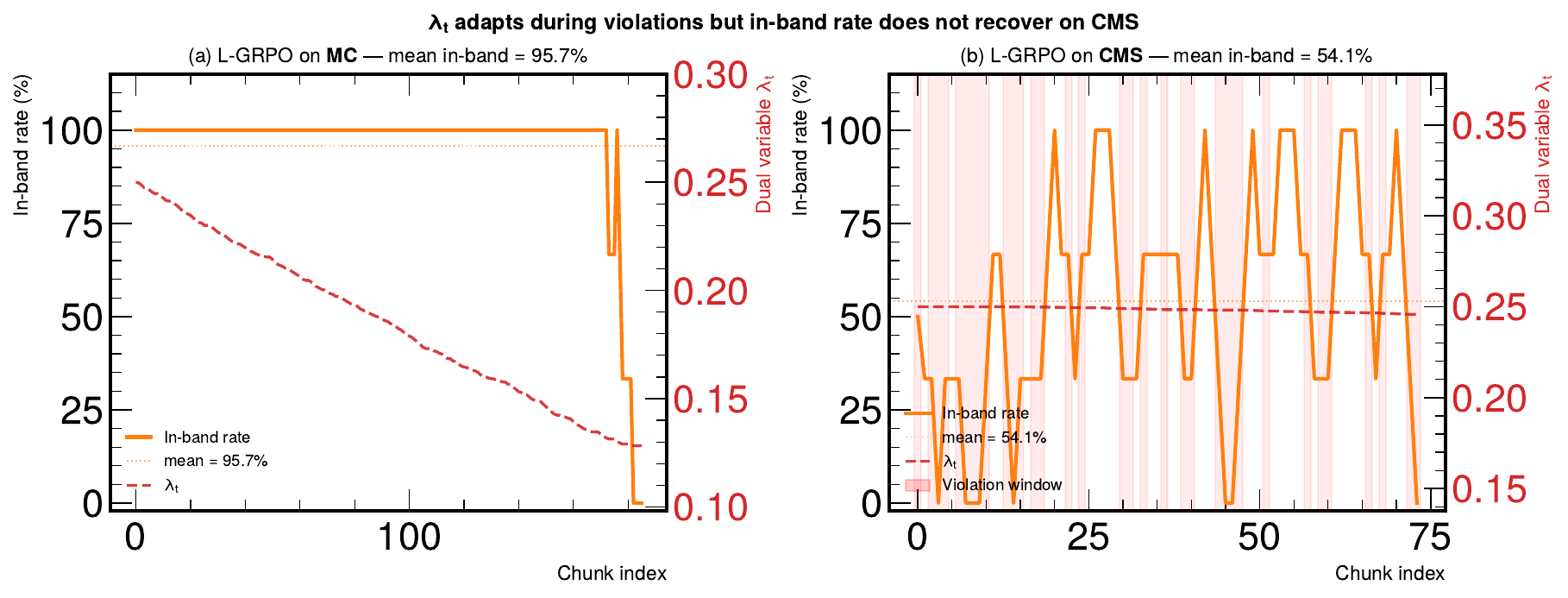}
    \caption{                                                    
    \textbf{Dual variable $\lambda_t$ and in-band rate over chunks for L-GRPO (AD trigger).} (a) On MC data, $\lambda_t$ decreases
  monotonically from its initial value of 0.25 as the policy stays in-band ($\sim$96\% of chunks). (b) On CMS real collision data,           
  $\lambda_t$ fluctuates but remains roughly constant ($\approx 0.25$) while the in-band rate oscillates around 54\%. Shaded
  regions mark chunks where the in-band rate is below 50\%. Despite $\lambda_t$ responding to these violation windows, the in-band 
  rate does not systematically recover, confirming that increasing $\lambda_t$ amplifies the pathological group update rather than
   correcting the underlying sampling failure.
    }            
    \label{fig:lgrpo_lam_traj}                   
\end{figure*}

\begin{figure*}
    \centering
\includegraphics[width=\linewidth]{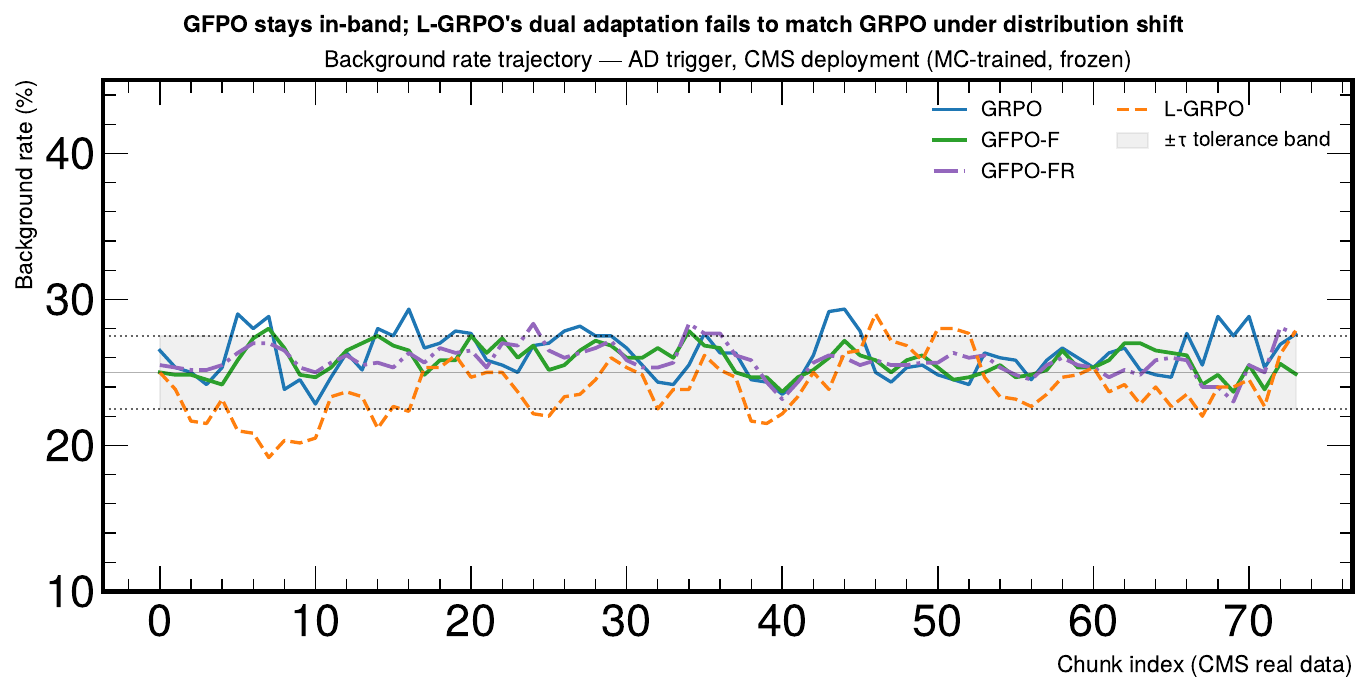}
    \caption{\textbf{Background rate trajectory on CMS real data (AD trigger, 
MC-trained frozen policy).} Grey band: $\pm\tau$ tolerance around the 
target $r_B^{*}$. GFPO-F and GFPO-FR track within the band for most of 
the 73-chunk deployment window. GRPO and L-GRPO sit near or below the 
lower band edge throughout, despite L-GRPO's dual variable adapting 
concurrently (Figure~\ref{fig:lgrpo_lam_traj}). The performance gap is 
therefore \emph{not} a matter of insufficient penalty strength but of the 
GRPO candidate-sampling mechanism failing to produce feasible actions 
under distribution shift.
    }            
    \label{fig:lgrpo_bg_traj}                   
\end{figure*}

\subsubsection{CPO}
\label{appendix:cpo}
\paragraph{Implementation details.} We set $\lambda_1 = 0.25$ (in CPO this is a fixed reward weight, distinct from the trust-region dual $\nu^\star$ that the QP solves for at each update).
CPO inherits the signal and move-penalty pieces of Eq.~\ref{equation:reward_design} but moves the rate-tracking term from the reward into the cost constraint. Concretely, for a
candidate $\Delta$-cut $\delta_k$ at time $t$ we evaluate reward (with capital letters)
\begin{equation}
\label{eq:cpo-reward}
R_k \;=\; (1-\lambda_1)\,\underbrace{\bigl(\alpha\,\epsilon^{t\bar t}_{t+1} + (1-\alpha)\,\epsilon^{h\to 4b}_{t+1}\bigr)}_{\text{signal efficiency}} \;-\;
\lambda_2\,\underbrace{\frac{|\Delta c_{t-1}|}{\Delta c_{\max}}}_{\text{move penalty}}
\end{equation}
and cost (with capital letters)
\begin{equation}
\label{eq:cpo-cost}
C_k \;=\;
\begin{cases}
\bigl(|r_{t+1} - r_B^\star|/\tau\bigr)^2 & \text{if } |r_{t+1} - r_B^\star|\le\tau,\\[2pt]
|r_{t+1} - r_B^\star|/\tau              & \text{if } |r_{t+1} - r_B^\star|>\tau,
\end{cases}
\end{equation}
where $r_{t+1}$ is the realized background rate after the candidate cut is
applied. The cost is the sign-flipped, vertically shifted version of Eq.~\ref{equation:reward_design}'s
rate-tracking term: it equals $0$ at the target, ramps quadratically to $1$ at
the band edge, and grows linearly with the relative violation beyond, remaining
non-negative and continuous at $e_t = 1$. The constraint enforced by CPO is on
the expected per-step cost,
\begin{equation}
\label{eq:constraint}
J_C(\pi_\theta) \;=\; \mathbb{E}_{(s,a)\sim\pi_\theta}\!\bigl[C(s,a)\bigr] \;\le\; d,
\end{equation}
with $d$ a hyperparameter (the per-step rate-violation budget). All other terms
($\alpha$, $\lambda_1$, $\lambda_2$, $\tau$, $r_B^\star$, $\Delta c_{\max}$) keep
the values used elsewhere in the paper. We set $d = 1$, the cost evaluated at the tolerance-band edge, so that any operating point with average rate violation within $\tau$ is constraint-feasible and CPO is not driven toward the recovery branch by its budget alone.

\paragraph{CPO update.}
  We adapt CPO~\citep{achiam2017constrained} to the bandit form of our setup. At
  each micro-step we sample $G=16$ candidate $\Delta$-cuts $a_k\sim\pi_\theta(\cdot|s)$,
  evaluate $(r_k, c_k)$ under the simulator (Eqs.~\ref{eq:cpo-reward}--\ref{eq:cpo-cost}),
  and form \emph{group-relative} advantages
  $\hat A^R_k = R_k - \bar R$,
  $\hat A^C_k = C_k - \bar C$,
  replacing the separate value-function critics used in the original CPO; like the
  GRPO/GFPO baselines, the bandit reduction removes the unreliable value-estimation
  step in the rare-signal regime. Every $50$ micro-steps (with at least $128$
  samples buffered) we form the linearized trust-region program
  \begin{equation}
  \max_{\theta'}\, g^\top(\theta'{-}\theta)\quad\text{s.t.}\quad
  c + b^\top(\theta'{-}\theta) \le 0,\quad
  \tfrac{1}{2}(\theta'{-}\theta)^\top H(\theta'{-}\theta) \le \delta_{\mathrm{KL}},
  \end{equation}
  where $g$ and $b$ are the gradients of the importance-weighted reward and cost
  surrogates at $\theta$, $c = J_C(\pi_\theta) - d$, and $H = \nabla_\theta^2
  \mathrm{KL}(\pi_{\mathrm{old}}\,\|\,\pi_\theta)$ is the Fisher matrix. We
  evaluate Fisher--vector products through mean-KL Hessian--vector products,
  solve $H^{-1}g$ and $H^{-1}b$ by conjugate gradient ($10$ iterations, damping
  $0.1$), and obtain the dual variables $(\lambda^\star, \nu^\star)$ in closed
  form following~\citet[Sec.~6.2]{achiam2017constrained}. A backtracking line
  search (initial step $1.0$, decay $0.8$, up to $10$ backtracks) accepts the
  update if the measured KL stays within $1.5\,\delta_{\mathrm{KL}}$, the reward
  surrogate does not regress, and the projected cost stays under budget. When
  $c > 0$ and $2\delta_{\mathrm{KL}} - c^2/s < 0$ the program is infeasible and
  we apply the recovery step (Eq.~14 of~\citealp{achiam2017constrained}),
  $\theta' - \theta = -\sqrt{2\delta_{\mathrm{KL}}/s}\,H^{-1}b$, which descends
  purely along the cost direction; the line search is then required only to
  reduce the cost surrogate. We train on $3$ seeds over the full
  MC trajectory using a single configuration ($\delta_{\mathrm{KL}} = 0.03$, $d = 1$, $G = 16$, batch threshold $128$) and
  do not perform a hyperparameter sweep; the values chosen are conservative
  defaults from~\citet{achiam2017constrained} for $\delta_{\mathrm{KL}}$ and the
  CG/line-search budgets, with $d = 1$ matching the cost at the band edge.

\subsection{GRPO Failure Mode}
\label{appendix:GRPO_Failure_Mode}
This appendix supplements the $H_T$ analysis in the main text (Figures~\ref{fig:grpo_failure_mode_ht},~\ref{fig:gfpo_intermediate_ht}) with the parallel analysis on the AD trigger (Figures~\ref{fig:grpo_failure_mode_ad} and \ref{fig:gfpo_intermediate_ad}).

\paragraph{Failure does not vanish with group size.} Figure~\ref{fig:grpo_failure_mode} plots the fraction of training steps with zero feasible rollouts as a function of $G \in \{4, 8, \ldots, 256\}$. The AD curve is flat at 34--36\% across all $G$; $H_T$ plateaus above 24\% even at $G=256$. If zero-feasibility were a sampling-variance artifact, scaling $G$ would drive the fraction to zero. It does not: zero feasibility is a property of the policy distribution under the constraint, not of the estimator's sample size.

\begin{figure*}
    \centering
    \begin{subfigure}{0.9\textwidth}
\includegraphics[width=\linewidth]{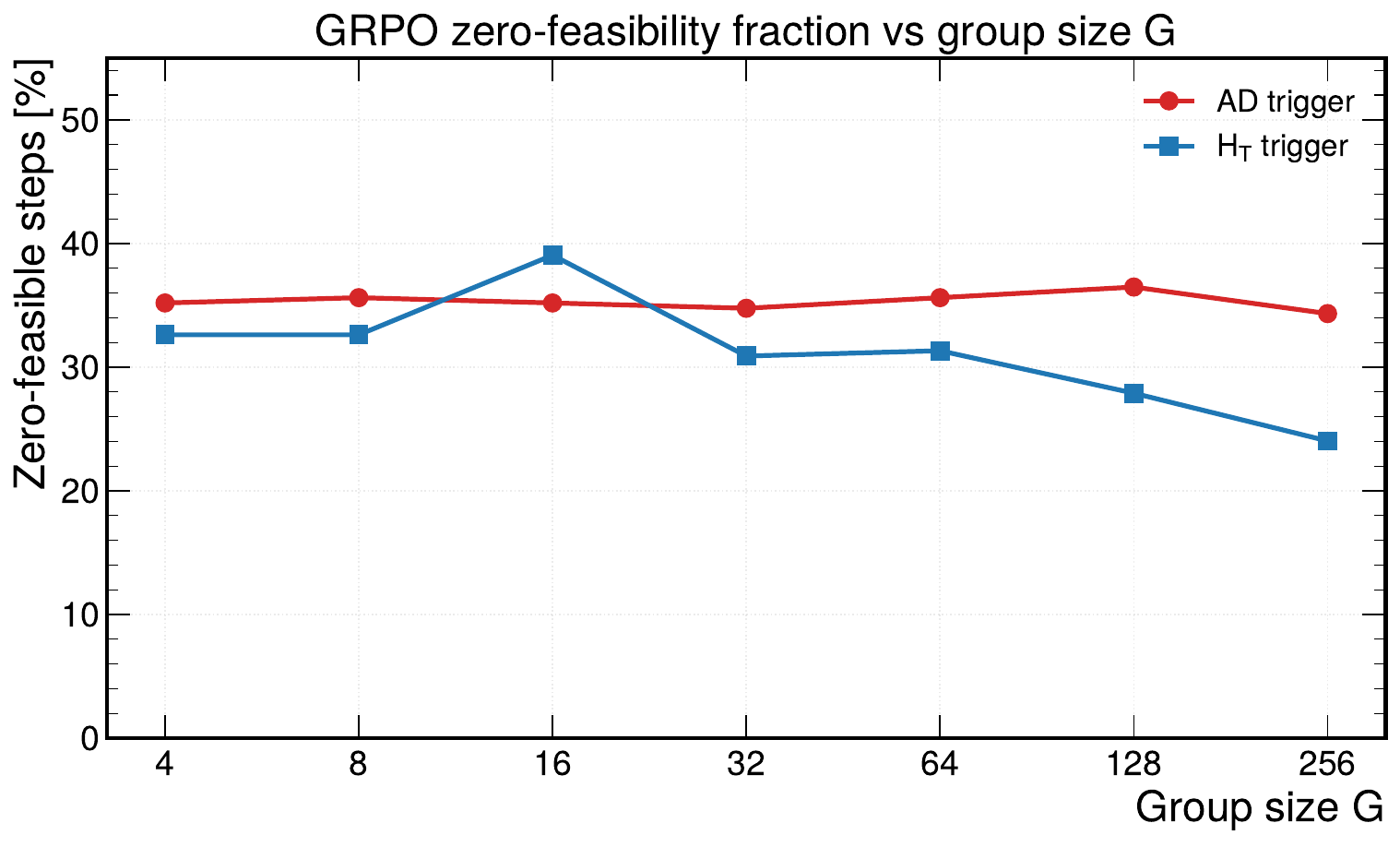}
    \end{subfigure}
    \caption{\textbf{Zero-feasibility fraction of GRPO vs.\ group size $G$.} Fraction of training steps on which every rollout violates the rate budget, for AD and $H_T$ triggers. Even at $G=256$, 24--34\% of steps yield no feasible sample, so the group-relative baseline collapses to an uninformative signal on a constant fraction of updates. Scaling $G$ alone cannot recover constraint satisfaction, motivating the constraint-aware variants of GRPO in Section~\ref{sec:gfpo}.}
\label{fig:grpo_failure_mode}
\end{figure*}

\begin{figure*}
    \centering
    \begin{subfigure}{0.9\textwidth}
\includegraphics[width=\linewidth]{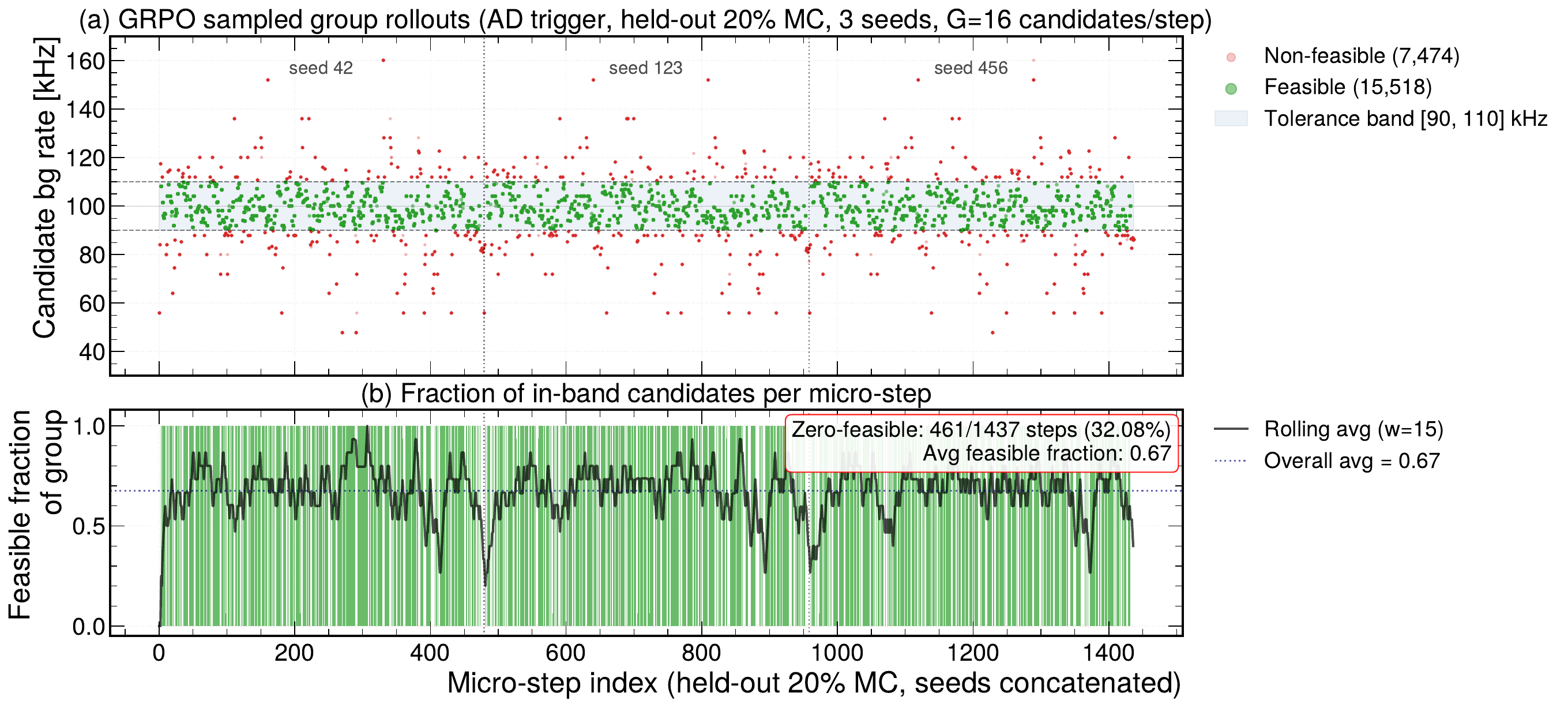}
    \end{subfigure}
    \caption{\textbf{GRPO group-feasibility failure (AD trigger, MC, $G{=}16$); appendix complement to Fig.~\ref{fig:grpo_failure_mode_ht}.} (a) Background rates of all $G{=}16$ candidate rollouts per micro-step; \textcolor{darkgreen}{green}/\textcolor{red}{red} denotes feasible/infeasible w.r.t.\ the $[90, 110]$~kHz band. (b) Per-step feasible fraction $f_t = n_{\text{feas}}/G$ (run mean $\langle f_t\rangle = 0.67$). On 32.08\% of steps $f_t = 0$: every candidate lies outside the band, the within-group advantage has zero variance, and the policy receives no feasibility signal. A 67\% in-band mean coexisting with a $\sim\!32.08\%$ zero-feasible tail is the regime where group-relative updates fail intermittently, motivating the feasibility-filtered GFPO update.}
    \label{fig:grpo_failure_mode_ad}
\end{figure*}

\begin{figure*}
    \centering
    \begin{subfigure}{0.5\textwidth}
\includegraphics[width=\linewidth]{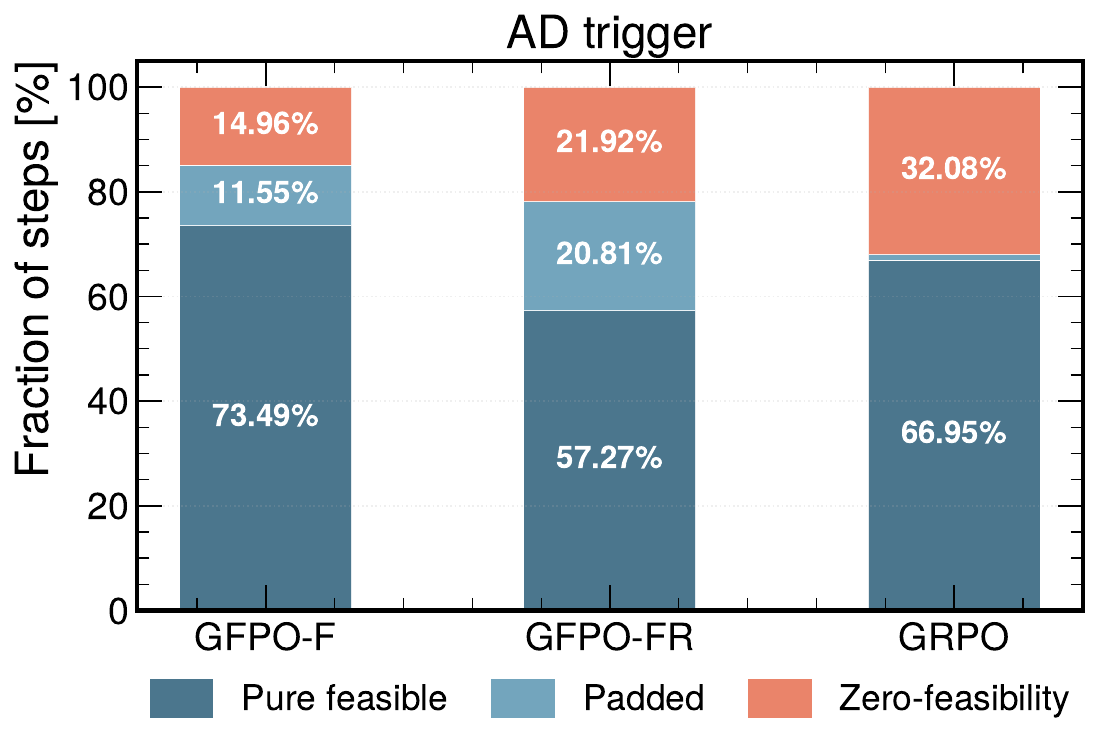}
    \end{subfigure}
    \caption{\textbf{Step composition for the AD trigger; Complement to Figure~\ref{fig:gfpo_intermediate_ht}.} Categories follow Figure~\ref{fig:gfpo_intermediate_ht}:
  \textit{Pure feasible} ($|\mathcal{F}_t|\ge K$), \textit{Padded} $0<|\mathcal{F}_t|<K$), \textit{Zero-feasibility} ($\mathcal{F}_t = \emptyset$).
  GFPO-F/FR achieve $57\sim73\%$ pure-feasible kept sets and pad on $14 \sim 21\%$ of steps. GRPO is dominated by partial-feasibility ($66.95\%$) and zero-feasibility ($32.08\%$) steps. The latter matches the value in Figure~\ref{fig:grpo_failure_mode_ad} and is the failure mode shielded by GFPO.  
}
    \label{fig:gfpo_intermediate_ad}
\end{figure*}

\subsection{Sequential Network Architectures}
\label{sec:sequential_network_architectures}

\paragraph{Why a sequence encoder.} The trigger-control problem is partially observed and non-stationary: at update step $t$ we observe a finite recent window of background events, while the threshold-to-rate mapping drifts with pileup ($N_{pv}$), detector conditions, and score-distribution shifts. A scalar controller (e.g., PID~\citep{emami2026selfdrivingtriggerlhcadaptive}) sees only the instantaneous rate error. We need a state that exposes the recent evolution of the event-level distribution near the cut --- near-cut occupancy, tail thickness, and the conditional dependence of pass indicators on score and kinematics. Algorithm~\ref{alg:seq_state_and_controller} presents the full pipeline of building sequential state for all RL policy.

\paragraph{State.} At update step $t$, the state is a fixed-length sequence $S_t \in \mathbb{R}^{K \times F}$ built from the most recent $K$ background events, with $F$ features per event (Algorithm~\ref{alg:seq_state_and_controller}). Features fall into three groups and broadcast across the sequence:
\begin{itemize}\itemsep0pt
  \item \textbf{Event-level}: score and kinematics; pileup context.
  \item \textbf{Threshold geometry}: distance-to-cut; pass indicator.
  \item \textbf{Chunk-level control}: rate error, drift, last action, feasibility.
\end{itemize}

\paragraph{Architecture.} All RL agents in this work share the same sequential architecture for both state encoding and policy. The same input format and network apply to the $H_T$ and AD triggers.

\paragraph{Setup.} With $\lambda_1 = 0.25$, $\lambda_2 = 1.00$ fixed 
(Section~\ref{appendix:hyperparameter_ablation}), we ablate the recurrent cell 
in the shared sequence encoder. All five RL agents (DQN, GRPO, GFPO-F, 
GFPO-FR, PPO) use the same architecture: a single-layer RNN followed by a 
linear projection head, trained on MC and deployed on CMS Run 283408.

\paragraph{RNNs compared.} LSTM (4 gates, 7{,}333 
parameters)~\citep{graves2012long}; GRU (3 gates, 5{,}541 
parameters)~\citep{dey2017gate}; vanilla RNN with $\tanh$ (1{,}957 
parameters); vanilla RNN with ReLU (1{,}957 parameters).

\paragraph{Results.} Table~\ref{tab:rnn_trigger_comparison} reports means 
over the five agents. On the $H_T$ trigger, signal efficiencies are nearly 
architecture-invariant: \ttbar\ overall varies by under 0.1\% 
(97.57--97.64\%) and \haaFourB\ overall by under 0.1\% (33.29--33.38\%) 
across all four cells. InBand rate is the discriminating axis: GRU reaches 
0.849, vanilla RNN 0.789.

\begin{table*}[t]
\centering
\caption{State representation ablation on CMS real data (Run~283408). All models trained on 80\% MC with $\lambda_1{=}0.25$,        
  $\lambda_2{=}1.00$. Results averaged across five RL methods (DQN, GRPO, GFPO-F, GFPO-FR, PPO).}
\label{tab:rnn_trigger_comparison}
\begin{tabular}{llccccccc}
\toprule
Trigger & RNN & Params & Gates & InBand & $\epsilon_{\text{inband}}^{\ttbarraw} \uparrow$ & $\epsilon_{\text{inband}}^{\haaFourB} \uparrow$ & $\epsilon_{\text{overall}}^{\ttbarraw} \uparrow$ & $\epsilon_{\text{overall}}^{\haaFourB} \uparrow$ \\
\midrule
$H_{T}$ & LSTM      & 7,333 & 4 & 0.830 & 97.50 & 33.07 & 97.57 & 33.29 \\
$H_{T}$ & RNN       & 1,957 & 1 & 0.789 & \textbf{97.68} & \textbf{33.60} & \textbf{97.64} & \textbf{33.38} \\
$H_{T}$ & RNN--ReLU & 1,957 & 1 & 0.800 & 97.64 & 33.54 & 97.60 & 33.36 \\
$H_{T}$ & GRU       & 5,541 & 3 & \textbf{0.849} & 97.48 & 33.06 & 97.57 & 33.30 \\
\midrule
AD & LSTM      & 7,333 & 4 & 0.570 & 75.89 & 43.01 & 75.30 & 39.80 \\
AD & RNN       & 1,957 & 1 & 0.570 & 75.94 & 43.22 & 75.29 & 39.84 \\
AD & RNN--ReLU & 1,957 & 1 & 0.568 & 75.89 & 43.24 & \textbf{75.34} & \textbf{39.86} \\
AD & GRU       & 5,541 & 3 & \textbf{0.573} & \textbf{76.02} & \textbf{43.32} & 75.31 & 39.81 \\
\bottomrule
\end{tabular}
\end{table*}

\paragraph{Choice.} Trigger rate scales with instantaneous 
luminosity~\citep{aad2022performance, atlas2020operation, 
govorkova2022autoencoders}, so rate stability is operationally critical. 
We therefore prioritize InBand rate and adopt GRU as stated in Section~\ref{sec:gfpo}.


\paragraph{State representation.} We summarize the $K$-event window with 
$F$ features per event (Algorithm~\ref{alg:seq_state_and_controller}). 
Features fall into three groups, listed in Table~\ref{tab:state_features}: 
per-event physics features, chunk-level control state (broadcast across 
all $K$ events), and multi-resolution near-cut occupancy. We justify 
the non-obvious choices below; the same input format is used for both 
the $H_T$ and AD triggers, though constructed separately. Total dimensionality is $F = 22$ per timestep ($5 + 14 + 3$).         

\begin{table*}[t]
\centering
\caption{State features. $\mu, \sigma$: window mean and std of the trigger 
observable. $r_t$: chunk-$t$ background rate. $r_B^{*}$: target rate. 
$\threshold$: current threshold.}
\label{tab:state_features}
\small
\begin{tabular}{l l l}
\toprule
Group & Feature & Definition \\
\midrule
\multirow{4}{*}{Per-event physics}
 & Normalized observable & $(x_i - \mu)/\sigma$ \\
 & Pass flag & $\mathbf{1}[x_i \ge \threshold]$ \\
 & Signed distance to cut & $(x_i - \threshold)/\sigma$ \\
 & Normalized $N_{pv,i}$ & $(N_{pv,i} - \overline{N_{pv}})/\mathrm{std}(N_{pv})$ \\
 & Window position & $\mathrm{linspace}(0,1,K)$ \\  
\midrule
\multirow{10}{*}{Chunk-level (broadcast)}
 & Relative rate error & $(r_t - r_B^{*})/r_B^{*}$ \\
 & Absolute rate error & $|r_t - r_B^{*}|/r_B^{*}$ \\
 & Rate drift & $(r_t - r_{t-1})/r_B^{*}$ \\
 & InBand flag & $\mathbf{1}[|r_t - r_B^{*}| \le \tau]$ \\
 & Normalized threshold & $(\threshold_t - \mathrm{mid})/\mathrm{span}$ \\
 & Last action & $\Delta\threshold_{t-1}/\Delta\threshold_{\max}$ \\
 & Pileup mean & $\overline{N_{pv}}$ over window \\
 & Pileup std & $\mathrm{std}(N_{pv})$ over window \\
 & EMA rate error ($\lambda{=}0.95$) & $\mathrm{EMA}_\lambda[(r_t - r_B^{*})/r_B^{*}]$ \\
 & Sensitivity probe & $\partial r/\partial \threshold$, central-difference, normalized by $r_B^{*}$ \\
 & Look-ahead rate $p_1$        & $r(\threshold + \Delta)$  \\                                    
   & Look-ahead rate $p_2$        & $r(\threshold + 2\Delta)$  \\                                   
   & Survival ratio $r_1$         & $r(\threshold + \Delta)/r(\threshold)$  \\
   & Survival ratio $r_2$         & $r(\threshold + 2\Delta)/r(\threshold + \Delta)$  \\           
\midrule
\multirow{3}{*}{Near-cut indicators}
 & Width $w_1$ & $\mathbf{1}[|x_i - \threshold| \le w_1]$ \\
 & Width $w_2$ & $\mathbf{1}[|x_i - \threshold| \le w_2]$ \\
 & Width $w_3$ & $\mathbf{1}[|x_i - \threshold| \le w_3]$ \\
\bottomrule
\end{tabular}
\end{table*}

Widths are $(5, 10, 20)$~GeV for $H_T$ and $(0.25, 0.5, 1.0)$ for the AD score. 
$\mathrm{mid}$ and $\mathrm{span}$ are computed once at the start of the 
first chunk: $\mathrm{mid} = 0.5(P_{95} + P_{99.99})$ and $\mathrm{span} = 
P_{99.99} - P_{95}$ over the calibration window, analogous to how a static 
LHC trigger menu is set.

\paragraph{Normalized threshold (Figure~\ref{fig:eda_normalized_cut}).} 
Pileup drift makes the percentile rank of a fixed raw threshold non-stationary 
(top row): the same $\threshold$ produces different rates over time. 
Normalization by the calibration $\mathrm{mid}$ and $\mathrm{span}$ stabilizes 
the threshold representation; even after normalization, the ideal operating 
point drifts (bottom row), so the agent must observe $\threshold_{\text{norm}}$ 
to know whether its threshold sits high or low relative to the score 
distribution. Without it, the agent would observe only the rate error, purely reactive, not anticipatory.

\paragraph{Window position.} The per-event index 
$\tau_i = (i-1)/(K-1) \in \{0, \tfrac{1}{K-1}, \dots, 1\}$ provides an 
explicit recency signal, freeing the GRU from having to recover position 
information from its hidden-state recurrence alone. This follows the 
positional-encoding inductive bias of \citet{vaswani2017attention} (and 
its time-series extensions, e.g., \citet{kazemi2019time2vec, 
lim2021temporal}); we broadcast the same ramp identically every chunk.

  
\paragraph{EMA rate error (Figure~\ref{fig:ema_over_time}).} The 
instantaneous error is window-to-window noisy; the EMA at $\lambda = 0.95$ 
smoothly tracks the sustained drift out of the tolerance band, analogous 
to the integral term in PID. This gives the agent memory of accumulated 
deviations and distinguishes transient fluctuations from regime shifts.

\paragraph{Sensitivity probe (Figure~\ref{fig:sensitivity_probe_over_time}).}
A central-difference estimate of $\partial r/\partial \threshold$ gives 
the agent local gradient information about the trigger curve: a steep 
slope means small threshold changes produce large rate effects (fine 
adjustments needed); a shallow slope means the agent can move further 
safely.

\paragraph{Pileup features (Figure~\ref{fig:npv_over_time}).} $N_{pv}$ 
distributions drift over the run; including window mean and std lets 
the agent condition on the current pileup regime.

\paragraph{Near-cut indicators (Figure~\ref{fig:near_cut}).} Multi-resolution 
counts of how many events sit close to the cut predict how sensitive the 
rate is to small threshold adjustments.

\paragraph{Look-ahead rate and survival ratios (Figure~\ref{fig:eda_tail_shape}).}   The look-ahead rates $p_1 = r(\threshold + \Delta)$ and $p_2 = r(\threshold + 2\Delta)$ 
evaluate the background rate at the next two grid points along the discrete 
action axis ($\Delta = 1$~GeV for $H_T$, $\Delta = 0.5$ for AD); the survival 
ratios $r_1 = p_1/p_0$ and $r_2 = p_2/p_1$ report the rate decay between 
consecutive shifts, where $p_0 = r(\threshold) \approx r_B^{*}$ at the natural 
operating point $\threshold = P_{99.75}$. Where the central-difference 
sensitivity probe captures the local slope at $\threshold$, $(p_1, p_2, r_1, r_2)$ 
describes the \emph{shape} of the cut neighborhood reachable in one or two 
actions: a thin tail produces small $r_1, r_2$ (one upward step removes most 
of the rate); a fat tail produces large $r_1, r_2$ (several steps needed). 
The four scalars drift across the run on both triggers, motivating their 
inclusion as state inputs.

\begin{figure*}
    \centering
    \includegraphics[width=\linewidth]{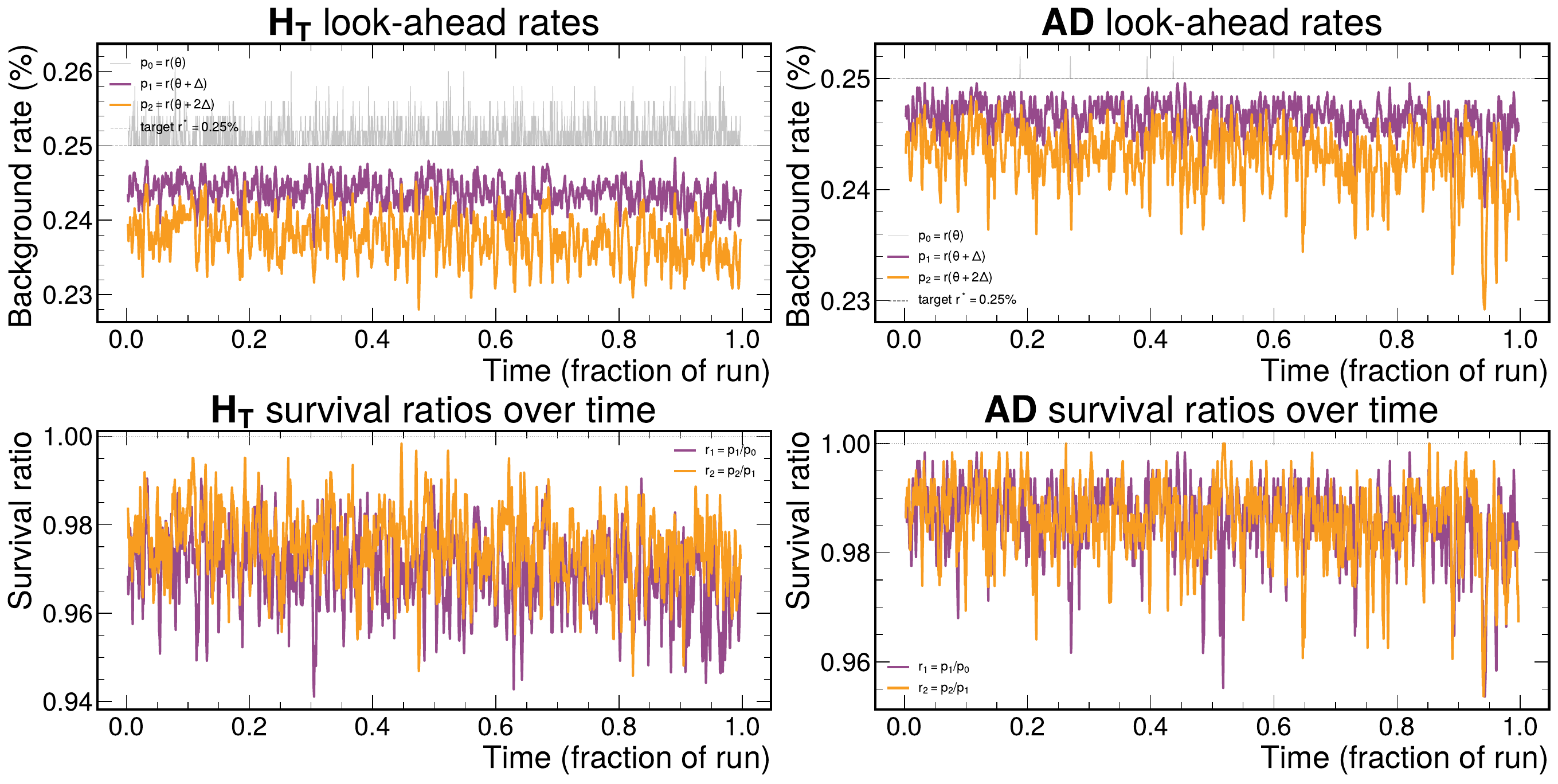}
    \caption{Look-ahead rates ($p_1, p_2$) and survival ratios ($r_1, r_2$)
  over time, evaluated at each window's natural operating point                                     
  $\theta = P_{99.75}$ so that $p_0 \approx r_B^{*}$ by construction. The                           
  four scalars give the agent a discrete forward-rate map of the tail at                            
  $\theta + \Delta$ and $\theta + 2\Delta$ (with $\Delta = 1$~GeV on $H_T$                          
  and $\Delta = 0.5$ on AD), complementing the local sensitivity probe                              
  $\partial r/\partial \theta$. Their drift over the run motivates including                        
  $(p_1, p_2, r_1, r_2)$ as state inputs.}
    \label{fig:eda_tail_shape}
\end{figure*}

\begin{figure*}
    \centering
    \includegraphics[width=\linewidth]{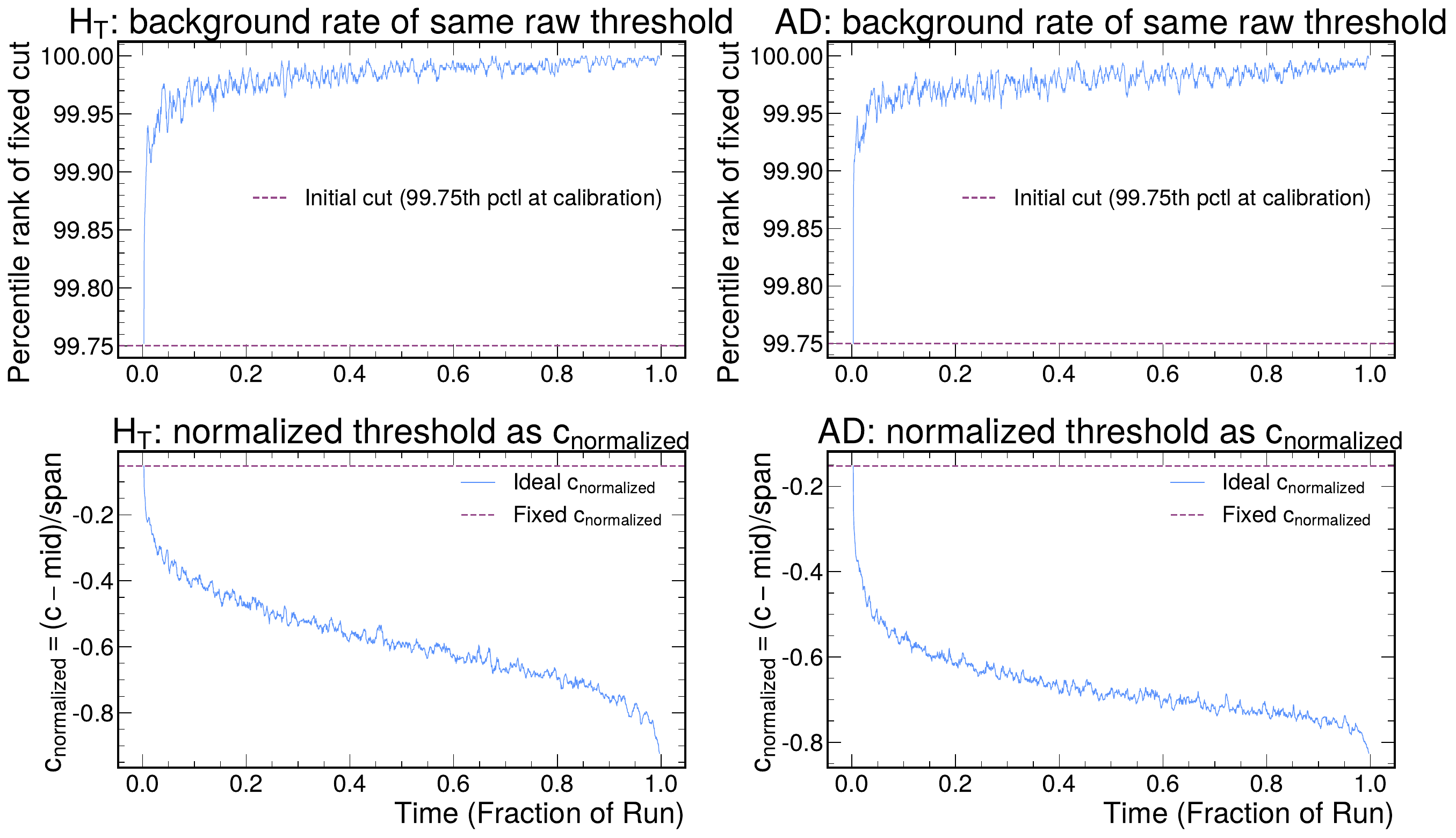}
    \caption{Non-stationarity of trigger thresholds under pileup drift (MC). \textbf{Top row}: The percentile rank of a fixed raw threshold (calibrated to the 99.75th percentile of the first 50k events) drifts substantially over time for both $H_{T}$ (left) and AD (right), meaning the same raw threhsold yields a changing background rate. \textbf{Bottom row}: The normalized threshold $\threshold_{\text{normalized}}$ where mid and span are derived from the P95-P99.99 calibration range of the first 50k events, illustrates that the ideal operating point (blue) deviates from the fixed calibration value (red dashed line). The ideal $\threshold_{\text{normalized}}$ (blue) is calculated as the 99.75th percentile of each 50k events' score distribution. This is the threshold that would give exactly $r^{*} = 0.25\%$. This suggests including $\threshold_{\text{normalized}}$ as a state feature so the agent can determine where $\threshold_{t}$ sits relative to the score distribution.}
    \label{fig:eda_normalized_cut}
\end{figure*}

\begin{figure*}
    \centering
    \includegraphics[width=\linewidth]{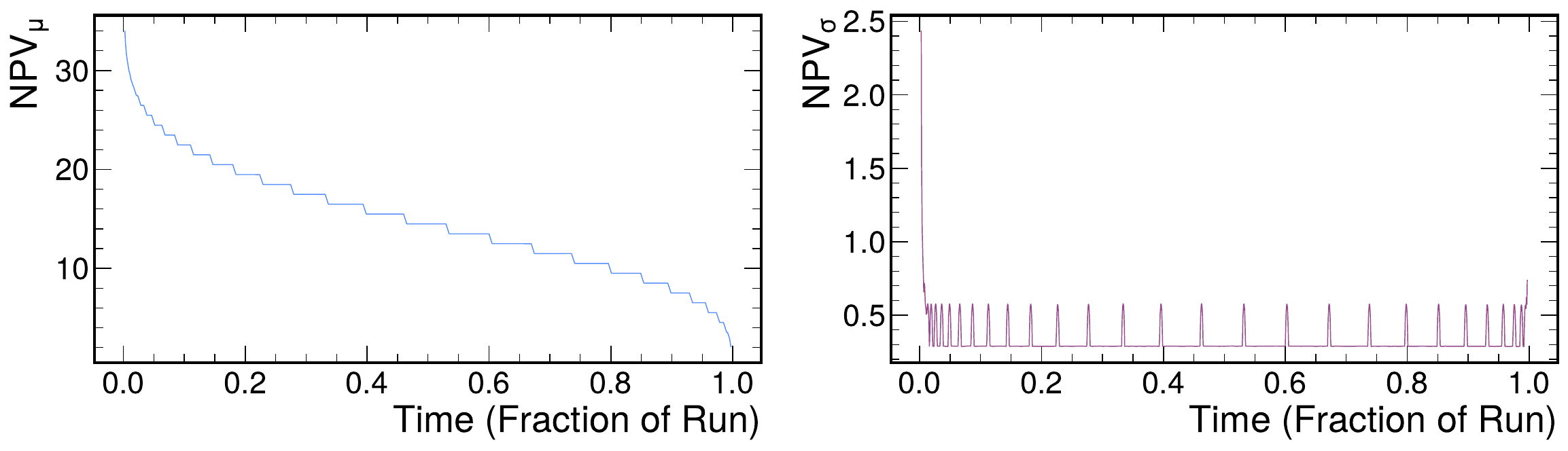}
    \caption{$N_{pv}$ (Number of Primary Vertices) distribution over time (MC).}
    \label{fig:npv_over_time}
\end{figure*}

\begin{figure*}
    \centering
    \includegraphics[width=1.03\linewidth]{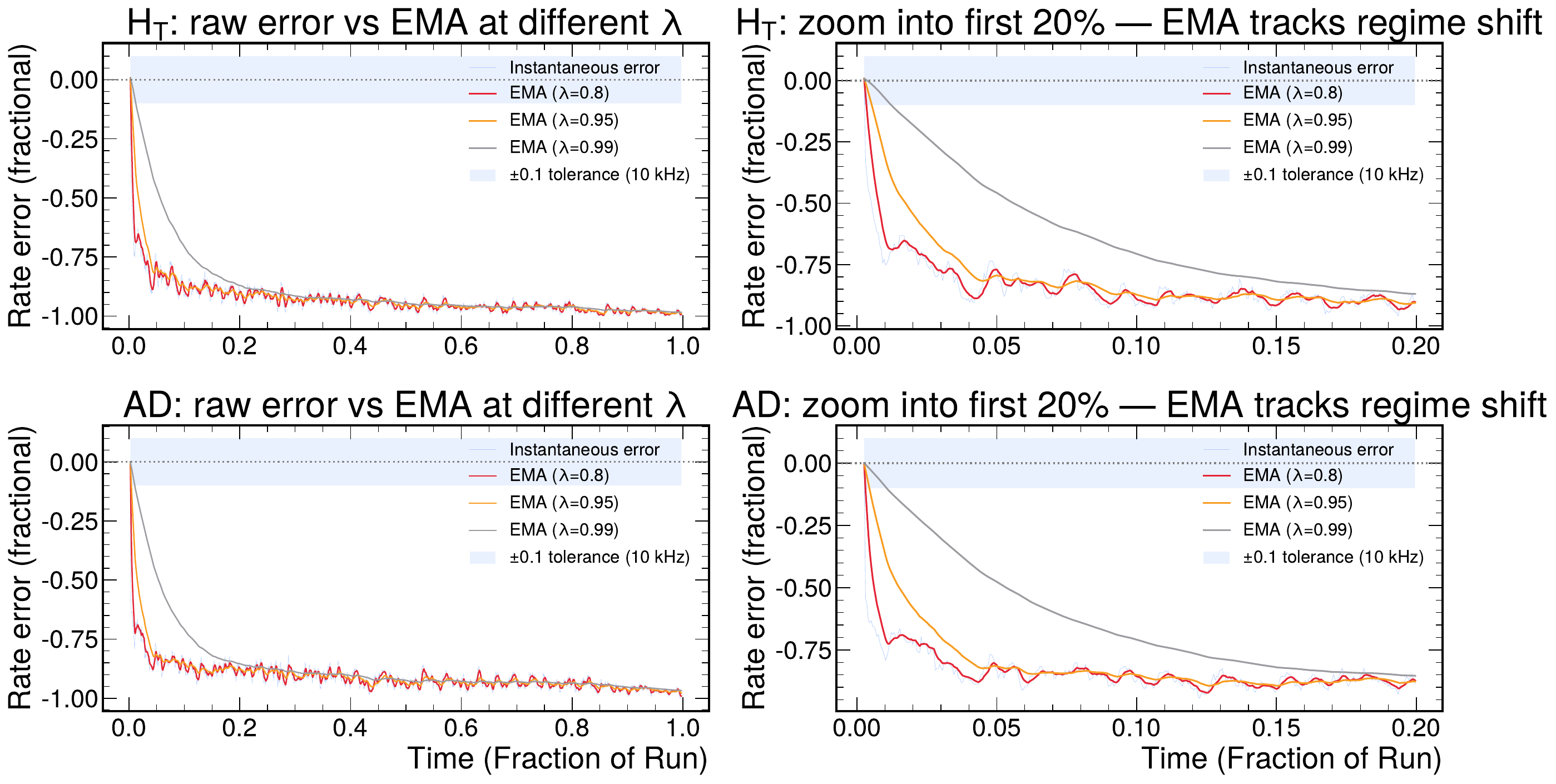}
    \caption{Exponential Moving Average (EMA) of the fractional rate error at different smoothing factors $\lambda$, computed over 50k-event sliding windows with a fixed threshold calibrated at the start of the run (MC). The light 
  blue band indicates the $\pm 10$ kHz tolerance around the $100$ kHz target rate.}
    \label{fig:ema_over_time}
\end{figure*}

\begin{figure*}
    \centering
    \includegraphics[width=1.03\linewidth]{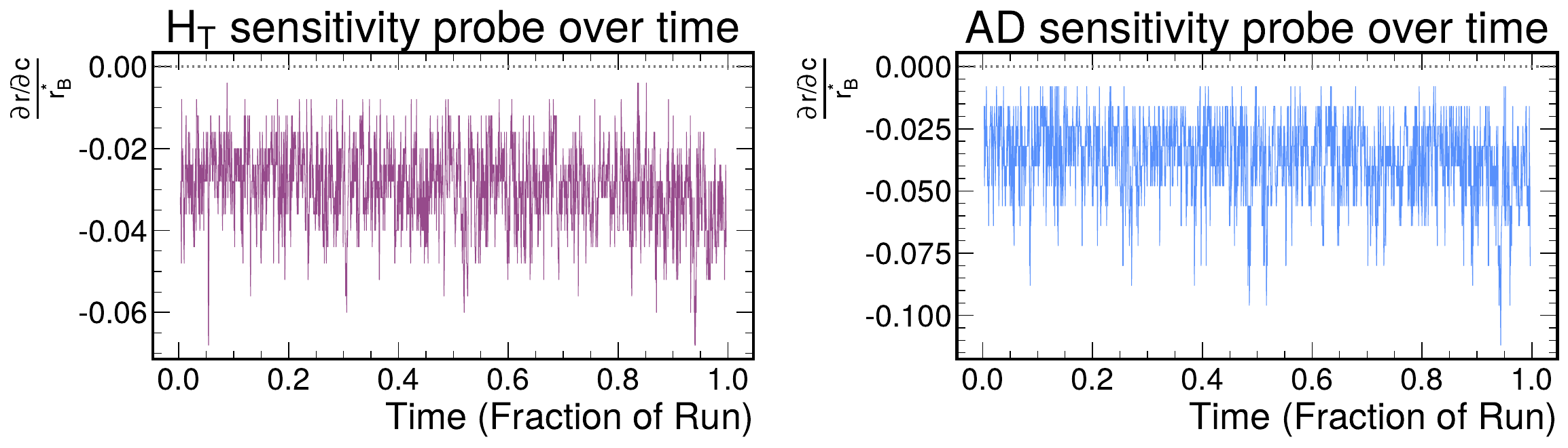}
    \caption{Sensitivity probe over time (MC).}
    \label{fig:sensitivity_probe_over_time}
\end{figure*}


\begin{figure*}
    \centering
    \includegraphics[width=\linewidth]{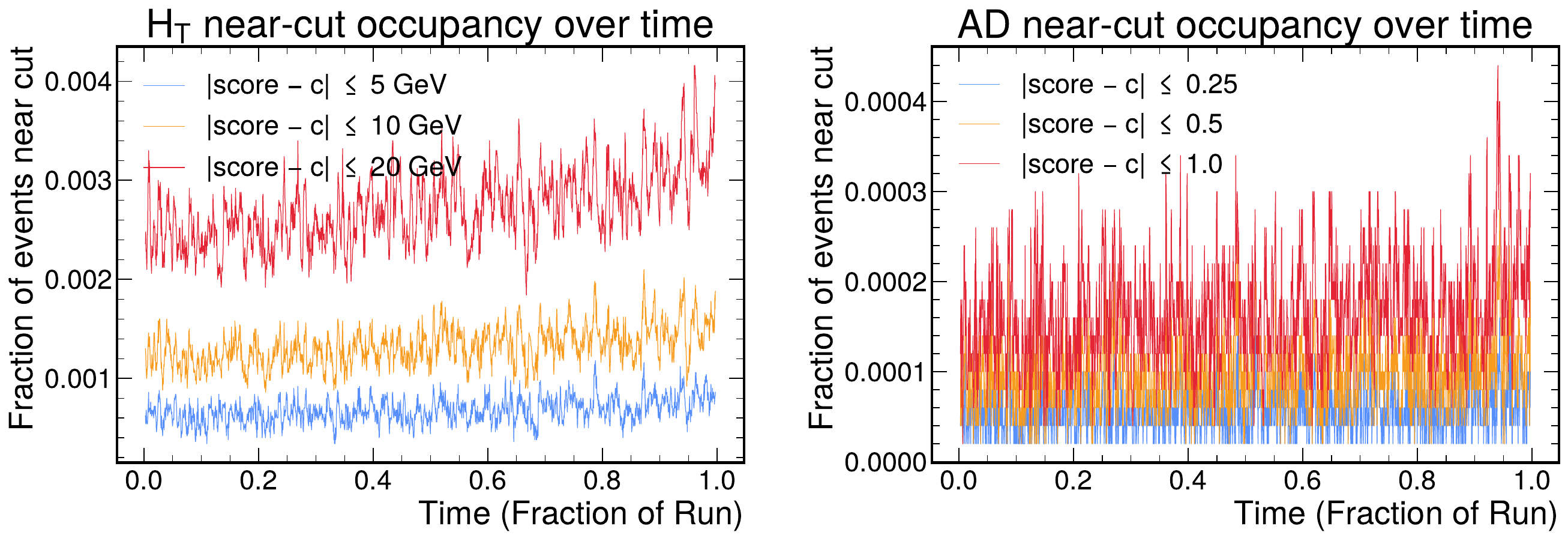}
    \caption{Near-threshold occupancy over time (MC). }
    \label{fig:near_cut}
\end{figure*}

\begin{algorithm*}[t]
\caption{Sequential state construction and recurrent controller for adaptive threshold at LHC}
\label{alg:seq_state_and_controller}
\KwIn{
Background events $\{(x_n,N_{pv,_n}\}_{n=1}^{N_t}$; current cut $c_t$; measured background rates $r_t, r_{t-1}$;
target rate $r_{B}^*$; tolerance $\tau$; window length $K$; calibration constants $(m,s)$ ($m{=}\mathrm{mid}$, $s{=}\mathrm{span}$); near-cut widths $\{w_j\}_{j=1}^J$ with $J{=}3$;
last action $a_{t-1}$, max step $a_{\max}$; tail step $\delta$; integrator $e^{(I)}_t$; sensitivity probe $\widehat{\partial r/\partial c}$.
}
\KwOut{Sequence observation $S_t \in \mathbb{R}^{K\times F}$ ($F{=}22$) and action $\Delta c_t \in \mathcal{A}$.}
\BlankLine
\textbf{Procedure} \textsc{BuildSequentialState}$(\{x\},\{\mathrm{N_{pv}}\}, c_t, r_t, r_{t-1}, r^*, \tau, K)$\;
\Indp
\tcp{1) Recent window extraction (most-recent-biased)}
$(x_{1:K}, \mathrm{N_{pv, 1:K}}) \leftarrow \textsc{DownsampleOrPadRecent}(\{x\},\{\mathrm{N_{pv}}\}, K)$\;
\BlankLine
\tcp{2) Per-event normalization}
\For{$k \leftarrow 1$ \KwTo $K$}{
  $x_k^{\mathrm{norm}} \leftarrow (x_k - m)/s$\;
}
$\mu_{\mathrm{N_{pv}}} \leftarrow \mathrm{Mean}(\mathrm{N_{pv, 1:K}})$;\quad
$\sigma_{\mathrm{N_{pv}}} \leftarrow \mathrm{Std}(\mathrm{N}_{PV, 1:K})$\;
\For{$k \leftarrow 1$ \KwTo $K$}{
  $\mathrm{N}_{PV,k}^{\mathrm{norm}} \leftarrow (\mathrm{N}_{pv,k} - \mu_{\mathrm{N_{pv}}})/(\sigma_{\mathrm{N_{PV}}}+\epsilon)$\;
}
\BlankLine
\tcp{3) Threshold-relative geometry (per event)}
$c^{\mathrm{norm}} \leftarrow (c_t - m)/s$\;
\For{$k \leftarrow 1$ \KwTo $K$}{
  $d_k^{\mathrm{norm}} \leftarrow (x_k - c_t)/s$;\quad
  $p_k \leftarrow \mathbbm{1}[x_k \ge c_t]$\;
}
\BlankLine
\tcp{4) Chunk-level rate and control features (broadcast)}
$e_t \leftarrow (r_t - r_{B}^*)/r_{B}^*$\;
$e^{\mathrm{abs}}_t \leftarrow |r_t - r_{B}^*|/r_{B}^*$\;
$\Delta r_t \leftarrow (r_t - r_{t-1})/r_{B}^*$\;
$b_t \leftarrow \mathbbm{1}[|r_t-r_{B}^*|\le\tau]$\;
\If{last action is available}{
  $\tilde a_{t-1} \leftarrow a_{t-1}/a_{\max}$\;
}
\Else{
  $\tilde a_{t-1} \leftarrow 0$\;
}
\For{$k \leftarrow 1$ \KwTo $K$}{
  $\mathrm{tpos}_k \leftarrow (k-1)/(K-1)$\;
}
\BlankLine
\tcp{5) Near-cut occupancy channels (per event)}
\For{$k \leftarrow 1$ \KwTo $K$}{
  \For{$j \leftarrow 1$ \KwTo $J$}{
    $n_{k,j} \leftarrow \mathbbm{1}[|x_k - c_t|\le w_j]$\;
  }
}
\BlankLine
\tcp{6) Tail-shape features (broadcast)}
$p_0 \leftarrow r(c_t)$;\quad
$p_1 \leftarrow r(c_t + \delta)$;\quad
$p_2 \leftarrow r(c_t + 2\delta)$\;
$\rho_1 \leftarrow p_1/p_0$;\quad
$\rho_2 \leftarrow p_2/p_1$\;
\BlankLine
\tcp{7) Concatenate into sequence tensor (F=22 features per event)}
\For{$k \leftarrow 1$ \KwTo $K$}{
  $S_t[k] \leftarrow \textsc{Concat}\big($
  $x_k^{\mathrm{norm}}, \mathrm{N}_{PV,k}^{\mathrm{norm}}, p_k, d_k^{\mathrm{norm}}, \mathrm{tpos}_k, n_{k,1}, n_{k,2}, n_{k,3},$\;
  \quad $e_t, e^{\mathrm{abs}}_t, \Delta r_t, b_t, c^{\mathrm{norm}}, \tilde a_{t-1}, \mu_{\mathrm{N_{PV}}}, \sigma_{\mathrm{N_{PV}}},$\;
  \quad $e^{(I)}_t, \widehat{\partial r/\partial c}, p_1, p_2, \rho_1, \rho_2 \big)$\;
}
\KwRet{$S_t$}\;
\Indm
\BlankLine
\textbf{Controller (sequence network).}\;
$h_t \leftarrow \mathrm{GRU}(S_t)$ \tcp*{encode the event sequence}
\If(\tcp*[f]{policy-based}){policy-based}{
  $\text{logits}_t \leftarrow \mathrm{MLP}(h_t)$\;
  $\Delta c_t \sim \pi(\cdot\mid \text{logits}_t)$\tcp*[f]{sample or argmax}\;
}
\Else(\tcp*[f]{value-based}){
  $Q_t \leftarrow \mathrm{MLP}(h_t)$\;
  $\Delta c_t \leftarrow \arg\max_{a\in\mathcal{A}} Q_t[a]$\;
}
\KwRet{$\Delta \threshold_t$}\;
\end{algorithm*}

%% file: Appendix/Appendix.tex
\section{Additional Experimental Results}
\subsection{Additional DQN results}
\label{Appendix:dqn_additional}
Here, in this section, we want to demonstrate even with off-policy RL that cannot handle distribution shifts well, simple RL policy DQN shall boost signal efficiency over the trajectory compared to fixed predefined PID loop. 

\paragraph{DQN Background-rate trajectories.} For clarity, we plot out DQN and DQN-F qualitative results for per-timestep trajectories to demonstrate simple DQN can boost in band rate visually.
Figure~\ref{fig:bkg_rate_pid} traces the per-chunk background 
trigger rate over a simulated MC run for the $H_T$ trigger and the 
anomaly-detection (AD) trigger. The Constant controller is set at 
the start of the run and never updates; pileup drift pulls its 
rate steadily downward, the canonical failure mode of static 
thresholds under non-stationary input rates. PID reacts to the 
rate signal but oscillates around the tolerance band, repeatedly 
overshooting and undershooting. DQN tracks the band more 
consistently, and DQN-F (the feasibility-filtered variant) 
further suppresses out-of-band swings and recovers faster from 
pileup transitions. The two panels show qualitatively the same 
pattern, indicating that the controller-level differences are not 
specific to the choice of physics observable.

\begin{figure*}[t]
    \centering
    \begin{subfigure}{0.485\textwidth}
        \includegraphics[width=\linewidth]{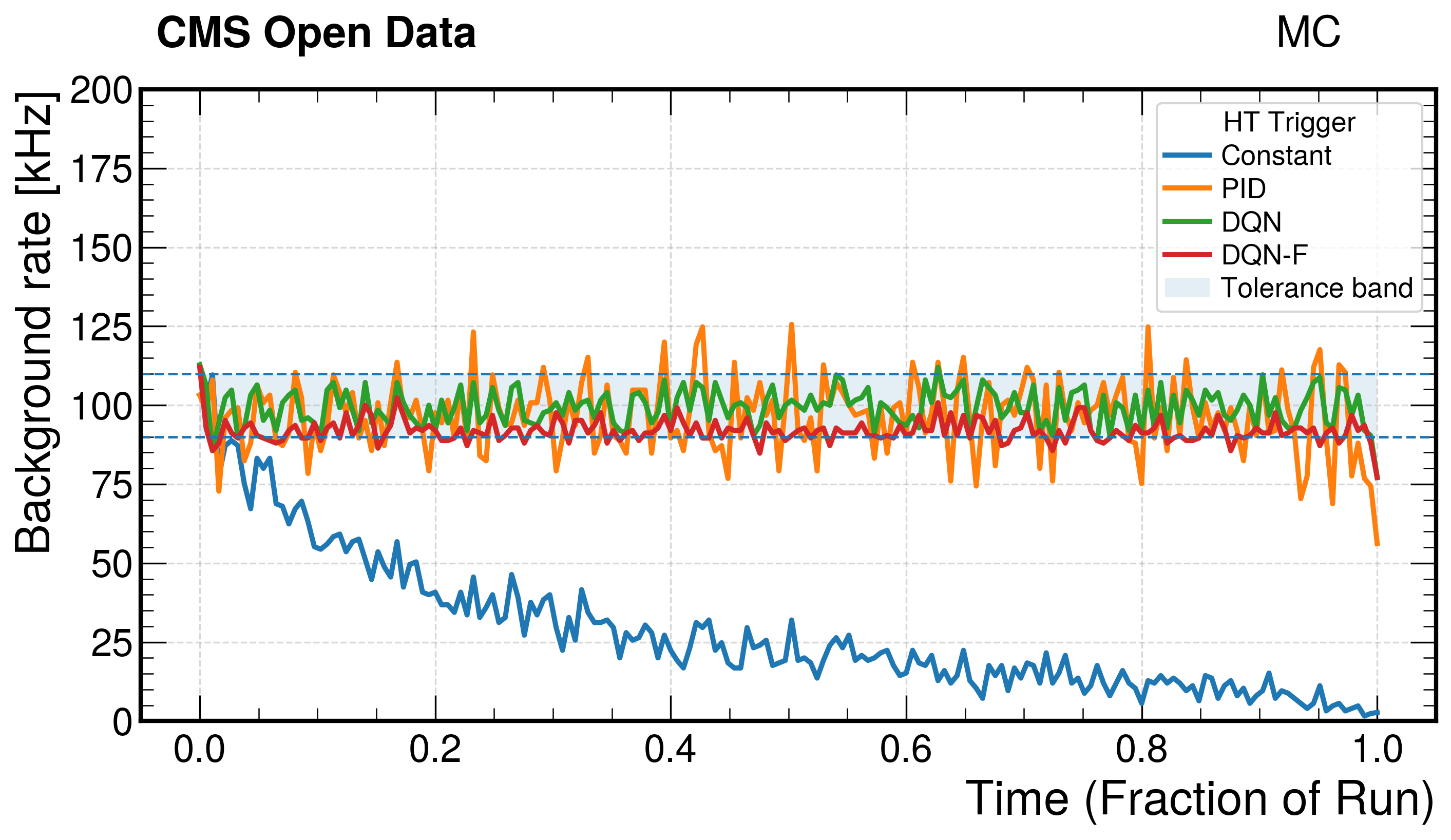}
        \caption{$H_{T}$ trigger}
        \label{fig:bkg_rate_pid_a}
    \end{subfigure}
    \begin{subfigure}{0.485\textwidth}
        \includegraphics[width=\linewidth]{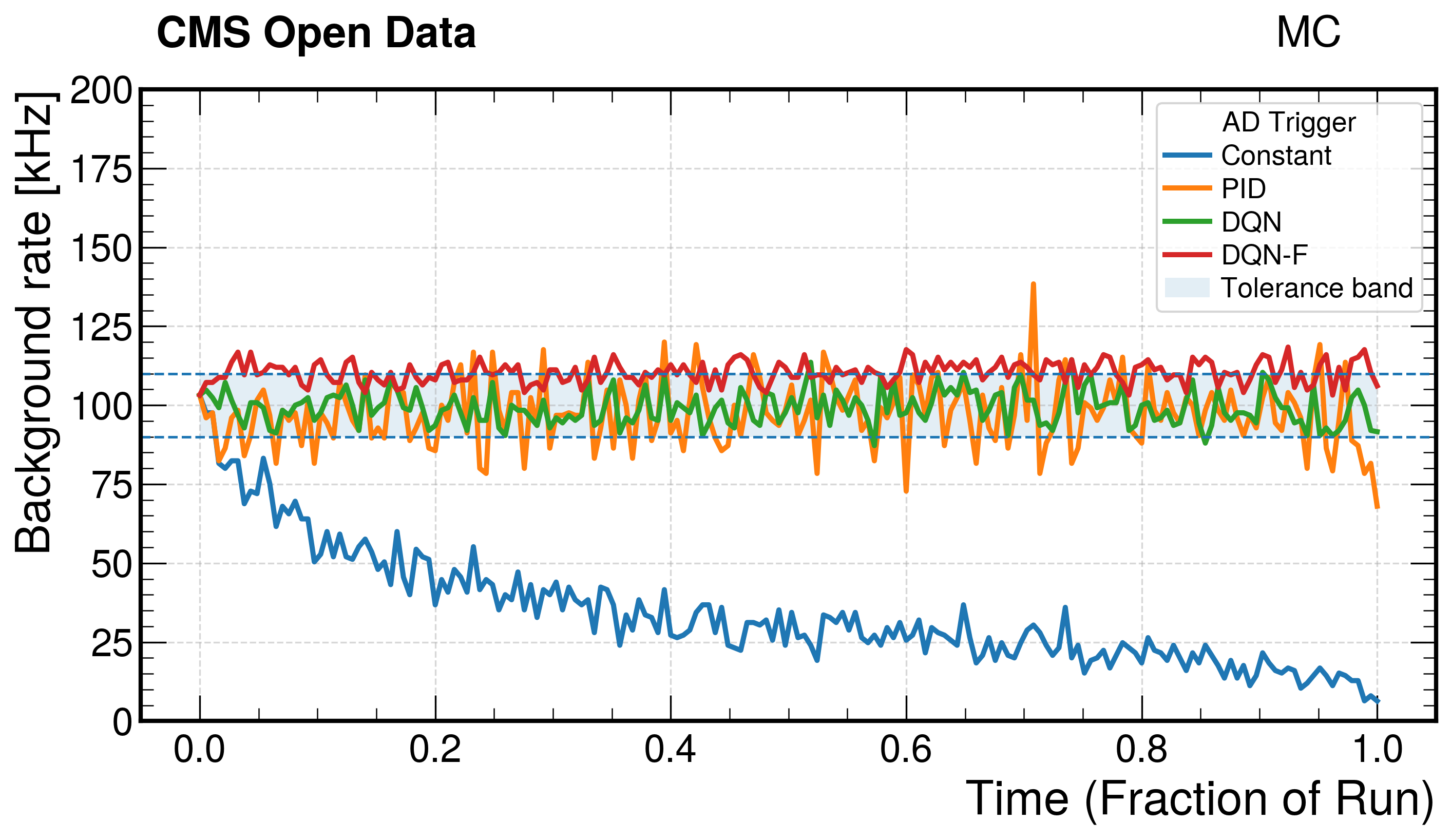}
        \caption{ AD   trigger}
        \label{fig:bkg_rate_pid_b}
    \end{subfigure}
    \caption{
        Background trigger rates under DQN for 
        (a) \HT\ and
        (b)  AD   trigger. 
        DQN stays in-band more frequently, reduces overshoot or undershoot and adapts faster in response to pileup changes.
    }
    \label{fig:bkg_rate_pid}
\end{figure*}

\subsection{GRPO MC training}
\label{appendix:mc_training}
Beyond vanilla DQN, we evaluate the GRPO family (GRPO, GFPO-F, 
GFPO-FR). Figures~\ref{fig:bkg_rate_5_methods_histogram} 
and~\ref{fig:bkg_rate_5_methods_main_paper} show the background-rate 
distribution and per-timestep trajectory for each controller on MC. 
GRPO, GFPO-F, and GFPO-FR concentrate near the $100$~kHz target in 
both views; Constant decays as pileup drifts, and PID oscillates 
with frequent excursions outside the band.

\begin{figure}[htbp]
    \centering
    \begin{subfigure}{0.485\textwidth}
        \includegraphics[width=\linewidth]{figures/singletriggger_five_methods_backup/mc_rate_histogram_HT.pdf}
        \caption{$H_{T}$ trigger}
        \label{fig:bkg_rate_five_histogram_ht}
    \end{subfigure}
    \begin{subfigure}{0.485\textwidth}
        \includegraphics[width=\linewidth]{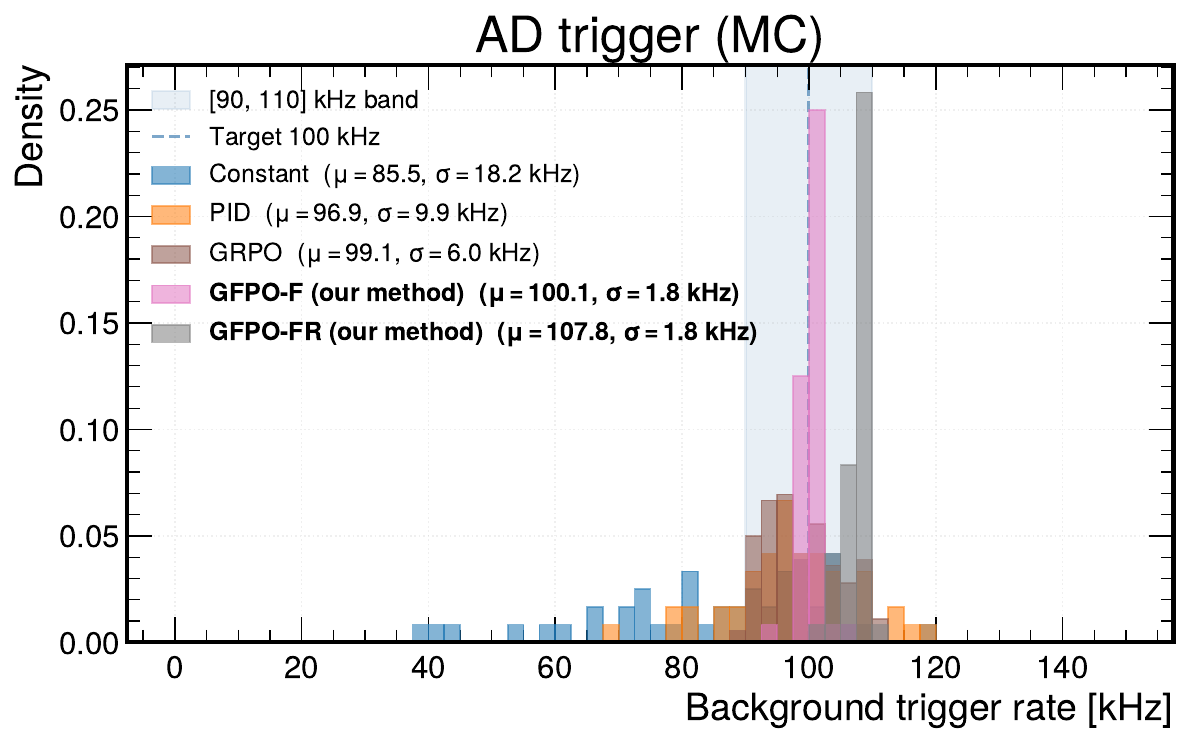}
        \caption{ AD   trigger}
        \label{fig:bkg_rate_five_histogram_ad}
    \end{subfigure}
    \caption{
        Background rates distribution for (a) \HT\ trigger, (b)  AD   trigger. Our methods (GFPO-F and GFPO-FR) have the highest in-band rate of all established baselines. \textbf{GFPO-FR (106.1 kHz) operates nearer the band edge for signal compared to GFPO (100.0 kHz).}
    }
    \label{fig:bkg_rate_5_methods_histogram}
\end{figure}

\begin{figure}[htbp]
    \centering
    \begin{subfigure}{0.485\textwidth}
        \includegraphics[width=\linewidth]{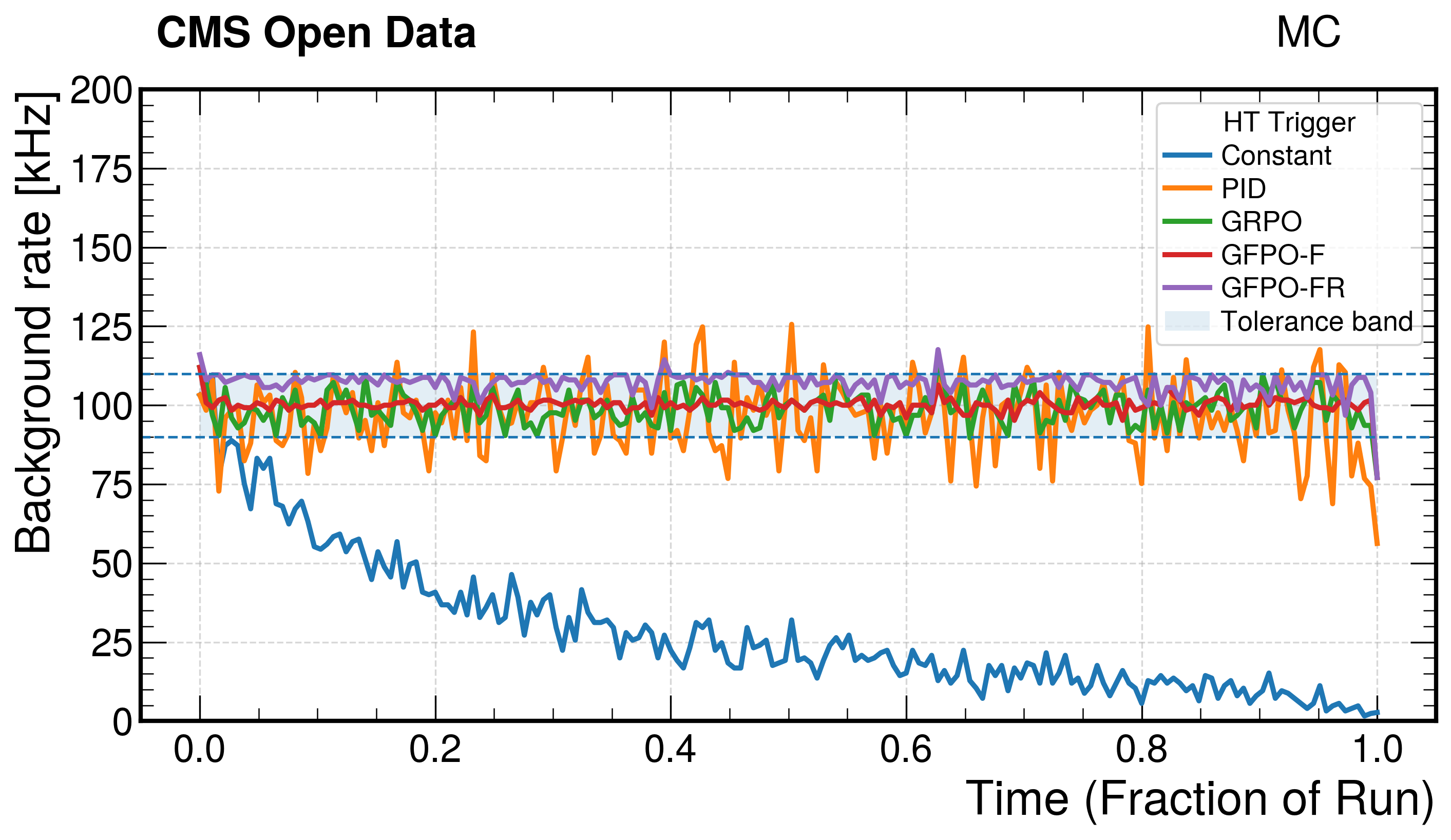}
        \caption{$H_{T}$ trigger}
        \label{fig:bkg_rate_five_methods_ht}
    \end{subfigure}
    \begin{subfigure}{0.485\textwidth}
        \includegraphics[width=\linewidth]{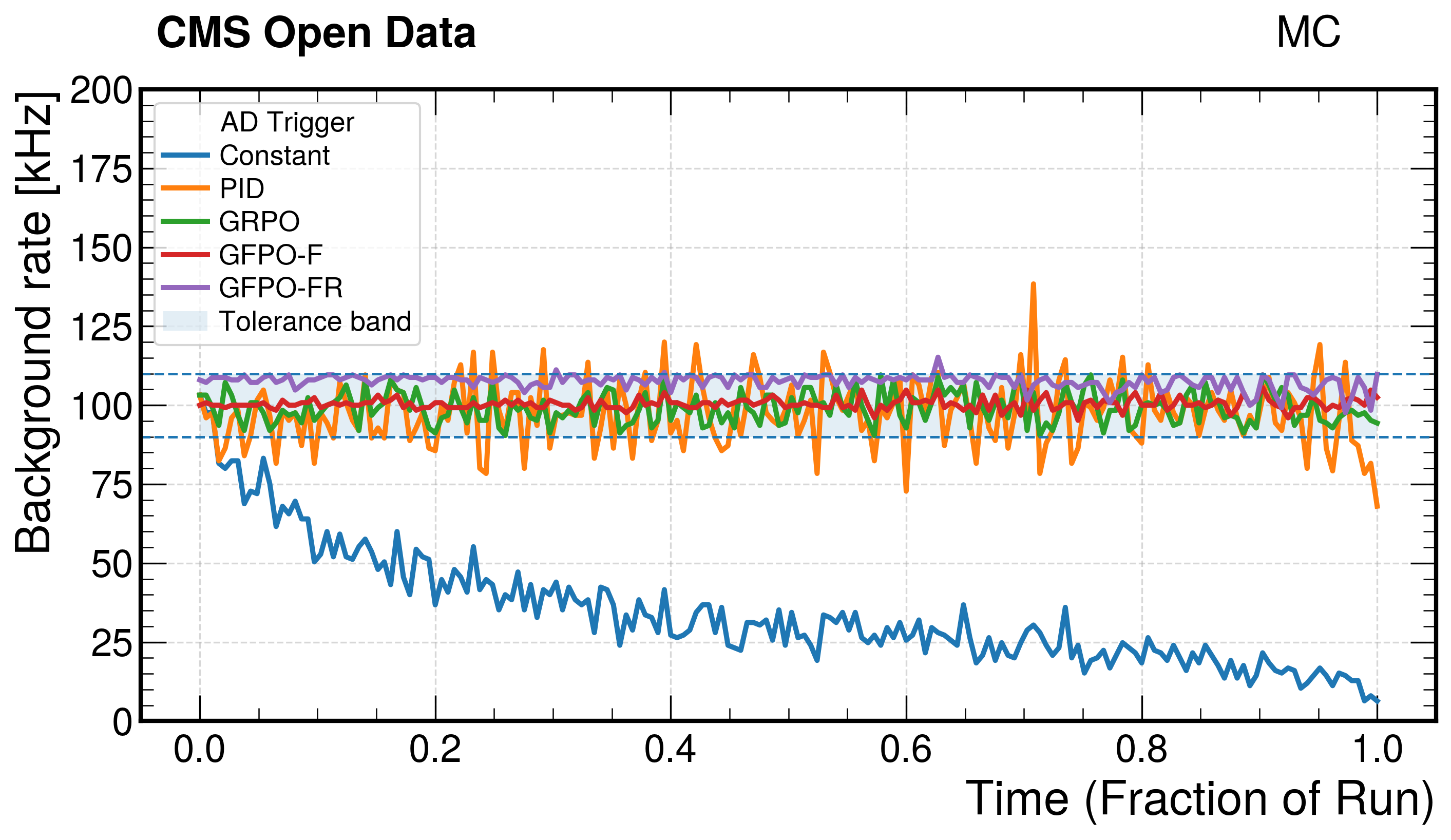}
        \caption{ AD   trigger}
        \label{fig:bkg_rate_five_methods_ad}
    \end{subfigure}
    \caption{
        Background rates under Constant Menu, PID loop, GRPO, GFPO-F and GFPO-FR (MC) for (a) \HT\ trigger, (b)  AD   trigger. RL-based agents stay within the band for most times compared with non-learning based methods. 
    }
    \label{fig:bkg_rate_5_methods_main_paper}
\end{figure}

\paragraph{Signal efficiency over time on MC.} \citet{emami2026selfdrivingtriggerlhcadaptive} showcases that adaptive thresholding recovers signal efficiency as a run progresses, while a
static menu steadily loses it. We move one step further and demonstrate RL methods have better signal efficiency acceptance over PID baselines. Figure~\ref{fig:signal_eff_over_time_5_methods_appendix}
plots per-chunk signal efficiency over the held-out 20\% MC run for the $H_T$ and
AD triggers, now including the two non-learning baselines, Constant and PID,
alongside the adaptive policies (3-seed average, centered window $W{=}5$; at
$W{=}\text{full}$ the curves reproduce the $\epsilon_{\mathrm{ov}}$ entries of
Table~\ref{tab:single_trigger_summary_compact}). The Constant threshold decays on
both triggers and both signals, since it cannot track the drifting held-out
distribution. The adaptive policies instead climb, re-centering the operating
point to hold the background rate on target while recovering signal. The effect is
largest on the rare, harder $h\to4b$ signal: $t\bar{t}$ efficiency is already
near-saturated on $H_T$, whereas the $h\to4b$ panels open a wide margin, with \textsc{GFPO-FR}
reaching roughly 48\% ($H_T$) and 40\% (AD) in the late chunks against roughly
24\% for Constant. The ordering
$\textsc{GFPO-FR}\gtrsim\textsc{GFPO-F}\approx\textsc{GRPO}>\textsc{PID}>\text{Constant}$
holds throughout, and efficiency rises gradually rather than in a single jump,
indicating \emph{sustained} and \emph{smooth} adaptation to the drift rather than a one-time correction.

\begin{figure}[htbp]
    \centering
    \begin{subfigure}{1.0\textwidth}
        \includegraphics[width=\linewidth]{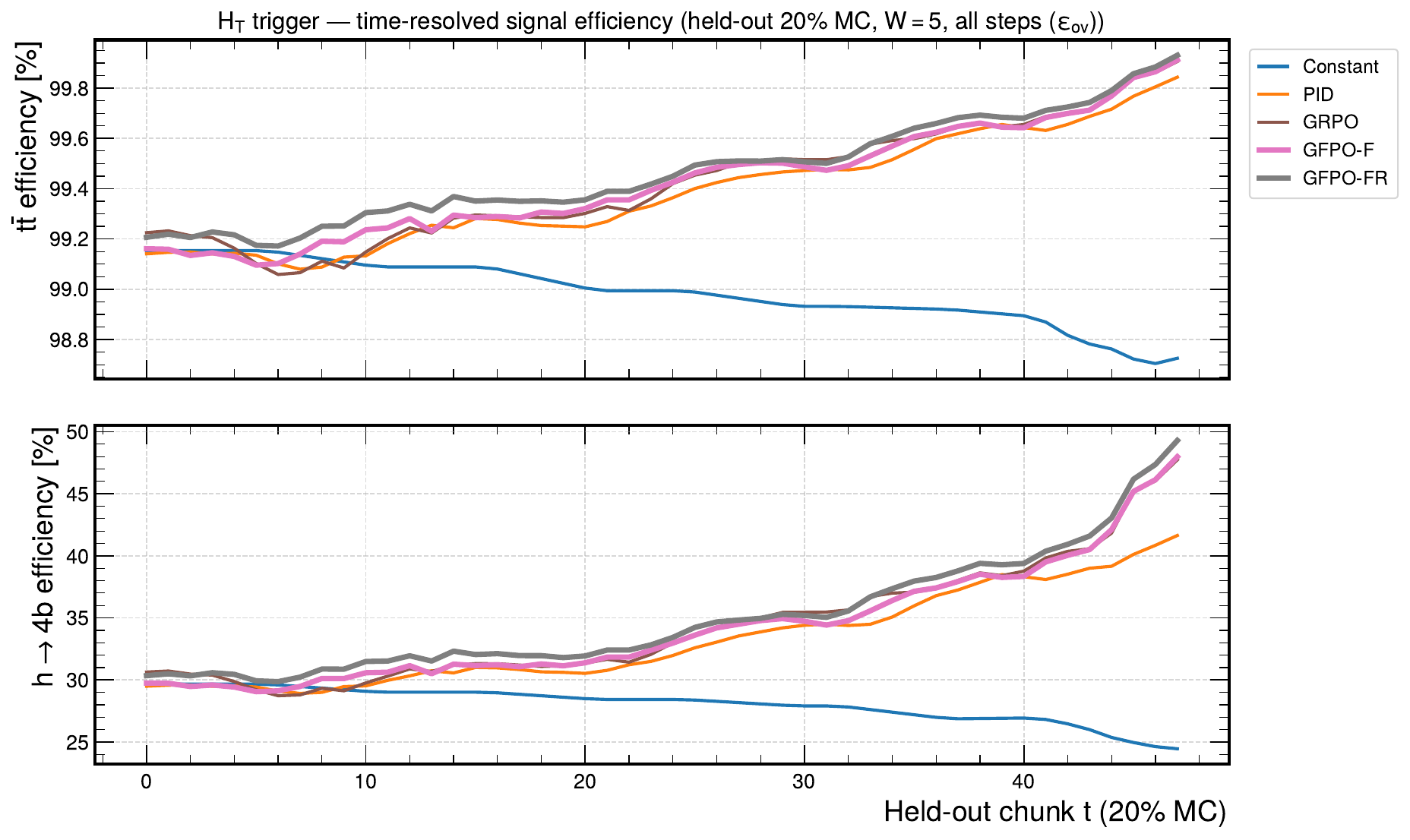}
        \caption{$H_{T}$ trigger}
        \label{fig:signal_eff_over_time_five_methods_ht}
    \end{subfigure}
    \begin{subfigure}{1.0\textwidth}
        \includegraphics[width=\linewidth]{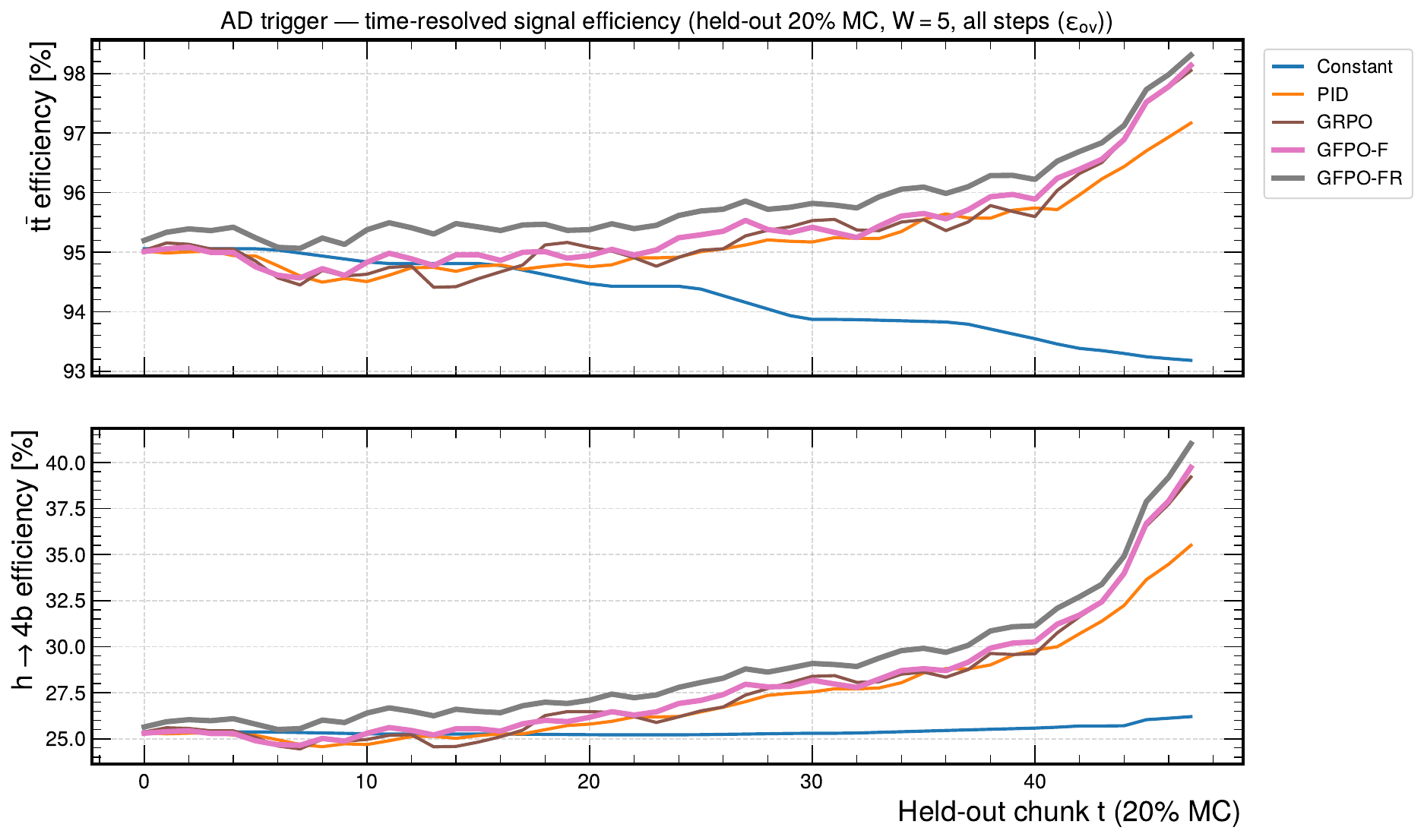}
        \caption{ AD   trigger}
        \label{fig:signal_eff_over_time_five_methods_ad}
    \end{subfigure}
    \caption{
        Signal efficiency over time under Constant Menu, PID loop, GRPO, GFPO-F and GFPO-FR (MC) for (a) \HT\ trigger, (b)  AD   trigger. RL-based agents have higher signal efficiency for both \ttbar and \haaFourB over time than PID and constant menus.
    }
    \label{fig:signal_eff_over_time_5_methods_appendix}
\end{figure}

\subsection{Sim-to-Real Transfer and Test Time Adaptation}
\label{appendix:real_data_adaptation}

Figure~\ref{fig:rate_sim_to_real} shows background rates for all methods under 
test-time adaptation on CMS Run 283408, where models are trained on MC samples 
and deployed on real CMS data. Notably, the results are nearly \textit{identical} to those in 
Figure~\ref{fig:realdata_test_time_training}, suggesting that test-time adaptation 
achieves performance on par with test-time training. Therefore, we answer the question 
raised in the main paper: test-time adaptation, training on MC simulations and 
deploying on real CMS collision data, is a preferable and practical alternative 
to test-time training.

\begin{figure*}[t]
    \centering
    \begin{subfigure}{0.8\textwidth}
        \includegraphics[width=\linewidth]{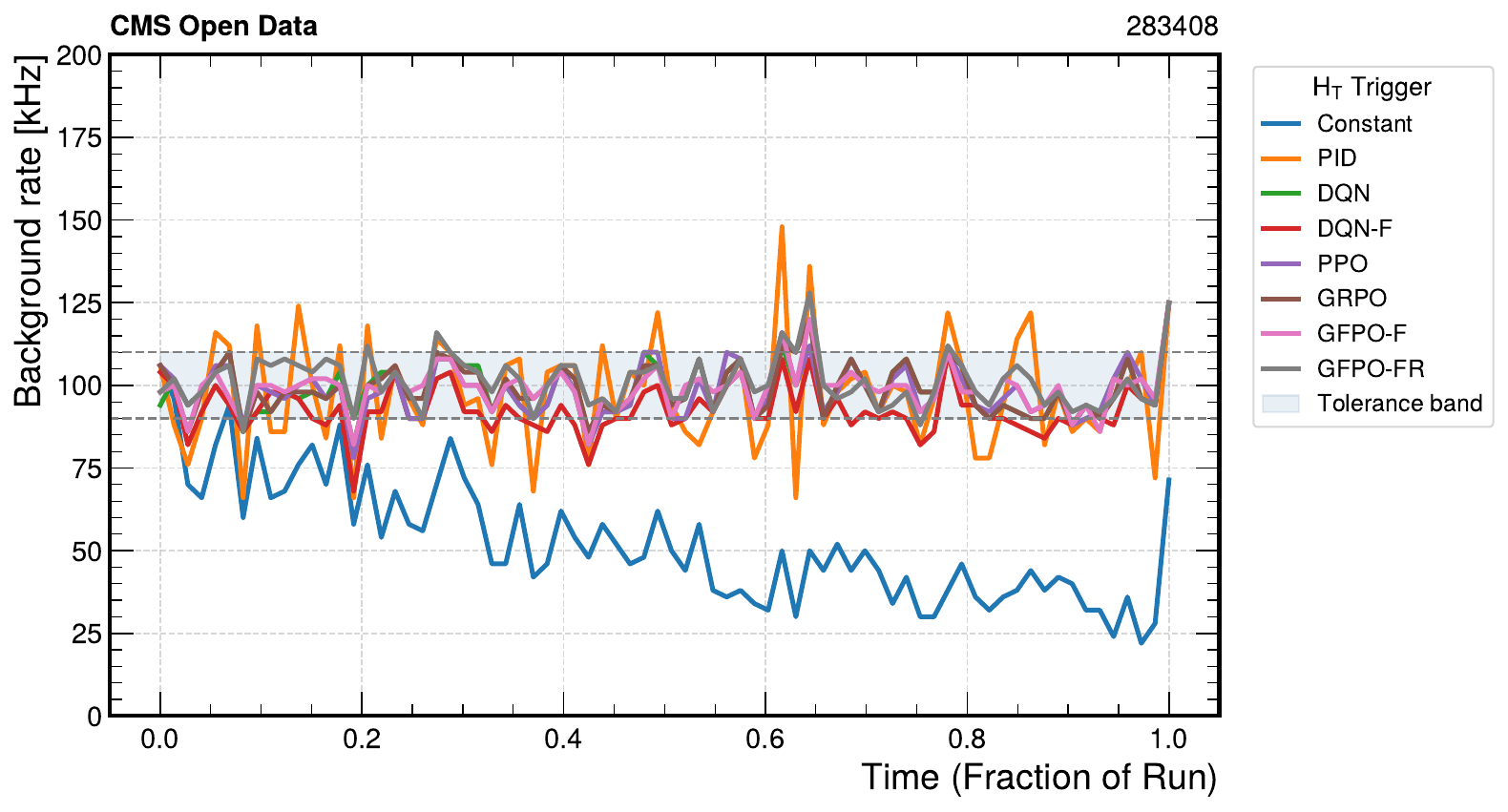}
        \caption{$H_{T}$ trigger}
        \label{fig:rate_ht_sim_to_real}
    \end{subfigure}
    \qquad
    \begin{subfigure}{0.8\textwidth}
        \includegraphics[width=\linewidth]{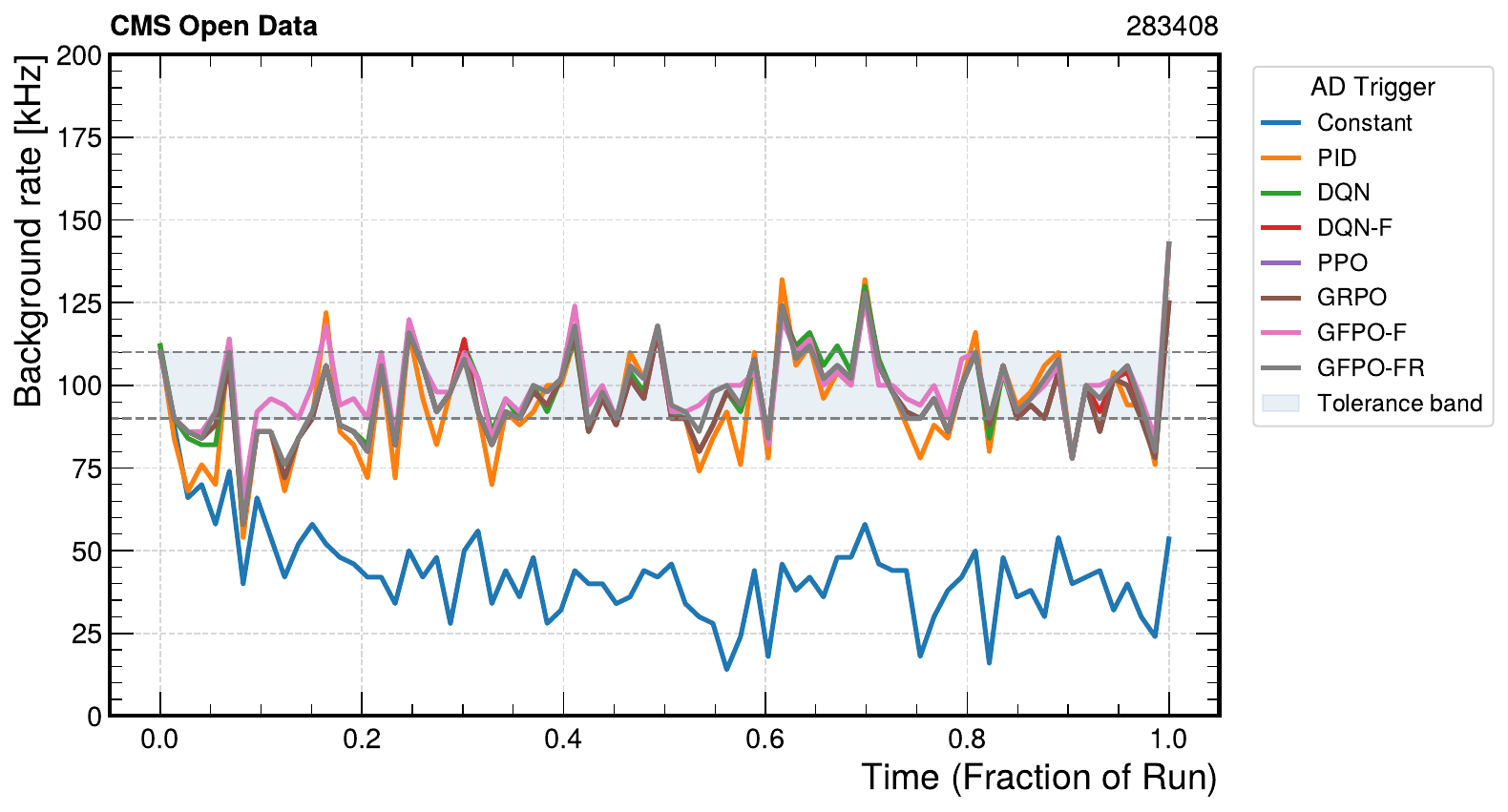}
        \caption{ AD   trigger}
        \label{fig:rate_ad_sim_to_real}
    \end{subfigure}
    \caption{
        \textbf{Background trigger rates under MC$\to$CMS transfer (Table~\ref{tab:single_trigger_summary_compact_realdata}).} 
        Per-step rates on CMS Run~283408 with policies trained on MC and frozen at deployment (no online updates), for the (a) $H_T$ trigger and (b) AD trigger.
    }
    \label{fig:rate_sim_to_real}
\end{figure*}

\subsection{Testing Time Training on Real CMS Data}
\label{appendix:real_data}
We include a study on test-time training over real CMS collision data to evaluate whether adaptive thresholding remains effective beyond the controlled assumptions of simulation. While Monte Carlo studies are useful for development, real detector streams can exhibit mismatches in feature distributions, background composition, detector response, and running conditions that are difficult to model perfectly (cf. Figure~\ref{fig:score_summary} and Figure~\ref{fig:score_summary_realdata})~\citep{gavrikov2025dinamo}. This concern is especially pronounced for anomaly-detection triggers, whose decisions are made through a threshold on a learned score distribution. Since CMS has already demonstrated anomaly-detection algorithms in realistic Level-1 trigger settings on live data~\citep{gandrakota2024real, CMS-DP-2024-059, zipper2024testing}, it is important to test whether an RL agent can adapt thresholds online when exposed to the variability of real data rather than only simulated samples. 

\paragraph{GFPO-F and GFPO-FR remain strongly competitive on real CMS collision data, achieving the best overall stability–efficiency tradeoff for both trigger settings.
}We present online training per time chunk in Table~\ref{tab:single_trigger_summary_compact_realdata_online}. Compared with PID, DQN, PPO, ADT, and GRPO, our proposed GFPO variants are more consistently strong across both control and physics metrics on real CMS data. Classical PID can still be competitive on isolated metrics, but its performance is less balanced overall. In contrast, GFPO-F and GFPO-FR repeatedly appear at or near the top across in-band occupancy and signal efficiency, indicating better robustness and adaptivity to real-time collision data variability. Figure~\ref{fig:realdata_test_time_training} presents the trajectory of background rates for all methods under test time training on CMS Run 283408 (i.e., directly learning an RL policy on real collision data). Remarkably, the frozen MC-trained rollout (Table~\ref{tab:single_trigger_summary_compact}) and the test-time training (Table~\ref{tab:single_trigger_summary_compact_realdata_online} and Figure~\ref{fig:realdata_test_time_training}) achieve nearly \textit{identical} performance on CMS data, InBand rate, and signal efficiency differ by less than 2 percentage points across all methods and triggers. This demonstrates that the online, per-chunk threshold adaptation mechanism (adjusting the cut $\threshold_t$ at each micro-step) is the primary driver of domain transfer, not continued weight updates. The policy network learned on MC generalizes well enough such that the action-level adaptation (selecting $\Delta \threshold$ to stay within the background rate tolerance band) compensates for the MC to CMS distribution shift without requiring gradient-based fine-tuning at deployment time.

\subsection{CPO degeneracy on the CMS dataset}
\label{app:cpo-degeneracy}

The CPO entries in Tables~\ref{tab:single_trigger_summary_compact_realdata}
and~\ref{tab:single_trigger_summary_compact_realdata_online} agree to
floating-point precision, with \emph{zero} per-seed standard deviation. This is
structural rather than a seeding artifact: the three trained checkpoints differ
in $\pi_\theta$, and every other RL baseline on the same protocol shows nonzero seed
spread. The cause is that CPO's executed trajectory is screened from the policy.

\paragraph{Setup.} At each micro-step CPO samples $G{=}16$ candidate actions
$a_k\sim\pi_\theta(\cdot\mid s)$ from a discrete set
$\mathcal{A}=\{-3,-1.5,0,1.5,3\}$ (AD) or $\{-2,-1,0,1,2\}$ ($H_T$) of action space size $|\mathcal{A}|=5$, evaluates each candidate's reward $R_k$ and cost $c_k$ in the
simulator (Eqs.~\ref{eq:cpo-reward}--\ref{eq:cpo-cost}), and executes
\begin{equation}
k^\star=
\begin{cases}
\arg\max_{k\,:\,c_k\le 10^{-9}}\, R_k & \text{if some } c_k\le 10^{-9},\\
\arg\min_{k}\, c_k                   & \text{otherwise},
\end{cases}
\label{eq:cpo-execute-app}
\end{equation}
followed by a safety-shield override $\delta=\pm\Delta_{\max}$ whenever
$|\bar r_{\text{bg}}-r_B^\star|>\tau$. We partition micro-steps by rate state.

\paragraph{The executed action is independent of $\pi_\theta$.}
\emph{Out of band} ($|\bar r_{\text{bg}}-r_B^\star|>\tau$), the shield overrides
the selection with $\delta=\pm\Delta_{\max}$, a function of
$(\bar r_{\text{bg}}, r_B^\star, \tau, \Delta_{\max})$ alone, independent of the
sampled group and of $\pi_\theta$. \emph{In band}, both selection branches of
Eq.~\ref{eq:cpo-execute-app} rank candidates by their values $(R_k, c_k)$, and
$R_k$ and $c_k$ are functions of the candidate cut $\delta_k$ and the data
through the simulator, not of $\pi_\theta$. (Because the in-band cost is $c=e^2$ with
$e=|\bar r_{\text{bg}}-r_B^\star|/\tau$, the tolerance $c_k\le 10^{-9}$ demands
$e_k\approx 0$, which discrete cuts essentially never meet, so the otherwise-branch
$k^\star=\arg\min_k c_k$ is the operative one.) The policy enters only by determining which $\delta$ appear in
the sample. The operative $\delta$ is absent with probability at most
$(4/5)^{16}\approx 0.03$ under a uniform policy, and far less once the trained
policy concentrates its mass, so the in-sample optimizer coincides with the
full-action-set optimizer. The executed $\delta$ therefore depends only on the
data and the current cut, and the current cut is itself a function of past
executed actions, so the rollout is a deterministic function of the data. We
confirm this directly: the per-micro-step executed-$\delta$ sequence is
bit-identical across the three CMS seeds despite the differing $\theta$.

\paragraph{Test-time training does not help.} 
Under test-time training the recovery step~\citep[Eq.~14]{achiam2017constrained}
dominates, because the batch estimate exceeds the budget, $\hat J_C > d$
(equivalently $\hat c > 0$, Eq.~\ref{eq:constraint}), on roughly half of the CMS update chunks (consistent with the $\sim\!0.54$ in-band fraction for AD trigger in
Table~\ref{tab:single_trigger_summary_compact_realdata}). The recovery branch
fires when $C>0$ and the trust region cannot absorb the violation,
$2\delta_{\mathrm{KL}}-c^2/s<0$, and its direction $\theta'-\theta\propto
-H^{-1}b$ moves $\theta$ purely along the cost gradient. But neither the recovery
step nor an ordinary update alters the $(\delta_k, R_k, c_k)$ values that
Eq.~\eqref{eq:cpo-execute-app} consults, and the selection screens $\theta$ as
above. The executed trajectory, and every metric derived from it, is therefore
identical to the frozen rollout.

\begin{figure*}[t]
    \centering
    \begin{subfigure}{0.8\textwidth}
        \includegraphics[width=\linewidth]{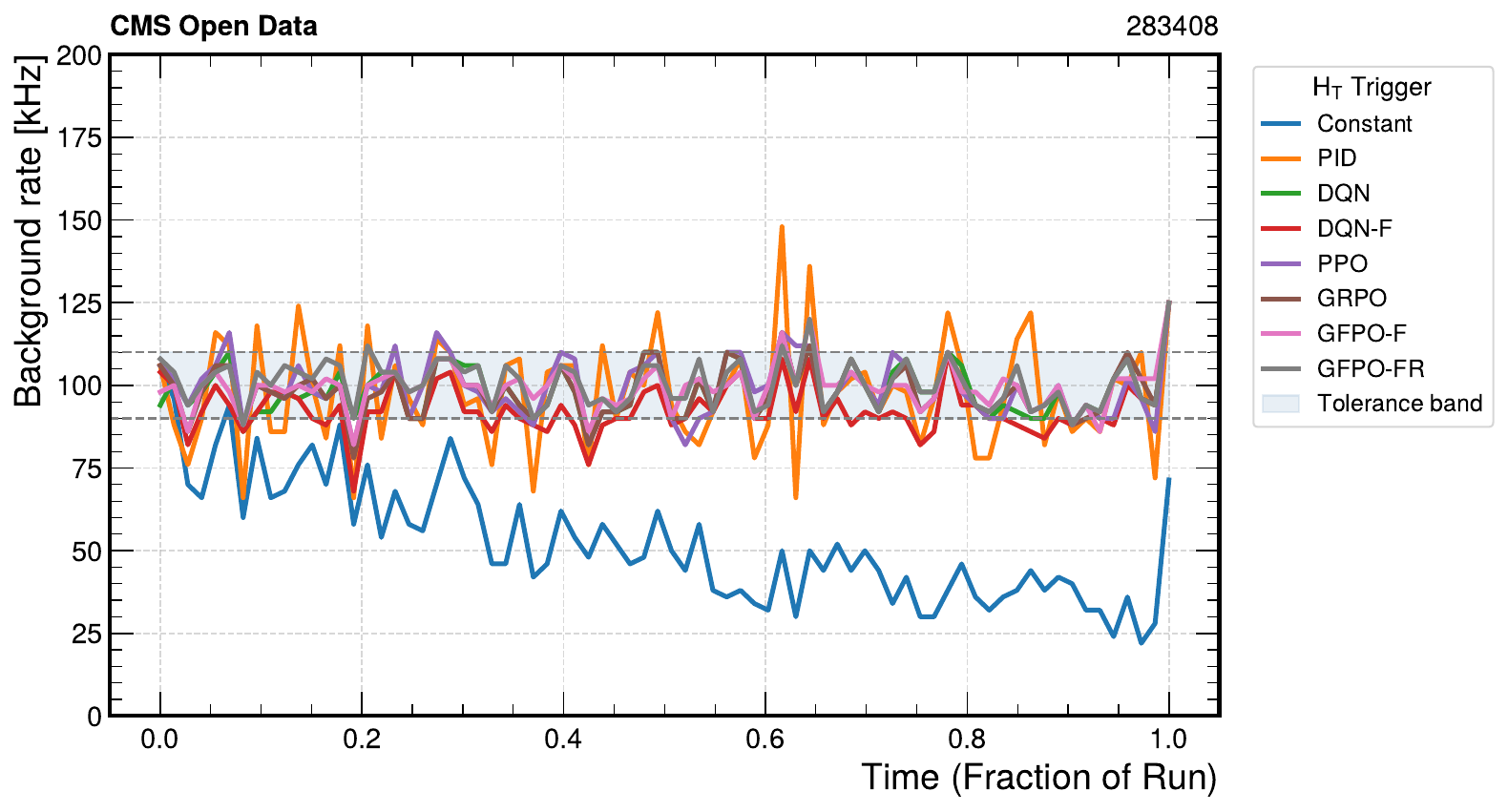}
        \caption{$H_{T}$ trigger}
    \end{subfigure}
    \qquad
    \begin{subfigure}{0.8\textwidth}
        \includegraphics[width=\linewidth]{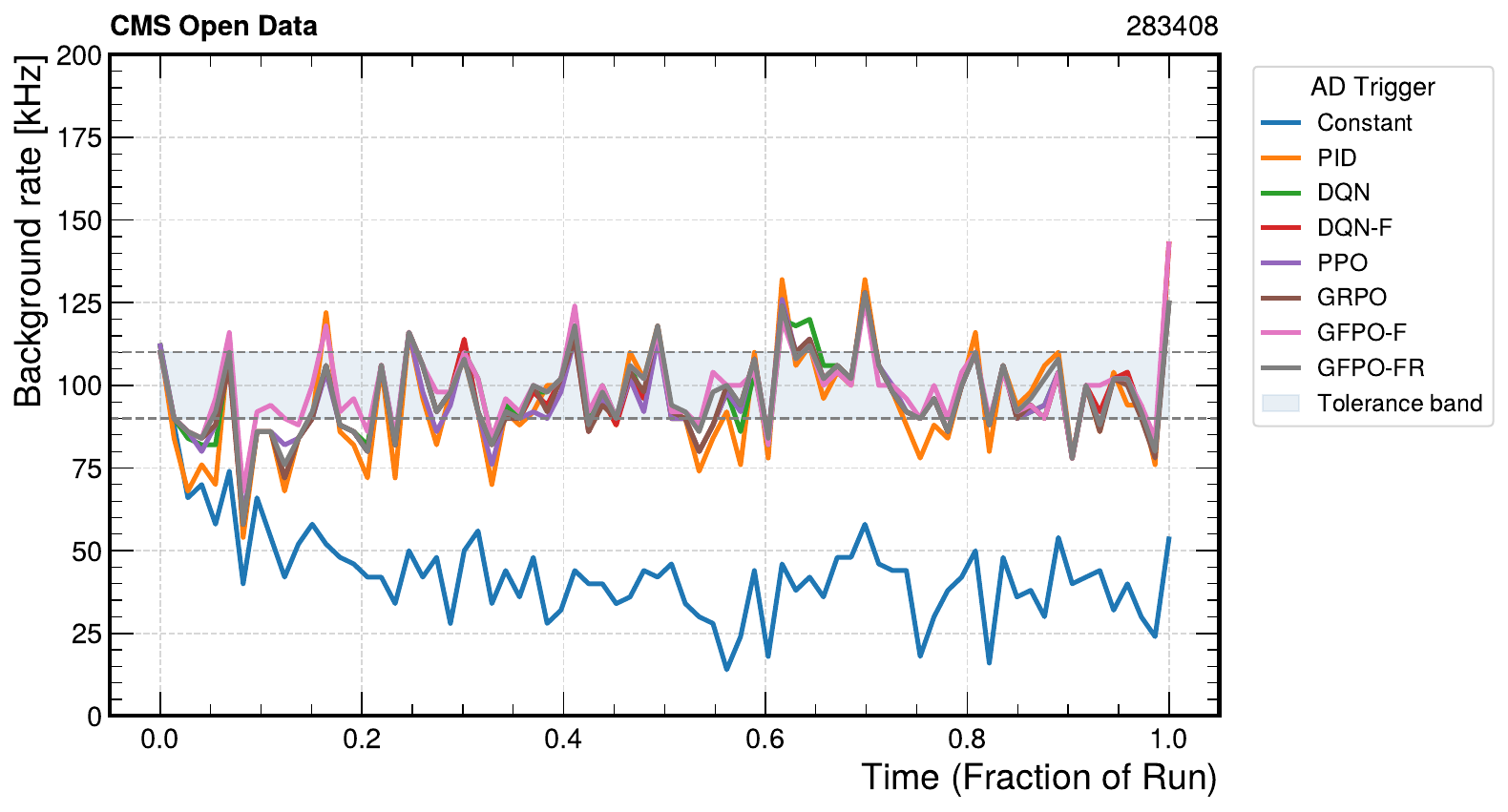}
        \caption{AD trigger}
    \end{subfigure}
    \caption{
        \textbf{Background trigger rates under MC$\to$CMS transfer 
        with test-time training (Table~\ref{tab:single_trigger_summary_compact_realdata_online}).} Per-step rates on CMS Run~283408 with 
        policies trained on MC and updated online during the run on real collision data (one 
        gradient step per chunk), for the (a) $H_T$ trigger and (b) AD trigger.
    }
    \label{fig:realdata_test_time_training}
\end{figure*}

\begin{table*}[t]
\centering
\scriptsize
\setlength{\tabcolsep}{3.3pt} 
\renewcommand{\arraystretch}{1.15}
\sisetup{
  detect-weight=true,
  detect-family=true,
  table-number-alignment=center,
  round-mode=places,
  round-precision=3
}
\caption{\textbf{Single-trigger control on CMS Run 283408 (test-time training).} The setup is the same as Table~\ref{tab:single_trigger_summary_compact} with same CMS dataset, except each RL policy is continually fine-tuned on streaming time chunks rather than frozen at deployment (test-time policy updates). Bold and underline mark the best and second-best per column; standard deviations are shown in parentheses. 
}
\label{tab:single_trigger_summary_compact_realdata_online}
\begin{tabular}{llccccccc
               }
\toprule
Trigger & Method &
{MAE$\downarrow$} & {P95$|e|$ $\downarrow$} &
{InBand $\uparrow$} & 
{$\epsilon_{\text{overall}}^{\ttbarraw} \uparrow$} & {$\epsilon_{\text{overall}}^{\haaFourB} \uparrow$} &
{$\epsilon_{\text{inband}}^{\ttbarraw} \uparrow$} & {$\epsilon_{\text{inband}}^{\haaFourB} \uparrow$} \\
\midrule
\multicolumn{9}{l}{\textbf{$H_{T}$ trigger}}\\
$H_{T}$ & Constant & 0.118(0)  & 0.175(0) & 0.041(0) & 91.325(0) & 22.365(0) & 95.842(0) & 15.198(0) \\
$H_{T}$ & PID \SEpaper      & \textbf{0.003}(0) & 0.085(0) & 0.432(0) & 97.381(0) & \underline{33.347(0)} & 97.50(0) & \textbf{35.242}(0) \\
$H_{T}$ & DSPOT \citep{siffer2017anomaly}  & 0.080(0) & 0.157(0) & 0.176(0) & 93.998(0) & 28.200(0)  & 96.910(0) & 33.278(0) \\
$H_{T}$ & DQN \citep{mnih2015human}     & 0.018(0) & 0.037(0.003) & 0.824(0.01) & 97.510(0.09) & 33.212(0.10) & 97.420(0.09) & 33.40(0.19) \\
$H_{T}$ & DQN-F     & 0.118(0) & 0.175(0) & 0.04(0) & 91.33(0) & 22.36(0) & 95.84(0) & 15.20(0) \\
$H_{T}$ & PPO \citep{schulman2017proximal} & 0.016(0.004) & 0.035(0.012) & 0.824(0.14) & 97.491(0.19) & 33.26(0.21) & 97.413(0) & 32.32(1.01) \\
$H_{T}$ & ADT \citep{yang2024adt}    &
0.018(0.002) & 0.042(0.011) &
0.811(0.04) & 97.460(0.10) & 33.26(0.11) & 97.334(0.02) & 33.250(0.42)
\\
$H_{T}$ & GRPO \citep{shao2024deepseekmath}     &
0.016(0.002) & 0.037(0.008) &
0.824(0.05) & 97.491(0.09) & 33.222(0.10) & 97.358(0.14) & 32.940(0.83)
\\
$H_{T}$ & L-GRPO & 0.016(0.001) & 0.033(0) & 0.94(0.01) & 97.486(0.04) & 33.216(0.08) & 97.341(0.04) & 32.533(0.15) \\
$H_{T}$ & CPO \citep{achiam2017constrained} & 0.014(0) & 0.025(0) & 0.905(0) & 97.498(0) & 33.139(0) & 97.392(0) & 32.901(0) \\
\rowcolor{highlight}
$H_{T}$ & GFPO-F (Our method)  
&
\underline{0.007(0)} & \textbf{0.021(0.001)} &
\textbf{0.98(0.01)} & \underline{97.61(0)} & 33.32(0) & \underline{97.57(0.01)} & 33.16(0.07) \\
\rowcolor{highlightDark}
$H_{T}$ & GFPO-FR (Our method)
&
0.015(0) & \underline{0.022(0)} &
\underline{0.95(0)} & \textbf{97.97(0)} & \textbf{33.49(0)} & \textbf{97.98(0)} & \underline{33.59(0)} \\
\midrule
\multicolumn{9}{l}{\textbf{AD trigger}} \\
AD & Constant & 0.142(0) & 0.195(0) & 0.00(0) & 62.548(0) & 14.399(0) & NA & NA \\
AD & PID \citep{emami2026selfdrivingtriggerlhcadaptive} & 0.035(0) & 0.080(0) & 0.405(0) & 75.053(0) & 39.191(0) & 76.210(0) & \underline{44.573(0)} \\
AD & DSPOT ~\citep{siffer2017anomaly} & 0.104(0) & 0.180(0) & 0.135(0) & 67.843(0) & 27.225(0) & \textbf{80.979(0)} & \textbf{56.686(0)} \\
AD & DQN \citep{mnih2015human} & \underline{0.027(0)} & \underline{0.060(0)} & 0.527(0.01) & 74.971(0.07) & 39.66(0.23) & 75.743(0.21) & 43.621(0.71) \\
AD & DQN-F & 0.142(0) & 0.195(0) & 0(0) & 62.548(0) & 14.399(0) & NA & NA \\
AD & PPO \citep{schulman2017proximal} & \underline{0.027(0)} & 0.061(0) & 0.54(0) & 75.06(0) & 39.70(0) & 75.600(0) & 43.150(0) \\
AD & ADT \citep{yang2024adt}     & \underline{0.027(0.001)} & 0.063(0.002) & 0.52(0.03) & 75.011(0.31) & 39.510(0.20) & 75.614(0.67) & 42.930(0.92) \\
AD & GRPO \citep{shao2024deepseekmath} & \underline{0.027(0.001)} & 0.061(0.004) & 0.541(0.01) & 75.059(0.11) & 39.700(0.06) & 75.601(0.10) & 43.153(0.41) \\
AD & L-GRPO & \underline{0.027(0.001)} & 0.061(0.002) & 0.541(0.01) & 75.059(0.06) & 39.68(0.04) & 75.597(0.09) & 43.520(0.46) \\
AD & CPO \citep{achiam2017constrained} & 0.027(0.000) & 0.061(0) & 0.541(0) & 75.059(0) & 39.703(0) & 75.60(0) & 43.15(0) \\
\rowcolor{highlight}
AD & GFPO-F (Our method) 
& \textbf{0.021(0.001)} & \textbf{0.053(0.003)} & \textbf{0.676(0.02)} & \textbf{75.640(0.19)} & \textbf{40.370(0.10)} 
& 76.183(0.02) & 42.910(0.68) \\
\rowcolor{highlightDark}
AD & GFPO-FR (Our method) 
& \underline{0.027(0)} & \underline{0.060(0)} & \underline{0.595(0)} & \underline{75.580(0)} & \underline{40.240(0)} & \underline{76.397(0)} & 44.163(0) \\
\bottomrule
\end{tabular}
\end{table*}

%% file: Appendix/Autoencoder_ablation_study.tex
\newpage
\section{Ablation study on noisy anomaly scores}
\label{appendix:ablation_anomaly_score}
In HEP anomaly detection, a long line of work has explicitly highlighted the sensitivity of reconstruction-based methods to hyperparameters and score design, while more recent studies show that the geometry and topology of the latent space can materially change anomaly-separation performance~\citep{ngairangbam2025enhancing, gandrakota2024real, zipper2024testing}. Since LHC anomaly triggers operate by thresholding such scores in a real-time setting, understanding how the latent dimension affects score quality is necessary before treating any threshold as reliable.

\paragraph{Latent dimension of the autoencoder training for AD trigger.}
We ablate the autoencoder bottleneck size to quantify how representation capacity and the resulting variability in reconstruction scores affect downstream threshold control. While the main paper uses $d=2$, we retrain the autoencoder on MC with larger latents $d\in\{4,6,8,10,12,14,16\}$. To reflect realistic deployment, we train on one MinBias sample and apply the model to a disjoint MinBias dataset, introducing the distribution shift expected between offline training and online operation.

Table~\ref{tab:single_trigger_summary_latent_dims} shows that enlarging $d$ changes the score distribution, yet the qualitative controller behavior is stable. The PID baseline is consistently less reliable due to \textit{fixed} and \textit{predefined} coefficients, with substantially lower InBand fractions (about $0.30$--$0.46$ with $d \in \{4, 6, 8\}$), suggesting sensitivity to score calibration under latent dimension change. These observations are expected, as the parameter for the AD trigger in \SEpaper\ is calibrated on a single MinBias sample with latent dimension $d = 2$ and lacks \emph{generalizability} to other latent dimensions. In contrast, RL-based controllers (DQN/GRPO/GFPO-F/GFPO-FR) achieve high feasibility across all $d$ (InBand $\approx 0.76$--$1.00$) with low MAE and P95$|e|$, suggesting that closed-loop adaptation is weakly dependent on precise anomaly-score calibration and is \emph{generalizable} across different autoencoder latent dimensions~\citep{gandrakota2024real}.

Among RL variants, the filtered objectives expose the expected feasibility--efficiency trade-off: feasibility-first filtering (GFPO-F) achieves near-perfect background constraint satisfaction, while relaxed filtering (GFPO-FR) shall improve conditional signal efficiency in some settings at the cost of more violations. Overall, these results resonate our main paper's findings with $d = 2$. That is, our RL controllers primarily leverage rate feedback over windows rather than absolute score calibration, and therefore remain robust to moderate changes in autoencoder capacity and score quality, consistent with prior dynamic-thresholding observations \citep{yang_adt_github}.

\begin{table*}[t]
  \centering
  \caption{Single-trigger control (MC) across latent dimensions $d\in\{4,6,8,10,12,14,16\}$ for AD trigger (MC). Rates are in
  percent units with target $r^*=0.25\%$ and tolerance $\pm 0.025\%$. Standard deviations are omitted because they are negligible (near zero) across all entries.
  }
  \label{tab:single_trigger_summary_latent_dims}
  \tiny
  \setlength{\tabcolsep}{3.5pt}
  \renewcommand{\arraystretch}{0.95}
  \resizebox{\linewidth}{!}{%
  \begin{tabular}{c l rrr rrrr}
  \hline
  Latent dim & Method
  & MAE$\downarrow$ & P95$|e|$$\downarrow$ & InBand$\uparrow$
  & $\epsilon_{\text{overall}}^{\ttbarraw}\uparrow$ & $\epsilon_{\text{overall}}^{\haaFourB}\uparrow$
  & $\epsilon_{\text{inband}}^{\ttbarraw}\uparrow$ & $\epsilon_{\text{inband}}^{\haaFourB}\uparrow$ \\
  \hline

  \multirow{8}{*}{$d=4$}
  & Constant  & 0.164 & 0.216 & 0.016 & 88.084 & 13.415 & 94.922 & 19.155 \\
  & PID       & 0.035 & 0.072 & 0.301 & 97.918 & 25.993 & 98.086 & 26.277 \\
  & DQN       & 0.011 & 0.024 & \underline{0.984} & 98.002 & 26.411 & 97.999 & 26.323 \\
  & ADT       & 0.021 & 0.054 & 0.554 & 98.040 & 26.475 & 98.010 & 26.164 \\
  & PPO       & 0.016 & 0.030 & 0.860 & \underline{98.105} & \underline{27.057} & 98.085 & \underline{26.837} \\
  & GRPO      & 0.011 & \underline{0.022} & \textbf{1.000} & 98.012 & 26.435 & 98.039 & 26.384 \\
  \rowcolor{highlight}
  & GFPO-F    & \textbf{0.003} & \textbf{0.008} & \textbf{1.000} & 98.095 & 26.718 & \underline{98.111} & 26.768 \\
  \rowcolor{highlightDark}
  & GFPO-FR   & 0.020 & 0.033 & 0.849 & \textbf{98.260} & \textbf{27.486} & \textbf{98.343} & \textbf{27.721} \\
  \hline

  \multirow{8}{*}{$d=6$}
  & Constant  & 0.130 & 0.202 & 0.032 & 80.586 & 14.815 & 86.915 & 16.584 \\
  & PID       & 0.024 & 0.057 & 0.456 & 91.858 & 22.049 & 91.623 & 21.602 \\
  & DQN       & \underline{0.015} & \underline{0.033} & \underline{0.876} & 92.202 & 22.509 & 92.139 & 22.266 \\
  & ADT       & 0.019 & 0.048 & 0.608 & 92.195 & 22.445 & 92.083 & 22.051 \\
  & PPO       & \underline{0.015} & 0.034 & 0.833 & 92.209 & 22.519 & 92.205 & 22.361 \\
  & GRPO      & \underline{0.015} & \textbf{0.032} & 0.871 & 92.231 & 22.526 & 92.114 & 22.194 \\
  \rowcolor{highlight}
  & GFPO-F    & \textbf{0.010} & \underline{0.033} & \textbf{0.882} & \underline{92.718} & \underline{23.100} & \underline{92.735}
   & \underline{23.082} \\
   \rowcolor{highlightDark}
  & GFPO-FR   & 0.022 & 0.044 & 0.737 & \textbf{93.237} & \textbf{23.709} & \textbf{93.407} & \textbf{23.992} \\
  \hline

  \multirow{8}{*}{$d=8$}
  & Constant  & 0.159 & 0.229 & 0.032 & 92.084 & 15.554 & 95.235 & 19.186 \\
  & PID       & 0.032 & 0.067 & 0.387 & 97.861 & 25.439 & 97.893 & 25.050 \\
  & DQN       & 0.018 & 0.040 & 0.763 & 98.097 & 26.472 & 98.086 & 26.210 \\
  & ADT       & \underline{0.018} & 0.042 & 0.634 & 98.101 & 26.348 & 98.039 & 25.846 \\
  & PPO       & 0.021 & 0.048 & 0.694 & 98.061 & 26.145 & 97.967 & 25.571 \\
  & GRPO      & 0.018 & \textbf{0.038} & \underline{0.780} & 98.108 & 26.509 & 98.078 & 26.191 \\
  \rowcolor{highlight}
  & GFPO-F    & \textbf{0.014} & \underline{0.039} & \textbf{0.839} & \underline{98.278} & \underline{27.287} & \underline{98.221}
   & \underline{26.908} \\
   \rowcolor{highlightDark}
  & GFPO-FR   & 0.022 & 0.044 & 0.667 & \textbf{98.380} & \textbf{27.825} & \textbf{98.307} & \textbf{27.329} \\
  \hline

  \multirow{8}{*}{$d=10$}
  & Constant  & 0.205 & 0.246 & 0.016 & 79.062 & 9.362 & 91.594 & 16.695 \\
  & PID       & 0.022 & 0.053 & 0.651 & 96.369 & 23.794 & \underline{96.900} & \underline{25.151} \\
  & DQN       & 0.015 & 0.030 & \underline{0.909} & 96.542 & 24.167 & 96.679 & 24.478 \\
  & ADT       & 0.016 & 0.042 & 0.677 & 96.566 & 24.301 & 96.649 & 24.453 \\
  & PPO       & 0.020 & 0.038 & 0.737 & 96.398 & 23.619 & 96.404 & 23.574 \\
  & GRPO      & \underline{0.014} & \underline{0.028} & 0.903 & 96.547 & 24.153 & 96.701 & 24.531 \\
  \rowcolor{highlight}
  & GFPO-F    & \textbf{0.007} & \textbf{0.022} & \textbf{0.952} & \underline{96.632} & \underline{24.394} & 96.705 & 24.568 \\
  \rowcolor{highlightDark}
  & GFPO-FR   & 0.019 & 0.030 & 0.887 & \textbf{96.854} & \textbf{25.036} & \textbf{97.031} & \textbf{25.534} \\
  \hline

  \multirow{8}{*}{$d=12$}
  & Constant  & 0.182 & 0.236 & 0.016 & 84.400 & 12.263 & 93.426 & 18.335 \\
  & PID       & 0.016 & 0.040 & 0.758 & 96.848 & 24.629 & 96.826 & 24.311 \\
  & DQN       & \underline{0.013} & \underline{0.024} & \underline{0.952} & 96.931 & 24.891 & 96.991 & 25.048 \\
  & ADT       & \underline{0.013} & 0.032 & 0.812 & 96.906 & 24.737 & 96.947 & 24.909 \\
  & PPO       & 0.020 & 0.031 & 0.715 & 96.757 & 24.287 & 96.671 & 23.829 \\
  & GRPO      & \underline{0.013} & 0.026 & 0.946 & 96.946 & 24.920 & 96.952 & 24.882 \\
  \rowcolor{highlight}
  & GFPO-F    & \textbf{0.005} & \textbf{0.018} & \textbf{0.984} & \underline{97.056} & \underline{25.264} & \underline{97.069} &
  \underline{25.313} \\
  \rowcolor{highlightDark}
  & GFPO-FR   & 0.020 & 0.030 & 0.876 & \textbf{97.299} & \textbf{26.055} & \textbf{97.321} & \textbf{26.157} \\
  \hline

  \multirow{8}{*}{$d=14$}
  & Constant  & 0.159 & 0.230 & 0.043 & 90.830 & 15.606 & 94.393 & 19.180 \\
  & PID       & 0.014 & 0.036 & 0.839 & 97.310 & 26.116 & 97.299 & 25.943 \\
  & DQN       & \underline{0.012} & \underline{0.024} & \underline{0.973} & 97.357 & 26.374 & 97.378 & 26.361 \\
  & ADT       & 0.014 & 0.034 & 0.774 & 97.365 & 26.473 & 97.299 & 26.237 \\
  & PPO       & 0.012 & 0.024 & 0.952 & \underline{97.404} & 26.479 & \underline{97.415} & 26.464 \\
  & GRPO      & \underline{0.012} & \underline{0.024} & \underline{0.973} & 97.365 & 26.384 & 97.350 & 26.282 \\
  \rowcolor{highlight}
  & GFPO-F    & \textbf{0.004} & \textbf{0.015} & \textbf{0.989} & 97.401 & \underline{26.487} & 97.407 & \underline{26.510} \\
  \rowcolor{highlightDark}
  & GFPO-FR   & 0.019 & 0.026 & 0.919 & \textbf{97.596} & \textbf{27.174} & \textbf{97.646} & \textbf{27.373} \\
  \hline

  \multirow{8}{*}{$d=16$}
  & Constant  & 0.165 & 0.233 & 0.032 & 84.263 & 14.358 & 88.231 & 17.853 \\
  & PID       & \underline{0.012} & 0.030 & 0.898 & 94.297 & 25.031 & 94.271 & 24.916 \\
  & DQN       & \underline{0.012} & \underline{0.024} & \underline{0.989} & 94.331 & 25.132 & 94.259 & 24.946 \\
  & ADT       & 0.013 & 0.032 & 0.753 & 94.391 & 25.318 & 94.410 & 25.273 \\
  & PPO       & 0.012 & 0.026 & 0.946 & \underline{94.420} & 25.271 & \underline{94.467} & 25.271 \\
  & GRPO      & \underline{0.012} & \underline{0.024} & 0.978 & 94.341 & 25.152 & 94.289 & 24.984 \\
  \rowcolor{highlight}
  & GFPO-F    & \textbf{0.003} & \textbf{0.010} & \textbf{1.000} & 94.409 & \underline{25.319} & 94.409 & \underline{25.319} \\
  \rowcolor{highlightDark}
  & GFPO-FR   & 0.018 & \underline{0.024} & 0.984 & \textbf{94.697} & \textbf{25.875} & \textbf{94.724} & \textbf{25.928} \\
  \hline
  \end{tabular}
  }
  \end{table*}

%% file: Appendix/anomaly_benchmarks.tex
\section{Anomaly Benchmarks}
\label{appendix:anomaly_benchmarks}
\subsection{UNSW-NB15}
\label{appendix:unsw_nb15}

\begin{figure}[h]
    \centering
\includegraphics[width=0.9\linewidth]{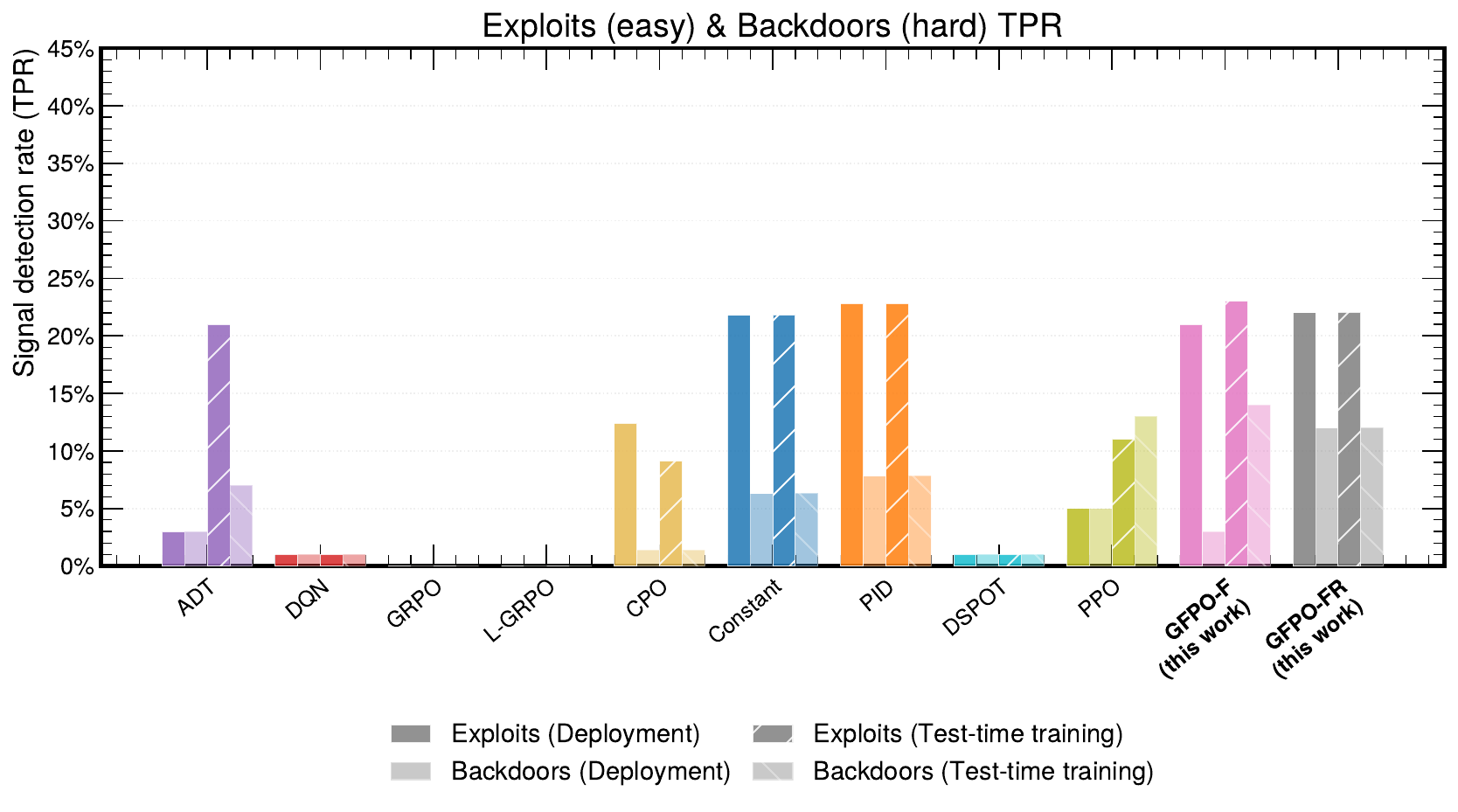}
    \caption{\textbf{UNSW-NB15 TPR by attack difficulty and adaptation regime.} For each controller we report TPR on two attack classes: \textit{Exploits} (frequent, high-signal; easy) and \textit{Backdoors} (rare, low-signal; hard). Bars span $\{\text{Deployment}, \text{Test-time training}\} \times \{\text{Exploits}, \text{Backdoors}\}$: solid = frozen policy, hatched = online adaptation; high opacity = Exploits, low opacity = Backdoors. GFPO-F and GFPO-FR (\textit{ours}) maintain the highest Backdoors TPR while satisfying the FAR budget (Fig.~\ref{fig:anomaly_usnw_nb15}); GRPO and L-GRPO collapse to near-zero TPR via the zero-feasibility failure mode. Test-time or online training widens the GFPO advantage on Backdoors without trading off Exploits.}
\label{fig:anomaly_appendix_tpr}
\end{figure}

\paragraph{Setup.}
UNSW-NB15~\citep{moustafa2015unsw} comprises network traffic from 
two collection campaigns with documented distributional shift, 
giving a natural sim-to-real test of the MC$\to$CMS protocol: we 
train the autoencoder and RL policy on Period~1 and deploy the 
frozen policy on Period~2. The background class is normal traffic. 
We designate \emph{Exploits} as the easy signal (anomalous payloads 
produce reconstruction losses well-separated from normal, analogous 
to $t\bar{t}$~\citep{CMS:2017trigger}) and \emph{Backdoors} as the 
hard signal (low-footprint behaviour overlaps substantially with 
normal scores, analogous to $h \to 4b$~\citep{aaboud2018search}). 
The per-window connection rate replaces $N_{\mathrm{PV}}$ as the 
context variable.
 
\paragraph{Choice of FAR target and tolerance on UNSW-NB15.}
Recall FAR denotes False Alert Rate, the per-chunk fraction of benign records flagged as anomalies.  
We set $r^{*} = 0.5\%$ with tolerance $\tau = 0.05\%$, mirroring the 
$\pm 10\%$ relative tolerance of the LHC trigger setting 
($r_{B}^{*} = 0.25\%$ for LHC, $\tau = 0.025\%$). Holding 
relative stringency fixed lets the in-band quadratic reward and the 
weight $\lambda_1$ transfer across domains without retuning. The absolute level is doubled ($0.5\%$ vs.\ $0.25\%$) for a 
quantization reason. UNSW-NB15 streams in chunks of $C = 1000$ 
records; Period~2 has $\sim 55\%$ attack 
prevalence~\citep{moustafa2015unsw}, leaving $N^- \approx 447$ 
negatives per chunk and quantizing per-chunk FAR on a grid of step 
$1/N^- \approx 0.224\%$, where $\mathrm{FP}$ is the integer count of benign 
records flagged in a chunk.  The LHC band $[0.225\%, 0.275\%]$ contains 
no realizable FP count: $\mathrm{FP}=1$ undershoots 
($\mathrm{FAR} = 0.224\%$) and $\mathrm{FP}=2$ overshoots 
($\mathrm{FAR} = 0.447\%$), so ``in-band'' is unreachable. The 
chosen band $[0.45\%, 0.55\%]$ contains exactly one realizable 
count, $\mathrm{FP}=2$ at $\mathrm{FAR} \approx 0.45\%$ (within 
chunk-level variation in $N^-$), making it the tightest tolerance 
compatible with the integer FP grid. The LHC analogue avoids this 
floor: its per-chunk negative count is several orders of magnitude 
larger, so $0.25\% \pm 0.025\%$ sits well above quantization there.

\paragraph{RL setting.} The agent observes a length-$8$ history of
  $(p_{10}, p_{25}, p_{50}, p_{75}, p_{90}, p_{95}, p_{99},\allowbreak                              
  \text{threshold},\allowbreak \mathrm{FAR},\allowbreak \mathrm{TPR},                               
  \allowbreak |\text{FAR} - r^{*}|,\allowbreak \text{attack prevalence})$                                   
  features ($12$-dim per chunk). Actions are discrete threshold deltas                              
  on a $21$-bin grid covering $\pm 0.5$ in normalised score space. We reuse the LHC rate-control reward of Section~\ref{sec:single_trigger} (Eq.~\ref{equation:reward_design}): an in-band term on $\mathrm{FAR}$ (target $r^{*}=0.5\%$, tolerance $\tau=0.05\%$), a TPR bonus gated on feasibility, and a movement penalty, with $\lambda_1 = 0.25$, $\alpha=0$, $\beta=0.005$. 

  \paragraph{Training setup for deployment.}  
  We train each method for $E = 50$ passes over Period~1 (the $175$-chunk UNSW-NB15 training-set stream, $68\%$ attack prevalence) with Adam (learning rate $3 \times 10^{-4}$, mini-batch size $256$, $2$ inner epochs per update). GRPO and L-GRPO use group size $G = 16$; GFPO-F and GFPO-FR use $G = 64$ with $K = 16$, matching the LHC setups. We then freeze the policy and deploy on Period~2 (the $82$-chunk test-set stream, $55.3\%$ attack prevalence); no gradient updates occur during deployment. We report means $\pm$ standard deviations over $5$ random seeds per method (seeds $0$--$4$). All UNSW-NB15 experiments run on a single Apple~M4 CPU ($10$~cores); the full $5$-seed train$+$frozen-rollout pipeline completes in        approximately $241$~seconds (\textasciitilde$12$~s per method per seed).  

  \paragraph{Training setup for Test-time training.} Here, we report the setting for test-time training results in Figure~\ref{fig:anomaly_appendix_tpr}.
We additionally evaluate a TTT variant in which each agent 
continues updating its policy during Period~2 rollout. At every 
chunk the agent (i) selects a threshold delta greedily, 
(ii) observes $(\mathrm{FAR}, \mathrm{TPR})$ and computes the 
in-band reward, and (iii) performs a single gradient step: DQN 
draws one minibatch from a replay buffer; PPO performs one PPO 
update; GRPO and L-GRPO take one group update with $G = 16$; 
GFPO-F and GFPO-FR take one feasibility-filtered group update 
with $G = 64$, $K = 16$. L-GRPO and GFPO-FR additionally update 
their dual variable $\lambda$ once per chunk on the observed FAR. 
This yields a single in-order pass of $82$ updates per agent, 
versus the $175 \times 50 = 8{,}750$ updates of the Period~1 
initialization; all other hyperparameters (reward, target/tolerance, 
group sizes, learning rate) are unchanged.

  \paragraph{Structural ceiling on in-band rate.}    Of Period~2's $82$ chunks, $43$ contain only attack traffic and have $\mathrm{FAR} \equiv 0$ regardless of threshold (the denominator $\mathrm{FP}+\mathrm{TN}$ is zero, thus reported as $0$); they can never lie within $[r^* - \tau, r^* + \tau] = [0.45\%, 0.55\%]$. The remaining $39$ benign-containing chunks ($35$ all-benign, $4$ mixed) set a hard upper bound of $39/82 \approx 47.6\%$ on the achievable in-band rate. Reported in-band fractions should be read against this ceiling: in Figure~\ref{fig:anomaly_usnw_nb15}, GFPO-FR's $6.1\%$ ($5/82$ chunks) corresponds to $\approx 12.8\%$ of the $47.6\%$ structural maximum.

\paragraph{D-SPOT failure.}
D-SPOT achieves $0\%$ in-band rate despite being designed for 
streaming anomaly detection (Figure~\ref{fig:anomaly_usnw_nb15}). 
The cause is class imbalance: with $\approx 55\%$ attack prevalence, 
the sliding window used to fit the Generalized Pareto distribution 
is dominated by attack-score values, pushing the estimated tail 
threshold into attack-score territory. The threshold is then too 
low to constrain FAR near $r^*$, and the method flags effectively 
all traffic. This is a known failure mode of EVT-based detectors 
on imbalanced streams~\citep{siffer2017anomaly}. GFPO sidesteps 
this failure by closing the loop on FAR rather than estimating 
the score tail: the policy adjusts the threshold via per-chunk 
FAR feedback, feasibility filtering retains only training samples 
already in-band, and all-attack chunks are excluded from 
rate-control updates. The class imbalance that corrupts DSPOT's 
GPD fit never enters GFPO's training signal, which is why GFPO-F 
and GFPO-FR maintain the highest Backdoors TPR 
(Figure~\ref{fig:anomaly_usnw_nb15}) while satisfying the FAR 
budget, exactly the regime where DSPOT collapses.

\paragraph{Analysis on Per-class TPR.}
Figure~\ref{fig:anomaly_appendix_tpr} reports per-method TPR by 
attack class (\emph{Exploits}, easy; \emph{Backdoors}, hard) and 
adaptation regime (frozen Deployment, Test-time training), 
underlying the in-band/MAE summary in 
Section~\ref{sec:anomaly_detection_experiment_results}. Constant 
and PID hit non-trivial Exploits TPR ($\sim$22\%) because their 
fixed thresholds happen to land near the easy-signal mode, but 
they collapse on Backdoors ($6$--$8\%$) and stay outside the FAR 
band entirely. DQN, DSPOT, and ADT (under deployment) reach 
near-zero TPR on both classes, the over-conservative collapse 
diagnosed above. GRPO and L-GRPO collapse to near-zero TPR via 
the zero-feasibility failure mode of Section~\ref{sec:gfpo}: when 
no sampled action satisfies the FAR constraint, the group-relative 
advantage degenerates and the policy converges to a single 
over-conservative threshold. PPO recovers some TPR but only by 
abandoning the FAR budget, with mean absolute FAR error two 
orders of magnitude worse than ours.

GFPO-F and GFPO-FR (\textit{ours}) match the best Exploits TPR 
($\approx 22\%$) and dominate on Backdoors ($\approx 12\%$), 
roughly $2\times$ the next-best learned baseline, while satisfying 
the FAR budget. Test-time training widens the Backdoors gap 
without trading off Exploits, indicating that per-chunk FAR 
feedback provides a usable adaptation signal in exactly the 
rare-class regime where static thresholds fail.

\newpage
\subsection{NAB}
\label{app:nab}

\begin{figure}
    \centering
\includegraphics[width=.9\linewidth]{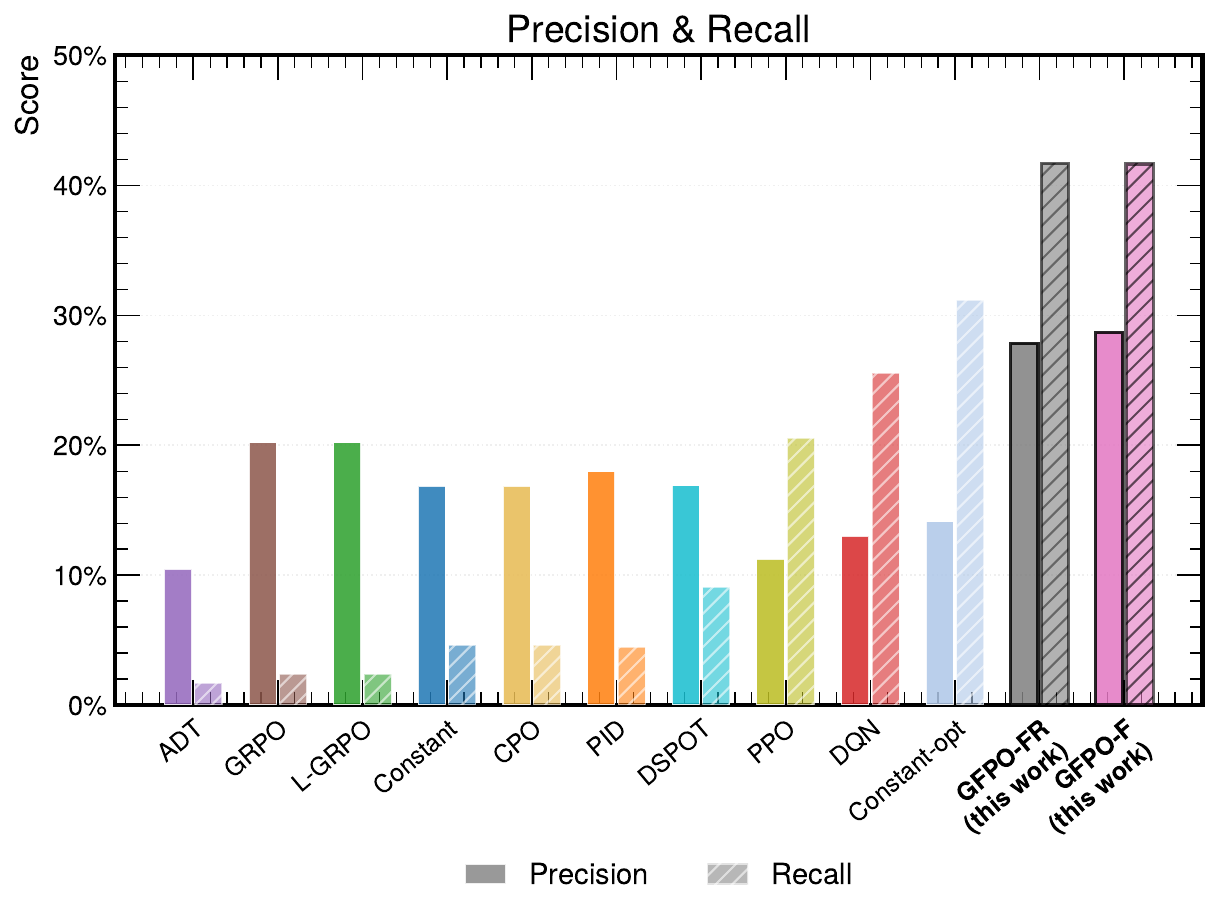}
    \caption{\textbf{NAB precision and recall per method (mean over 24 test files).} Solid bars: precision; hatched bars: recall. Methods sorted left-to-right by ascending F1. All controllers operate on the same per-chunk anomaly scores under the same FAR budget. GFPO-F and GFPO-FR (\textit{ours}) dominate the Pareto frontier: precision $\approx 28\%$ and recall $\approx 42\%$. Classical controllers (Constant, PID, DSPOT) 
reach $11$--$20\%$ precision but recall below $10\%$, too conservative 
to fire on more than a small fraction of true anomalies. PPO, DQN, 
and the oracle Constant-opt push recall above $20\%$ at a precision 
cost. GRPO and L-GRPO collapse to the high-precision/near-zero-recall 
corner ($\sim\!20\%$ / $2.4\%$): under the sparse anomaly reward, the 
group-relative advantage carries little signal between high-threshold 
modes, and the policy converges to a near-silent threshold.}
\label{fig:anomaly_appendix_pr_nab}
\end{figure}

We evaluate on the Numenta Anomaly Benchmark (NAB)~\citep{lavin2015evaluating}, restricted
to its two real-world streaming subsets, \textit{realKnownCause} and
\textit{realAWSCloudwatch}. Together these comprise $24$ univariate time series
of server, machine, and cloud-monitoring telemetry with hand-labelled anomaly
windows, totalling $135{,}900$ timesteps after we discard series shorter than
$200$ steps (insufficient for a meaningful train/test split). The goal of this agent is to optimize a pure detection-quality reward 
$r_t = \mathrm{TPR} - \alpha \cdot \mathrm{FPR} 
- \beta\,|\threshold_t|/\Delta_{\max}$. 

\paragraph{Preprocessing.}
Each series is processed chronologically. Raw values are converted 
to anomaly scores via a sliding-window robust z-score, 
$s_t = |x_t - \tilde{\mu}_{W_t}| / 
(\widetilde{\mathrm{MAD}}_{W_t} + \epsilon)$, with a causal window 
of 100 timesteps, then normalized to $[0,1]$ via empirical CDF rank 
on the training portion. We split each series 70/30 chronologically 
into training and test segments, then partition into non-overlapping 
chunks of 100 timesteps, yielding 954 training and 405 test chunks. 
MAD denotes the median absolute deviation, a robust scale estimator $\mathrm{MAD}(W) = \mathrm{median}_x |x - \mathrm{median}(W)|$; we substitute it for the standard deviation in the z-score so that anomalies in the recent window do not inflate the normality envelope used to score them.  

\paragraph{RL setting.} The agent observes a length-8 history of $(p_{25}, p_{50}, p_{75},
p_{90}, p_{95}, p_{99},\allowbreak \text{threshold},\allowbreak
\text{flagging rate},\allowbreak \text{TPR}, \text{FPR})$ features. Actions are discrete threshold deltas on a 21-bin grid covering $\pm 0.3$ in score space. Unlike the LHC setting, NAB has no rate constraint: the reward is $\mathrm{TPR} - \alpha \mathrm{FPR} - \beta |\Delta|/\Delta_{\max}$, which is a pure detection quality with a movement penalty.

\paragraph{Training setup}
We train each method for $E = 50$ passes over the training partition 
with Adam (learning rate $3 \times 10^{-4}$, batch size $64$). GRPO 
and L-GRPO use group size $G = 16$; GFPO-F and GFPO-FR use $G = 64$ 
with $K = 16$, matching the LHC setup in Figure~\ref{fig:gfpo_intermediate_ht}. We report means over $5$ random seeds 
per method. All NAB experiments run on a single Apple M4 CPU 
(10 cores), with average wall-clock time of approximately $10$ 
minutes per method per series.

\paragraph{Environment.}
The RL environment we designed uses reward 
$r_t = \mathrm{TPR} - \alpha \cdot \mathrm{FPR} 
- \beta\,|\delta_t|/\Delta$, where $\alpha = 0.10$ penalizes false 
positives, $\beta = 0.005$ discourages unnecessary threshold 
movement, and $\Delta = 0.3$ is the action range. The action space 
consists of 21 discrete threshold deltas uniformly spaced in 
$[-0.3, +0.3]$. The 10-dimensional state encodes score-distribution 
percentiles ($p_{25}, p_{50}, p_{75}, p_{90}, p_{95}, p_{99}$), 
the current threshold, the instantaneous flagging rate, and 
exponentially averaged recent TPR and FPR over the last 8 chunks. 
All agents are initialized with the threshold at the 97th 
training-score percentile ($\approx$3\% flagging rate).

\begin{table}[t]                 
  \centering 
  \caption{NAB benchmark results (405 test chunks, 24 real-world series). Constant-opt is an oracle static threshold.}                
  \label{tab:nab}    
  \begin{tabular}{lccc}                           
  \toprule                                         
  Method & Precision $\uparrow$ & Recall $\uparrow$ & F1 $\uparrow$ \\                       
  \midrule                                          
  Constant          & 0.169 & 0.046 & 0.058 \\
  Constant-opt      & 0.141 & \textbf{0.312} & 0.184 \\                                    
  PID               & 0.180 & 0.045 & 0.059 \\                                     
  D-SPOT            & 0.169 & 0.090 & 0.082 \\                    DQN & 0.205 & 0.024 & 0.038 \\                
    ADT   
    & 0.103 & 0.014 & 0.024 \\
  GRPO          & 0.198 & 0.031 & 0.042 \\           
    L-GRPO        & 0.198 & 0.031 & 0.042 \\
  CPO               & 0.169 & 0.046 & 0.058 \\
  PPO &  0.145 & 0.117 & 0.097 \\
  \rowcolor{highlight}
  
  GFPO-F &
  \textbf{0.320} & \underline{0.198} & \underline{0.215} \\
\rowcolor{highlightDark}
  GFPO-FR    & 
  \underline{0.316} & \underline{0.198} & \textbf{0.216} \\
  \bottomrule                 
  \end{tabular}  
  \end{table}

\paragraph{Precision \& Recall}
Figure~\ref{fig:anomaly_appendix_pr_nab} reports the per-method 
precision and recall/TPR underlying the F1 summary in 
Section~\ref{sec:anomaly_detection_experiment_results} 
(Figure~\ref{fig:anomaly_nab_f1}). Classical controllers (Constant, 
PID, DSPOT) reach 11--20\% precision but TPR/recall below 10\%, defaulting 
to conservative thresholds that miss most true anomalies. PPO and the oracle-tuned Constant-opt push TPR/recall 
above 10\% but drop to 14\% precision, lowering thresholds to 
fire on more events at the cost of false positives. GRPO and L-GRPO 
collapse to the high-precision/near-zero-TPR corner 
($\sim$20\%/3.1\%): under the sparse anomaly reward, the 
group-relative advantage carries little signal between high-threshold 
modes and the policy settles on a near-silent one.

Table~\ref{tab:nab} shows GFPO-F and GFPO-FR (\textit{ours}) reach precision $\approx 32\%$ 
and TPR $\approx 19\%$, surpassing every baseline on both axes and 
yielding the highest F1 in Figure~\ref{fig:anomaly_nab_f1}. As 
NAB drops the rate constraint, this gain isolates our sequence-based 
state and adaptive thresholding from feasibility-filtered policy 
optimization, confirming that GFPO improves detection quality even 
when the FAR-handling mechanism is dropped.